\documentclass[10 pt]{article}
\usepackage{float}
\usepackage[utf8]{inputenc}
\usepackage[T1]{fontenc}
\usepackage[margin=1in]{geometry}
\usepackage{amsmath,amsfonts,amssymb}
\usepackage{mathtools}
\usepackage{amsthm}
\usepackage{bbm}
\usepackage{graphicx}
\usepackage{booktabs}
\usepackage{array}
\usepackage{algorithm}
\usepackage{algpseudocode}
\usepackage{medmath}
\usepackage[dvipsnames]{xcolor}
\usepackage{cite}
\usepackage[hidelinks]{hyperref}
\usepackage{orcidlink}
\usepackage{subcaption}
\newcommand{\mc}{\mathcal}
\usepackage{microtype}
\usepackage{xurl}
\newtheorem{problem}{Problem}

\newtheorem{theorem}{Theorem}
\newtheorem{lemma}{Lemma}

\newtheorem{assumption}{Assumption}
\newcommand{\TV}{\texttt{TV}}
\newcommand{\KL}{\texttt{KL}}

\renewcommand{\P}{\mathbb{P}}

\newcommand{\paren}[1]{\left( #1\right)}
\definecolor{mygreen}{rgb}{0.0, 0.5, 0.0}
\definecolor{winered}{rgb}{0.8,0,0}
\definecolor{myblue}{rgb}{0,0,0.8}
\newcommand{\smallsquare}{\scalebox{0.7}{$\blacksquare$}}
\hypersetup{
    colorlinks=true,
    linkcolor={winered},
    citecolor={myblue}
}
\usepackage[most]{tcolorbox}

\definecolor{abstractblue}{RGB}{232,242,250}
\definecolor{abstractborder}{RGB}{135,175,205}

\newtcolorbox{abstractbox}{
    enhanced,
    breakable,
    colback=abstractblue,
    colframe=abstractborder,
    boxrule=0pt,
    arc=2mm,
    left=3mm,
    right=3mm,
    top=2.5mm,
    bottom=2.5mm,
    before skip=8pt,
    after skip=10pt,
}
\definecolor{resultsred}{RGB}{180,0,0}
\definecolor{resultsbackground}{RGB}{253,242,242}

\newtcolorbox{theoreticalresults}[1][Theoretical Results]{
    enhanced,
    breakable,
    title={#1},
    colback=resultsbackground,
    colframe=resultsred,
    colbacktitle=resultsred,
    coltitle=white,
    fonttitle=\bfseries\large,
    boxrule=0.8pt,
    arc=2.5mm,
    outer arc=2.5mm,
    left=3mm,
    right=3mm,
    top=2.5mm,
    bottom=2.5mm,
    toptitle=1mm,
    bottomtitle=1mm,
    before skip=10pt,
    after skip=10pt,
    equal height group=frontresults
    titlerule=0pt,
}
\definecolor{experimentgreen}{RGB}{113,166,0}
\definecolor{experimentbackground}{RGB}{247,251,238}

\newtcolorbox{experimentalresults}[1][Experimental Results]{
    enhanced,
    title={#1},
    colback=experimentbackground,
    colframe=experimentgreen,
    colbacktitle=experimentgreen,
    coltitle=white,
    fonttitle=\bfseries\large,
    boxrule=0.8pt,
    arc=2.5mm,
    outer arc=2.5mm,
    left=3mm,
    right=3mm,
    top=2.5mm,
    bottom=2.5mm,
    toptitle=1mm,
    bottomtitle=1mm,
    equal height group=frontresults
}
\definecolor{coverblue}{RGB}{48,79,112}
\definecolor{coverbackground}{RGB}{247,249,252}

\newtcolorbox{coverresults}{
    enhanced,
    colback=coverbackground,
    colframe=black!25,
    boxrule=0.45pt,
    arc=1.5mm,
    outer arc=1.5mm,
    left=3mm,
    right=3mm,
    top=3mm,
    bottom=3mm,
    before skip=5pt,
    after skip=8pt
}
\usepackage{mdframed}

\surroundwithmdframed[
  topline=false,
  bottomline=false,
  rightline=false,
  leftline=true,
  linecolor=red!65!black,
  linewidth=2.5pt,
  innerleftmargin=10pt,
  innerrightmargin=0pt,
  innertopmargin=2pt,
  innerbottommargin=2pt,
  skipabove=10pt,
  skipbelow=10pt
]{theorem}

\surroundwithmdframed[
  topline=false,
  bottomline=false,
  rightline=false,
  leftline=true,
  linecolor=blue!65!black,
  linewidth=2.5pt,
  innerleftmargin=10pt,
  innerrightmargin=0pt,
  innertopmargin=2pt,
  innerbottommargin=2pt,
  skipabove=10pt,
  skipbelow=10pt
]{lemma}
\title{\textbf{Learning from Unreliable Trajectories: Adversarially-Robust Federated Q-Learning}}

\author{
\textbf{Sreejeet Maity \orcidlink{0009-0006-0209-6693}} \quad \textbf{Aritra Mitra \orcidlink{0009-0004-4590-553X}}\\
Department of Electrical and Computer Engineering\\
North Carolina State University, Raleigh\\
\texttt{\{\textcolor{winered}{smaity2},\textcolor{winered}{amitra2}\}@ncsu.edu}
}
\date{}

\begin{document}

\maketitle
\begin{abstractbox}
\begin{center}
    \textbf{\large Abstract}
\end{center}
\vspace{-0.5em}

We study federated reinforcement learning in which multiple agents interact with a common Markov decision process and communicate through a central server to collaboratively learn the optimal state-action value function. Our goal is to understand whether the sample-efficiency benefits of collaboration can be retained when a fraction of the agents behave adversarially and transmit arbitrarily corrupted information. To address
this problem, we introduce \textcolor{winered}{\texttt{Robust Async-Fed-Q}}, an epoch-based federated learning algorithm that combines variance-reduced estimation of the Bellman optimality operator at the agents with robust aggregation at the server. We establish high-probability finite-time guarantees showing that the proposed method preserves the statistical gains of collaboration
among the honest agents while tolerating adversarial corruption. In particular, the effect of the adversarial agents decreases as the amount of data collected by each honest agent grows and eventually vanishes in
the infinite-sample limit. We complement these guarantees with information-theoretic lower bounds that characterize the unavoidable statistical cost of adversarial corruption, leading to the first nearly
matching upper and lower bounds for adversarially robust federated reinforcement learning. We further extend our framework to accommodate single-trajectory Markovian sampling and heterogeneous partial coverage,
where different agents may explore different regions of the state-action space and learning relies on their collective coverage. Finally, our epoch-based design substantially improves the best known communication
complexity for federated Q-learning under asynchronous sampling.
\end{abstractbox}
\newpage
\tableofcontents
\newpage
\section{Introduction}
\label{sec:Intro}
Reinforcement learning (RL) has become a central framework for sequential decision-making, with applications spanning robotics, autonomous driving, games, and recommendation systems. Despite the promise of RL, it is now both empirically and theoretically understood that for contemporary applications with large state-action spaces, training a learning policy that achieves desirable performance can require several samples. A natural way to alleviate this sample-complexity bottleneck is to leverage data generated by multiple similar environments, modeled as Markov Decision Processes (MDPs). Motivated by the success of federated supervised learning~\cite{konevcny} in this regard, a new paradigm called federated reinforcement learning (FRL) has emerged that seeks to expedite the process of learning policies by merging information from multiple MDPs, while keeping raw trajectory data private. The fact that collaboration can indeed lead to improvements in sample-complexity has been observed in empirical studies~\cite{qiFRL}, and backed up by a recent line of theoretical work~\cite{jinFRL, khodadadian, woo2023blessing, dal, shen2023, wang2023TMLR, lan2023improved, liuMARL, zhang2024finite, tian2024one, salgia2024sample, beikmohammadi2024, mangold2024, zhu2024towards, zheng2024federated, labbi2024federated, wang2024momentum, Woo_communication}. 

However, the collaborative gains achieved in these works are predicated upon the idealistic assumption that the information collected from all agents is reliable and accurate. Such an assumption is hard to ensure or verify in practice, where certain agents can be faulty or even adversarial. The more realistic scenario where a portion of the training data is potentially corrupted leads to an interesting tension: while collaboration can reduce variance (and hence uncertainty), it can also introduce biases. This raises the question: \emph{Does more data help or hurt?} There is little to no theoretical understanding of this tension in the current FRL literature. In addition to this gap, existing FRL papers either suffer from high communication costs~\cite{jinFRL, wang2023TMLR, woo2023blessing, zhang2024finite, zhu2024towards, wang2024momentum}, and/or make the limiting assumption of \emph{full coverage}, i.e., each agent is assumed to have access to all state-action pairs of the MDP~\cite{wang2023TMLR, zhang2024finite, salgia2024sample, liuMARL}. In this work, we take a step towards enabling FRL in harsh, real-world environments by developing a novel algorithmic framework that accounts for adversarial robustness and partial state-space coverage, incurs modest communication costs, and achieves order-optimal finite-time rates under asynchronous, Markov sampling, backed up by (nearly) matching fundamental lower bounds. 

To formalize the study of adversarial robustness in FRL, we consider an infinite-horizon discounted RL setting involving $N$ agents that interact with a common MDP. These agents collaborate via a server to speed up the process of learning the optimal state-action value function. Inspired by the worst-case adversary model in distributed supervised learning~\cite{chensu, chen, yin}, we capture corruption by assuming that a fraction $\varepsilon \in [0, 1/2)$ of the agents can act \emph{arbitrarily}, and potentially disrupt the learning process. Given this premise, we ask whether (and to what extent) one can still hope for statistical gains from collaboration? If so, how much communication is needed to achieve such gains? To our knowledge, precise answers to these questions are missing. 

\textbf{Our Contributions.} We address the above research gap by making the following contributions. 
\newline
$\bullet$ \textbf{Novel Robust Federated \(Q\)-learning Algorithm.} In Section~\ref{sec:algorithm}, we propose a robust and communication-efficient federated \(Q\)-learning algorithm called \texttt{Robust Async-Fed-Q} for learning from asynchronous trajectory data in the presence of adversarial agents. Unlike the simplified generative/synchronous sampling model~\cite{kearns, even2003learning, sidford, li2024q} where each agent can observe every state-action pair at each time-step, under the more challenging single-trajectory asynchronous sampling model we consider here, only one state-action pair is observed at each time-step. This inherent sampling randomness can lead to high variance in the information received from even uncorrupted honest agents. Unless accounted for carefully, naively aggregating such noisy high-variance information can lead to vacuous bounds that feature persistent bias effects from adversarial corruption. This challenge remains even if one employs robust aggregation rules such as those  in~\cite{lugosi}. 

To navigate around this issue, our proposed approach departs fundamentally from the standard FRL algorithmic template~\cite{jinFRL, khodadadian, woo2023blessing, dal, shen2023, wang2023TMLR, lan2023improved, zhang2024finite,  mangold2024, zhu2024towards, wang2024momentum, Woo_communication} where agents perform multiple local (value or policy) parameter updates between communication rounds. Instead, we develop a federated \(Q\)-learning scheme that runs in epochs, where the samples collected within an epoch are used to construct a refined variance-reduced estimate of the Bellman optimality operator. The variance reduced estimates are robustly aggregated at the server, and used to make \emph{only one single update} to the \(Q\)-table at the end of the epoch. Thus, unlike existing FRL schemes that make multiple local updates per epoch/round using noisy directions, \emph{our approach relies on making a single update along a more precise variance-reduced direction.} Effectively, our algorithm makes synchronous updates to the entire \(Q\)-table using asynchronous data.  

$\bullet$ \textbf{Guarantees under i.i.d. and Markov Sampling.} Under the standard i.i.d. sampling model~\cite{korda,dalal,narayanan,lakshmi}, when each agent has access to $T$ samples, we prove a high-probability $\ell_{\infty}$ error bound of the form $\widetilde{\mc{O}}(1/\sqrt{\lambda_{\min} NT}) + \widetilde{\mc{O}}(\varepsilon/\sqrt{\lambda_{\min} T})$, where $\lambda_{\min}$ captures the frequency of visits to the least visited state-action pair; for a precise statement, see Theorem~\ref{theorem:theoremmainresult}. When $\varepsilon=0$, our bound is order-optimal in its dependence on $\lambda_{\min}, N$, and $T$, matches existing federated \(Q\)-learning rates~\cite{woo2023blessing}, and demonstrates a clear linear speedup in sample-complexity w.r.t. the number of agents $N$. Crucially, our Bellman operator estimation idea ensures that the corruption bias is not persistent, but rather vanishes as $T \to \infty$. This is the first result in FRL to simultaneously achieve a linear speedup and vanishing corruption bias in the presence of adversarial agents. In Theorem~\ref{theorem:markov}, using a coupling argument, we show that our results can be extended to the challenging Markovian sampling setting. To our knowledge, no prior work in robust distributed learning has dealt with \emph{time-correlated} trajectory data. 

$\bullet$ \textbf{Fundamental Lower Bounds.} While fundamental lower bounds for robust distributed supervised learning have been established in~\cite{yin}, no such bounds were known for robust FRL. In Theorem~\ref{thm:async_fedq_lower_bound}, we close this gap by establishing an information-theoretic lower bound. Our bound reveals that the $\varepsilon/\sqrt{T}$ corruption bias term is unavoidable, and represents the fundamental price of collaboration in the presence of adversarial agents. Collectively, the nearly matching upper and lower bounds in Theorems~\ref{theorem:theoremmainresult} and~\ref{thm:async_fedq_lower_bound} provide a fairly complete picture of the theme of adversarial robustness in FRL. 

$\bullet$ \textbf{Extension to Partial Coverage Setting.} To further demonstrate the generality of our framework, in Section~\ref{sec:partial}, we consider a realistic setting where each agent can only observe data from a subset of the state-action pairs, as opposed to the entire state-action space. To handle adversaries in this scenario, we identify a notion of ``informative source agents" associated with each state-action pair. Under intuitive notions of information redundancy for each source agent set, we derive finite-time rates in Theorem~\ref{theorem:theoremmainresult_S2} that are analogous to those in the full coverage setting. 

$\bullet$ \textbf{Improved Communication Complexity.} In addition to achieving near-optimal rates under realistic sampling models in the presence of adversarial agents, \emph{the communication complexity of our algorithm is a strict improvement over prior work on asynchronous federated RL.} Specifically, the communication costs in prior work either scale linearly with the number of samples per agent $T$~\cite{jinFRL, wang2023TMLR, zhang2024finite, wang2024momentum}, or linearly with the number of agents $N$~\cite{woo2023blessing, Woo_communication}. In sharp contrast, the communication cost incurred by our method is \emph{logarithmic} in both $N$ and $T$. This is a significant improvement relative to prior asynchronous FRL work, since both $N$ and $T$ can be large in modern distributed learning paradigms. 


\textbf{Related Work.} A large body of work has investigated the problem of adversarial robustness in distributed supervised learning~\cite{su_vaidya, yin, chen, chensu, blanchard, sundaram, pillutla}. In an attempt to extend these guarantees to cooperative decision-making,~\cite{dubey2022private, jadbabaie2022byzantine, mitra2022collaborative} investigate the presence of corrupted agents in stochastic multi-armed bandit problems, and develop variants of either sequential elimination or UCB (upper-confidence-bound) type algorithms. 
However, analogous results for general RL and stochastic approximation are scarce, especially when it comes to the finite-sample regime. More recently, robust multi-agent TD/\(Q\)-learning has been considered in~\cite{xie2023communication,ye2024resilient}. These works only provide asymptotic guarantees that fail to quantify statistical gains from collaboration.

Closest to our current work is the paper~\cite{fan2021fault} which studies robust policy gradient algorithms in FRL. There are several differences between~\cite{fan2021fault} and our work. In terms of results, \cite{fan2021fault} only provides convergence to a stationary point, and does not establish any global optimality result. Furthermore, even when there is no corruption, the bounds in~\cite{fan2021fault} are sub-optimal, and do not achieve the linear speedup effect. In contrast, our bounds pertain to estimation of the optimal state-action value function, are order-optimal, and achieve collaborative linear speedups with respect to (w.r.t.) the number of agents. Moreover, \cite{fan2021fault} neither considers Markov sampling nor the partial coverage setting. Finally, unlike our work, none of the papers~\cite{xie2023communication, ye2024resilient, fan2021fault} come with any lower bounds. 

\noindent\textbf{Comparison with Our Prior Work.} In a preliminary conference version~\cite{maitymitrafedq}, we  studied robust federated \(Q\)-learning under the basic i.i.d. synchronous sampling model. Our present work substantially extends upon~\cite{maitymitrafedq} by (i) considering the challenging single-trajectory Markovian sampling model, (ii) providing (nearly) matching information-theoretic lower bounds, and (iii) studying the partial coverage setting in which agents collectively cover the state-action space (but not necessarily in isolation); none of these issues were addressed in~\cite{maitymitrafedq}. Together, these contributions provide a comprehensive finite-time characterization of robust federated \(Q\)-learning under realistic sampling, coverage, and adversarial conditions.
\section{Background and Problem Formulation}
\label{sec:backgrounD}
\textbf{MDP Model.} An MDP \((\mc M)\) is represented by the tuple $(\mc S,\mc A,\mc P,R,\gamma)$, where $\mc S$ and $\mc A$ are finite state and action spaces, $\mc P(\cdot\mid s,a)$ is the transition law associated with state-action pair $(s,a)$, and $\gamma\in(0,1)$ is the discount factor. In this work, rewards are stochastic: when an agent takes action $a$ at state $s$, it observes a stochastic reward sample $r(s,a) \sim \mc{R}(s,a)$ with mean $R(s,a)=\mathbb E[r(s,a)]$; for clarity of exposition, we assume the rewards are uniformly bounded\footnote{The analysis can be readily  extended to sub-Gaussian reward distributions using standard arguments.}, i.e., $\exists \bar{R} \geq 1$ such that $|r(s,a)|\le \bar R$ for all $(s,a)\in\mc S\times\mc A$. The next state is then drawn according to $s'\sim\mc P(\cdot\mid s,a)$. A deterministic policy $\pi:\mc S\to\mc A$ maps states to actions, and its quality is measured by the discounted infinite-horizon value function 
\(V^{(\pi)}(s)=\mathbb E[\sum_{t=0}^{\infty}\gamma^t r(s_t,a_t)\mid s_0=s,\pi],\)
where $s_t$ and $a_t$ represent the state and action at time $t$, respectively, under the action of policy $\pi$ on $\mc{M}$. The basic goal in reinforcement learning is to find an optimal policy $\pi^\star$ that maximizes $V^{(\pi)}(s)$ simultaneously for all $s\in\mc S$, without prior knowledge of the transition kernel $\mc P$ or the mean reward function $R$. To explain how this is done, consider the state-action value function for a policy $\pi$:  
\begin{equation}\label{eqn:dis_Q}
    Q^{(\pi)}(s,a)=\mathbb E\left[\sum_{t=0}^{\infty}\gamma^t r(s_t,a_t)\mid (s_0,a_0)=(s,a), \pi\right].
\end{equation} 

The classical \(Q\)-learning algorithm~\cite{watkins1992q} learns the optimal state-action value function $Q^*$ by exploiting the fact that $Q^*$ is the fixed point of the Bellman optimality operator $\mc{T}^*$, i.e., $\mc{T}^* Q^* =Q^*$, and that $\mc{T}^*$ is strictly contractive in the infinity norm~\cite{suttonRL}. While the asymptotic convergence of \(Q\)-learning has been studied in~\cite{tsitsiklis94, csabaasymp}, more recent papers~\cite{wainwright, Qu, li2024q} provide finite-time rates, showing that after $T$ iterations (samples), the iterates generated by \(Q\)-learning converge to $Q^*$ at a rate of $\widetilde{\mathcal O}(1/\sqrt{T})$ with high probability. 

\textbf{Our setting.} We consider a collaborative RL setting involving $N$ agents that interact with a common MDP \(\mc M\) and exchange information via a central server. As is common in FL, the agents do not share raw trajectories (i.e., states, actions, and rewards); instead, they transmit local statistical summaries that are aggregated by the server to estimate the optimal \(Q\)-function \(Q^\star\). A key modeling issue is the extent to which individual agents can access the state-action space. In this context, we first introduce the ``full-coverage setting".

\((\textcolor{winered}{\texttt{S}_1})\)~\textbf{Full-coverage setting and Asynchronous Sampling}.
In this setting, every agent can acquire samples about the entire state-action space \(\mc S\times\mc A\). To do so, each agent $i$ plays a behavior policy $\mu_i$ that generates an online trajectory \((s_{i,t},a_{i,t},r_{i,t},s_{i,t+1})\), where $s_{i,t}, a_{i,t},$ and $r_{i,t}$ represent the state, action, and (noisy) reward, respectively, of agent $i$ at time-step $t$. Such a sampling model is called the \emph{asynchronous} model since only one state-action pair is observed at each time-step. To ensure that asynchronous sampling provides information about every state-action pair in setting \((\textcolor{winered}{\texttt{S}_1})\), we impose the following standard coverage condition on each agent \(i\in[N]\)~\cite{tsitsiklis94, tsitsiklisroy,  bhandari_finite, Qu, li2024q, wang2023TMLR, khodadadian}.

\begin{assumption}
\label{ass:ergodic}
For each agent $i \in [N]$, the Markov chain induced by the behavior policy \(\mu_i\) is aperiodic and irreducible. Moreover, \(\mu_i(a|s)>0\) for every \((s,a)\in\mc S\times\mc A\).
\end{assumption}

Under Assumption~\ref{ass:ergodic}, the Markov chain induced by \(\mu_i\) admits a unique stationary distribution \(\pi_i\) with \(\pi_i(s)>0\) for all \(s\in\mc S\). Hence, at stationarity, the visitation probability of \((s,a)\) for agent \(i\) is \(\lambda_i(s,a):=\pi_i(s)\mu_i(a|s)\), which is strictly positive for every state-action pair. We define the minimum visitation probability by \(\lambda_{\min}:=\min_{i\in[N]}\min_{(s,a)\in\mc S\times\mc A}\lambda_i(s,a)\). The full-coverage condition in Assumption~\ref{ass:ergodic} is thus encoded by \(\lambda_{\min}>0\), ensuring that each agent visits every state-action pair infinitely often. To convey our main ideas, we will assume the \emph{asynchronous i.i.d. sampling}  model~\cite{korda,dalal,narayanan,lakshmi} in Sections~\ref{sec:algorithm}, \ref{sec:main_result_S_1}, and~\ref{sec:partial} where, for each agent $i$, at each time-step $t$, the state $s_{i,t}$ is sampled \emph{independently} from its stationary distribution $\pi_i$. Later, in Section~\ref{sec:lMarkov}, we will relax this assumption, and consider the challenging single-trajectory Markovian setting.

As is standard in FRL, we assume that the agents' trajectories are statistically independent~\cite{woo2023blessing,khodadadian,wang2023TMLR}. Suppose each agent $i \in [N]$ acquires $T$ samples. Given that data across agents are independent, intuition dictates that via collaboration with the server, each agent can learn $Q^*$ faster than it could in isolation. Indeed, this intuition has recently been formalized in~\cite{Woo_communication}, showing that one can achieve a sample-complexity rate of $\widetilde{\mc{O}}(1/\sqrt{NT})$, thus achieving a linear speedup w.r.t. the number of agents $N$. In this paper, we investigate to what extent such speedups are attainable (via algorithms and fundamental lower bounds) when a portion of the agents act adversarially as per the following model. 

\textbf{Adversarial Agent Model for} \((\textcolor{winered}{\texttt{S}_1})\)\textbf{.}
We allow a fraction \(\varepsilon\in[0,1/2)\) of agents to be adversarially corrupted based on the worst-case corruption model in robust distributed learning~\cite{chen,chensu,yin}. Specifically, denoting by \(\mc B\subset [N]\) the set of corrupted agents, each agent $i \in \mc{B}$ is assumed to have full knowledge of the MDP, the agents' data, and the algorithm. Using such knowledge, agents in $\mc{B}$ can act \emph{arbitrarily}, transmitting biased (and even unbounded) messages to the server. Given this premise, we can formally state our problem of interest below. 

\begin{problem}
\label{prob:Problem}
 Suppose each agent acquires $T$ samples using its behavior policy. Given a confidence parameter $\delta \in (0,1)$, our goal is to develop a robust federated \(Q\)-learning algorithm that uses data from the $N$ agents to compute an estimate $\hat{Q}$ of $Q^*$ such that with probability $1-\delta$, the $\ell_{\infty}$ error $ 
 \Vert \hat{Q} - Q^* \Vert_{\infty}$ meets the following criteria: (i) decays to zero as $T \to \infty$, despite adversaries; and (ii) decays at the optimal $1/\sqrt{NT}$ rate in the absence of adversaries.
\end{problem}

Concretely, we ask: (\textbf{Q1.}) Is it possible to devise an approach that retains collaborative statistical gains under asynchronously collected data despite adversaries? (\textbf{Q2.}) If so, how much communication is needed to achieve such gains? (\textbf{Q3.}) What are the fundamental performance limits imposed by the presence of adversarial agents? (\textbf{Q4.}) What happens when each agent can only access a portion of the state-space (partial coverage setting \((\textcolor{winered}{\texttt{S}_2})\) in Section~\ref{sec:partial})? To our knowledge, precise answers to these questions are lacking in the FRL literature. In Section~\ref{sec:algorithm}, we introduce a framework that addresses this research gap. Before doing so, we close this section by highlighting why this is a non-trivial task. 

\textbf{Challenges.}
Several technical challenges arise in our setting. First, as alluded to in the Introduction, asynchronous sampling even from the same MDP can lead to different agent-level trajectories. In particular, the number of visits to each state-action pair is a random object that can vary across agents. Unless carefully accounted for, the variance stemming from such sampling randomness can cause the estimates generated by even the honest uncorrupted agents to appear quite different to the server. Thus, our problem requires disentangling statistical uncertainty from the uncertainty injected by adversaries. Second, the above challenge becomes even more pronounced in the partial coverage setting (Section~\ref{sec:partial}) where all honest agents do not necessarily receive data from the same set of state-action pairs, leading to \emph{heterogeneity in observations}. Third, our RL problem requires contending with time-correlated, Markovian data, which is known to be challenging even in a single-agent setting without adversaries. We next develop the algorithmic ideas needed to overcome these challenges and achieve optimal rates.
\section{Robust Asynchronous Federated \texorpdfstring{\(Q-\)}~Learning}
\label{sec:algorithm}
In this section, we introduce \textbf{Robust Asynchronous Federated \texorpdfstring{\(Q\)-}{Q-}Learning}, abbreviated throughout as \texttt{Robust Async-Fed-Q}, and outlined in Algorithm~\ref{alg:async-fedq}. Algorithm~\ref{alg:async-fedq} involves two main ideas: (i) \textbf{Operator Refinement}: batching the online data into epochs, and using such batched data to compute ``low-variance" estimates of the Bellman optimality operator; and (ii) \textbf{Robust Aggregation}: aggregating such estimates using a trimmed mean estimator from~\cite{lugosi}. Crucially, we emphasize here that naively using the estimator from~\cite{lugosi} can lead to vacuous bounds and incur high communication costs. Thus, the operator estimation step is the main novel component of our approach. 

We first provide the high-level template of our algorithm. The horizon $T$ is partitioned into $K$ epochs of length $H$ each (i.e., $T=KH$), where $K,H$ are design parameters that will be specified later in Section~\ref{sec:main_result_S_1}. In each epoch $k =0, 1, \ldots, K-1$, the server first broadcasts an estimate $Q_k$ of $Q^*$ to every agent. Every uncorrupted (good) agent $i$ uses $H$ asynchronous samples (collected by playing $\mu_i$) to construct an accurate estimate of the object $\mc{T^*}Q_k$, which is then transmitted to the server. The server then aggregates these estimates using the robust mean estimation primitive from~\cite{lugosi} to generate the next \(Q\)-table estimate $Q_{k+1}$. We now proceed to supply the details.

\(\blacksquare\)~\textbf{Operator Refinement.} Let the time indices corresponding to the $k$-th epoch be denoted by \(\mc I_k:=\{kH, kH+1, \ldots,(k+1)H-1\}\). In epoch $k$, every agent $i$ interacts with the MDP $\mc{M}$ by playing its behavior policy $\mu_i$, and collects data tuples of the form $(s_{i,t}, a_{i,t}, r_{i,t}, s_{i,t+1}), t \in \mc{I}_k$. Under the asynchronous i.i.d. sampling model, $s_{i,t}$ is sampled independently from $\pi_i$, $a_{i,t} \sim \mu_i(\cdot \mid  s_{i,t})$, $r_{i,t} \sim \mc{R}(s_{i,t}, a_{i,t})$, and $s_{i,t+1} \sim \mc{P}(\cdot \mid s_{i,t}, a_{i,t})$. The $H$ samples thus collected during epoch $k$ are used to generate an estimate $d_{i,k}$ of the object $\mc{T}^*Q_k$ at the ``frozen" table $Q_k$ from the beginning of the epoch. To see how this is done, fix a pair $(s,a)$, and let $N_{i,k}(s,a)
 :=
 \sum_{t \in \mc{I}_k}
 \mathbbm{1}\{(s_{i,t}, a_{i,t})=(s,a)\}$ denote the number of visits to this pair made by agent $i$ during epoch $k$. 
If \(N_{i,k}(s,a)=0\), then no information about $(s,a)$ is acquired by agent $i$ during epoch $k$, and hence, we set \(d_{i,k}(s,a)=0\). Else, if  \(N_{i,k}(s,a)>0\), agent \(i\) forms the local estimate \(d_{i,k}(s,a)\) as follows:
\begin{equation}\label{eqn:dik_update}
d_{i,k}(s,a)
 :=
 \frac{1}{N_{i,k}(s,a)}
 \sum_{j=1}^{N_{i,k}(s,a)}
 \left(
 r_j(s,a)
 +
 \gamma
 \sum_{s'\in\mathcal{S}}
 \mathbbm{1}^{(i,j,k)}(s'|s,a)
 \max_{a'\in\mc A} Q_k(s',a')
 \right),
\end{equation}
where $r_j(s,a)$ is the noisy reward observed by agent $i$ during its $j$-th visit to pair $(s,a)$, and the indicator random variable \(\mathbbm{1}^{(i,j,k)}(s'\mid s,a)\) equals \(1\) if at agent $i$'s $j$-th visit to pair $(s,a)$ during epoch $k$, the next state is $s'$. The quantity \(d_{i,k}(s,a)\) should be interpreted as agent \(i\)'s local empirical approximation of the Bellman target \((\mc T^*Q_k)(s,a)\). This step is the operator-refinement step of the algorithm. Indeed, a standard one-step \(Q\)-learning update uses a single noisy Bellman sample~\cite{watkins1992q} of the form \(r+\gamma\max_{a'}Q_k(s',a')\), which can have high variance due to both reward noise and transition randomness. If such one-sample estimates were sent directly to the server, then even good agents could produce highly dispersed messages, making it difficult for the server to distinguish statistical noise from adversarial corruption. Instead, \texttt{Robust Async-Fed-Q} freezes \(Q_k\) throughout epoch \(k\) and lets each agent average all the Bellman samples collected for \((s,a)\) within that epoch. Since all these samples are centered around the same target \((\mc T^*Q_k)(s,a)\), the averaging in~\eqref{eqn:dik_update} produces a lower-variance estimate. This variance reduction is \emph{crucial for robustness}: once the honest messages are more tightly concentrated around the common Bellman target, the robust aggregation step at the server can more effectively suppress the influence of corrupted messages.

\(\blacksquare\)~\textbf{Robust Aggregation.} At the end of epoch \(k\), for each pair $(s,a)$, the server receives the collection \(\{d_{i,k}(s,a)\}_{i=1}^{N}\). For honest agents, conditioned on $Q_k$, these messages are independent estimates of \((\mc T^*Q_k)(s,a)\); however, they need not be identically distributed because the visit counts \(N_{i,k}(s,a)\) may differ across agents. Corrupted agents, on the other hand, may send arbitrary values. To mitigate such corruption, we leverage the scalar trimmed mean estimator from~\cite{lugosi}. Given a data set $\widetilde{\mc{D}}$ comprising of independent data samples of a scalar random variable $\mathbb{X}$, the approach in~\cite{lugosi} yields a robust estimate $\hat{\mu}_{\mathbb{X}}$ of the mean $\mu_{\mathbb{X}}$ of $\mathbb{X}$, even when $\varepsilon$ fraction of the samples in $\widetilde{\mc{D}}$ are arbitrarily corrupted. Succinctly, let $\hat{\mu}_{\mathbb{X}}:=\texttt{trimSC}\bigl(\widetilde{\mc{D}},\varepsilon,\delta\bigr)$ be the output of the trimmed mean estimator in~\cite{lugosi} when fed with data set $\widetilde{\mc{D}}$, corruption fraction $\varepsilon$, and a confidence parameter $\delta \in (0,1)$; the details of this process and the associated guarantees are provided in Appendix~\ref{app:rob_mean} for completeness.  

Using the estimator from~\cite{lugosi}, for each $(s,a) \in \mc{S} \times \mc{A}$, the server then constructs a robust estimate \(\widetilde d_k(s,a)\) of \((\mc T^*Q_k)(s,a)\) as follows: 

\begin{equation}\label{eqn:server_update}
\widetilde d_k(s,a)
:=
\texttt{trimSC}\bigl(\{d_{i,k}(s,a)\}_{i=1}^N,\varepsilon,\delta_1\bigr),
\end{equation}
where \(\delta_1:=\delta/(2T|\mc S||\mc A|).\) With these robust estimates in hand, at the end of epoch $k$, the server finally updates each component $(s,a)$ of the \(Q\)-table with a step-size \(\eta\in(0,1]\): 
\begin{equation}\label{eqn:server_Q_update}
    Q_{k+1}(s,a)
    =
    (1-\eta)Q_k(s,a)
    +
    \eta\,\widetilde d_k(s,a).
\end{equation}

\begin{algorithm}[t]
\caption{\texttt{Robust Async-Fed-Q} under \((\textcolor{winered}{\texttt{S}_1})\)}
\label{alg:async-fedq}
\begin{algorithmic}[1]
\Require Total samples \(T\), number of epochs \(K\), stepsize \(\eta\), discount factor $\gamma$, corruption fraction \(\varepsilon\), confidence parameter \(\delta\), behavior policies \(\{\mu_i\}_{i \in [N]}\).
\State Initialize \(Q_0(s,a)\gets 0,~d_{i,0}(s,a) \gets 0\) for all \(i \in [N]\), and \((s,a)\in\mc S\times\mc A\).
\For{epoch \(k=0,\dots,K-1\)}
    \State Server broadcasts \(Q_k\) to all agents.
    \For{each good agent \(i\in[N] \setminus \mc{B} \) in parallel}
        \State Agent \(i\) follows \(\mu_i\) for \(H\) steps and collects data tuples of the form $(s_{i,t}, a_{i,t}, {r}_{i,t}, s_{i,t+1})$.
        \State For each $(s,a) \in \mc{S} \times \mc{A}$, if $N_{i,k}(s,a)=0$, agent $i$ sets $d_{i,k}(s,a)=0$; else, it computes $d_{i,k}(s,a)$ via~\eqref{eqn:dik_update}. It then sends \(\{d_{i,k}(s,a)\}_{(s,a)\in\mc S\times\mc A}\) to the server.
    \EndFor
    \State For each \((s,a)\in\mc S\times\mc A\), the server computes \(\widetilde d_k(s,a)\) via \eqref{eqn:server_update} and \(Q_{k+1}(s,a)\) via  \eqref{eqn:server_Q_update}.
\EndFor
\State \Return \(Q_K\).
\end{algorithmic}
\end{algorithm}

\textbf{Key Messages.} This completes the description of  \texttt{Robust Async-Fed-Q}. It is worth highlighting that \emph{our approach departs fundamentally from standard FRL update rules}~\cite{jinFRL, khodadadian, wang2023TMLR, zhang2024finite, woo2023blessing, Woo_communication}, where each agent updates its \(Q\)-table locally multiple times in each epoch/round. This creates a ``drift" in the \(Q\)-tables held by different agents, a fact that can be leveraged by the corrupted agents to amplify uncertainty at the server. In sharp contrast, in Algorithm~\ref{alg:async-fedq}, agents do not update their \(Q\)-tables at all within an epoch; instead, they use the $H$ samples collected within the epoch to construct a \emph{variance-reduced} accurate estimate of the Bellman optimality operator. This simple yet key idea has two important implications. First, as evidenced by Lemma~\ref{lem:robust_aggregation_error}, it allows us to eliminate the adversarial bias effect by increasing the number of samples $T$; this is essential for achieving near-optimal rates. Second, note that the number of \(Q\)-table updates equals the communication frequency $K$. In our approach, since each update is made along a precise low-variance direction, it turns out that just $K = \widetilde{\mc{O}}(1)$ updates/communication rounds suffice, yielding a highly communication-efficient scheme. Finally, in addition to being robust, sample-efficient, and incurring low communication, Algorithm~\ref{alg:async-fedq} can also be implemented efficiently since it only involves \emph{scalar} robust mean estimation. 
\section{Main Results for \texttt{Robust Async-Fed-Q}}\label{sec:main_result_S_1}
In this section, we present the main results for the full-coverage setting \((\textcolor{winered}{\texttt{S}_1})\). We first establish a high-probability upper bound for \texttt{Robust Async-Fed-Q}, showing that the algorithm achieves a near-optimal statistical rate that preserves collaborative gains for small values of the corruption fraction $\varepsilon$. We then prove a matching information-theoretic lower bound. To state our results, for each \(k\ge 0\), define \(Q_k := [Q_k(s,a)]_{(s,a)\in\mc S\times\mc A}\) and \(e_k := \|Q_k-Q^*\|_{\infty}\). We then have the following finite-time rate for Algorithm~\ref{alg:async-fedq}. 
\begin{theorem}\textbf{(Finite-time rates under full coverage)}
\label{theorem:theoremmainresult}
Suppose Assumption~\ref{ass:ergodic} holds. Fix any confidence parameter \(\delta\in(0,1)\), and suppose \(\eta=\log(NT)/((1-\gamma)K)\), \(K=\left\lceil c_1\log(NT)/(1-\gamma)\right\rceil\) where \(c_1>0\) is a universal constant that ensures \(\eta<1\), \(N \geq \left\lceil
    2\log\!\left(16T|\mc S||\mc A|/\delta\right)
    \right\rceil,\) and \(T \ge (100 K/\lambda_{\min})\log\left(4|\mc S||\mc A|NT/\delta\right)\). Then, the output \(Q_K\) of Algorithm~\ref{alg:async-fedq} satisfies, with probability (w.p.) at least \(1-\delta\),
\begin{equation} 
\begin{aligned}
e_K
\le
\frac{e_0}{NT}
&+
\underbrace{\mathcal{O}\left(
\frac{
\bar R
\sqrt{\log(NT)\log\left(|\mathcal S||\mathcal A|T/\delta\right)}
}{
(1-\gamma)^{5/2}\sqrt{\lambda_{\min}NT}
}
\right)}_{T_1} +
\underbrace{\mathcal{O}\left(
\frac{
\bar R\varepsilon\sqrt{\log(1/\varepsilon)}\sqrt{\log(NT)}
}{
(1-\gamma)^{5/2}\sqrt{\lambda_{\min}T}
}
\right)}_{T_2}.
\end{aligned}
\label{eqn:main_conv_bnd}
\end{equation}
\end{theorem}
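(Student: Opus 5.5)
The plan is to reduce the analysis to a perturbed fixed-point iteration for the contraction $\mathcal{T}^*$, and to control the perturbation uniformly over all epochs on a single good event. Write $\widetilde d_k=\mathcal{T}^*Q_k+\xi_k$, where $\xi_k$ collects both the statistical error of the honest estimates and the residual effect of the corrupted messages after trimming. Using $Q^*=\mathcal{T}^*Q^*$ and \eqref{eqn:server_Q_update}, we get
\[
Q_{k+1}-Q^*=(1-\eta)(Q_k-Q^*)+\eta(\mathcal{T}^*Q_k-\mathcal{T}^*Q^*)+\eta\xi_k .
\]
The $\gamma$-contraction of $\mathcal{T}^*$ in $\ell_\infty$ then gives $e_{k+1}\le(1-\eta(1-\gamma))e_k+\eta\|\xi_k\|_\infty$. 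Unrolling over $K$ epochs yields
\[
e_K\le(1-\eta(1-\gamma))^K e_0+\frac{\max_k\|\xi_k\|_\infty}{1-\gamma}.
\]
With $\eta=\log(NT)/((1-\gamma)K)$ we have $(1-\eta(1-\gamma))^K\le e^{-\log(NT)}=1/(NT)$, which is the $e_0/(NT)$ term. Everything therefore reduces to bounding $\sup_k\|\xi_k\|_\infty$ with high probability.

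To bound $\xi_k$, fix $k$ and $(s,a)$ and condition on $Q_k$; note that $Q_k$ is independent of the epoch-$k$ data. The first ingredient is the visit counts. Since $N_{i,k}(s,a)\sim\mathrm{Bin}(H,\lambda_i(s,a))$ with $H=T/K$, a multiplicative Chernoff bound shows $N_{i,k}(s,a)\ge \lambda_{\min}H/2$ for all $i,k,(s,a)$ with probability at least $1-\delta/4$. This is exactly where the condition $T\ge(100K/\lambda_{\min})\log(4|\mathcal S||\mathcal A|NT/\delta)$ is used; in particular, no honest agent ever sends the default value $0$.

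The second ingredient is the variance of each honest message. Conditional on the counts, $d_{i,k}(s,a)$ is an average of $N_{i,k}(s,a)$ i.i.d. bounded Bellman samples centered at $(\mathcal{T}^*Q_k)(s,a)$. Each sample has range $O(\bar R+\|Q_k\|_\infty)$, so the conditional variance is at most $\sigma_k^2:=O\big((\bar R+\|Q_k\|_\infty)^2/(\lambda_{\min}H)\big)$. The honest messages are then independent, not identically distributed, with common mean $(\mathcal{T}^*Q_k)(s,a)$ and variances bounded by $\sigma_k^2$.

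The third ingredient is the trimmed-mean guarantee. I would invoke the guarantee recalled in Appendix~\ref{app:rob_mean} (Lugosi–Mendelson), in the version that only requires independence and a uniform variance bound; this is where the condition on $N$ relative to $\log(1/\delta_1)$ enters. It gives, with probability at least $1-\delta_1$,
\[
|\xi_k(s,a)|\lesssim\sigma_k\left(\sqrt{\varepsilon}\cdot\sqrt{\varepsilon\log(1/\varepsilon)}+\sqrt{\log(1/\delta_1)/N}\right)\lesssim\sigma_k\left(\varepsilon\sqrt{\log(1/\varepsilon)}+\sqrt{\log(1/\delta_1)/N}\right).
\]
A union bound over $k\le K\le T$ and over $(s,a)$, using the choice of $\delta_1$, makes this hold simultaneously everywhere with total failure probability at most $\delta/2$.

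The main obstacle is that $\sigma_k$ depends on $\|Q_k\|_\infty$, which is itself random. To handle it, I would show inductively on the good event that $\|Q_k\|_\infty\le \bar R/(1-\gamma)+O(\max_j\|\xi_j\|_\infty/(1-\gamma))$. The step is this: $\|\mathcal{T}^*Q\|_\infty\le \bar R+\gamma\|Q\|_\infty$, and the convex combination in \eqref{eqn:server_Q_update} preserves the bound as long as the $\xi$-term is small. The required smallness follows from the sample-size assumption, since it forces $\max_j\|\xi_j\|_\infty\le \bar R$, so that $\|Q_k\|_\infty\le 2\bar R/(1-\gamma)$ for all $k$. Consequently $\sigma_k\lesssim \bar R\sqrt{K}/((1-\gamma)\sqrt{\lambda_{\min}T})$.

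Finally, I would substitute $\sigma_k$ into the unrolled recursion and use $K\asymp\log(NT)/(1-\gamma)$, so that $\sqrt K$ contributes $\sqrt{\log(NT)}/(1-\gamma)^{1/2}$. The total power of $(1-\gamma)$ is then
\[
(1-\gamma)^{-1}\cdot(1-\gamma)^{-1}\cdot(1-\gamma)^{-1/2}=(1-\gamma)^{-5/2},
\]
coming from the recursion, from $\sigma_k$, and from $\sqrt K$ respectively. With $\log(1/\delta_1)\asymp\log(|\mathcal S||\mathcal A|T/\delta)$, the $\sqrt{\log(1/\delta_1)/N}$ part gives $T_1$ and the $\varepsilon\sqrt{\log(1/\varepsilon)}$ part gives $T_2$. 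The two failure events (counts, and trimmed means) have total probability at most $\delta$, which completes the plan.
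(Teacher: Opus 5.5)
Your overall architecture matches the paper's proof: the perturbed contraction recursion, the choice of $\eta$ producing $e_0/(NT)$, the visit-count event, conditioning on the counts, the union bound with $\delta_1$, and the $(1-\gamma)^{-5/2}$ bookkeeping. However, the step that bounds $\|Q_k\|_\infty$ has a genuine gap.

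On your good event,
$\|\xi_j\|_\infty\lesssim \mathcal{C}\,(\bar R+\|Q_j\|_\infty)\bigl(\varepsilon\sqrt{\log(1/\varepsilon)}+\sqrt{\log(1/\delta_1)/N}\bigr)/\sqrt{\lambda_{\min}H}$, where $\mathcal{C}$ is the unspecified universal constant of the trimmed-mean guarantee. Under your inductive hypothesis $\|Q_j\|_\infty\le 2\bar R/(1-\gamma)$, the requirement $\|\xi_j\|_\infty\le\bar R$ therefore amounts to
$\sqrt{\lambda_{\min}H}\gtrsim \mathcal{C}\bigl(\varepsilon\sqrt{\log(1/\varepsilon)}+\sqrt{\log(1/\delta_1)/N}\bigr)/(1-\gamma)$.
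The hypothesis on $T$ only gives $\lambda_{\min}H\ge 100\log(4|\mathcal S||\mathcal A|NT/\delta)$. That has no $(1-\gamma)$ dependence and no control over $\mathcal{C}$.

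When the requirement fails, your recursion $\|Q_{k+1}\|_\infty\le(1-\eta(1-\gamma-c_0))\|Q_k\|_\infty+\eta\bar R(1+c_0)$, with $c_0\asymp\mathcal{C}/\sqrt{\lambda_{\min}H}$, need not contract, so the induction does not close. You also cannot dismiss this regime as one where the claimed bound holds trivially, because you have no a priori bound on $e_K$.

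The missing idea is that boundedness holds deterministically, from the structure of \texttt{trimSC}. Suppose $\|Q_k\|_\infty\le B=\bar R/(1-\gamma)$. Then every honest message, including the default value $0$, lies in $[-B,B]$. The clipping thresholds $\alpha,\beta$ are order statistics of the first half at rank at least $\lceil\zeta N/2\rceil\ge 4\varepsilon N+12\log(4/\delta_1)>\varepsilon N$ from their respective ends. So $\alpha<-B$ (or $\beta>B$) would require more than $\varepsilon N$ corrupted messages in that tail, which is impossible. Hence $[\alpha,\beta]\subseteq[-B,B]$ and $\widetilde d_k(s,a)\in[-B,B]$ on every sample path, and the convex combination $Q_{k+1}$ inherits the bound. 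This gives $\|Q_k\|_\infty\le B$ for all $k$ with no probabilistic induction, no circularity between $\xi_k$ and $Q_k$, and no extra sample-size condition.

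A second, easier-to-fix problem is the trimmed-mean step. The Lugosi--Mendelson guarantee that assumes only independence and a variance bound gives $\sigma_k\bigl(\sqrt{\varepsilon}+\sqrt{\log(1/\delta_1)/N}\bigr)$, and the $\sqrt{\varepsilon}$ is unimprovable under second-moment assumptions alone. Your factor $\sqrt{\varepsilon}\cdot\sqrt{\varepsilon\log(1/\varepsilon)}$ implicitly uses a sub-Gaussian estimate of the truncated second moment, which a variance bound does not supply. As written, you would only get $T_2\propto\sqrt{\varepsilon}$.

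The fix, which is what the paper does, is to exploit boundedness. Condition on $\mathcal F_{k-1}$ and on the full count vector $(p_1,\ldots,p_N)$. Each honest message is then an average of $p_i$ independent Bellman samples bounded by $B$, hence $O(B/\sqrt{p_i})$-sub-Gaussian by Hoeffding. The sub-Gaussian version of the trimmed-mean lemma, for independent, non-identically distributed inliers with a common mean, then delivers the $\varepsilon\sqrt{\log(1/\varepsilon)}$ rate.
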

\textbf{Discussion.}
Theorem~\ref{theorem:theoremmainresult} gives a finite-time convergence rate for \texttt{Robust Async-Fed-Q} in the full coverage \((\textcolor{winered}{\texttt{S}_1})\) setting. The result shows that, despite the presence of corrupted agents, the final iterate \(Q_K\) remains close to the optimal \(Q\)-function \(Q^*\) with high probability. The bound in~\eqref{eqn:main_conv_bnd} consists of three terms: an asymptotically negligible $\mc{O}(1/(NT))$ term, a dominant statistical term $T_1$ and a corruption-induced bias term $T_2$. We discuss these terms below. 

\(\bullet\) \textbf{Near-optimal statistical rate and collaboration benefit.}
In the absence of corruption, i.e., when \(\varepsilon=0\), the dominant statistical term in~\eqref{eqn:main_conv_bnd} is $T_1 = \widetilde{\mc{O}}(1/\sqrt{\lambda_{\min} NT})$. When $N=1$, this rate matches single-agent \(Q\)-learning rates~\cite{wainwright, Qu, li2024q}. For $N > 1$, this term provides the optimal $\sqrt{N}$-factor improvement relative to the single-agent case, representing statistical benefits of collaboration~\cite{Woo_communication}.  

\(\bullet\) \textbf{Diminishing corruption effect.}
The term $T_2 = \widetilde{\mc{O}}(\varepsilon/\sqrt{\lambda_{\min} T})$ in~\eqref{eqn:main_conv_bnd} captures the adversarial bias effect, and is consistent with previous results on distributed supervised learning~\cite{yin, chensu}. Thus, when $\varepsilon$ is sufficiently small, our approach preserves the linear speedup property due to collaboration. To our knowledge, this is the first such result in the context of robust federated \(Q\)-learning. A key feature of the bias term $T_2$ is that it is not a fixed error floor: this term decreases as the per-agent sample size \(T\) grows. Thus, unlike robust learning guarantees where adversarial contamination leaves a persistent asymptotic bias, \texttt{Robust Async-Fed-Q} can wash out the effect of corrupted agents in the large-sample limit. This essential feature is a direct consequence of the operator refinement step in Algorithm~\ref{alg:async-fedq} which leads to variance reduction. Later in this section, we show that the dependence on the corruption fraction $\varepsilon$ we achieve is information-theoretically unavoidable. 

\(\bullet\) \textbf{Little communication overhead.} 
Algorithm~\ref{alg:async-fedq} requires only $K =  \Theta(\log(NT)/(1-\gamma))$ communication rounds, with no linear dependence on the number of samples $T$ as in~\cite{wang2023TMLR, zhang2024finite}, nor on the number of agents $N$ as in~\cite{Woo_communication}. Furthermore, the communication complexity of Algorithm~\ref{alg:async-fedq} is \emph{unaffected} by the state-visitation probabilities unlike~\cite{Woo_communication}, where the number of communication rounds scales inversely with such probabilities. Since both $T$ and $N$ can be large in a federated setting, and certain state-action pairs might be visited infrequently, our approach leads to a dramatic reduction in communication costs relative to these papers, while also accounting for the key challenge of adversarial robustness. In fact, \emph{for FRL under asynchronous sampling, the amount of communication needed to achieve a linear speedup for Algorithm~\ref{alg:async-fedq} is a strict improvement over prior work; this is an important contribution of our paper.} Once again, this improvement can be attributed to the central idea of constructing low-variance refined Bellman operators within each epoch (as opposed to making multiple \(Q\)-table updates in standard FRL algorithms).  

\textbf{Proof Sketch for Theorem~\ref{theorem:theoremmainresult}.} The key step in our analysis is to control the gap between the estimated update direction $\widetilde d_k$ and the ideal update direction $\mc T^*Q_k$. This is achieved in the following result which forms our main new technical tool. 

\begin{lemma}\textbf{(Robust Aggregation Error)}
\label{lem:robust_aggregation_error}
Suppose the conditions of Theorem~\ref{theorem:theoremmainresult} hold. Then, w.p. at least \(1-\delta\), the following bound holds simultaneously for all epochs $k \in [K]$:
\begin{equation}
\label{eqn:robust_aggregation_error}
\left\lVert \widetilde d_k-\mc T^*Q_k \right\rVert_{\infty}
\le
\mathcal{O}\left(
\frac{\bar R \sqrt{\log\left(|\mc S||\mc A|T/\delta\right)}}{(1-\gamma)\sqrt{\lambda_{\min}NH}}
\right)
+
\mathcal{O}\left(
\frac{\bar R\,\varepsilon\sqrt{\log(1/\varepsilon)}}{(1-\gamma)\sqrt{\lambda_{\min}H}}
\right).
\end{equation}
\end{lemma}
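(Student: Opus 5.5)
Fix an epoch $k$ and a pair $(s,a)$. We condition on $Q_k$, which is measurable with respect to data from epochs before $k$. The per-epoch samples are fresh under the i.i.d. model, so conditionally on $Q_k$ the honest messages $\{d_{i,k}(s,a)\}_{i\notin\mc B}$ are independent across agents.

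The plan has three steps: establish concentration for a single honest agent, feed this into the \texttt{trimSC} guarantee of~\cite{lugosi} (Appendix~\ref{app:rob_mean}), and finish with a union bound over all $(s,a)$ and $k$. That union bound is why $\delta_1=\delta/(2T|\mc S||\mc A|)$ is used.

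\textbf{Step 1: visit counts.} $N_{i,k}(s,a)$ is a sum of $H$ i.i.d. Bernoulli$(\lambda_i(s,a))$ variables with mean at least $\lambda_{\min}H$. A multiplicative Chernoff bound gives $N_{i,k}(s,a)\ge \lambda_{\min}H/2$ with probability at least $1-\exp(-\lambda_{\min}H/8)$. The condition $T\ge (100K/\lambda_{\min})\log(4|\mc S||\mc A|NT/\delta)$ means $H=T/K\ge 100\log(\cdot)/\lambda_{\min}$. This makes the bound hold simultaneously over all honest $i$, all $(s,a)$, and all $k$, except on an event of probability at most $\delta/2$. Call the good event $\mc E$.

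\textbf{Step 2: per-agent variance.} Conditioned on $Q_k$ and on $N_{i,k}(s,a)=n$, the summands in~\eqref{eqn:dik_update} are i.i.d. with mean $(\mc T^*Q_k)(s,a)$. This uses the standard argument that, under i.i.d. sampling, rewards and next states at successive visits are i.i.d. draws from $\mc R(s,a)$ and $\mc P(\cdot\mid s,a)$, independent of the count. Each summand is bounded by $\bar R+\gamma\|Q_k\|_\infty$.

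We need a uniform bound $\|Q_k\|_\infty\lesssim \bar R/(1-\gamma)$. I would get it inductively from the update~\eqref{eqn:server_Q_update} together with the fact that the trimmed mean outputs a value within the range of the honest data on the good event. Alternatively, one can clip $\widetilde d_k$ to $[-\bar R/(1-\gamma),\bar R/(1-\gamma)]$, which is harmless because $\mc T^*Q_k$ lies in that interval.

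This gives $\mathrm{Var}(d_{i,k}(s,a)\mid Q_k,N_{i,k})\le \sigma^2:=c\bar R^2/((1-\gamma)^2\lambda_{\min}H)$ on $\mc E$. Handling the randomness of $N_{i,k}$ needs care. I would either condition on the count vector and note that conditional independence across agents is preserved, or define truncated variables $d_{i,k}\mathbbm 1\{N_{i,k}\ge\lambda_{\min}H/2\}$ so that the variance bound holds unconditionally.

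\textbf{Step 3: robust aggregation.} The honest messages are independent but not identically distributed. They share a common mean $(\mc T^*Q_k)(s,a)$ and have variance at most $\sigma^2$. The trimmed mean guarantee from~\cite{lugosi} (which I expect extends to non-identical independent samples with a common mean and a uniform variance bound, since its proof uses only Bernstein-type concentration of counts and truncated sums) then gives, with probability $1-\delta_1$,
\[
|\widetilde d_k(s,a)-(\mc T^*Q_k)(s,a)|\lesssim \sigma\Bigl(\sqrt{\varepsilon}\cdot\text{? }\Bigr)
\]
More precisely the bound is $\lesssim \sigma\sqrt{\log(1/\delta_1)/N}+\sigma\,\varepsilon\sqrt{\log(1/\varepsilon)}$. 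The stated $\varepsilon\sqrt{\log(1/\varepsilon)}$ rate, rather than $\sqrt\varepsilon$, is available because the summands are bounded, hence sub-Gaussian. Using $N\ge 2\log(16T|\mc S||\mc A|/\delta)$ satisfies the estimator's sample-size requirement. Substituting $\sigma$ reproduces both terms of~\eqref{eqn:robust_aggregation_error}, and a union bound over $K|\mc S||\mc A|\le T|\mc S||\mc A|$ events together with $\mc E$ yields total failure probability at most $\delta$.

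\textbf{Main obstacle.} I expect the main obstacle to be Step 3's use of the estimator with non-identically distributed honest samples whose distributions depend on random counts. The exact form of the Appendix~\ref{app:rob_mean} guarantee matters here. If it is stated only for i.i.d. data, I would reprove the needed concentration steps in that proof (quantile localization and the truncated-mean deviation) for independent, sub-Gaussian, mean-$\mu$ samples via Bernstein's inequality.
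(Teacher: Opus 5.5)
Your proposal is correct and follows essentially the same route as the paper. The paper (i) establishes a uniform visitation event (via Bernstein with threshold $\tfrac34\lambda_{\min}H$; your Chernoff bound with $\tfrac12$ serves the same purpose); (ii) conditions on the full count vector and $\mathcal F_{k-1}$, so each honest message averages $p_i$ i.i.d.\ Bellman samples bounded by $B=\bar R/(1-\gamma)$ and is therefore $2B/\sqrt{p_i}$-sub-Gaussian by Hoeffding; (iii) applies a \texttt{trimSC} guarantee for independent, non-identical sub-Gaussian inliers with a common mean, which the paper simply states as Lemma~\ref{lem:trim-subg}; and (iv) finishes with a union bound using $\delta_1=\delta/(2T|\mc S||\mc A|)$ and a sum over count vectors. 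One simplification is worth noting: $\|Q_k\|_\infty\le \bar R/(1-\gamma)$ holds deterministically, not just on a good event, because the clipping thresholds are order statistics of depth $\lceil\zeta N/2\rceil>\varepsilon N$ and so lie within the range of the honest messages; this removes the need for your clipping modification or any event-dependent induction.
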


The collaboration benefit and the vanishing corruption term in~\eqref{eqn:main_conv_bnd} are both driven by Lemma~\ref{lem:robust_aggregation_error}. Let us sketch out the intuition behind this result. Within each epoch, each good agent $i$ visits every state-action pair at least $\Omega(\lambda_{\min} H)$ times with high-probability. Since all such samples are used to estimate $d_{i,k}$, the variance of such an estimate scales inversely with $\lambda_{\min}H$. Since all estimates from good agents enjoy such a $H$-fold variance reduction, the corruption term also gets hit by this lower variance. Intuitively, the variance reduction effect makes it harder for the adversarial agents to amplify uncertainty at the server. Since $T=KH$, and \(K=\Theta(\log(NT)/(1-\gamma))\), the adversarial bias essentially scales inversely with the total per-agent sample-size $T$ (up to log factors). The main technical hurdle in proving Lemma~\ref{lem:robust_aggregation_error} stems from the fact that the number of visits to each state-action pair in each epoch is a random variable that can vary across agents. A naive union bound over the random visitation counts yields vacuous guarantees that obscure the linear speedup. We instead carefully condition on the complete vector of visitation counts across agents, and exploit properties of sub-Gaussian random variables. With Lemma~\ref{lem:robust_aggregation_error} in hand, Theorem~\ref{theorem:theoremmainresult} can be proven using the contraction and fixed-point properties of the Bellman optimality operator $\mc{T}^*$. For detailed proofs, see  Appendix~\ref{app:proof_of_upper_bound_S1}.

\textcolor{winered}{\(\smallsquare\)} \textbf{Information-Theoretic Lower Bound.} The dominant statistical term $\widetilde{\mc{O}}(1/\sqrt{\lambda_{\min} NT})$ in our main bound~\eqref{eqn:main_conv_bnd} is unimprovable, as evidenced by the lower bounds in~\cite{li2024q}. To demonstrate the overall tightness of our bounds, we now argue that the corruption-induced bias $\widetilde{\mc{O}}(\varepsilon/\sqrt{T})$ is also unavoidable. To do so, it suffices to consider a class of MDPs $\mc{H}$ with just one state-action pair $(s,a)$, where the only randomness comes from the reward distribution $\mc{R}(s,a)$. We consider uni-variate Gaussian reward distributions with mean $\mu$ and variance $\bar{R}^2$. An instance $\mc{M}_\mu \in \mc{H}$ is specified by the mean $\mu$; the variance parameter $\bar{R}^2$ is the same across all instances within $\mc{H}$. For an instance $\mc{M}_{\mu} \in \mc{H}$, it can be easily verified that the optimal state-action value function is $Q^*_{\mu}(s,a) = \mu/(1-\gamma)$. 

The rationale behind the above construction is to relate the estimation of $Q^*_{\mu}(s,a)$ to that of the scalar parameter $\mu$. Next, we consider a slightly different attack model where each of the agents is adversarial with probability $\varepsilon$, independently of the other agents. Thus, the expected fraction of adversaries is $\varepsilon$. We allow each agent to transmit $T$ noisy reward samples to the server. Note that we are allowing the server to observe the raw observations from the agents. The idea here is to reveal that even with access to this raw information, the fundamental bottleneck identified in~\eqref{eqn:main_conv_bnd} still persists. Let $\widehat{Q}_T$ be the estimate of the optimal state-action value function maintained by the server, which is allowed to be \emph{any} measurable function of the observations. We then have the following result. 

\begin{theorem}\textbf{(Fundamental lower bound)} 
\label{thm:async_fedq_lower_bound}
Given any estimator $\widehat{Q}_T$, there exist two distinct instances $\mc{M}_{\mu_1}, \mc{M}_{\mu_2} \in \mc{H}$, and an universal constant $c>0$ such that
\begin{equation}
\max\left\{\mathbb P\left(
\lVert \widehat Q_T-Q^*_{\mu_1}(s,a)\rVert_{\infty}
\ge
\frac{c\varepsilon \bar{R}}{(1-\gamma)\sqrt{T}}
\right), \mathbb P\left(
\lVert \widehat Q_T-Q^*_{\mu_2}(s,a)\rVert_{\infty}
\ge
\frac{c\varepsilon \bar{R}}{(1-\gamma)\sqrt{T}}
\right) \right\} 
\ge
\frac{1}{4}.
\nonumber
\end{equation}
\end{theorem}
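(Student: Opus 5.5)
We prove Theorem~\ref{thm:async_fedq_lower_bound} with Le Cam's two-point method, using the Huber-contamination structure of the attack model in which each agent is independently adversarial with probability $\varepsilon$. First we reduce to sufficient statistics. For an honest agent in instance $\mc M_\mu$, the $T$ Gaussian rewards $\mathcal N(\mu,\bar R^2)$ can be summarized, without loss, by their empirical mean $\bar r_i\sim\mathcal N(\mu,\bar R^2/T)$. An adversarial agent may emit $T$ samples with any prescribed joint law. In particular, it can choose samples $y+\xi$, where $\xi$ is a centered Gaussian noise vector with empirical mean zero, built so that the conditional law given the mean matches the honest one. Then each agent's data is an i.i.d.\ draw from the mixture $(1-\varepsilon)\mathcal N(\mu,\bar R^2/T)\otimes\text{(ancillary)} + \varepsilon\, G$, where $G$ is a distribution chosen by the adversary. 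Since $\widehat Q_T$ may be any measurable function, it suffices to bound the total variation between the joint laws of all $N$ agents' observations under the two instances.

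Next, fix $\sigma:=\bar R/\sqrt T$ and pick $\mu_1=0$, $\mu_2=\Delta$, with $\Delta=c'\varepsilon\sigma$. The key construction is the standard one from robust mean estimation: the contaminated laws can be made \emph{identical} whenever $\TV(\mathcal N(\mu_1,\sigma^2),\mathcal N(\mu_2,\sigma^2))\le \varepsilon/(1-\varepsilon)$. Concretely, write $P_j=\mathcal N(\mu_j,\sigma^2)$ and $\tau=\TV(P_1,P_2)$. Set $G_1=\frac{1-\varepsilon}{\varepsilon}(P_2-P_1)_+ $ and $G_2=\frac{1-\varepsilon}{\varepsilon}(P_1-P_2)_+$, then add equal slack mass so that both become probability measures; this is possible because $\frac{1-\varepsilon}{\varepsilon}\tau\le1$. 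With this choice, $(1-\varepsilon)P_1+\varepsilon G_1=(1-\varepsilon)P_2+\varepsilon G_2$. Since $\tau\le \Delta/(2\sigma)$ by Pinsker applied to $\KL=\Delta^2/(2\sigma^2)$, it suffices to take $\Delta\le 2\varepsilon\sigma$. In that case the per-agent law is identical under the two instances, so the joint law of all observations coincides for every $N$, and $\TV=0$. Notably, we can then bypass the sample-size dependence entirely; the rate $\varepsilon/\sqrt T$ does not improve with $N$.

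Then we conclude with Le Cam. Since $Q^*_{\mu_j}=\mu_j/(1-\gamma)$, the two targets are separated by $\Delta/(1-\gamma)=2\varepsilon\bar R/((1-\gamma)\sqrt T)$. Let $\rho$ be half of this separation. If $\widehat Q_T$ were within $\rho$ of both targets, the triangle inequality would be violated, so the events $E_j=\{|\widehat Q_T-Q^*_{\mu_j}|<\rho\}$ are disjoint. Under the common observation law $\mathbb Q$, we get $\mathbb Q(E_1^c)+\mathbb Q(E_2^c)\ge1$. Hence one of the two probabilities is at least $1/2\ge1/4$, which gives the claim with $c=1$. Even if one insists on a small residual $\TV$ (for instance, to keep the adversary's rewards Gaussian-looking), the bound $\max\ge(1-\TV)/2\ge1/4$ still holds.

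The main obstacle is rigor in the first step, namely justifying that the adversary can realize the mixture on the full $T$-dimensional raw data rather than only on the sufficient statistic. We handle this by defining the adversarial law on $\mathbb R^T$ as the Gaussian vector law conditioned to have mean distributed as $G_j$. This works because, for Gaussians with known variance, the conditional law of the sample given its mean does not depend on $\mu$. As a result, the full-data mixtures coincide exactly as well.
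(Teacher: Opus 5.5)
Your proposal is correct and is essentially the paper's proof: the same two Gaussian instances, the same Huber-contamination attack with adversarial densities $\frac{1-\varepsilon}{\varepsilon}(p_2-p_1)_+$ and $\frac{1-\varepsilon}{\varepsilon}(p_1-p_2)_+$ that make the per-agent contaminated laws identical, Pinsker to relate the mean gap to the total variation, and a two-point reduction. The differences are minor:
\begin{itemize}
\item You fix the gap $\Delta=2\varepsilon\bar R/\sqrt{T}$ and pad with slack mass, as the paper does in its $\lambda_{\min}$ variant, rather than tuning the total variation to exactly $\varepsilon/(1-\varepsilon)$.
\item Your sufficient-statistic detour is unnecessary, because the contamination can be built directly on $\mathbb R^T$ using $\KL(\mc P_1^{\otimes T}\|\mc P_2^{\otimes T})=T(\mu_1-\mu_2)^2/(2\bar R^2)$, as in the paper.
\item Your disjoint-events argument replaces Bretagnolle--Huber and actually yields $1/2$ rather than $1/4$.
\end{itemize}
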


\textbf{Main takeaway.}
Theorem~\ref{thm:async_fedq_lower_bound} tells us that with a constant probability of at least $1/4$, any estimator is bound to suffer an error on the order of $\Omega(\varepsilon \bar{R}/( (1-\gamma) \sqrt{T}))$. Collectively, Theorems~\ref{theorem:theoremmainresult} and~\ref{thm:async_fedq_lower_bound} provide the \emph{first set of nearly matching upper and lower bounds for the adversarially robust FRL problem}. One limitation is that our current lower bound does not capture the effect of asynchronous sampling. While we conjecture that the minimum visitation probability $\lambda_{\min}$ should also be fundamentally present in the corruption-induced bias term, making this rigorous is part of our ongoing work. The proof of Theorem~\ref{thm:async_fedq_lower_bound} in Appendix~\ref{app:proof_of_lower_bound} constructs two instances with optimal state-action value functions separated by $\Omega(\varepsilon \bar{R}/( (1-\gamma) \sqrt{T}))$. We then create adversarial signals that cause the distribution of reward observations to appear identical to the server in these two instances.  
The rest relies on a change-of-measure argument  employing the Bretagnolle-Huber inequality~\cite{lattimore2020bandit}. 

\section{Extension to the Partial-Coverage Setting}
\label{sec:partial}
We next extend our framework to a more realistic partial-coverage setting, denoted by \((\textcolor{winered}{\texttt{S}_2})\), where, unlike before, individual agents do not need to cover the entire state-action space \(\mc S\times\mc A\). To formalize this idea, we relax Assumption~\ref{ass:ergodic}, and no longer impose that for each agent $i$, the Markov chain induced by $\mu_i$ is aperiodic and irreducible; instead, we only require that the chain converges to a stationary distribution $\pi_i$ (which may have some zero entries). For each agent $i$, recall that the visitation probability to $(s,a)$ at stationarity is given by \(\lambda_i(s,a):=\pi_i(s)\mu_i(a|s)\), which is non-zero if and only if $\pi_i(s) >0$ (since $\mu_i(a|s) > 0, \forall s,a$). We then define \(\mc I(s,a):=\{i\in[N]:\lambda_i(s,a)>0\}\) as the set of ``source agents" who can provide information about state-action pair $(s,a).$ Intuitively, to estimate every component of $Q^*$, for each pair $(s,a) \in \mc{S} \times \mc{A}$, we need at least one agent to be able to observe/cover it. This is captured by the following partial coverage assumption. 

\begin{assumption}
\label{ass:committee_coverage}
    For each $(s,a) \in \mc{S} \times \mc{A}$, the corresponding source set satisfies $\mc{I}(s,a) \neq \emptyset.$
\end{assumption}

The above assumption is significantly weaker than the full coverage setting \((\textcolor{winered}{\texttt{S}_1})\) captured by Assumption~\ref{ass:ergodic} and studied in prior FRL work~\cite{khodadadian, wang2023TMLR, zhang2024finite}, which effectively requires \(\mc I(s,a)=[N]\) for all \((s,a)\). Instead of requiring the state-action space to be fully observable at \emph{every} agent, Assumption~\ref{ass:committee_coverage} requires the state-action space to be jointly observable based on the \emph{collective} measurements of \emph{all} agents. However, in the presence of adversarial agents, such an assumption is inadequate. To see why, suppose $\mc{I}(s,a) \subseteq \mc{B}$, i.e., the entire source set for pair $(s,a)$ is adversarial (recall that $\mc{B}$ denotes the adversarial set). In such a case, there is no hope for accurately estimating $Q^*(s,a)$. This simple example suggests that some form of \emph{information redundancy} is \textbf{necessary} for \emph{every} pair $(s,a)$. Accordingly, we impose the following local constraint: \(|\mc B\cap \mc I(s,a)|\le \varepsilon |\mc{I}(s,a)|, \forall (s,a) \in \mc{S} \times \mc{A}\), with $\varepsilon \in [0, 1/2).$ This ensures that there is sufficient clean information for every state-action pair. 

\textbf{Algorithmic Modifications.} The only algorithmic changes pertain to how the local estimates are computed and aggregated. In Line~6 of Algorithm~\ref{alg:async-fedq}, agent \(i\) now forms \(d_{i,k}(s,a)\) only for the state-action pairs that it can observe, i.e., for those \((s,a)\) such that \(i\in\mc I(s,a)\). Similarly, in the server aggregation step in Line~8, the server does not aggregate over all agents in \([N]\). Instead, for each \((s,a)\), it applies the trimmed mean estimator \texttt{trimSC} only to the messages \(\{d_{i,k}(s,a):i\in\mc I(s,a)\}\) collected from the corresponding source set $\mc{I}(s,a)$. Everything else remains the same. 

To state our main result for the partial coverage setting, we define the minimum source set size and the minimum overall visitation probability (over source sets) as \(\overline{N}:=\min_{(s,a)\in\mc S\times\mc A}|\mc I(s,a)|\) and \(\Lambda_{\min}:=\min_{(s,a)\in\mc S\times\mc A}\min_{i\in\mc I(s,a)}\lambda_i(s,a)\), respectively. Each of these objects creates \emph{information bottlenecks}, and plays key roles in our main result for this section stated below. 

\begin{theorem}\textbf{(Finite-time rates under} \(\textcolor{winered}{\texttt{S}_2}\)\textbf{)}
\label{theorem:theoremmainresult_S2} 
Suppose Assumption~\ref{ass:committee_coverage} holds. 
Fix any confidence parameter \(\delta\in(0,1)\), and suppose \(\eta=\log(NT)/((1-\gamma)K)\), \(K=\left\lceil c_1\log(NT)/(1-\gamma)\right\rceil\) where \(c_1>0\) is a universal constant that ensures \(\eta<1\), \(\overline{N} \geq \left\lceil
    2\log\!\left(16T|\mc S||\mc A|/\delta\right)
    \right\rceil,\) and \(T\ge (100K/\Lambda_{\min})\log\left(4|\mc S||\mc A|NT/\delta\right)\). Then, the output \(Q_K\) of Algorithm~\ref{alg:async-fedq} satisfies, with probability at least \(1-\delta\),
\begin{equation} 
\begin{aligned}
e_K
\le
\frac{e_0}{NT}
&+
\mathcal{O}\left(
\frac{
\bar R
\sqrt{\log(NT)\log\left(|\mc S||\mc A|T/\delta\right)}
}{
(1-\gamma)^{5/2}\sqrt{\overline{N}\Lambda_{\min}T}
}
\right) +
\mathcal{O}\left(
\frac{
\bar R\varepsilon\sqrt{\log(1/\varepsilon)}\sqrt{\log(NT)}
}{
(1-\gamma)^{5/2}\sqrt{\Lambda_{\min}T}
}
\right).
\end{aligned}
\label{eqn:main_conv_bnd_S2}
\end{equation}
\end{theorem}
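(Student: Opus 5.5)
The plan is to mirror the proof of Theorem~\ref{theorem:theoremmainresult}, replacing Lemma~\ref{lem:robust_aggregation_error} with a source-set analogue. Concretely, I would first establish that, with probability at least $1-\delta$, simultaneously for all epochs $k$ and all pairs $(s,a)$,
\begin{equation*}
\bigl|\widetilde d_k(s,a)-(\mc T^*Q_k)(s,a)\bigr|
\le
\mathcal{O}\!\left(\frac{\bar R\sqrt{\log(|\mc S||\mc A|T/\delta)}}{(1-\gamma)\sqrt{|\mc I(s,a)|\Lambda_{\min}H}}\right)
+\mathcal{O}\!\left(\frac{\bar R\varepsilon\sqrt{\log(1/\varepsilon)}}{(1-\gamma)\sqrt{\Lambda_{\min}H}}\right).
\end{equation*}
Since $|\mc I(s,a)|\ge \overline N$, taking the maximum over $(s,a)$ yields an $\ell_\infty$ bound of the same form as~\eqref{eqn:robust_aggregation_error}, with $N,\lambda_{\min}$ replaced by $\overline N,\Lambda_{\min}$.

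The per-pair bound is proved exactly as in the full-coverage case, restricted to the sample set $\{d_{i,k}(s,a):i\in\mc I(s,a)\}$. The steps, in order, are as follows.
\begin{enumerate}
\item \textbf{Visit counts.} For each $i\in\mc I(s,a)$ we have $\lambda_i(s,a)\ge\Lambda_{\min}$. A Chernoff bound together with a union bound over $i$, $k$, and $(s,a)$ gives $N_{i,k}(s,a)\ge \Lambda_{\min}H/2$ for all such triples. This is where the requirement $T\ge (100K/\Lambda_{\min})\log(4|\mc S||\mc A|NT/\delta)$ enters, since $H=T/K$.
\item \textbf{Conditional sub-Gaussianity.} Condition on $Q_k$ and on the full vector of visit counts $\{N_{i,k}(s,a)\}_{i\in\mc I(s,a)}$. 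Each honest $d_{i,k}(s,a)$ is then an average of $N_{i,k}(s,a)$ independent bounded terms with mean $(\mc T^*Q_k)(s,a)$, each bounded by $\mathcal O(\bar R/(1-\gamma))$. Here I use $\|Q_k\|_\infty\le \bar R/(1-\gamma)$, which follows inductively from the convex-combination update, up to the clipping that trimSC implicitly provides. Hence, conditionally, $d_{i,k}(s,a)$ is sub-Gaussian with variance proxy $\mathcal O(\bar R^2/((1-\gamma)^2\Lambda_{\min}H))$, and the honest messages are independent across $i$.
\item \textbf{Robust aggregation.} Invoke the guarantee of \texttt{trimSC} from Appendix~\ref{app:rob_mean}, applied to the $|\mc I(s,a)|$ samples, of which at most $\varepsilon|\mc I(s,a)|$ are corrupted by the local redundancy constraint, with confidence $\delta_1$. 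This requires $|\mc I(s,a)|\gtrsim\log(1/\delta_1)$, which is guaranteed by the hypothesis on $\overline N$. The output error is then $\sigma\bigl(\sqrt{\log(1/\delta_1)/|\mc I(s,a)|}+\varepsilon\sqrt{\log(1/\varepsilon)}\bigr)$ with $\sigma^2$ the variance proxy above. One subtlety is that the honest samples are not identically distributed. I would handle this as in Lemma~\ref{lem:robust_aggregation_error}, by noting that the trimmed-mean analysis only needs independent samples with a common mean and a uniform sub-Gaussian bound.
\item \textbf{Deconditioning.} Integrate out the visit counts on the good event from step 1, and union bound over $k$ and $(s,a)$.
\end{enumerate}

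With the aggregation bound in hand, the recursion is unchanged. Write $Q_{k+1}-Q^*=(1-\eta)(Q_k-Q^*)+\eta(\mc T^*Q_k-\mc T^*Q^*)+\eta(\widetilde d_k-\mc T^*Q_k)$. Using $\gamma$-contraction, $e_{k+1}\le(1-\eta(1-\gamma))e_k+\eta\Delta$, where $\Delta$ is the aggregation bound. Unrolling gives $e_K\le(1-\eta(1-\gamma))^Ke_0+\Delta/(1-\gamma)$. With $\eta K(1-\gamma)=\log(NT)$, the first term is at most $e_0/(NT)$. Substituting $H=T/K$ and $K=\Theta(\log(NT)/(1-\gamma))$ into $\Delta/(1-\gamma)$ produces the extra factor $\sqrt{\log(NT)}/(1-\gamma)^{3/2}$, giving~\eqref{eqn:main_conv_bnd_S2}.

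The main obstacle I expect is step 3 in the heterogeneous setting. The trimmed mean is now applied to source sets of different sizes, and the corruption constraint is only local. The proof therefore has to make sure that the \texttt{trimSC} guarantee is stated per source set with fraction $\varepsilon$ relative to $|\mc I(s,a)|$. It must also make sure that the adversaries' ability to choose messages after seeing all data does not break the conditional independence argument. I would handle the latter by noting that the \texttt{trimSC} guarantee holds for arbitrary, even adaptive, replacement of the corrupted samples once the honest samples land in its good event.
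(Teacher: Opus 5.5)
Your proposal is correct and follows essentially the same route as the paper. The paper proves source-set versions of the boundedness, visitation, and conditional robust-aggregation lemmas: it conditions on $\mc F_{k-1}$ and on the visit-count vector over $\mc I(s,a)$, applies \texttt{trimSC} with the local corruption fraction, uses $|\mc I(s,a)|\ge\overline N$, and then runs the unchanged contraction recursion. The differences are cosmetic. You use a Chernoff threshold $\Lambda_{\min}H/2$ where the paper uses Bernstein with $\tfrac34\Lambda_{\min}H$. Your appeal to ``the clipping that \texttt{trimSC} implicitly provides'' for $\|Q_k\|_\infty\le \bar R/(1-\gamma)$ is exactly the paper's argument: both clipping thresholds lie in $[-B,B]$ because fewer than $\lceil\zeta M/2\rceil$ messages are corrupted.
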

\textbf{Discussion.} Theorem~\ref{theorem:theoremmainresult_S2} shows that partial coverage changes the effective sample size but not the basic robust-learning mechanism. Since we are interested in the $\ell_{\infty}$ error, the state-action pair $(s,a)$ with the smallest source set and the least stationary visitation probability creates a bottleneck. The bound in~\eqref{eqn:main_conv_bnd_S2} formalizes this intuition. The complete proof of Theorem~\ref{theorem:theoremmainresult_S2} is deferred to Appendix~\ref{app:proof_of_upper_bound_S2}. 

Theorems~\ref{theorem:theoremmainresult} and~\ref{theorem:theoremmainresult_S2}, together with the fundamental lower bound in Theorem~\ref{thm:async_fedq_lower_bound}, provide a comprehensive finite-time characterization of robust asynchronous federated \(Q\)-learning under corrupted agent updates. The upper bounds establish collaborative statistical gains under both full and partial coverage, while the lower bound shows that the corruption-dependent terms are unavoidable. 
\section{Extension to Markovian Setting}
\label{sec:lMarkov}
We next consider the challenging case where each agent observes a single Markovian trajectory rather than i.i.d. samples. We present the result for the full-coverage setting \((\textcolor{winered}{\texttt{S}_1})\); the partial-coverage counterpart \((\textcolor{winered}{\texttt{S}_2})\) follows by the same argument, with \(\lambda_{\min}\) and \(N\) replaced by \(\Lambda_{\min}\) and \(\overline N\), respectively. Previously, each agent's samples were assumed to be drawn i.i.d. from the stationary distribution induced by its behavior policy. We now relax this assumption and let \(s_{i,t}\) denote the state of the Markov chain followed by agent \(i\) at time \(t\). Under Assumption~\ref{ass:ergodic}, the induced process \(Z_{i,t}:=(s_{i,t},a_{i,t},s_{i,t+1})\) is also ergodic~\cite{chenQ}. Let \(\Omega\) denote the state space of \(\{Z_{i,t}\}\), and let \(\rho_i\) be its stationary distribution. Following~\cite{dorfman}, define \(d_{i,\mathrm{mix}}(t):=\sup_{z\in\Omega}D_{\mathrm{TV}}\!\left(\mathbb P(Z_{i,t}\in\cdot\mid Z_{i,0}=z),\rho_i\right)\), where \(D_{\mathrm{TV}}\) denotes total variation distance, and define the mixing time of agent \(i\)'s chain as \(\bar\tau_i:=\inf\{t\ge 1:d_{i,\mathrm{mix}}(t)\le 1/4\}\). Next, let \(\bar\tau:=\max_{i\in[N]}\bar\tau_i\) represent the worst mixing time across agents. Using this object, we design a sub-sampling gap \(\tau:=\lceil \ell\bar\tau\rceil\), where \(\ell:=\left\lceil \log(2NT/\delta)/\log 2\right\rceil\) and $\delta \in (0,1)$ is the prescribed failure probability. Our Markovian extension then analyzes a \emph{sub-sampled} variant of \texttt{Robust Async-Fed-Q}, where each honest agent uses only every \(\tau\)-th transition from its trajectory. The spacing \(\tau\) is chosen proportional to the worst-case mixing time \(\bar\tau\), so that the samples used by each honest agent are approximately independent, reducing the Markovian analysis to the asynchronous i.i.d. case with effective epoch length \(H/\tau\). The coupling argument which makes this intuition precise is provided in Appendix~\ref{app:Markovian}. We have the following finite-time guarantee. 


\begin{theorem}\textbf{(Markovian Extension)}
\label{theorem:markov}
Fix any confidence parameter \(\delta\in(0,1)\), and suppose Assumption~\ref{ass:ergodic} holds. For each agent \(i\in[N]\), let \(Z_{i,t}:=(s_{i,t},a_{i,t},s_{i,t+1})\) denote the Markov chain induced by its behavior policy, and suppose this chain is initialized from its stationary distribution \(\rho_i\), i.e., \(Z_{i,0}\sim\rho_i\). Then, for the choice \(\eta=\log(NT)/((1-\gamma)K)\), \(K=\left\lceil c_1\log(NT)/(1-\gamma)\right\rceil\), \(N \geq \left\lceil
    2\log\!\left(32T|\mc S||\mc A|/\delta\right)
    \right\rceil,\) and \(T \ge (100 K\tau/\lambda_{\min})\log\left(4|\mc S||\mc A|NT/\delta\right)\), where \(c_1>0\) is a universal constant ensuring \(\eta<1\), the output \(Q_K\) of the \(\tau\)-sub-sampled version of Algorithm~\ref{alg:async-fedq} satisfies, with probability at least \(1-\delta\),
\begin{equation} 
\begin{aligned}
e_K
\le
\frac{e_0}{NT}
&+
\mathcal{O}\left(
\frac{
\bar R
\sqrt{\tau\log(NT)\log\left(|\mathcal S||\mathcal A|T/\delta\right)}
}{
(1-\gamma)^{5/2}\sqrt{\lambda_{\min}NT}
}
\right) +
\mathcal{O}\left(
\frac{
\bar R\varepsilon\sqrt{\tau\log(1/\varepsilon)}\sqrt{\log(NT)}
}{
(1-\gamma)^{5/2}\sqrt{\lambda_{\min}T}
}
\right).
\end{aligned}
\label{eqn:main_conv_bnd_markov}
\end{equation}
\end{theorem}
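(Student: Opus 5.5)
The plan is to reduce Theorem~\ref{theorem:markov} to the i.i.d. analysis of Theorem~\ref{theorem:theoremmainresult} via a coupling (Berbee-type blocking) argument, and then re-run that analysis with effective epoch length $H/\tau$.

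\textbf{Step 1: sub-sampled chain and coupling.} Fix an honest agent $i$. In the $\tau$-sub-sampled algorithm, agent $i$ uses the transitions $Z_{i,t}$ with $t \in \{0,\tau,2\tau,\ldots\}$. Since the chain starts at stationarity, each $Z_{i,m\tau}$ is marginally distributed as $\rho_i$. By submultiplicativity of the mixing distance, $d_{i,\mathrm{mix}}(\ell\bar\tau_i)\le 2^{-\ell}$. With $\ell=\lceil \log(2NT/\delta)/\log 2\rceil$, the conditional law of $Z_{i,(m+1)\tau}$ given the past is therefore within $\delta/(2NT)$ of $\rho_i$ in total variation, uniformly over the past.
\begin{itemize}
\item Use the maximal coupling lemma (Berbee/Goldstein, sequential version) to build, on an enlarged probability space, an i.i.d. sequence $\widetilde Z_{i,m}\sim\rho_i$ such that $\mathbb P(\widetilde Z_{i,m}\neq Z_{i,m\tau})\le \delta/(2NT)$ for every $m$.
\item Do this independently across agents, which is possible because the trajectories are independent.
\item There are at most $T/\tau\le T$ sub-sampled indices per agent and $N$ agents. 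A union bound shows that, with probability at least $1-\delta/2$, the sub-sampled Markov data coincide exactly with the i.i.d. data $\{\widetilde Z_{i,m}\}$.
\end{itemize}
On this event the algorithm's output equals the output of Algorithm~\ref{alg:async-fedq} run on i.i.d. asynchronous samples. That i.i.d. run has $T'=T/\tau$ samples per agent and epoch length $H'=H/\tau$, with the same $K$.

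One subtlety is that rewards are not contained in $Z_{i,t}$. I will include the reward in the coupled object, or note that rewards are conditionally independent given the state-action pair, so the same coupling extends. I also need the $\widetilde Z$ sequence to reproduce the $(s,a,s')$ triples, which is exactly what $d_{i,k}$ uses.

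\textbf{Step 2: apply the i.i.d. result.} Invoke Lemma~\ref{lem:robust_aggregation_error} and Theorem~\ref{theorem:theoremmainresult} on the i.i.d. surrogate at confidence level $\delta/2$. This changes the constant in the $N$ condition to $\log(32T|\mc S||\mc A|/\delta)$, matching the statement.
\begin{itemize}
\item The sample-size condition $T'\ge (100K/\lambda_{\min})\log(\cdot)$ becomes $T\ge (100K\tau/\lambda_{\min})\log(\cdot)$, again matching the statement.
\item The bound then reads with $T$ replaced by $T/\tau$. This gives $\sqrt{\tau}$ inflation of both $T_1$ and $T_2$, which is exactly \eqref{eqn:main_conv_bnd_markov}.
\item The logarithms $\log(NT')$ and $\log(T'/\delta)$ are bounded by $\log(NT)$ and $\log(T/\delta)$, since $T'\le T$.
\item The step size and $K$ are kept in terms of $NT$. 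The contraction factor over $K$ epochs then gives $(1-\eta(1-\gamma))^K\le e^{-\log(NT)}=1/(NT)$, so the $e_0/(NT)$ term is unchanged.
\end{itemize}
A final union bound over the coupling failure and the i.i.d. failure yields probability at least $1-\delta$.

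\textbf{Main obstacle.} The delicate point is Step 1, because the coupling must interact correctly with the algorithm's adaptivity. The frozen table $Q_k$ depends on previous epochs' data, including adversarial messages, which are arbitrary functions of everything. This is harmless because the coupling acts only on the honest agents' raw data streams, which the algorithm never influences: behavior policies are fixed. Equality on the coupling event is therefore a pathwise statement about the entire run.

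I would first check that the sequential maximal coupling yields \emph{jointly} i.i.d. $\widetilde Z_{i,m}$ rather than only correct marginals. If the clean version is awkward, the fallback is Berbee's blocking lemma applied to alternating blocks. The cost is only constants in $\tau$ and in the union bound, which are absorbed in the $\mathcal O(\cdot)$ terms.
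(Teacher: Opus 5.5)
Your proposal is correct and follows essentially the same route as the paper. Both arguments sub-sample at gap $\tau=\lceil \ell\bar\tau\rceil$, couple the retained transitions (with rewards attached) to i.i.d.\ draws from $\rho_i$ with total failure probability at most $\delta/2$ across agents, note that the algorithm's output coincides pathwise on the coupling event, and invoke the i.i.d.\ analysis with effective epoch length $H/\tau$ at confidence $\delta/2$. The only difference is cosmetic: you build the coupling yourself via a sequential maximal coupling, whereas the paper cites the coupling theorem of \cite{dorfman}, which packages exactly that construction.
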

\textbf{Main Takeaway.} Comparing Theorems~\ref{theorem:theoremmainresult} and~\ref{theorem:markov}, the Markovian guarantee mimics the i.i.d. asynchronous guarantee with an effective sample size \(T/\tau\). Thus, temporal dependence incurs only the standard \(\sqrt{\tau}\) mixing-time slowdown~\cite{Qu} in the statistical and corruption-dependent terms. To our knowledge, this is the first robustness guarantee in distributed learning under correlated data.
\section{Simulation Results}
We evaluate \texttt{Robust Async-Fed-Q} on a grid-world with $|\mc S|=100$, $|\mc A|=50$, $\gamma=0.7$, and $\bar{R} \le 10$ under full and partial coverage. The proposed algorithms remain stable under adversarial message corruption and converge to a small neighborhood of $Q^\star$. The estimation error decreases with the epoch length and the number of contributing agents, corroborating our theoretical guarantees. We measure the \(\ell_{\infty}\) error after epoch \(k\) as \(E_k:=\lVert Q_k-Q^\star\rVert_\infty\), using Markovian trajectories generated under behavior policies satisfying Assumption~\ref{ass:ergodic} for full coverage \((\textcolor{winered}{\texttt{S}_1})\) (see ~Fig.~\ref{fig:sim}) and Assumption~\ref{ass:committee_coverage} for partial coverage \((\textcolor{winered}{\texttt{S}_2})\) (see Fig.~\ref{fig:sim-2}). Each plot in Figures~\ref{fig:sim} and~\ref{fig:sim-2} reports the average estimation error across 100 independent runs.
\begin{figure}[h]
\begin{center}
\begin{tabular}{cc}
   \hspace{-8 mm}\includegraphics[scale=0.30]{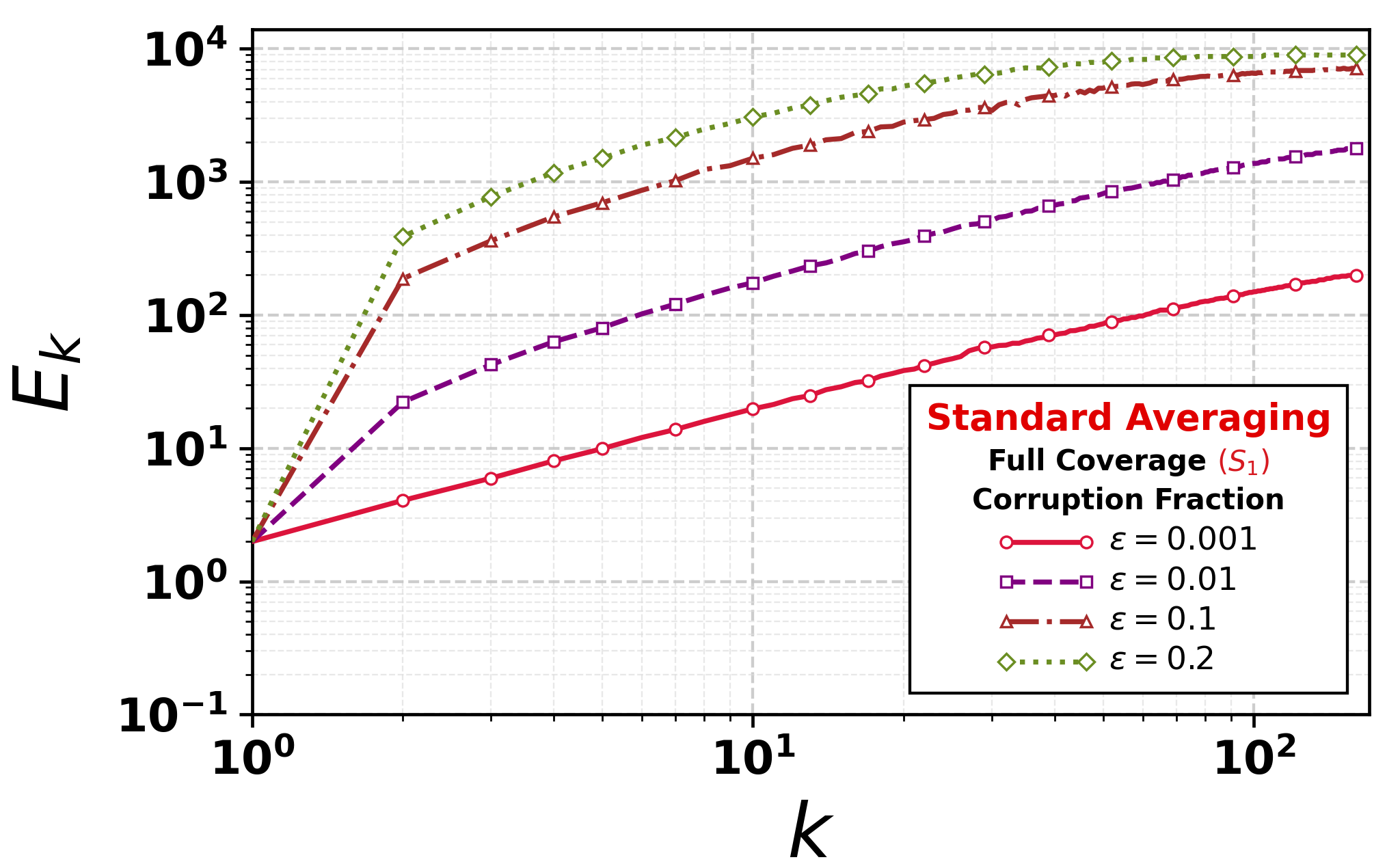}&\hspace{-4 mm}\includegraphics[scale=0.30]{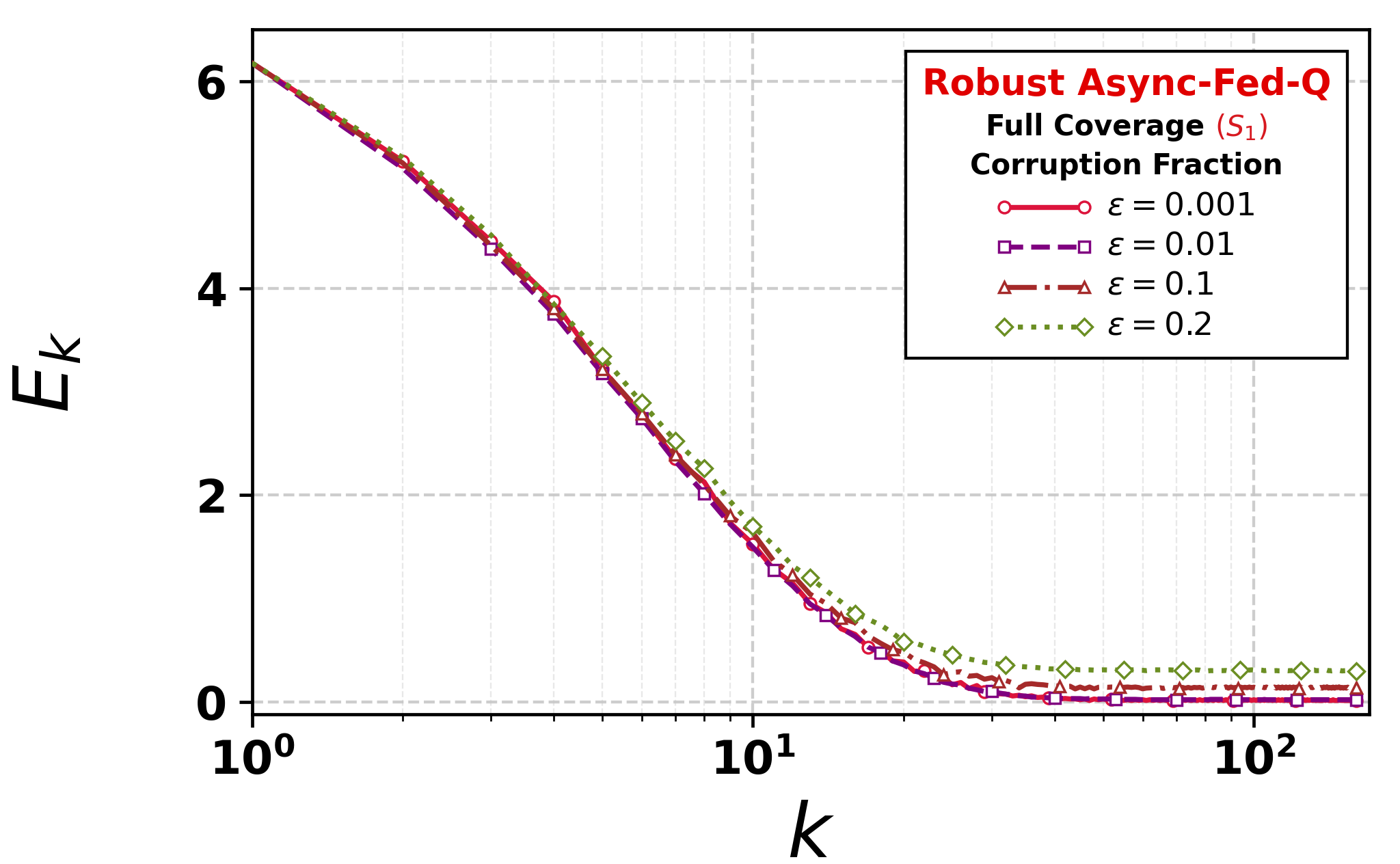}
    \end{tabular}
    \begin{tabular}{cc}
   \hspace{-8 mm}\includegraphics[scale=0.30]{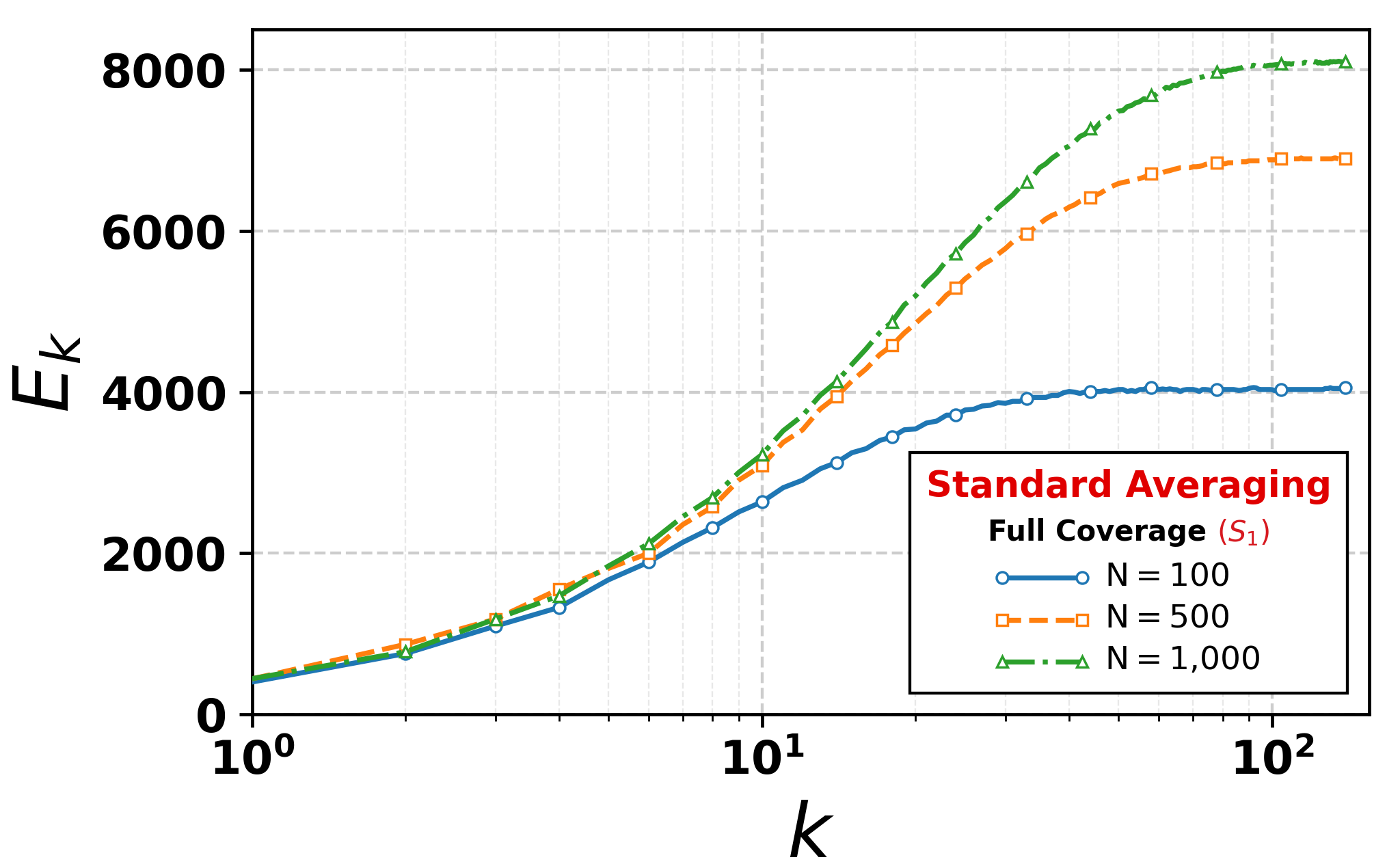}&\hspace{-4 mm}\includegraphics[scale=0.30]{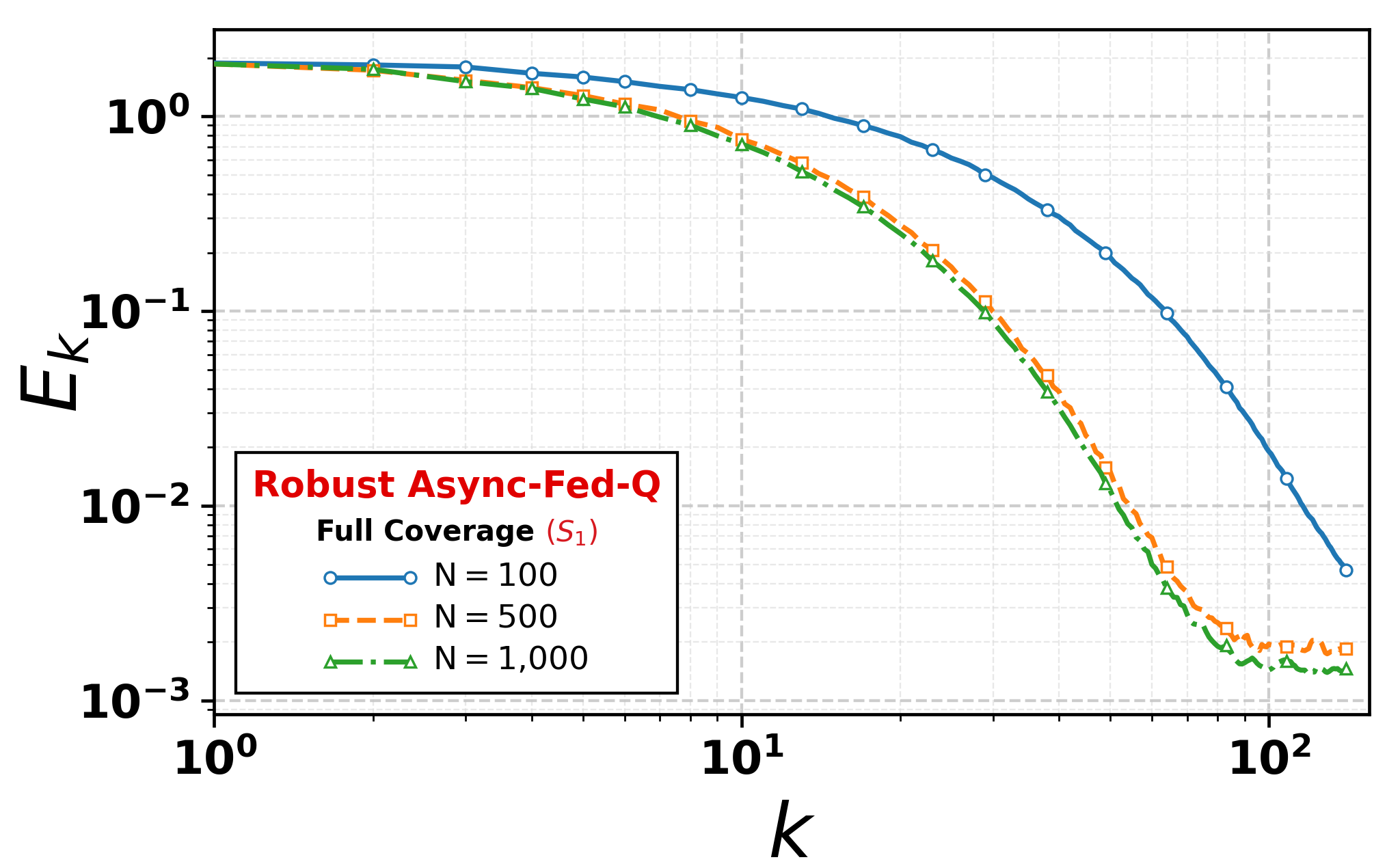}
    \end{tabular}
\vspace{-6mm}
\end{center}
\caption{\textbf{(Upper Left)}Asynchronous federated $Q$-learning with standard averaging under varying corruption fractions $\varepsilon\in\{0.001,0.01,0.1,0.2\}$ and a fixed number of agents $N=2000$, where each corrupted agent adds a bias of magnitude $10^4$ to its messages. \textbf{(Upper Right)} \texttt{Robust Async-Fed-Q} under the same values of \(\varepsilon\) and with \(N=2000\).
 \textbf{(Lower Left)} Federated $Q$-learning with standard averaging for
$N\in\{100,500,1000\}$, with a fixed number of corrupted agents
$|\mc B|=25$, each introducing a bias of magnitude $10^3$.
\textbf{(Lower Right)} \texttt{Robust Async-Fed-Q} under the same fixed-corruption setting \(|\mc{B}|=25\). The results demonstrate that the standard-averaging baseline is highly sensitive to corrupted messages, whereas our proposed algorithm remains stable, converges to a neighborhood of \(Q^\star\), and achieves lower estimation error as the number of agents increases, demonstrating clear collaborative gains.
}
\label{fig:sim}
\end{figure}
\begin{figure}[h]
\begin{center}
\begin{tabular}{cc}
   \hspace{-8 mm}\includegraphics[scale=0.30]{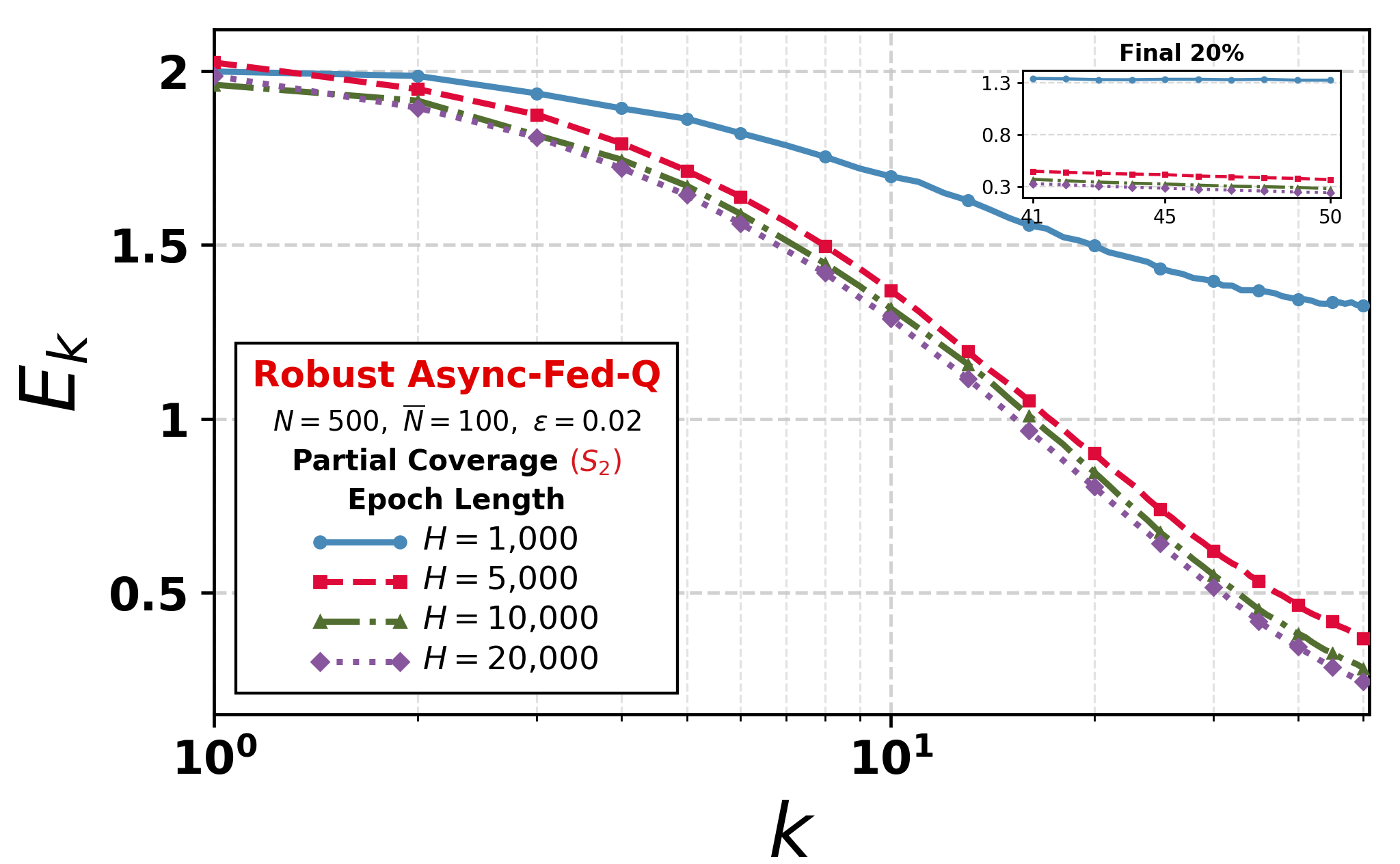}&\hspace{-4 mm}\includegraphics[scale=0.30]{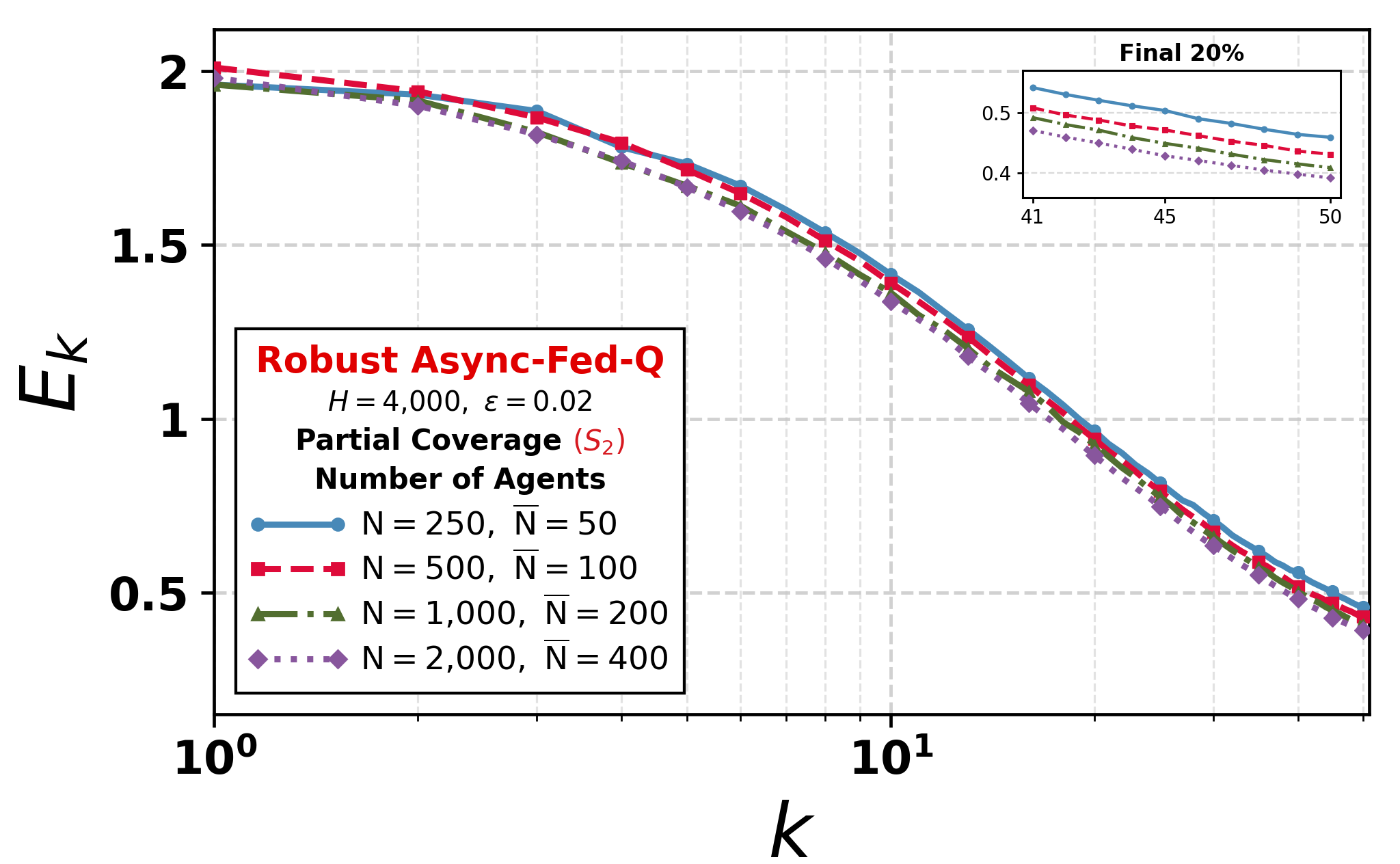}
    \end{tabular}
\vspace{-6mm}
\end{center}
\caption{Performance of \texttt{Robust Async-Fed-Q} under partial coverage $(\textcolor{winered}{\texttt{S}_2})$. 
\textbf{(Left)} The error $E_k$ for 
$H\in\{1000,5000,10000,20000\}$, with $N=500$ total agents and 
$\overline N=100$ agents covering each state-action pair. 
\textbf{(Right)} The error for $N\in\{250,500,1000,2000\}$, corresponding to 
$\overline N\in\{50,100,200,400\}$, with $H=4000$. 
In both panels, $\varepsilon=0.02$, corrupted messages are replaced by $-10^6$, 
and the reward-noise variance is $\sigma^2=5$.}
\label{fig:sim-2}
\end{figure}
\subsection{Ablation Studies and Robustness Diagnostics}
\label{subsec:ablation_studies}

We next isolate the roles of robust aggregation, collaboration, sample size,
coverage, communication, and temporal dependence.  These experiments are performed on the synthetic tabular instance used for the main algorithmic evaluation so that
each parameter can be varied independently while holding the underlying MDP
fixed.
\paragraph{Attack models.}
We fix the corruption fraction at \(\varepsilon=0.0025\), corresponding to five corrupted agents when \(N=2000\), and keep their identities fixed throughout each run. Under the \emph{negative-outlier attack}, every corrupted agent transmits the extreme value \(-10^{6}\) for each state--action coordinate. Under the \textbf{sign-flip attack}, the corrupted message is set to \(-10\) times the coordinatewise mean of the honest messages, thereby reversing and amplifying the honest update direction. Under the \textbf{dispersion-shift attack}, corrupted agents collude and transmit the coordinatewise honest mean plus one honest standard deviation, producing a less conspicuous attack whose magnitude adapts to the natural dispersion of the honest messages. Finally, under the \textbf{adaptive-sign attack}, the adversary evaluates both extreme messages \(-10^{6}\) and \(10^{6}\) for every coordinate and selects the sign that maximizes the instantaneous deviation of the aggregated update from the population Bellman target. These attacks range from easily identifiable extreme outliers to distribution-aware and co-ordinatewise adaptive corruptions.

\paragraph{Effect of robust aggregation and corruption.}
We first examine whether the stability observed in the benchmark environments
is genuinely produced by the robust aggregation mechanism. We compare
 \textcolor{winered}{\texttt{Robust Async-Fed-Q}} with an otherwise identical implementation
that replaces the robust estimator with the arithmetic mean. We additionally
vary the attack mechanism and reward-noise scale.

\begin{figure}[h]
\begin{center}
\begin{tabular}{cc}
   \hspace{-8mm}
   \includegraphics[scale=0.15]{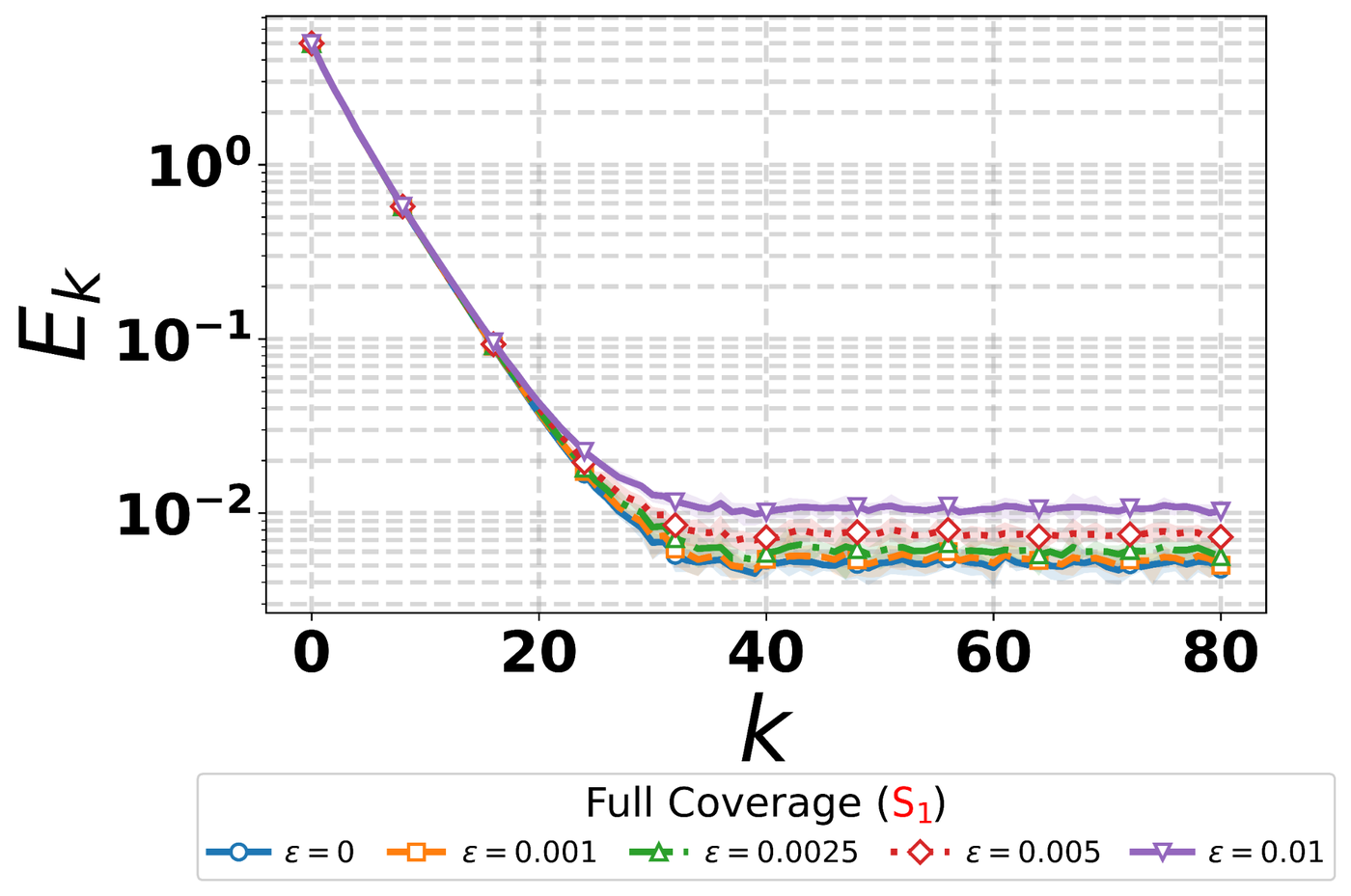}
   &
   \hspace{-4mm}
   \includegraphics[scale=0.15]{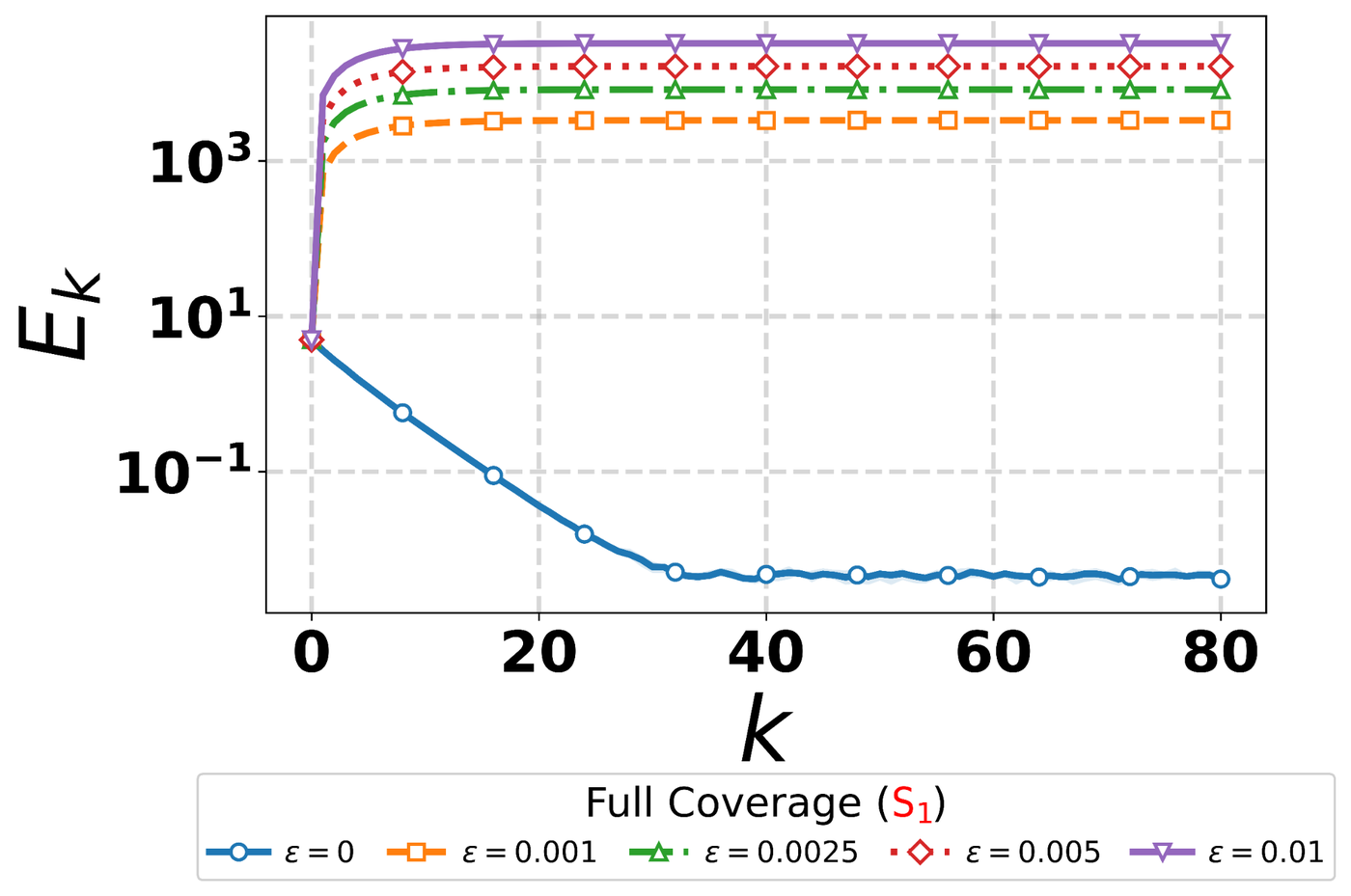}
\end{tabular}
\begin{tabular}{cc}
   \hspace{-8mm}
   \includegraphics[scale=0.30]{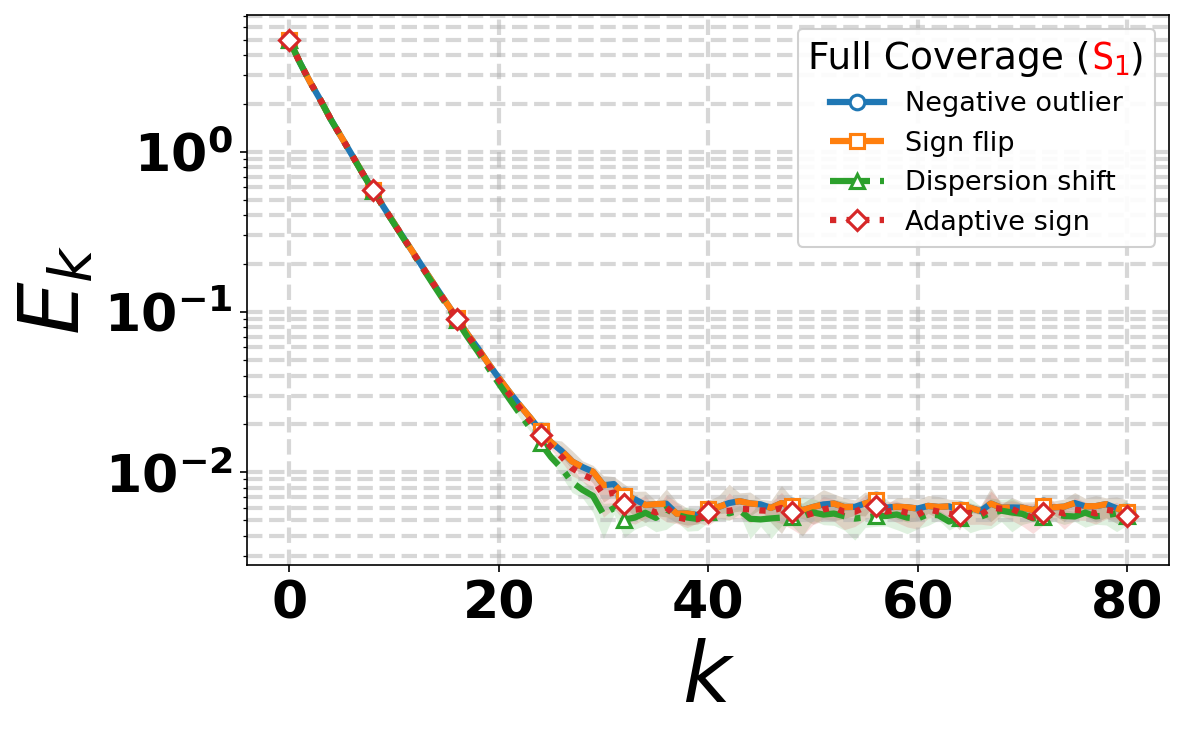}
   &
   \hspace{-4mm}
   \includegraphics[scale=0.30]{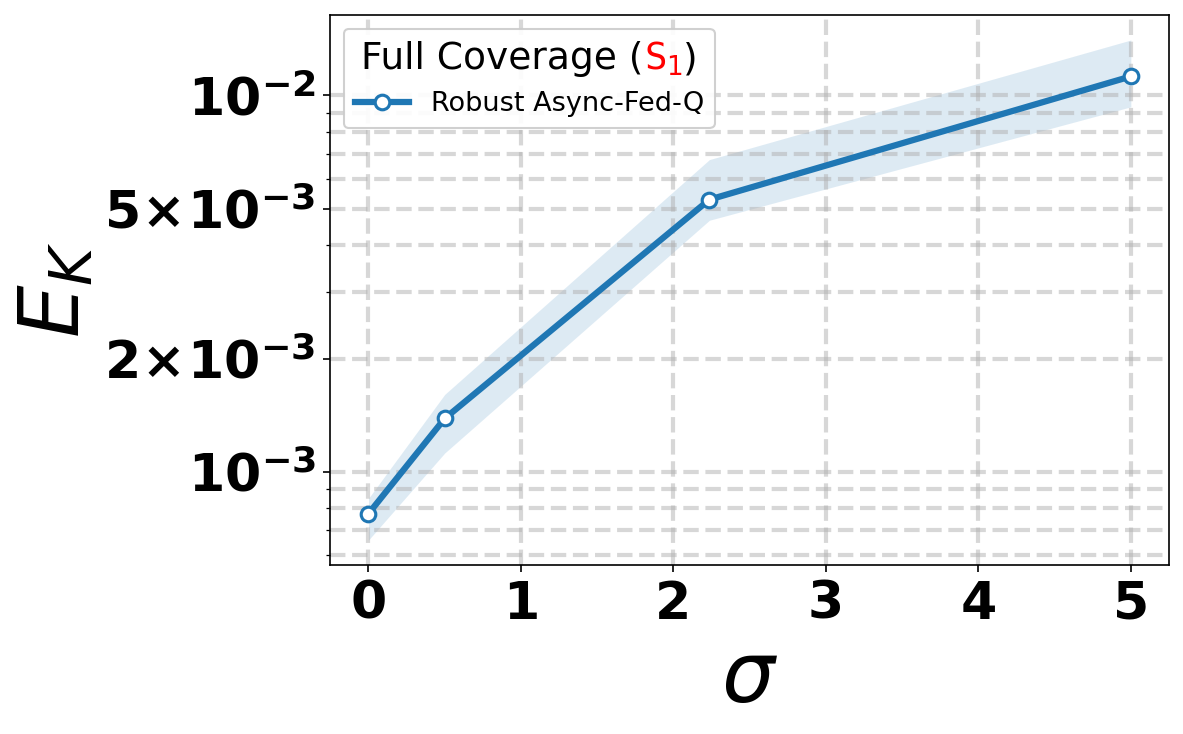}
\end{tabular}
\vspace{-6mm}
\end{center}
\caption{\textbf{(Upper Left)} Convergence of  \textcolor{winered}{\texttt{Robust Async-Fed-Q}} for different corruption fractions \(\varepsilon\in\{0,0.001,0.0025,0.005,0.01\}\). \textbf{(Upper Right)} Performance of standard arithmetic-mean aggregation under the same corruption levels. \textbf{(Lower Left)} Performance under negative-outlier, sign-flip, dispersion-shift, and adaptive-sign attacks. \textbf{(Lower Right)} Terminal error as a function of the reward-noise standard deviation \(\sigma\).}
\label{fig:ablation_robustness}
\end{figure}

The upper-left panel of Figure~\ref{fig:ablation_robustness} shows that all
curves initially follow essentially the same contraction phase. The influence
of corruption becomes visible only after the optimization transient reaches
the statistical-error regime. The clean curve stabilizes near the lowest error
level, while larger values of \(\varepsilon\) produce progressively higher
error floors. Nevertheless, even at \(\varepsilon=0.01\), the iterates remain
stable and the terminal error remains on the order of \(10^{-2}\). Thus,
corruption primarily changes the limiting neighborhood rather than destroying
convergence.

The upper-right panel demonstrates that this stability is not obtained from
collaboration alone. With arithmetic-mean aggregation, the clean curve still
converges, but every corrupted curve rapidly grows by several orders of
magnitude. Even a corruption fraction of \(0.001\) is sufficient to make the
mean-aggregation estimate unusable. Because the remaining learning,
communication, and sampling parameters are identical, the contrast between
the two upper panels identifies robust aggregation as the mechanism preventing
a small number of extreme messages from dominating the update.

The lower-left panel tests four qualitatively different attacks. The negative
outlier applies a fixed extreme bias, the sign-flip attack reverses the
direction of the message, the dispersion-shift attack increases the spread of
the corrupted messages, and the adaptive-sign attack selects its direction
based on the current Bellman error. The four trajectories are nearly
indistinguishable after the initial transient. Hence, the observed robustness
is not specific to one fixed attack direction. The adaptive attack also fails
to produce a materially larger error than the nonadaptive alternatives.

The lower-right panel varies the reward-noise standard deviation \(\sigma\).
The terminal error increases monotonically with \(\sigma\), from below
\(10^{-3}\) in the noiseless case to approximately \(10^{-2}\) at
\(\sigma=5\). This behavior is consistent with the statistical-error term
increasing with the scale of the Bellman-target noise. Importantly, the growth
is gradual, and no instability is observed over the tested range.

\paragraph{Collaboration and agent-population scaling.}
We next study how the number of participating agents changes the convergence
trajectory and terminal error. We consider both the clean setting and a fixed
positive corruption fraction, and then separate the fixed-fraction and
fixed-number corruption models.

\begin{figure}[h]
\begin{center}
\begin{tabular}{cc}
   \hspace{-12mm}
   \includegraphics[scale=0.12]{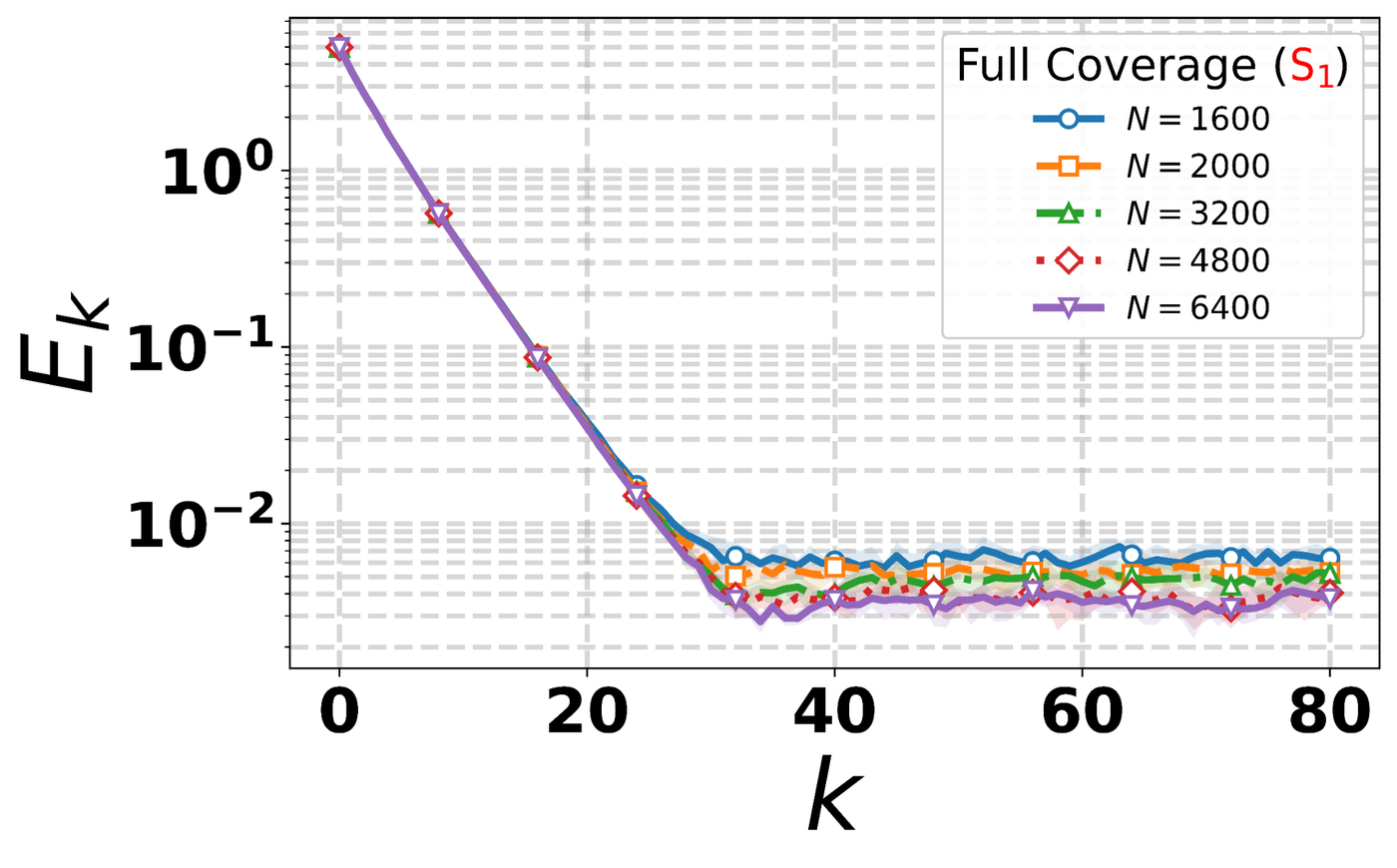}
   \hspace{-3mm}
   \includegraphics[scale=0.30]{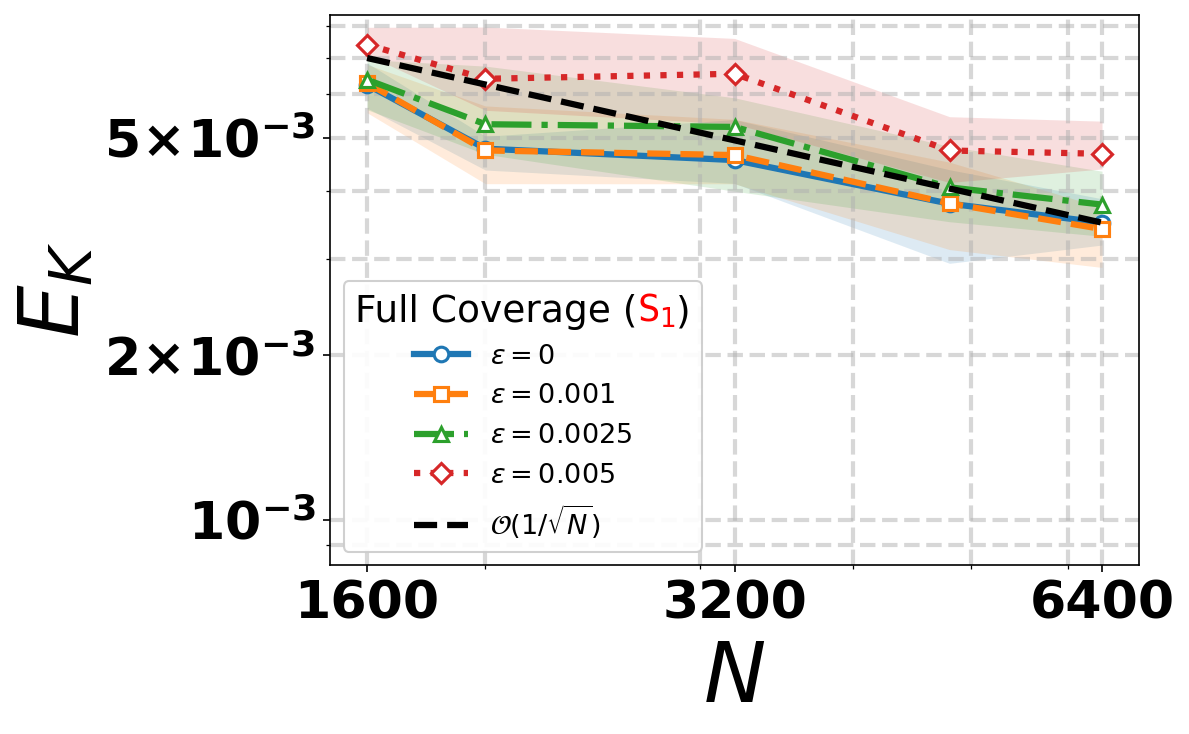}
   \hspace{-3mm}
   \includegraphics[scale=0.12]{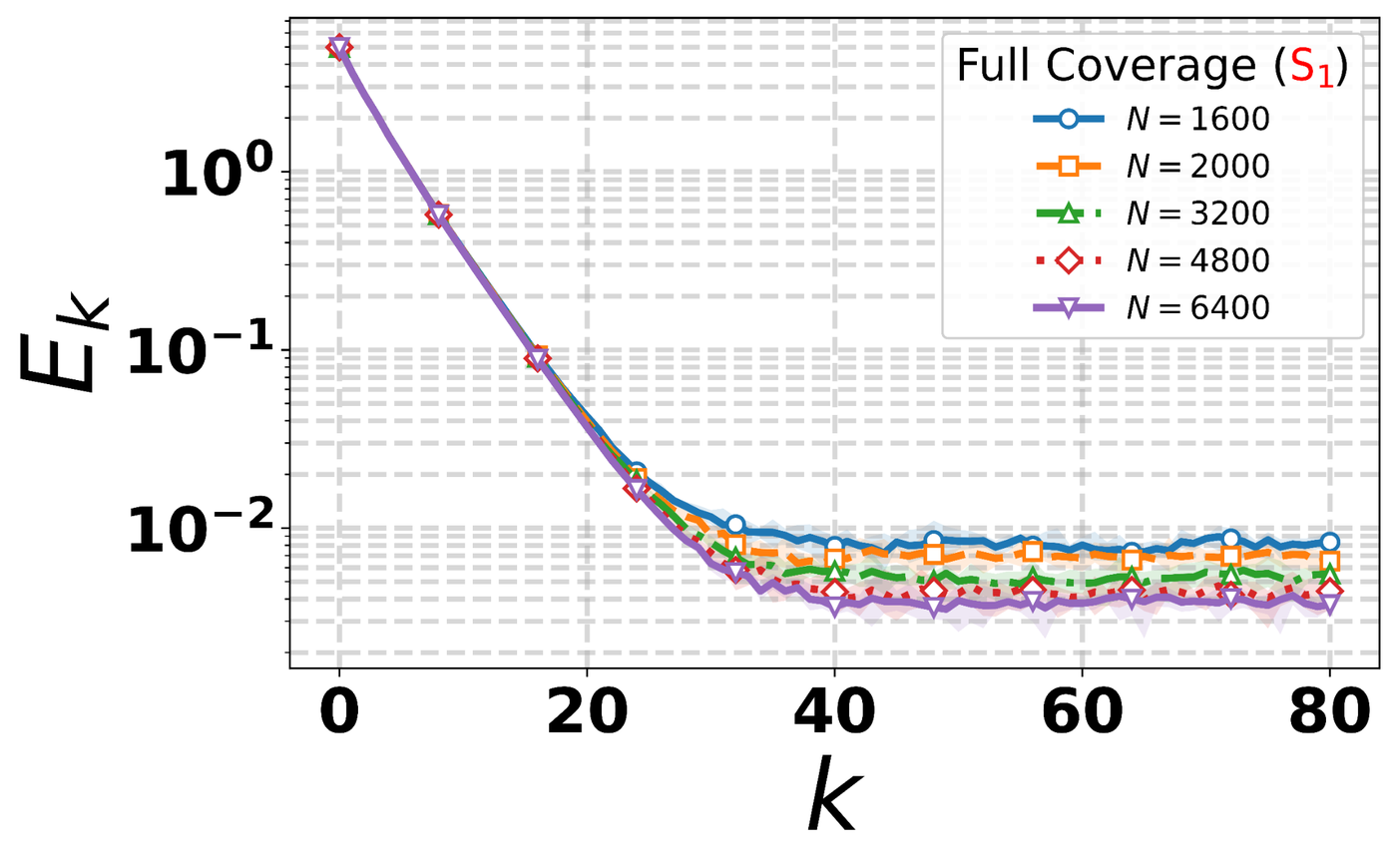}
\end{tabular}
\vspace{-6mm}
\end{center}
\caption{Collaboration ablations under full coverage
\textup{(\textcolor{winered}{\(\texttt{S}_1\)})}.
\textbf{(Left)} Evolution of the worst-agent error for different network sizes
at a fixed positive corruption fraction, showing that larger networks converge
to lower error levels despite corrupted messages.
\textbf{(Middle)} Terminal error as a function of \(N\) for several corruption
fractions, together with the \(\mathcal{O}(1/\sqrt{N})\) reference slope. The
error decreases with \(N\) throughout the corruption sweep, demonstrating that
the collaboration benefit persists under corruption.
\textbf{(Right)} Evolution of the error as \(N\) varies while the absolute
number of corrupted agents is held fixed. In this setting, increasing \(N\)
adds honest agents and reduces the realized corruption fraction, leading to a
progressively lower error floor.}
\label{fig:ablation_collaboration}
\end{figure}

The left panel of Figure~\ref{fig:ablation_collaboration} show that
changing \(N\) has little influence on the initial contraction phase. This is
expected because the deterministic component of the Bellman recursion is
controlled primarily by \(\gamma\) and the update stepsize. The effect of
collaboration appears after the curves enter their statistical neighborhoods.
In the clean setting, the error floor decreases systematically as \(N\)
increases from \(1600\) to \(6400\). The same ordering persists under
corruption, demonstrating that robust collaboration continues to reduce the
worst-agent error even when a positive fraction of the messages is
adversarial.

The middle panel provides a direct terminal-error comparison. For each
fixed corruption level, \(E_K\) decreases as \(N\) increases. The empirical
curves are broadly aligned with the vertically rescaled
\(\mathcal{O}(1/\sqrt{N})\) guide, although the guide is not fitted to the
data. Increasing \(\varepsilon\) shifts the curve upward, while leaving its
decreasing dependence on \(N\) visible. This separation supports the
interpretation that the statistical collaboration term decays with \(N\),
whereas corruption contributes an additional error component. The right panel holds the absolute number of corrupted agents fixed.
Consequently, the realized corrupted fraction decreases as the network grows.
The terminal neighborhood becomes smaller with increasing \(N\), reflecting
two simultaneous benefits: more honest messages are available for estimating
each Bellman coordinate, and the relative influence of the fixed corrupted
set is diluted. Together, the fixed-fraction and fixed-number experiments show
that collaboration remains beneficial under both common corruption models.

\paragraph{Per-epoch sampling and total sample complexity.}
The next ablation varies the number \(H\) of samples collected by each agent
during an epoch and the resulting total sample budget \(T\).

\begin{figure}[h]
\begin{center}
\begin{tabular}{cc}
   \hspace{-8mm}
   \includegraphics[scale=0.30]{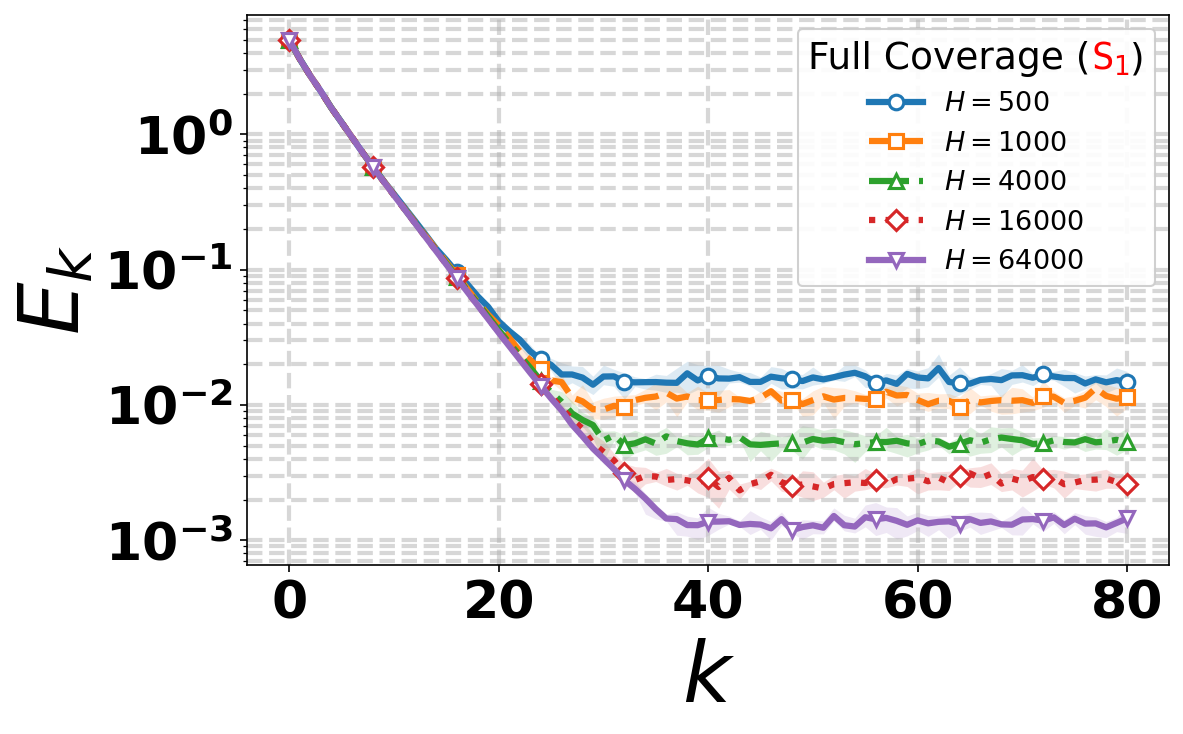}
   &
   \hspace{-4mm}
   \includegraphics[scale=0.30]{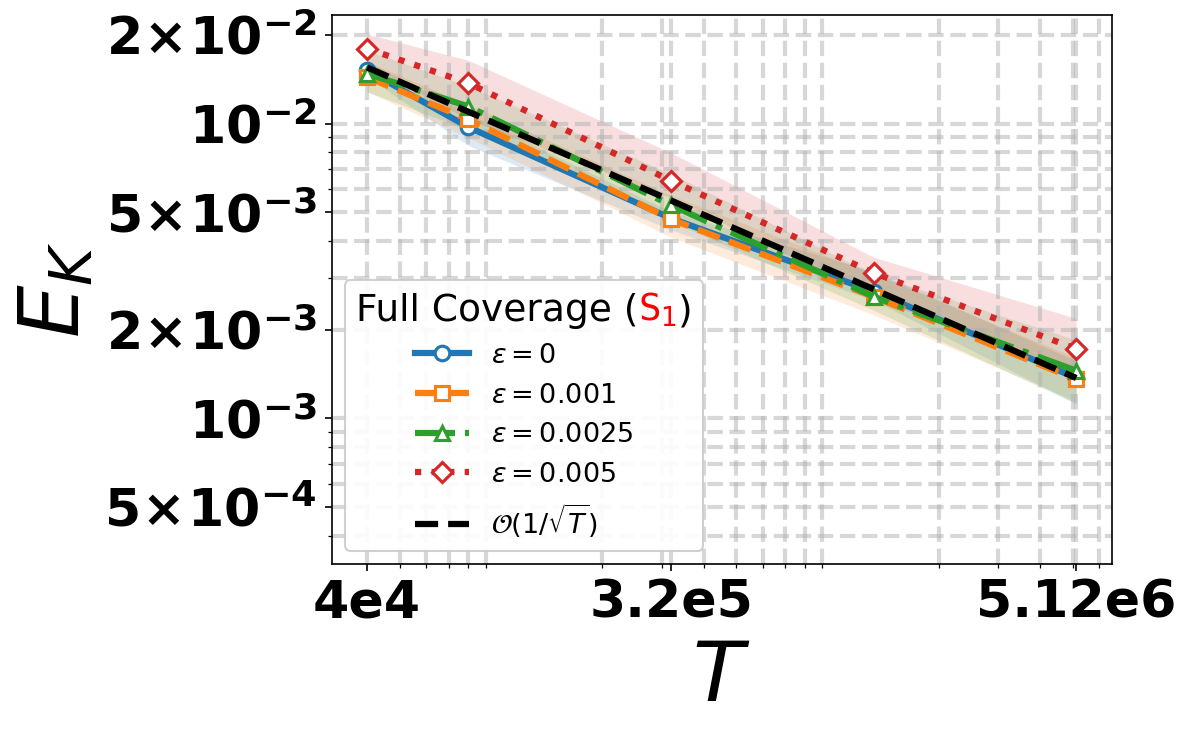}
\end{tabular}
\vspace{-6mm}
\end{center}
\caption{
Sample-size ablations under full coverage
\textup{(\textcolor{winered}{\(\texttt{S}_1\)})}.
\textbf{(Left)} Convergence under corruption for
\(H\in\{500,1000,4000,16000,64000\}\).
\textbf{(Right)} Terminal error versus the total sample budget \(T\) for
different corruption fractions, together with the
\(\mathcal{O}(1/\sqrt{T})\) reference curve.}
\label{fig:ablation_sample_complexity}
\end{figure}

In the left panel of Figure~\ref{fig:ablation_sample_complexity}, all choices
of \(H\) exhibit nearly identical initial contraction. Once the curves
approach the noise-dominated regime, however, their terminal neighborhoods
separate clearly. Increasing \(H\) from \(500\) to \(64000\) reduces the error
floor by approximately one order of magnitude. Larger batches produce more
accurate empirical Bellman operators and therefore allow the contraction to
continue to a smaller neighborhood. The right panel eliminates the distinction between the number of epochs and the number of samples per epoch by plotting \(E_K\) against the total sample budget \(T\). The curves decay approximately parallel to the \(\mathcal{O}(1/\sqrt{T})\) guide over more than two orders of magnitude in \(T\). The curves corresponding to larger corruption fractions are shifted upward, but their sample-dependent decay remains visible. Thus, additional
samples reduce the stochastic component of the error, while corruption
determines the residual separation between the curves.

\paragraph{Partial-coverage ablations.}
We now consider the partial-coverage setting
\textup{(\textcolor{winered}{\(\texttt{S}_2\)})}. Here, a state--action
coordinate may be sampled by only a subset of agents. The relevant
collaboration parameter is therefore the minimum number of source agents
\(\overline{N}\), rather than the total network size.

\begin{figure}[h]
\begin{center}
\begin{tabular}{cc}
   \hspace{-8mm}
   \includegraphics[scale=0.15]{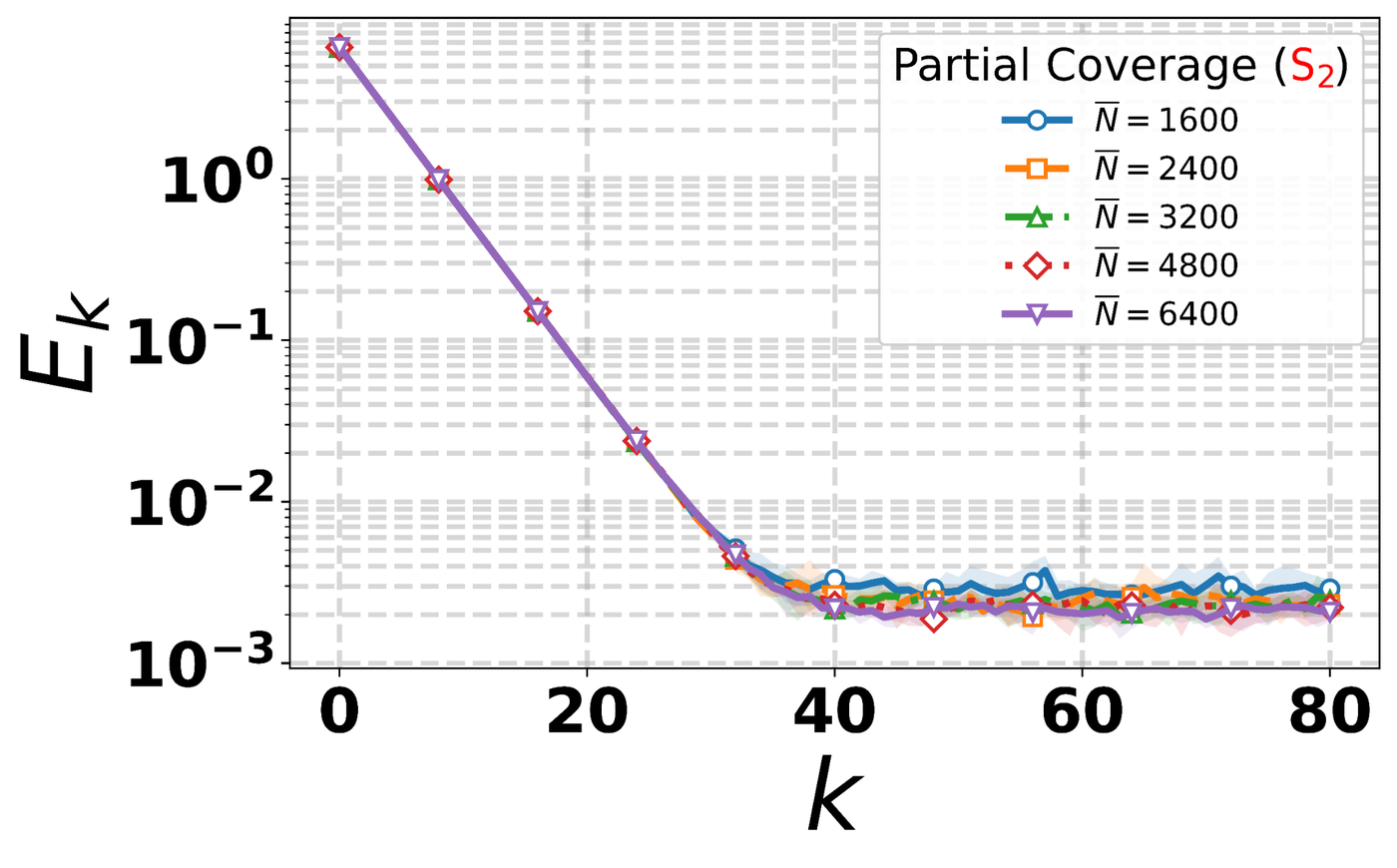}
   &
   \hspace{-4mm}
   \includegraphics[scale=0.15]{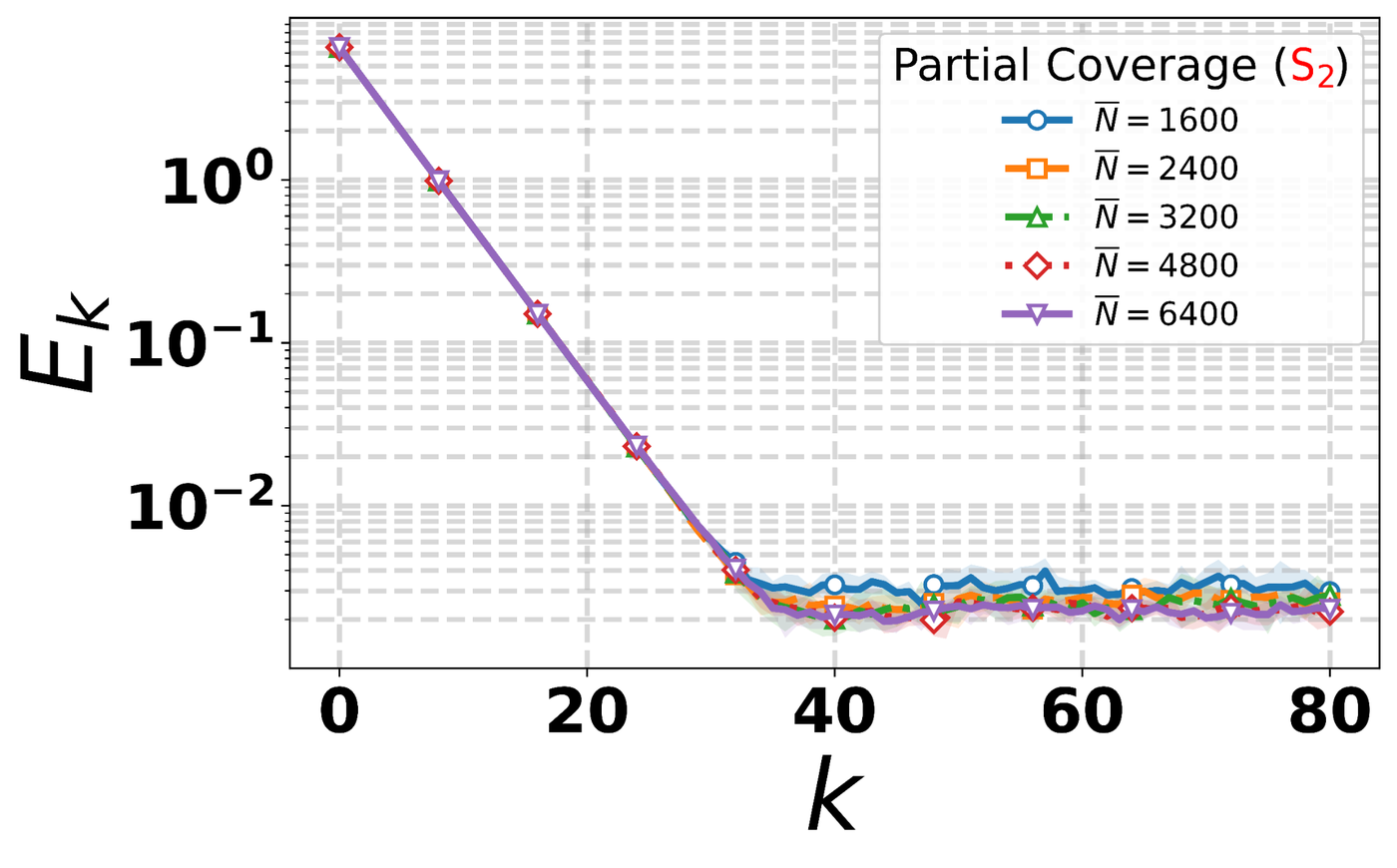}
\end{tabular}
\begin{tabular}{cc}
   \hspace{-8mm}
   \includegraphics[scale=0.30]{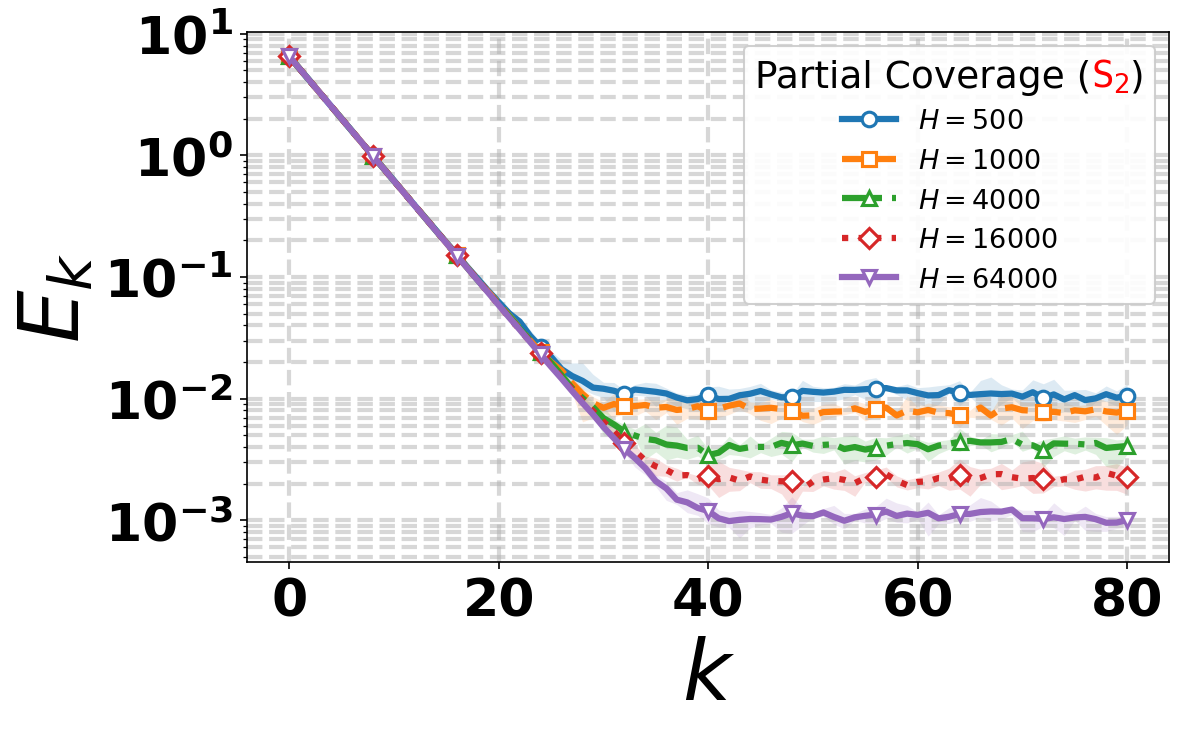}
   &
   \hspace{-2mm}
   \includegraphics[scale=0.30]{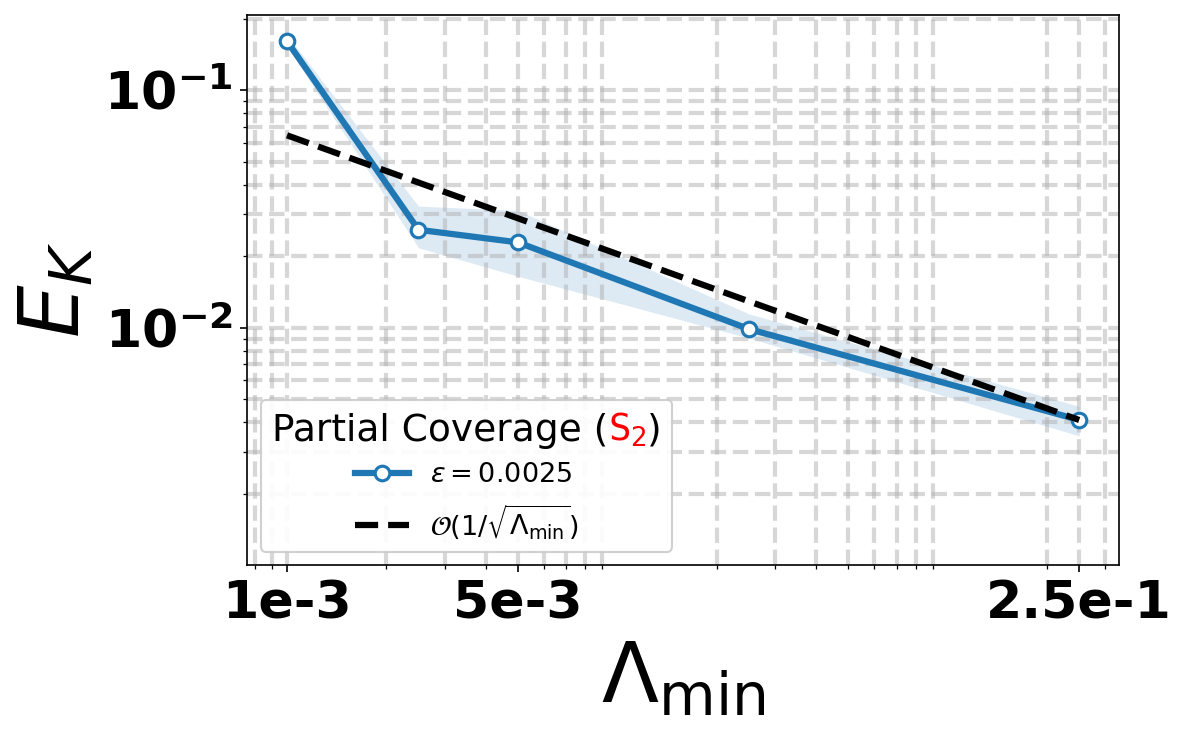}
\end{tabular}
\vspace{-6mm}
\end{center}
\caption{
Ablations under partial coverage
\textup{(\textcolor{winered}{\(\texttt{S}_2\)})}.
\textbf{(Upper Left)} Clean convergence for different minimum source-set sizes
\(\overline{N}\).
\textbf{(Upper Right)} Convergence under corruption for the same values of
\(\overline{N}\).
\textbf{(Lower Left)} Dependence on the per-epoch sample budget \(H\).
\textbf{(Lower Right)} Terminal error as a function of the minimum positive
source visitation probability \(\Lambda_{\min}\), together with the
\(\mathcal{O}(1/\sqrt{\Lambda_{\min}})\) reference curve.}
\label{fig:ablation_partial_coverage}
\end{figure}

The upper panels of Figure~\ref{fig:ablation_partial_coverage} demonstrate that
no individual agent needs to cover the complete state--action space for the
method to converge. In both the clean and corrupted settings, all curves
contract rapidly to stable neighborhoods. Increasing \(\overline{N}\) lowers
the terminal error because every coordinate is supported by a larger
committee of informative agents. The effect remains visible under corruption,
showing that the partial-coverage collaboration gain is not restricted to the
clean setting. The lower-left panel varies \(H\) under partial coverage. As in the
full-coverage experiment, larger values of \(H\) reduce the terminal
neighborhood without materially changing the initial contraction rate.
Therefore, the benefit of additional sampling survives the heterogeneous
source-set structure. The lower-right panel isolates the role of the minimum positive visitation probability. The terminal error decreases sharply as \(\Lambda_{\min}\)
increases, and its slope is broadly consistent with the displayed
\(\mathcal{O}(1/\sqrt{\Lambda_{\min}})\) guide. Small values of
\(\Lambda_{\min}\) create a statistical bottleneck because the least frequently
observed coordinate receives relatively few informative samples. This
experiment confirms that partial coverage is controlled not only by the number
of source agents but also by the sampling frequency of the least-covered
coordinate.

\paragraph{Effective source size, communication.} Finally, we isolate the effective source-set scaling, the allocation of a
fixed sample budget across communication epochs, and the effect of Markovian
dependence.

\begin{figure}[h]
\begin{center}
\begin{tabular}{cc}
   \hspace{-8mm}
   \includegraphics[scale=0.30]{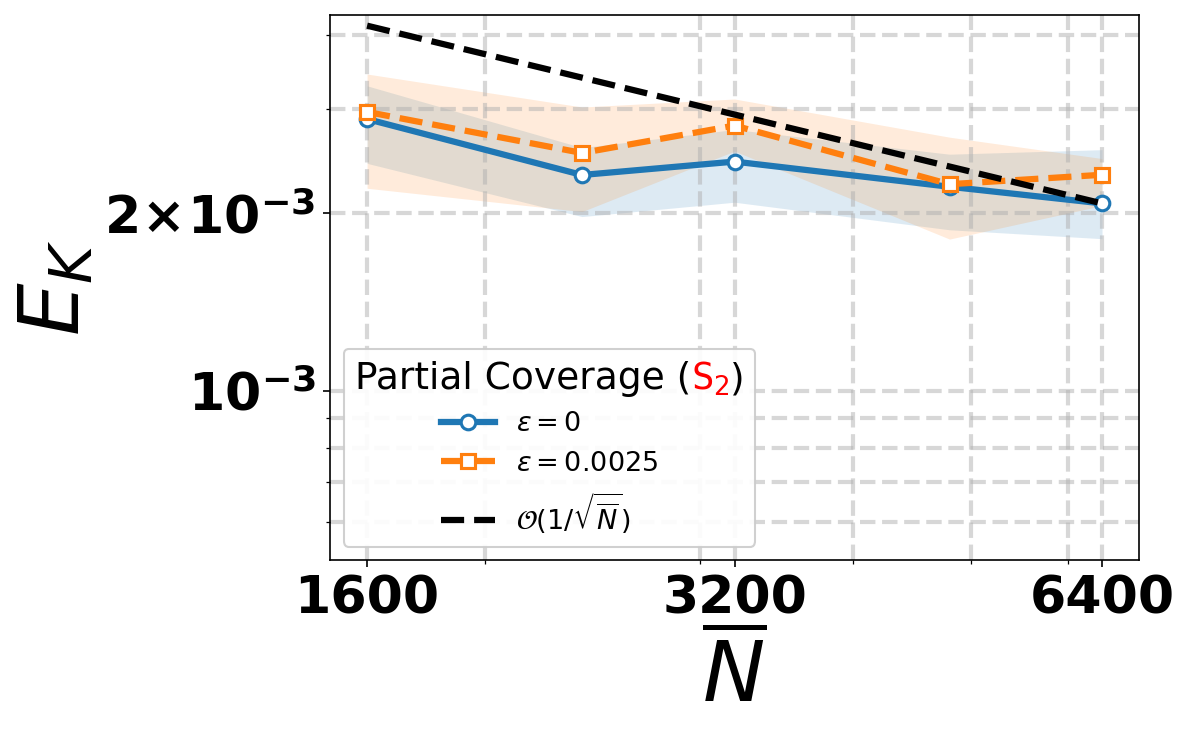}
   &
   \hspace{-4mm}
   \includegraphics[scale=0.30]{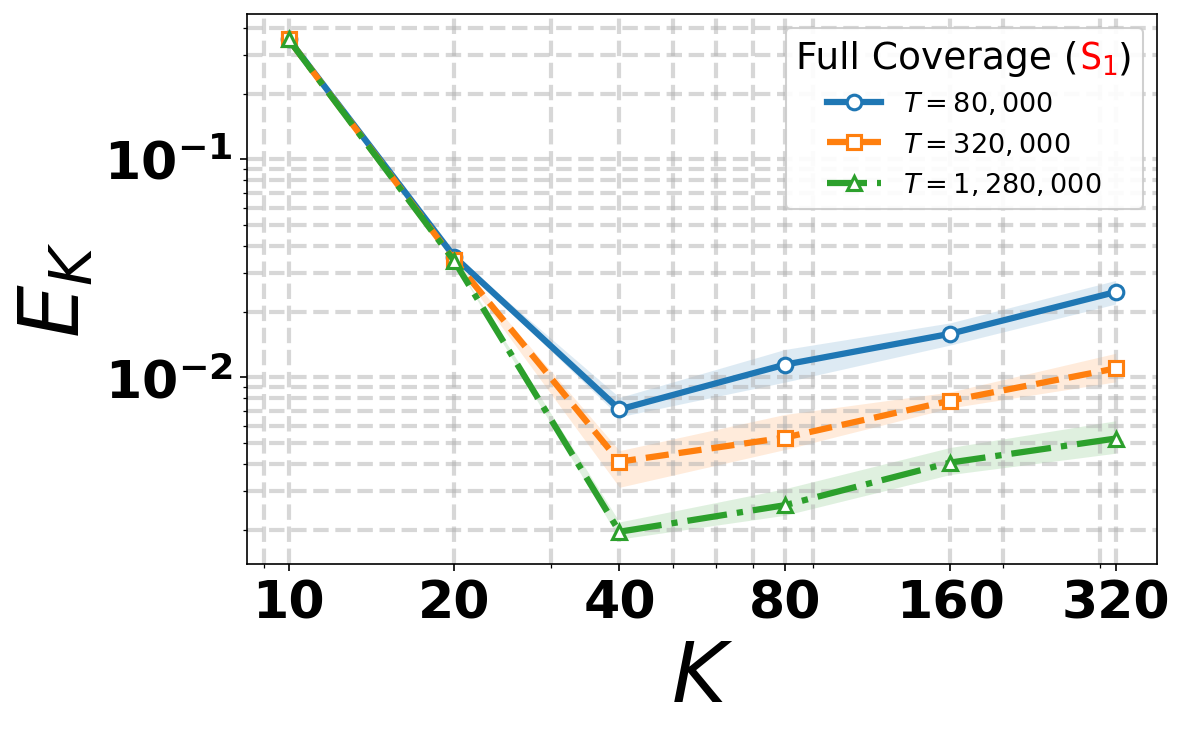}
\end{tabular}
\vspace{-6mm}
\end{center}
\caption{\textbf{(Left)} Terminal error versus the minimum source-set size
\(\overline{N}\) in the clean and corrupted partial-coverage settings,
together with the \(\mathcal{O}\left(\overline{N}^{-1/2})\right)\) reference curve.
\textbf{(Right)} Terminal error versus the number of epochs \(K\) for
three fixed total sample budgets \(T\).}
\label{fig:ablation_source_communication_markov}
\end{figure}

The left panel of
Figure~\ref{fig:ablation_source_communication_markov} provides a terminal-error
view of the partial-coverage collaboration gain. Both the clean and corrupted
curves decrease as \(\overline{N}\) increases. The trend is broadly comparable
to the \(\mathcal{O}(1/\sqrt{\overline{N}})\) guide, although local
fluctuations remain within the displayed variability bands. Corruption
slightly raises the error level without eliminating the dependence on the
effective number of source agents.

The right panel of
Figure~\ref{fig:ablation_source_communication_markov} studies communication accuracy while holding the total
sample budget \(T\) fixed. For each \(T\), increasing \(K\) initially improves
the error because the agents perform more Bellman and communication updates.
After approximately \(K=40\), the trend reverses. Since \(T\) is fixed, a
larger \(K\) leaves fewer samples per epoch, making each empirical Bellman
update noisier. The resulting U-shaped curves expose an allocation tradeoff:
too few epochs provide insufficient iterative refinement, while too many
epochs divide the available samples into batches that are too small. Increasing
\(T\) lowers the entire curve and weakens this tradeoff.

Taken together, the ablations identify four distinct contributions to the
terminal error. Robust aggregation prevents catastrophic sensitivity to
corrupted messages. Increasing \(N\), \(\overline{N}\), \(H\), \(T\), or
\(\Lambda_{\min}\) reduces the statistical component of the error. Corruption
and reward noise increase the terminal neighborhood without changing the
initial contraction mechanism. Finally, communication frequency and Markovian
dependence affect how efficiently the available samples are converted into
accurate Bellman updates. These observations support the individual
dependencies appearing in the finite-time rates for both full and
partial coverage.

\subsection{Robust Learning with Markovian Data}
\label{subsec:markovian_experiments}

We next evaluate  \textcolor{winered}{\texttt{Robust Async-Fed-Q}} when the samples available to each agent form a temporally dependent Markov trajectory. This experiment is conducted separately from the Gymnasium benchmarks because it is designed to isolate the effects of temporal dependence, thinning, and physical sample cost. In
particular, we distinguish the number of transitions generated by the
environment from the smaller number of transitions retained by the learning
algorithm after thinning.

\paragraph{Markov trajectory and mixing-time convention.}
For each agent, let \(Z_t=(s_t,a_t,s_{t+1})\) denote the
state--action--next-state triple generated by the behavior policy. Each
trajectory is initialized from stationarity and is never reset at epoch
boundaries. After retaining one transition, the sampler discards the next
\(\tau-1\) transitions before recording another. Thus, successive retained
observations are separated by exactly \(\tau\) physical transitions, while
dependence across consecutive epochs is preserved. Using the mixing-time
quantity \(\bar{\tau}\) defined in Section~\ref{sec:lMarkov}, we set the
theoretical thinning interval to
\begin{equation}
    \tau_{\mathrm{th}}
    =
    \bar{\tau}
    \left\lceil
        \log_2\!\left(
            \frac{2N T_{\mathrm{raw}}}{\delta}
        \right)
    \right\rceil.
    \label{eq:markov_tau_th}
\end{equation}
For the nominal configuration used below, the mixing time is
\(\bar{\tau}=4\), and the theoretical prescription yields
\(\tau_{\mathrm{th}}=140\). Let \(H\) denote the number of physical
transitions generated by each agent in one epoch. Over \(K\) epochs, the
physical per-agent budget is \(T_{\mathrm{raw}}=KH\), while, since thinning
continues across epoch boundaries, the exact number of transitions retained
by the learner is
\(T_{\mathrm{used}}(\tau)=\lceil KH/\tau\rceil\). Thus, setting
\(\tau=140\) uses approximately \(140\) physical transitions for every
retained transition. This distinction is essential: a comparison at fixed
\(T_{\mathrm{used}}\) isolates the effect of residual temporal dependence
while controlling for the amount of retained information, whereas a
comparison at fixed \(T_{\mathrm{raw}}\) additionally captures the
statistical cost of discarding observations through thinning.

\paragraph{Experimental configuration.}
We use the same full-coverage setting
\textup{(\textcolor{winered}{\(\texttt{S}_1\)})} and robust split-clipping
aggregator as in the preceding experiments. The controlled testbed is a
lazy-ring MDP with six states, two actions, and stay probability \(0.3\). We
set \(\gamma=0.7\), use a constant stepsize \(\eta=0.7\), and run the method
for \(K=30\) epochs. Unless it is the quantity being swept, we use
\(N=1600\), \(\varepsilon=0.0025\), reward-noise variance \(5\), and a
coordinatewise colluding shift attack at the scale of one honest standard
deviation. Each condition is evaluated over \(20\) independent repetitions, organized into five batches of four runs. The solid curve reports the mean across all repetitions, while the shaded region is the pointwise minimum and maximum of the five batch means. The shaded region is therefore a batch envelope and not a confidence interval. Every dashed black rate curve is an endpoint-anchored slope guide rather than a fitted model.

\begin{figure}[h]
\begin{center}
\begin{tabular}{cc}
\hspace{-12mm}
\includegraphics[scale=0.30]{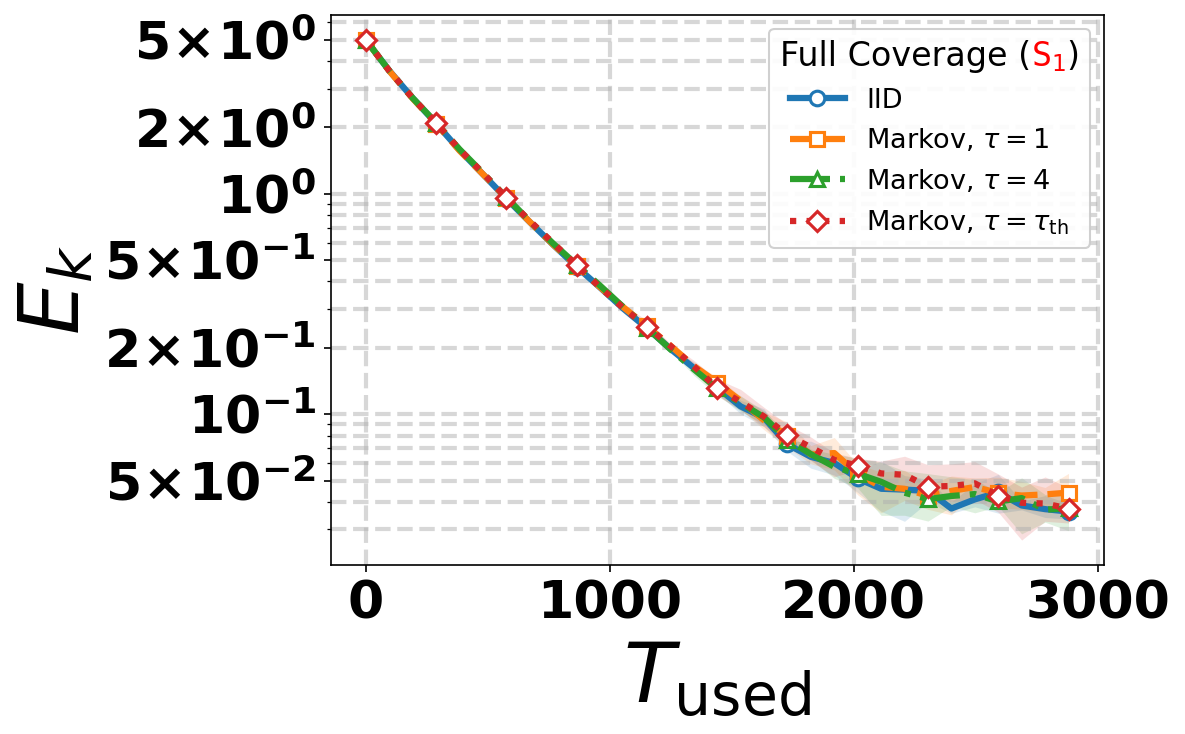}
\hspace{-3mm}
\includegraphics[scale=0.30]{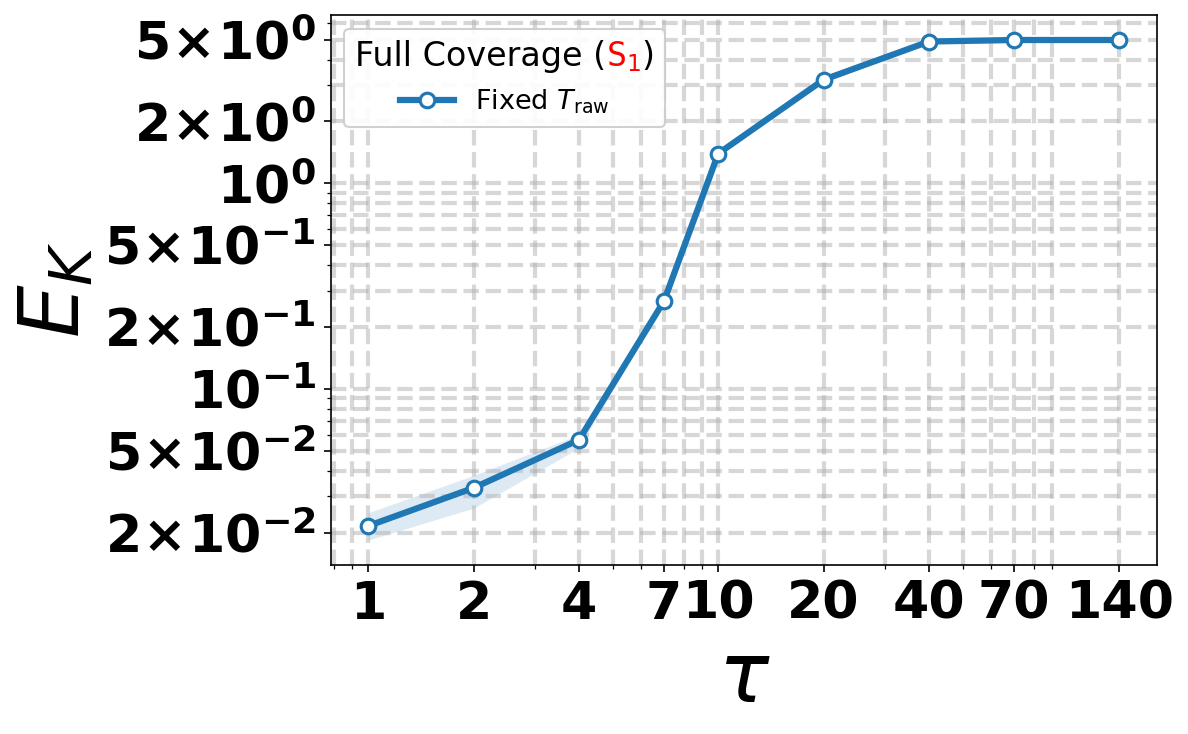}
\hspace{-2mm}
\includegraphics[scale=0.30]{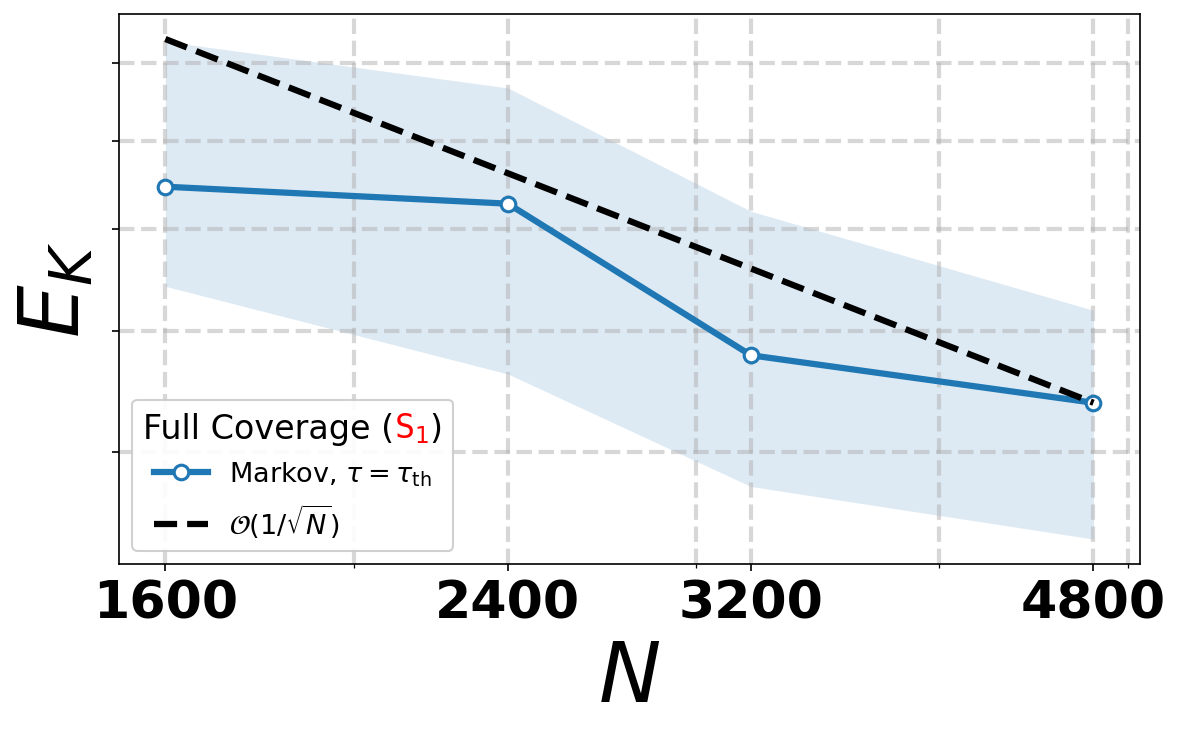}
\end{tabular}
\vspace{-6mm}
\end{center}
\caption{Markovian-data experiments under full coverage
\textup{(\textcolor{winered}{\(\texttt{S}_1\)})}.
\textbf{(Left)} Convergence against the retained transition count
\(T_{\mathrm{used}}\), comparing IID data with continuous Markov trajectories
using \(\tau\in\{1,4,\tau_{\mathrm{th}}\}\).
\textbf{(Middle)} Terminal error as a function of \(\tau\) at the fixed
physical budget \(T_{\mathrm{raw}}=8400\).
\textbf{(Right)} Collaboration benefit for
\(N\in\{1600,2400,3200,4800\}\) under the same Markovian sampling and
corruption setting.}
\label{fig:markovian_main_ablation}
\end{figure}

\paragraph{Dependence at a fixed retained budget.}
The left panel of Figure~\ref{fig:markovian_main_ablation} compares IID
sampling with unthinned and thinned Markov trajectories while holding the
number of retained transitions fixed. At
\(T_{\mathrm{used}}=2880\), the mean terminal errors are \(0.0362\) for IID
sampling, \(0.0441\) for \(\tau=1\), \(0.0374\) for \(\tau=4\), and
\(0.0373\) for \(\tau=\tau_{\mathrm{th}}=140\). Hence, the unthinned Markov
trajectory exhibits a visible dependence penalty, while moderate or
theoretically prescribed thinning reduces this penalty to the scale of the
across-run variation. This comparison does not imply that thinning is free. The \(\tau=140\) condition uses 140 times as many physical transitions as the unthinned
condition to produce the same retained sample count. The experiment therefore
isolates the effect of residual dependence only after conditioning on an equal
amount of retained information.

\paragraph{Thinning at a fixed physical budget.}
The middle panel presents the complementary experiment. Here,
\(T_{\mathrm{raw}}=8400\) is fixed and \(\tau\) is varied. The error increases
from \(0.0215\) at \(\tau=1\) to \(0.0565\) at \(\tau=4\), \(0.268\) at
\(\tau=7\), \(1.396\) at \(\tau=10\), and \(4.982\) at
\(\tau=\tau_{\mathrm{th}}=140\). Although increasing \(\tau\) weakens the dependence between retained
observations, it simultaneously reduces \(T_{\mathrm{used}}\) from \(8400\)
at \(\tau=1\) to only \(60\) at \(\tau=140\). Once the retained trajectory
becomes too sparse, the loss of state--action coverage dominates any benefit
obtained from decorrelation. The result therefore identifies a genuine
thinning tradeoff rather than a monotone improvement with the thinning
interval.

\paragraph{Collaboration under Markovian sampling.}
The right panel fixes \(\varepsilon=0.0025\),
\(\tau=\tau_{\mathrm{th}}=140\), and the retained budget, and varies the
number of agents. Increasing \(N\) from \(1600\) to \(4800\) reduces the mean
terminal error from \(0.0373\) to \(0.0269\). The intermediate errors are
\(0.0364\) at \(N=2400\) and \(0.0289\) at \(N=3200\). Thus, collaboration
continues to improve the final accuracy despite both temporal dependence and
corrupted messages. The empirical finite-grid log--log slope is approximately \(-0.329\), which supports a clear collaboration benefit but is not, by itself, evidence of an exact \(N^{-1/2}\) scaling law. The dashed
\(\mathcal{O}(1/\sqrt{N})\) curve should therefore be interpreted only as a
reference slope.

\begin{figure}[h]
\begin{center}
\begin{tabular}{cc}
\hspace{-8mm}
\includegraphics[scale=0.30]{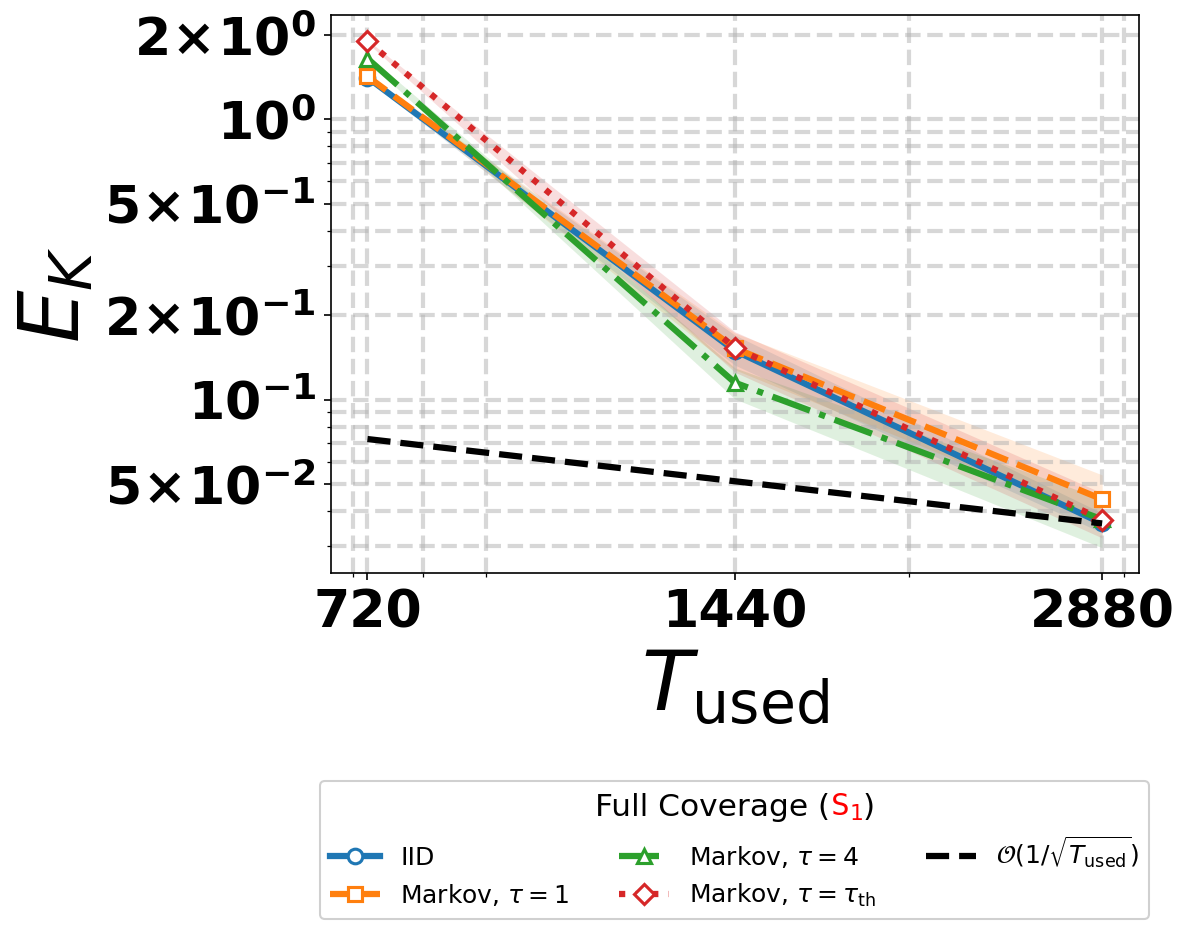}
&
\hspace{-4mm}
\includegraphics[scale=0.30]{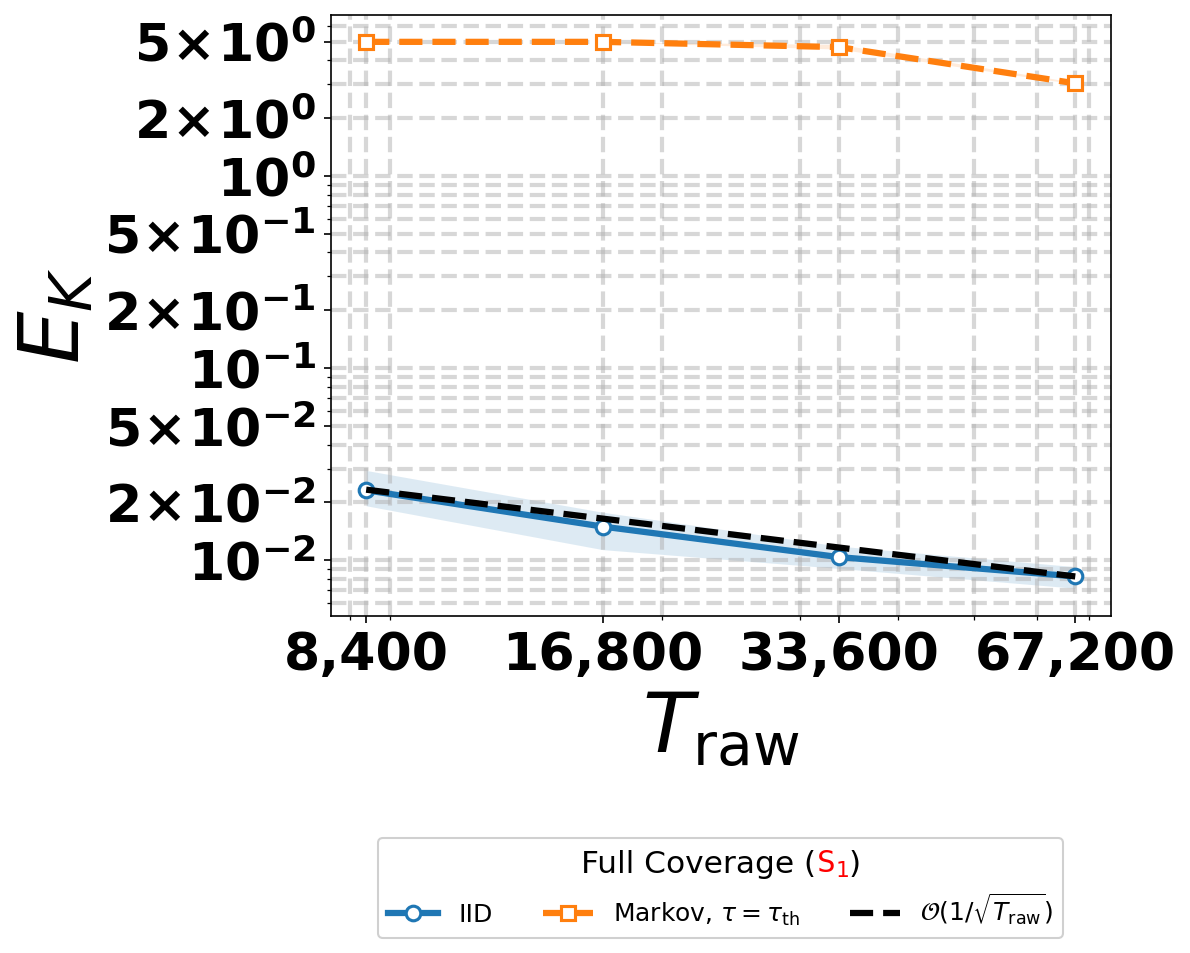}
\end{tabular}
\vspace{-6mm}
\end{center}
\caption{Retained-sample and physical-sample views of the Markovian-data
experiment under full coverage
\textup{(\textcolor{winered}{\(\texttt{S}_1\)})}.
\textbf{(Left)} Terminal error versus
\(T_{\mathrm{used}}=K(H/\tau)\) for IID observations and Markov trajectories
with \(\tau\in\{1,4,\tau_{\mathrm{th}}\}\).
\textbf{(Right)} Terminal error versus the physical budget
\(T_{\mathrm{raw}}=KH\) for IID observations and theoretically thinned
Markov trajectories. The dashed black curves are vertically rescaled
inverse-square-root slope guides.}
\label{fig:markovian_budget_ablation}
\end{figure}

\paragraph{Retained-sample complexity.} The left panel of Figure~\ref{fig:markovian_budget_ablation} varies the number
of retained transitions while holding \(K\) fixed. All four curves decrease
rapidly as \(T_{\mathrm{used}}\) increases from \(720\) to \(2880\). At the
largest retained budget, the thinned Markov errors become nearly
indistinguishable from the IID error.

At the smallest retained budget, the separation is larger: the errors are
\(1.408\), \(1.421\), \(1.645\), and \(1.896\) for IID, \(\tau=1\),
\(\tau=4\), and \(\tau=\tau_{\mathrm{th}}\), respectively. In addition to
residual finite-sample effects, increasing \(\tau\) changes the physical
horizon in the confidence allocation used by the robust aggregator. The
corresponding split-clipping ranks are therefore not identical across the
equal-retained conditions. Small differences among these curves should not
be attributed solely to temporal dependence.

\paragraph{Physical-sample complexity.}
The right panel fixes the physical-budget grid and compares IID observations
with Markov trajectories thinned using \(\tau_{\mathrm{th}}=140\). For IID
data, increasing \(T_{\mathrm{raw}}\) from \(8400\) to \(67200\) reduces the
mean error from \(0.0231\) to \(0.00824\), with an empirical log--log slope of
approximately \(-0.500\).

Under theoretical thinning, the same physical budgets yield only \(60\),
\(120\), \(240\), and \(480\) retained transitions. The corresponding mean
error decreases from \(4.982\) to \(3.031\), but the trajectory remains in a
sparse-coverage regime. In particular, the average fraction of missing
state--action coordinates decreases from \(0.840\) to \(0.248\) over the
sweep and remains non-negligible even at the largest tested physical budget.

\paragraph{Summary.}
Figures~\ref{fig:markovian_main_ablation}
and~\ref{fig:markovian_budget_ablation} establish two complementary empirical
observations. First, once the retained information is held fixed and is
sufficiently large, mixing-time-based thinning makes the performance under
Markovian data comparable to that under IID sampling. Second, achieving this
decorrelation can require a substantially longer physical trajectory. Reporting only \(T_{\mathrm{used}}\) would hide the physical cost of thinning,
whereas reporting only \(T_{\mathrm{raw}}\) would obscure the fact that the
degradation is driven by the small number of retained and sufficiently
separated observations. Both budgets are therefore necessary for a faithful
assessment of learning from Markovian data.
\section{Additional Experiments on Gymnasium Environments}
\label{sec:experiments}

We evaluate  \textcolor{winered}{\texttt{Robust Async-Fed-Q}} to determine whether the
collaborative gain predicted by our analysis persists when the agents learn
from corrupted observations in standard gymnasium environments~\cite{towers2024gymnasium}.  Our experiments are
designed to answer three questions.  First, does increasing the number of
agents reduce the estimation error in the presence of corruption?  Second,
does the empirical dependence on the network size remain consistent with the
$\mathcal{O}(1/\sqrt{N})$ statistical scaling predicted under full coverage ?
Third, how much additional error is induced as the corruption level increases,
and can this degradation be offset by increasing the number of collaborating
agents?

\subsection{Environments and Evaluation Protocol}
\label{subsec:benchmark_setup}

We consider four standard tabular benchmarks~\cite{towers2024gymnasium} from Gymnasium:
\textcolor{mygreen}{\texttt{FrozenLake-v1}},
\textcolor{mygreen}{\texttt{CliffWalking-v0}},
\textcolor{mygreen}{\texttt{Taxi-v3}}, and
\textcolor{mygreen}{\texttt{Blackjack-v1}}. These environments provide
complementary tests of the proposed method. The default \(4\times4\) instance
of \textcolor{mygreen}{\texttt{FrozenLake-v1}} contains \(16\) states and \(4\)
actions and uses slippery stochastic transitions, making accurate value
estimation difficult even without corruption. The
\textcolor{mygreen}{\texttt{CliffWalking-v0}} environment contains \(48\)
states and \(4\) actions and assigns a large negative reward upon entering the
cliff, making its action values particularly sensitive to inaccurate Bellman
estimates. The \textcolor{mygreen}{\texttt{Taxi-v3}} environment provides a
larger structured navigation problem with \(500\) encoded states and \(6\)
actions. The agent must navigate the taxi, pick up a passenger, and complete a
legal drop-off while avoiding penalties for illegal pickup and drop-off
actions. Finally, \textcolor{mygreen}{\texttt{Blackjack-v1}} has an observation
space containing \(32\times11\times2=704\) encodings and \(2\) actions,
corresponding to sticking and hitting. We use its default configuration with
\texttt{natural=False} and \texttt{sab=False}. This environment introduces
stochastic card draws, state-dependent transition distributions, and terminal
rewards determined by the random evolution of both the player's and dealer's
hands. We use the full-coverage setting
\textup{(\textcolor{winered}{\(\texttt{S}_1\)})} in all four environments.
Specifically, every active nonterminal state--action coordinate has positive
sampling probability at every agent. Under the sampling
protocol used in the experiments, each agent obtains one independent sample
from every active state--action coordinate during each epoch. This gives
\(44\), \(148\), \(2400\), and \(560\) active state--action coordinates for
\textcolor{mygreen}{\texttt{FrozenLake-v1}},
\textcolor{mygreen}{\texttt{CliffWalking-v0}},
\textcolor{mygreen}{\texttt{Taxi-v3}}, and
\textcolor{mygreen}{\texttt{Blackjack-v1}}, respectively. Unless otherwise stated, we use \(K=120\), \(\gamma=0.9\), and a unit update
stepsize. The reward observations are perturbed by bounded, symmetric noise
with unit variance. The declared robustness budget is fixed at \(0.01\), while
the realized corruption fraction \(\varepsilon\) is varied. Corrupted agents
use a coordinatewise adaptive attack and transmit either \(-10^6\) or \(10^6\),
choosing the sign that maximizes the instantaneous Bellman-operator error.
All remaining learning, sampling, and aggregation parameters are held fixed
within each sweep.

We consider the following parameter grid
\[N \in \{1600,2400,3200,4800\}, ~\text{and}~ \varepsilon \in \{0,0.001,0.0025,0.005,0.01\}.\]
Here, \(\varepsilon=0\) represents the clean baseline, whereas
\(\varepsilon>0\) specifies the realized fraction of agents whose messages are
corrupted. Performance is measured using the terminal server sup-norm
error \(E_K\). Each configuration is evaluated over \(20\) independent repetitions. The solid curves report the average terminal error across all repetitions. To
construct the shaded regions, we partition the \(20\) runs into five batches
of four runs, average within each batch, and report the pointwise minimum and
maximum of the five batch averages. The black dashed curve is a vertically
rescaled \(\mathcal{O}(1/\sqrt{N})\) reference line. It is included only as a
visual slope guide and is not obtained by fitting the experimental data. For every environment, the collaboration-benefit plot fixes
\(\varepsilon=0.0025\) and varies \(N\). The complete
\((N,\varepsilon)\) grid is then presented from two complementary viewpoints:
one plot reports \(E_K\) as a function of \(N\), with one curve for each fixed
corruption level, while the other reports \(E_K\) as a function of
\(\varepsilon\), with one curve for each fixed network size. For
\textcolor{mygreen}{\texttt{FrozenLake-v1}} and
\textcolor{mygreen}{\texttt{CliffWalking-v0}}, we additionally include an
isolated corruption-sensitivity plot that fixes \(N=3200\) and varies
\(\varepsilon\). For \textcolor{mygreen}{\texttt{Taxi-v3}} and
\textcolor{mygreen}{\texttt{Blackjack-v1}}, the three retained plots are
arranged in a single row: the left panel shows the collaboration benefit, the
middle panel plots \(E_K\) against \(N\) for each corruption level, and the
right panel plots \(E_K\) against \(\varepsilon\) for each network size.
\begin{figure}[h]
\begin{center}
\begin{tabular}{cc}
   \hspace{-8mm}
   \includegraphics[scale=0.35]{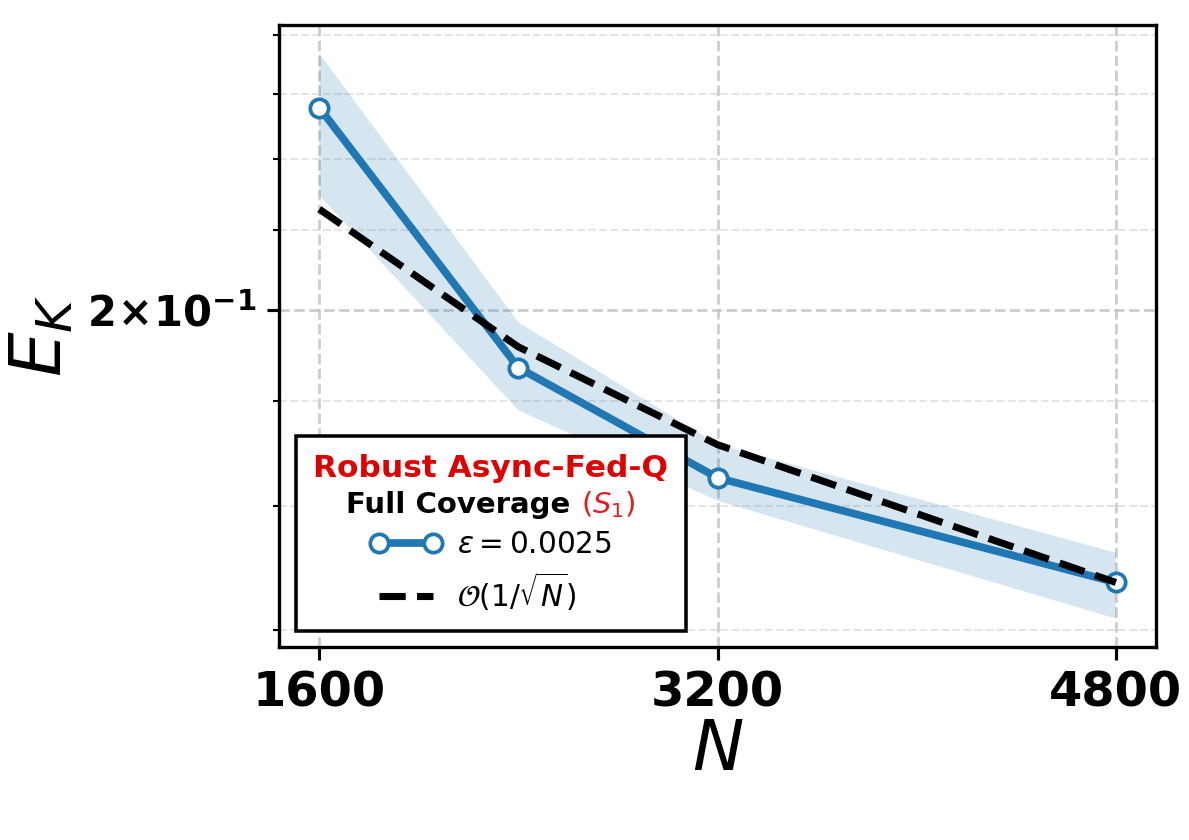}
   &
   \hspace{-4mm}
   \includegraphics[scale=0.35]{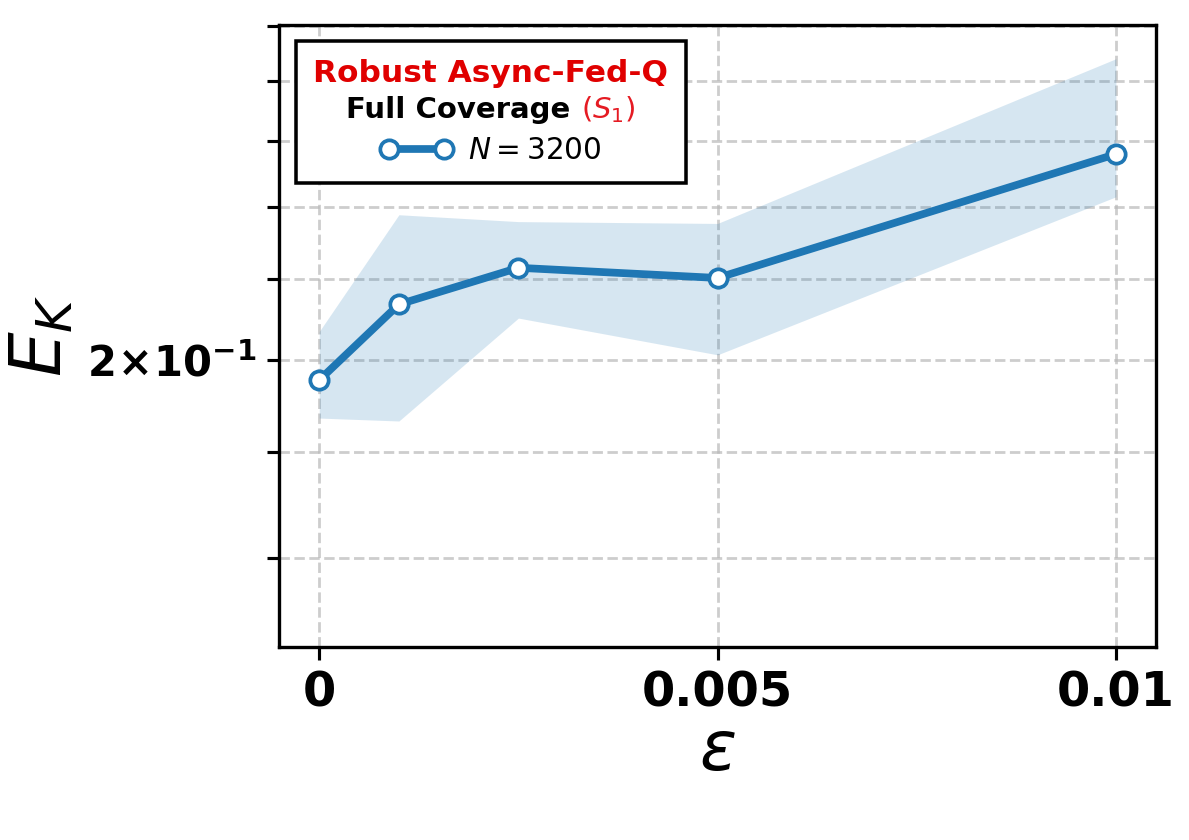}
\end{tabular}

\begin{tabular}{cc}
   \hspace{-8mm}
   \includegraphics[scale=0.35]{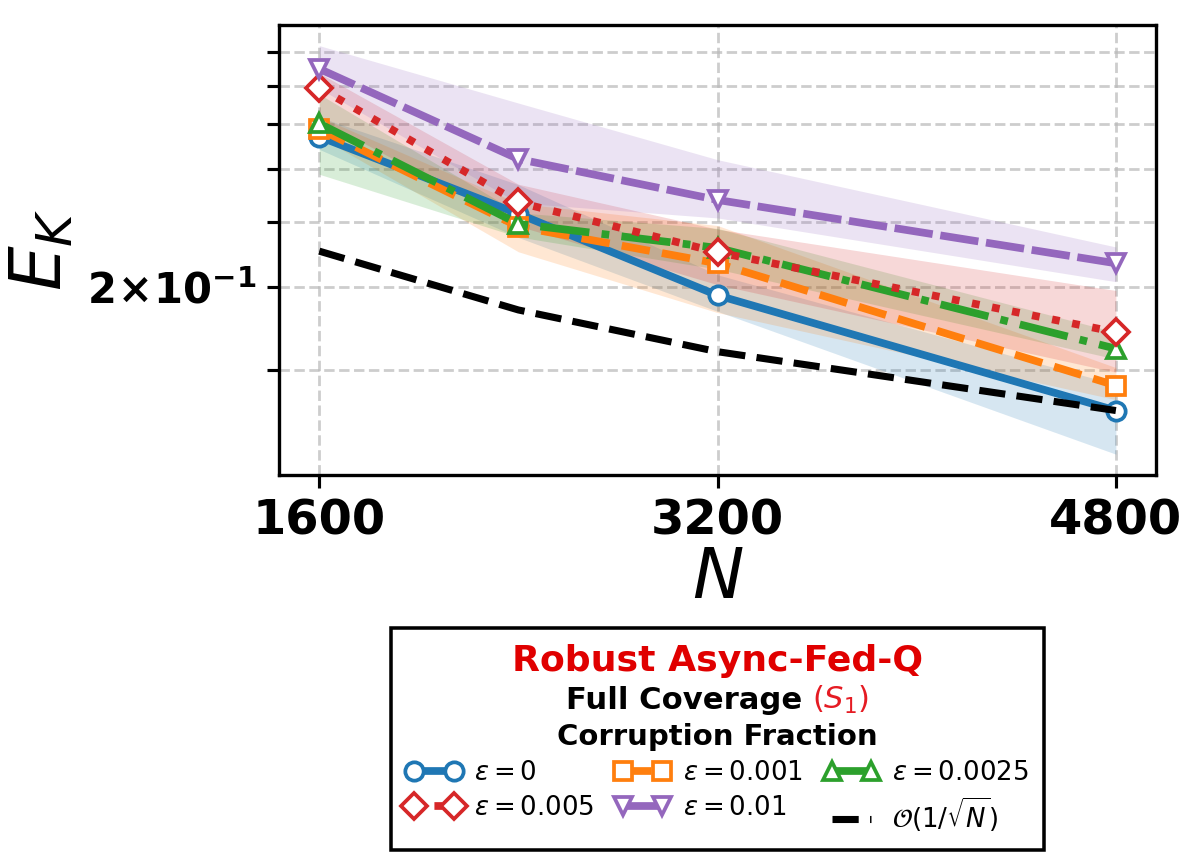}
   &
   \hspace{-5mm}
   \includegraphics[scale=0.35]{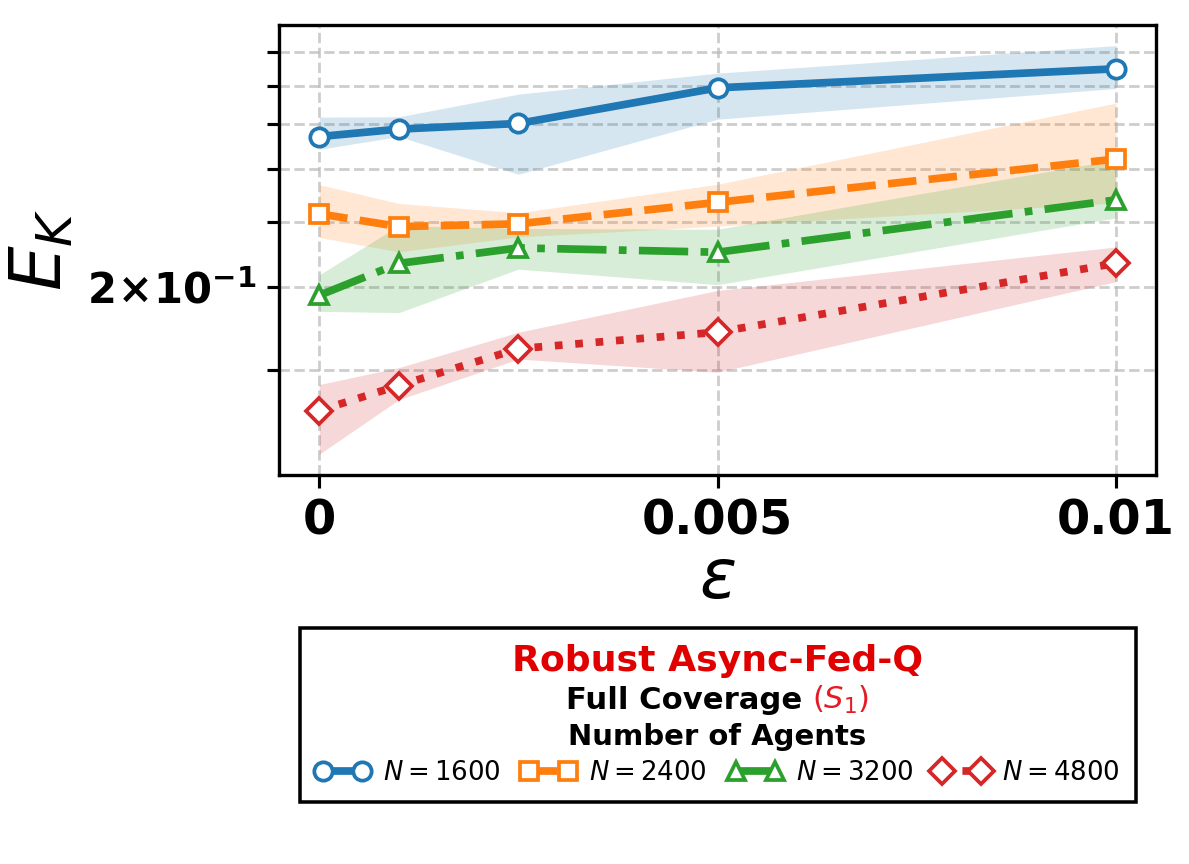}
\end{tabular}
\end{center}

\caption{Results on \textcolor{mygreen}{\texttt{CliffWalking-v0}} under full coverage \textup{(\textcolor{winered}{\(\texttt{S}_1\)})}.
\textbf{(Upper Left)} Collaboration benefit at the fixed corruption level
\(\varepsilon=0.0025\) as the number of agents varies over
\(N\in\{1600,2400,3200,4800\}\). The dashed black curve represents the
vertically rescaled \(\mathcal{O}(1/\sqrt{N})\) reference slope. \textbf{(Upper Right)} Corruption sensitivity at the fixed network size
\(N=3200\) as the corruption level varies over
\(\varepsilon\in\{0,0.001,0.0025,0.005,0.01\}\). \textbf{(Lower Left)} Terminal worst-agent error as a function of \(N\), with
one curve for each corruption level. \textbf{(Lower Right)} Terminal worst-agent error as a function of \(\varepsilon\), with one curve for each network size. Increasing \(N\) consistently reduces the error, while increasing \(\varepsilon\) raises the error envelope. The collaboration-induced decay remains visible at every tested corruption level.
}
\label{fig:cliffwalking_corruption_grid}
\end{figure}


\begin{figure}[h]
\begin{center}
\begin{tabular}{cc}
   \hspace{-14mm}
   \includegraphics[scale=0.35]{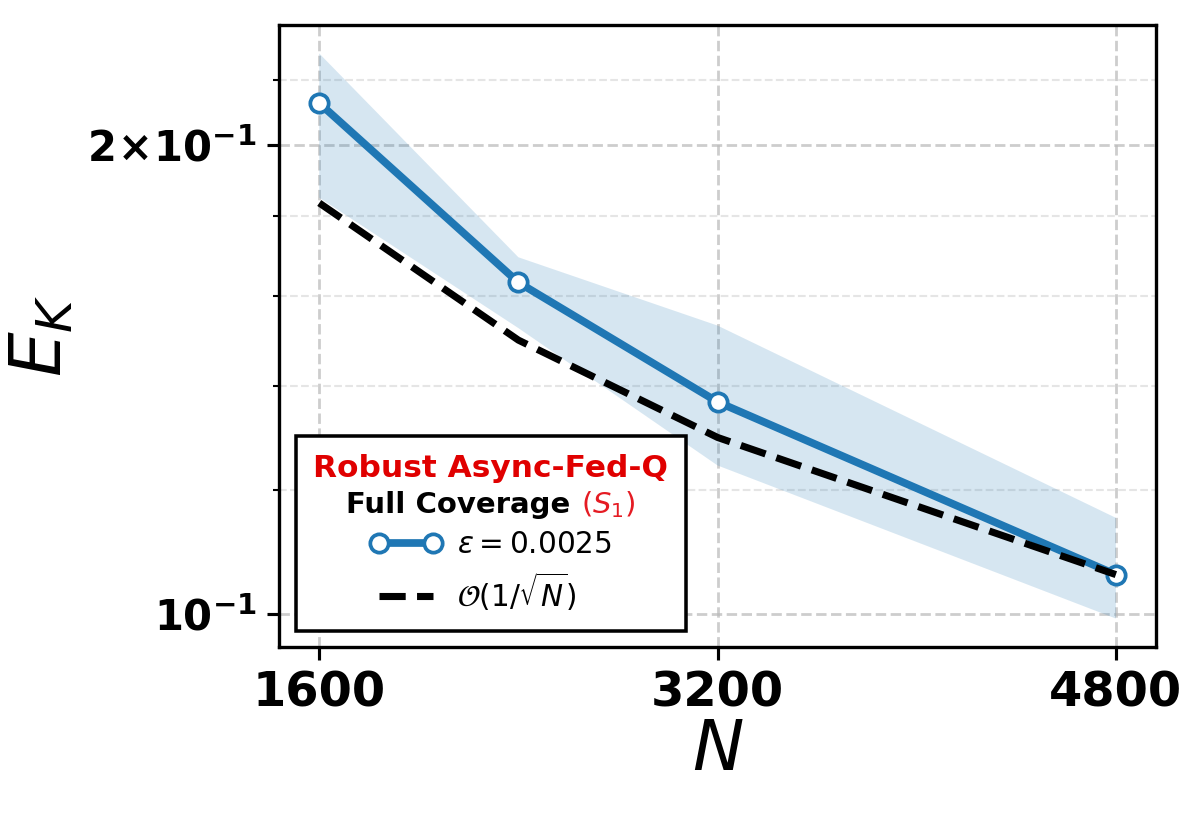}
   &
   \hspace{-6mm}
   \includegraphics[scale=0.35]{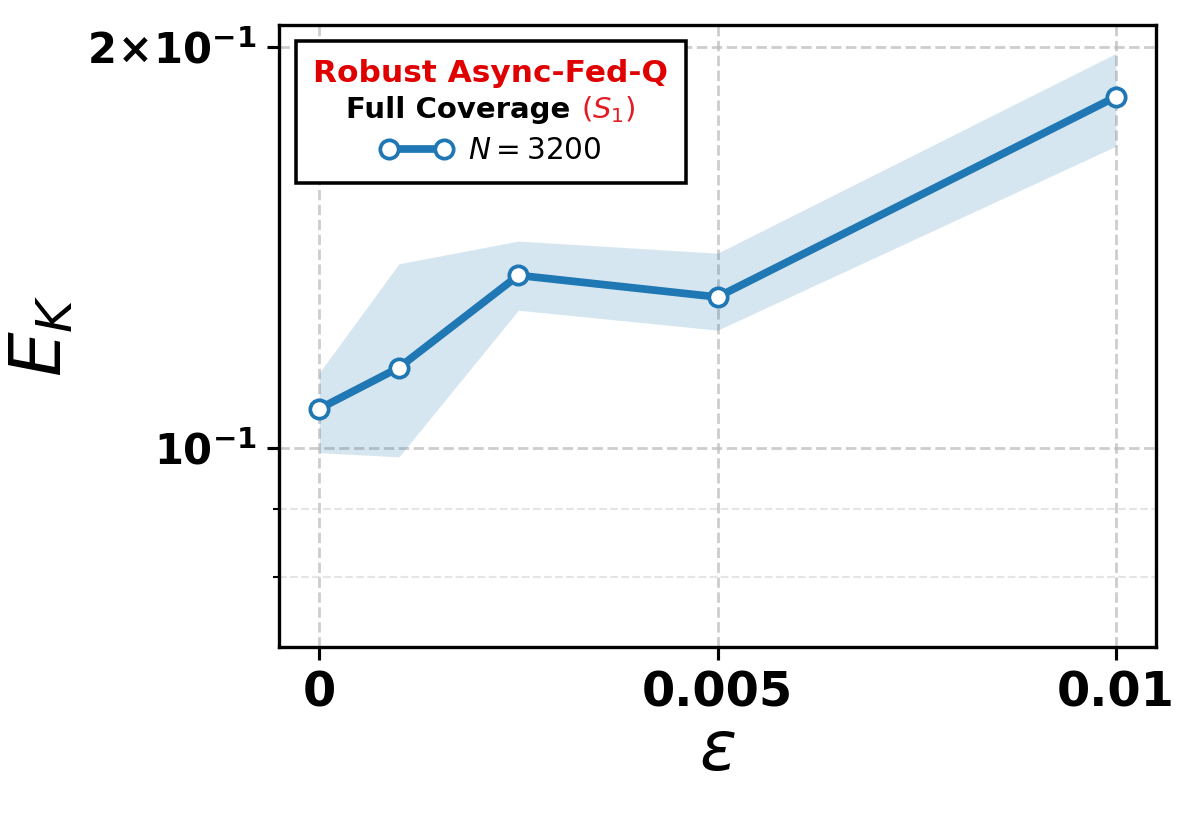}
\end{tabular}

\begin{tabular}{cc}
   \hspace{-14mm}
   \includegraphics[scale=0.35]{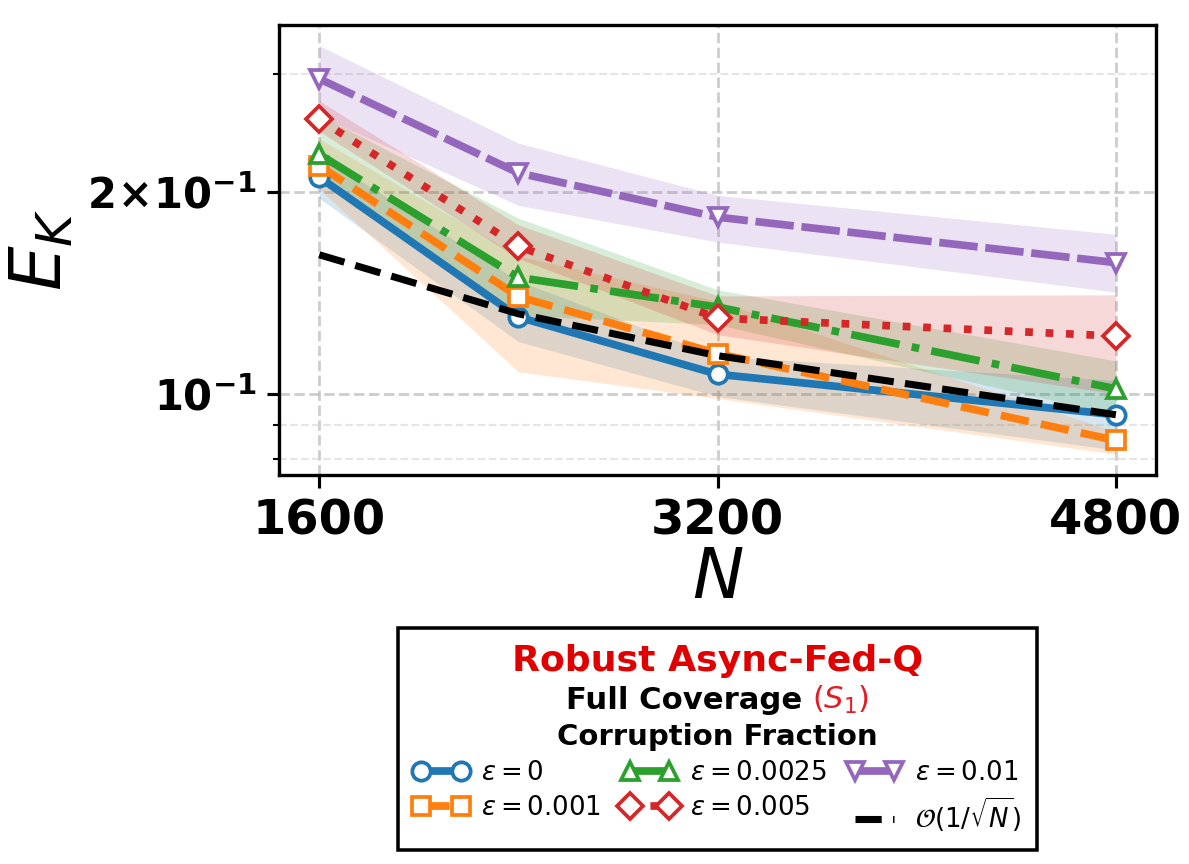}
   &
   \hspace{-6mm}
   \includegraphics[scale=0.35]{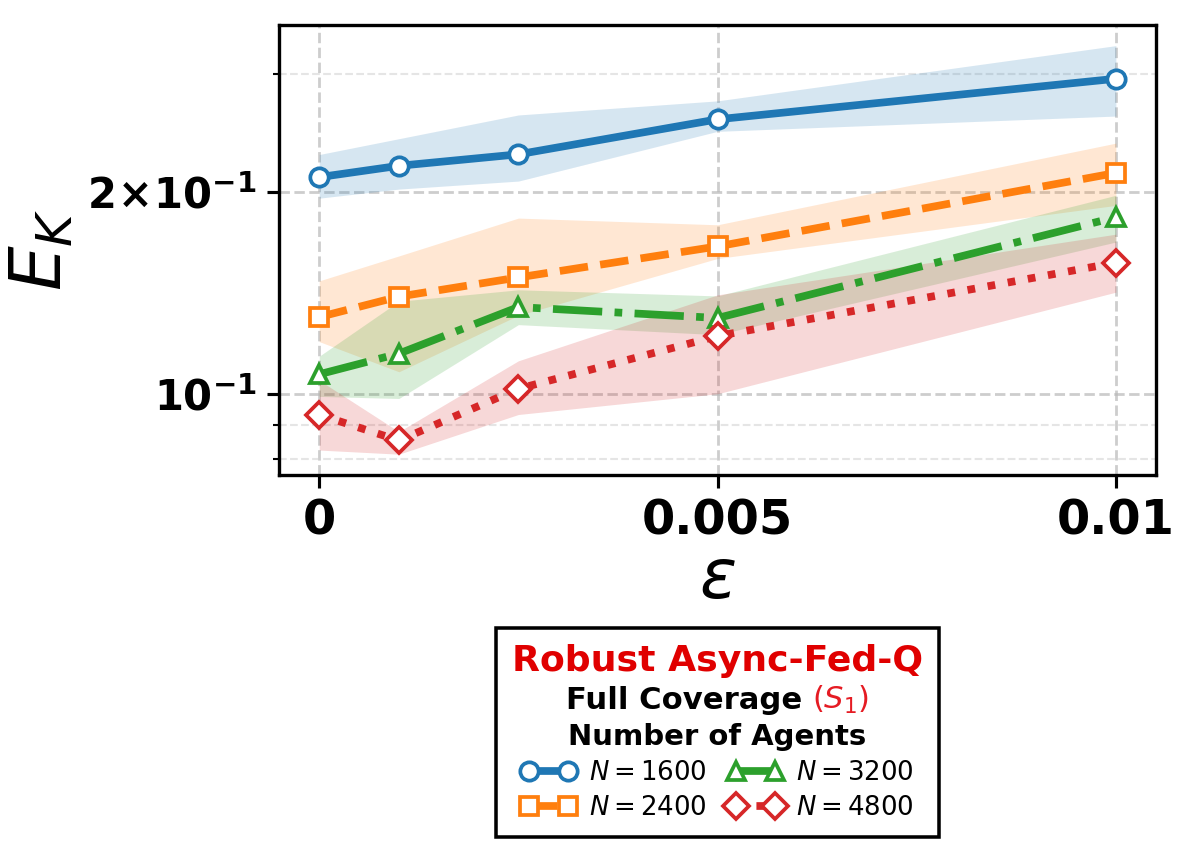}
\end{tabular}
\vspace{-6mm}
\end{center}

\caption{
Results on \textcolor{mygreen}{\texttt{FrozenLake-v1}} under full coverage
\textup{(\textcolor{winered}{\(\texttt{S}_1\)})}.
\textbf{(Upper Left)} Collaboration benefit at the fixed corruption level
\(\varepsilon=0.0025\) as the number of agents varies over
\(N\in\{1600,2400,3200,4800\}\), together with the vertically rescaled
\(\mathcal{O}(1/\sqrt{N})\) reference slope.
\textbf{(Upper Right)} Corruption sensitivity at the fixed network size
\(N=3200\) as
\(\varepsilon\in\{0,0.001,0.0025,0.005,0.01\}\).
\textbf{(Lower Left)} Terminal worst-agent error as a function of \(N\), with
one curve for each tested corruption level.
\textbf{(Lower Right)} Terminal worst-agent error as a function of
\(\varepsilon\), with one curve for each \(N\). The error decreases
with \(N\) throughout the corruption sweep, showing that additional
collaboration offsets a substantial portion of the corruption-induced
degradation.
}
\label{fig:frozenlake_corruption_grid}
\end{figure}


\begin{figure}[h]
\begin{center}
\begin{tabular}{cc}
   \hspace{-13mm}
   \includegraphics[scale=0.30]{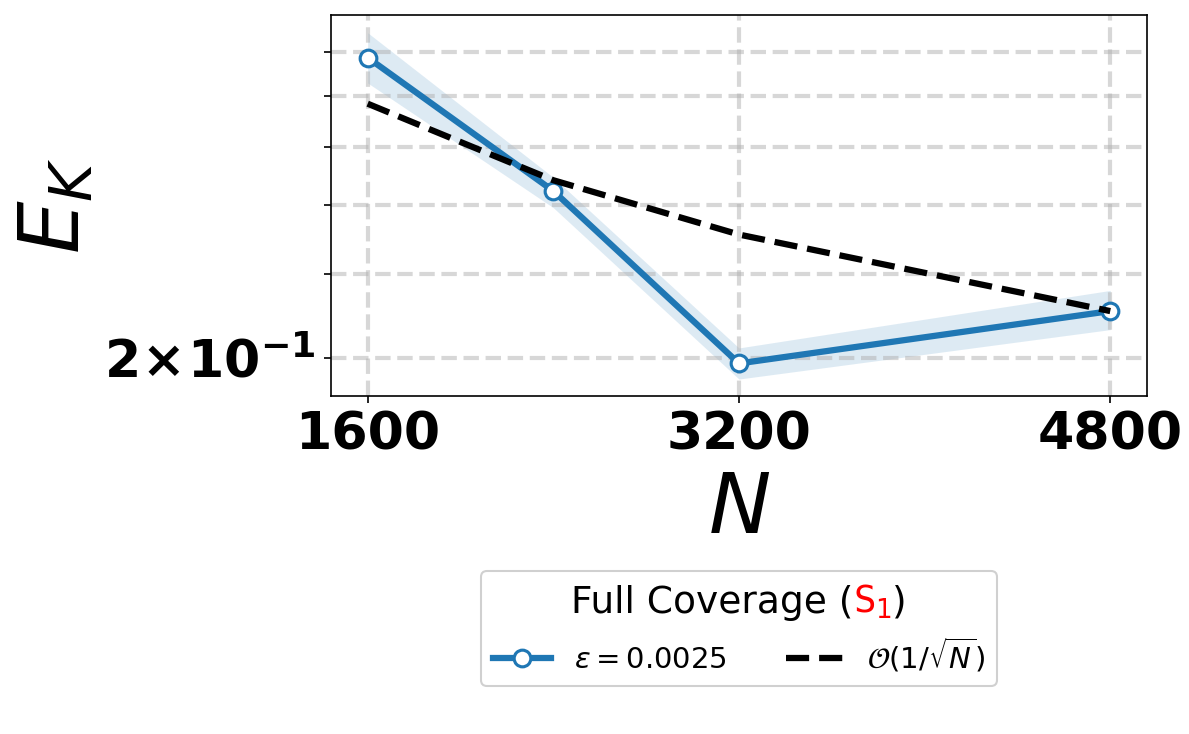}
   \hspace{-3mm}
   \includegraphics[scale=0.30]{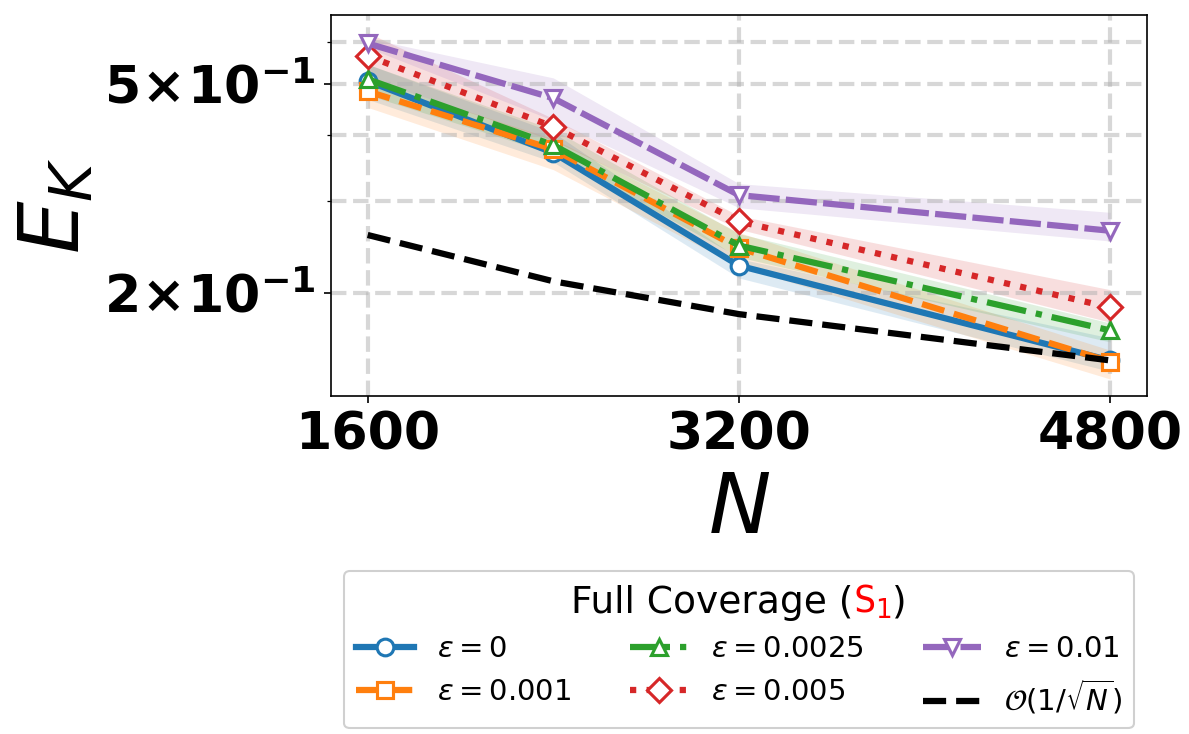}
   \hspace{-3mm}
   \includegraphics[scale=0.30]{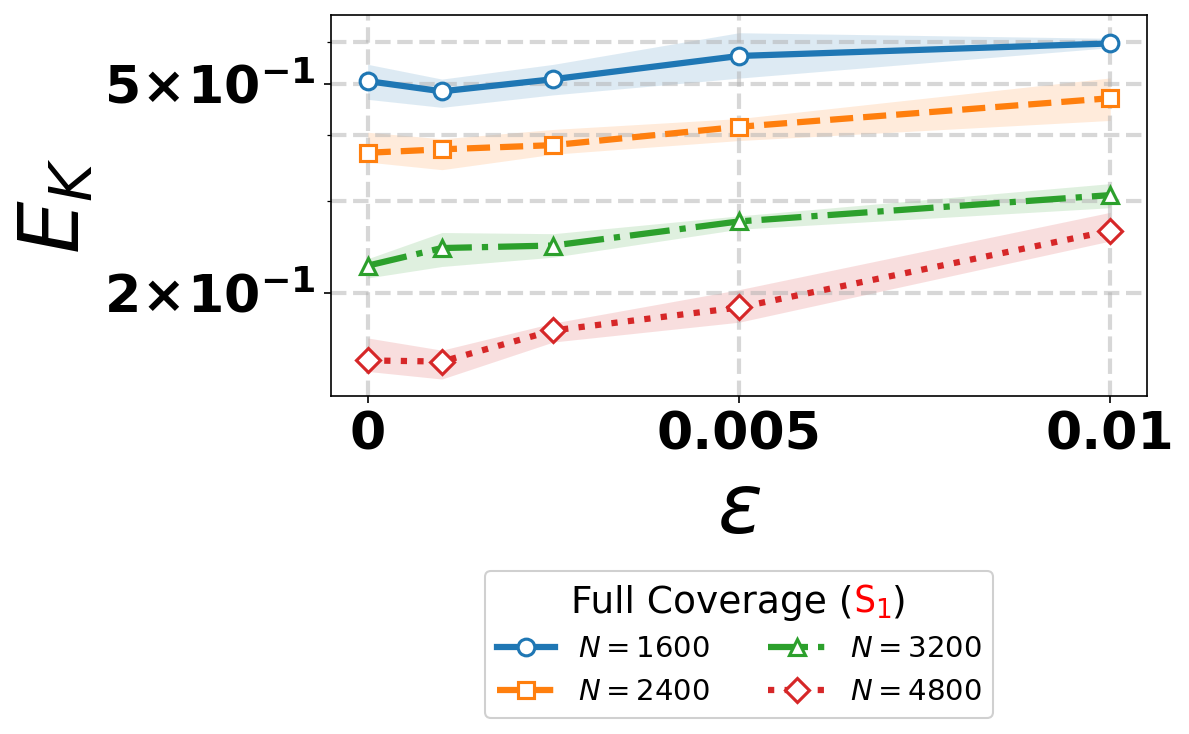}
\end{tabular}
\vspace{-6mm}
\end{center}

\caption{
Results on \textcolor{mygreen}{\texttt{Taxi-v3}} under full coverage
\textup{(\textcolor{winered}{\(\texttt{S}_1\)})}.
\textbf{(Left)} Collaboration benefit at the fixed corruption level
\(\varepsilon=0.0025\) as the number of agents varies over
\(N\in\{1600,2400,3200,4800\}\), together with the vertically rescaled
\(\mathcal{O}(1/\sqrt{N})\) reference slope.
\textbf{(Middle)} Terminal worst-agent error as a function of \(N\), with
one curve for each corruption level. Every curve decreases as the number of
agents increases, demonstrating that collaboration remains beneficial under
corruption.
\textbf{(Right)} Terminal worst-agent error as a function of
\(\varepsilon\), with one curve for each network size.
}
\label{fig:taxi_corruption_grid}
\end{figure}
\begin{figure}[h]
\begin{center}
\begin{tabular}{cc}
   \hspace{-13mm}
   \includegraphics[scale=0.30]{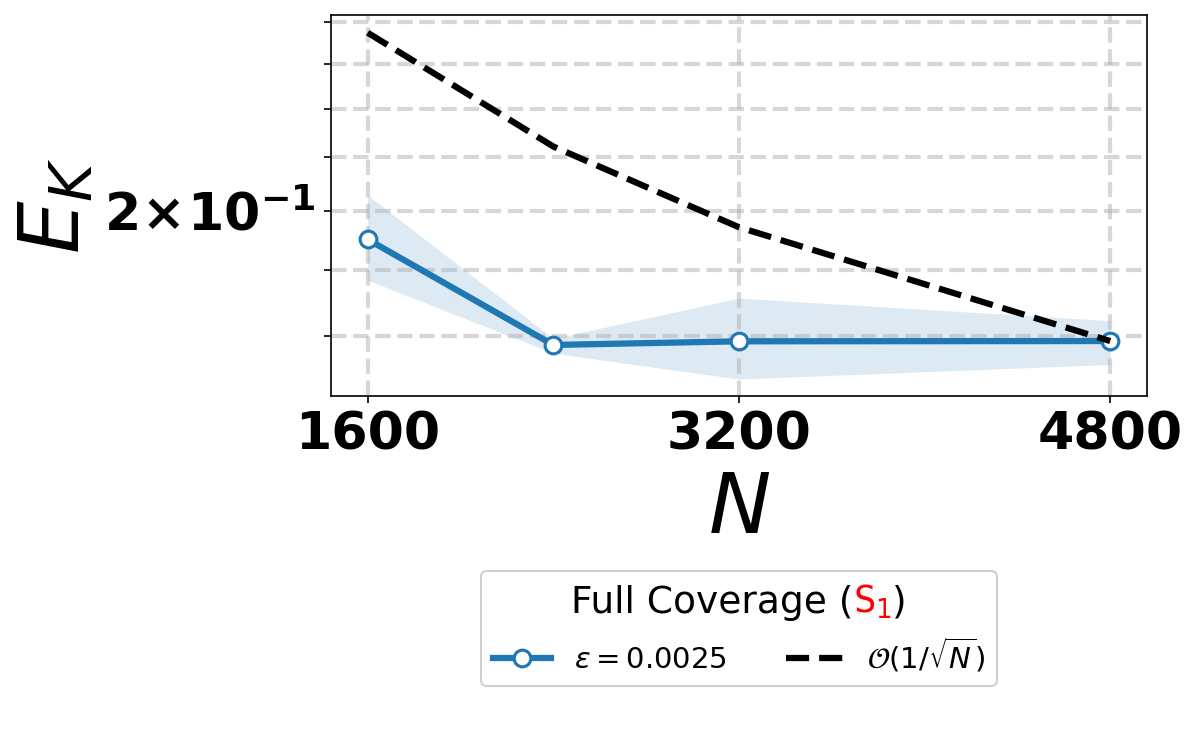}
   \hspace{-3mm}
   \includegraphics[scale=0.30]{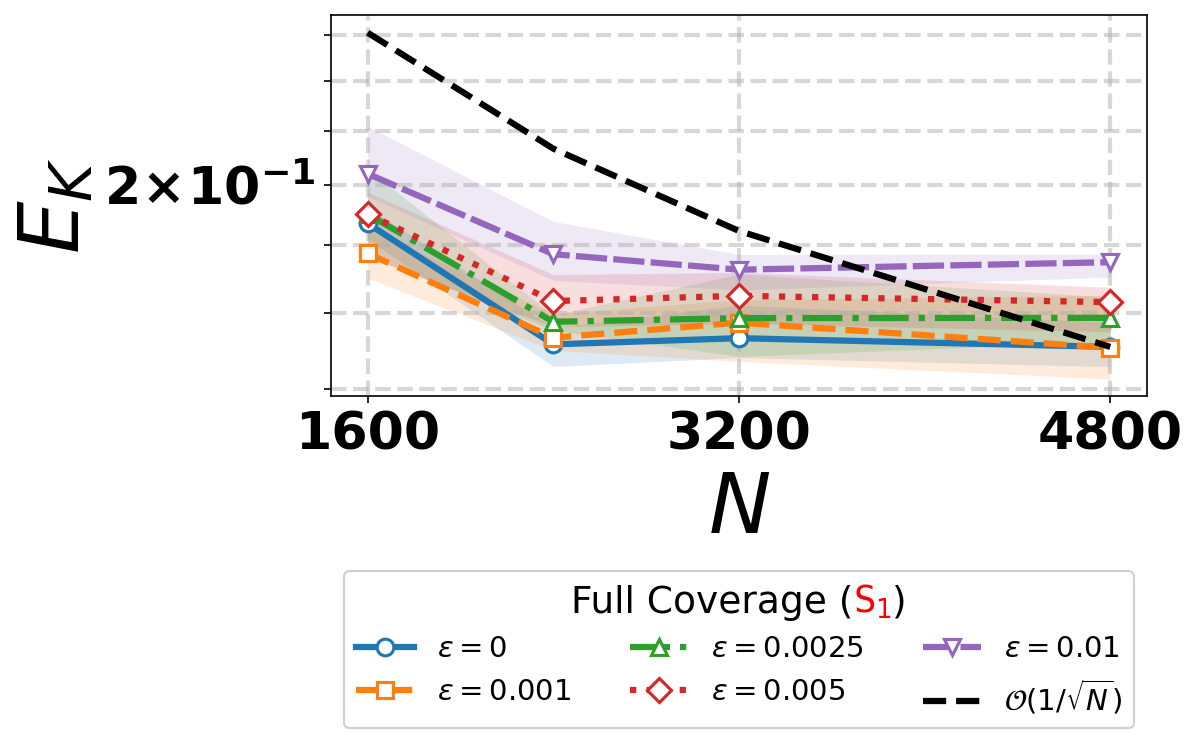}
   \hspace{-3mm}
   \includegraphics[scale=0.30]{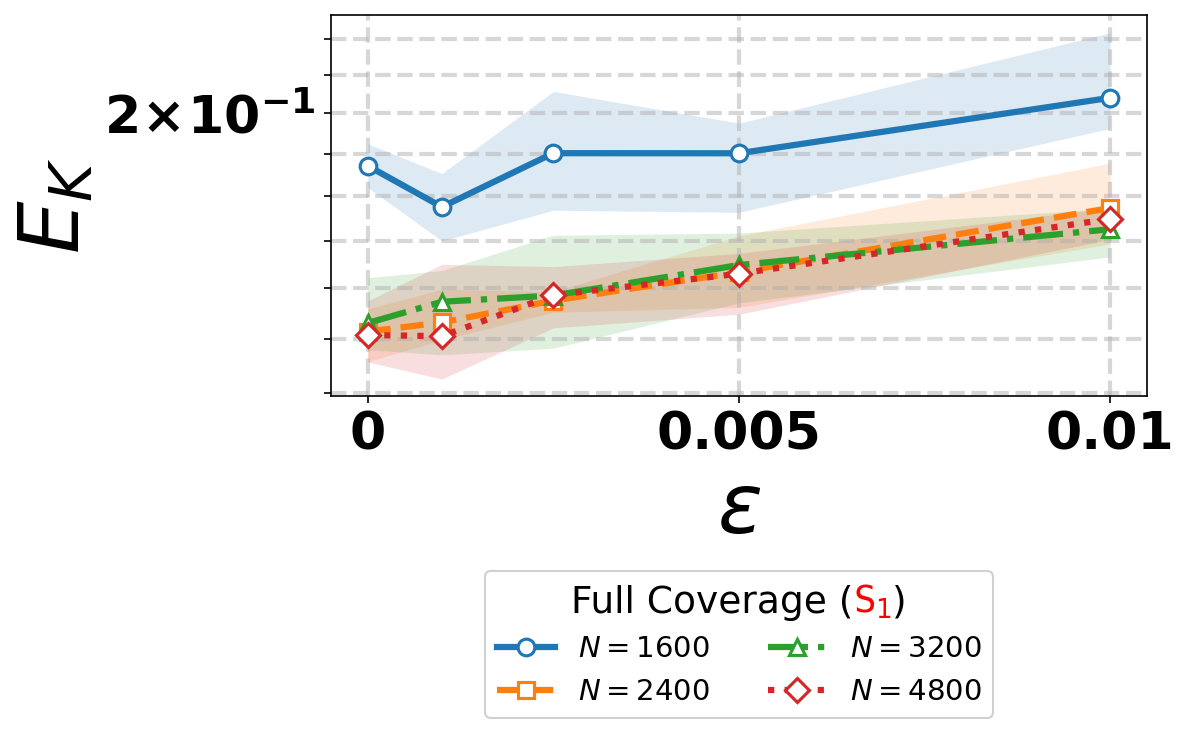}
\end{tabular}
\vspace{-6mm}
\end{center}
\caption{Results on \textcolor{mygreen}{\texttt{Blackjack-v1}} under full coverage \textup{(\textcolor{winered}{\(\texttt{S}_1\)})}.~\textbf{(Left)} Terminal error as a function of the number of agents
\(N\in\{1600,2400,3200,4800\}\), with one curve for each corruption level
\(\varepsilon\in\{0,0.001,0.0025,0.005,0.01\}\). \textbf{(Middle)} Collaboration benefit at the fixed corruption level
\(\varepsilon=0.0025\), together with the \(\mathcal{O}(1/\sqrt{N})\) reference curve. \textbf{(Right)} Terminal error as a function of \(\varepsilon\), with one curve for each network size \(N\). The results show that increasing the number of agents initially reduces the terminal error, after which the error approaches an environment-dependent floor. Increasing the corruption level consistently raises the terminal error across the tested network sizes.}
\label{fig:blackjack_corruption_grid}
\end{figure}
\subsection{Collaboration Benefit under Corruption}
\label{subsec:collaboration_corruption}

The upper-left panels of
Figures~\ref{fig:cliffwalking_corruption_grid}
and~\ref{fig:frozenlake_corruption_grid}, together with the left panels of
Figures~\ref{fig:taxi_corruption_grid}
and~\ref{fig:blackjack_corruption_grid}, fix the corruption level at
\(\varepsilon=0.0025\) and vary the number of agents \(N\). Across all four
environments, increasing the network size produces a clear overall reduction
in the terminal error \(E_K\), although the strength of this reduction depends
on the environment.

In \textcolor{mygreen}{\texttt{CliffWalking-v0}}, increasing \(N\) from
\(1600\) to \(4800\) reduces the terminal error from approximately
\(2.6\times10^{-1}\) to \(1.3\times10^{-1}\). In
\textcolor{mygreen}{\texttt{FrozenLake-v1}}, the corresponding error decreases
from approximately \(2.1\times10^{-1}\) to \(1.0\times10^{-1}\). Thus,
tripling the number of agents yields an approximately two-fold reduction in
both environments. Moreover, the two empirical curves closely follow the
slope of the displayed \(\mathcal{O}(1/\sqrt{N})\) reference line. The larger \textcolor{mygreen}{\texttt{Taxi-v3}} environment exhibits the same
overall collaboration benefit. Its terminal error decreases from approximately
\(3.0\times10^{-1}\) at \(N=1600\) to approximately
\(2.2\times10^{-1}\) at \(N=4800\). The smallest observed error occurs near
\(N=3200\), followed by a modest increase at \(N=4800\). Hence, the Taxi curve
is not strictly monotone, but the larger networks remain substantially more
accurate than the \(N=1600\) configuration. For \textcolor{mygreen}{\texttt{Blackjack-v1}}, the error decreases from
approximately \(1.90\times10^{-1}\) at \(N=1600\) to
\(1.58\times10^{-1}\) at \(N=2400\), after which it remains close to this level
through \(N=4800\). Blackjack therefore displays a clear initial collaboration
gain followed by an environment-dependent error floor. Unlike
\textcolor{mygreen}{\texttt{CliffWalking-v0}} and
\textcolor{mygreen}{\texttt{FrozenLake-v1}}, its complete curve does not follow
the \(1/\sqrt{N}\) reference slope over the entire tested range. These results show that moderate corruption changes the finite-sample error
level without eliminating the benefit of collaboration. The
\(\mathcal{O}(1/\sqrt{N})\) curve should be interpreted as a visual slope guide:
it closely describes the observed scaling on
\textcolor{mygreen}{\texttt{CliffWalking-v0}} and
\textcolor{mygreen}{\texttt{FrozenLake-v1}}, while the Taxi and Blackjack
experiments exhibit finite-sample saturation at the larger \(N\).
It demonstrates that collaboration improves the final accuracy
network-wide.
\subsection{Sensitivity to the Corruption Level}
\label{subsec:corruption_sensitivity}

The upper-right panels of
Figures~\ref{fig:cliffwalking_corruption_grid}
and~\ref{fig:frozenlake_corruption_grid} fix \(N=3200\) and vary the corruption
fraction \(\varepsilon\). The same dependence is shown by the \(N=3200\)
curves in the right panels of Figures~\ref{fig:taxi_corruption_grid}
and~\ref{fig:blackjack_corruption_grid}. Across all four environments, the
terminal error generally increases with the corruption level, although the
magnitude of this increase is environment dependent.

On \textcolor{mygreen}{\texttt{CliffWalking-v0}}, the terminal error rises from
approximately \(1.9\times10^{-1}\) in the clean case to roughly
\(2.5\times10^{-1}\) at \(\varepsilon=0.01\). On
\textcolor{mygreen}{\texttt{FrozenLake-v1}}, the corresponding increase is
from approximately \(1.1\times10^{-1}\) to \(1.8\times10^{-1}\). The larger
\textcolor{mygreen}{\texttt{Taxi-v3}} environment exhibits a similar but more
pronounced degradation, with the error increasing from approximately
\(2.3\times10^{-1}\) at \(\varepsilon=0\) to
\(3.1\times10^{-1}\) at \(\varepsilon=0.01\). Finally, on
\textcolor{mygreen}{\texttt{Blackjack-v1}}, the error increases from
approximately \(1.53\times10^{-1}\) to \(1.73\times10^{-1}\) over the same
corruption range. The degradation remains gradual rather than catastrophic. In every environment, the algorithm remains stable throughout the complete tested
interval, including at the largest corruption level. This contrasts sharply
with standard mean aggregation, which becomes unstable even under much smaller
corruption fractions, as shown later in Figure~\ref{fig:ablation_robustness}.

The \textcolor{mygreen}{\texttt{CliffWalking-v0}} and
\textcolor{mygreen}{\texttt{FrozenLake-v1}} curves exhibit small local
non-monotonicities at intermediate values of \(\varepsilon\). These
fluctuations lie within the displayed run-to-run variation and should not be
interpreted as improvements caused by corruption. In contrast, the
\(N=3200\) curves for \textcolor{mygreen}{\texttt{Taxi-v3}} and
\textcolor{mygreen}{\texttt{Blackjack-v1}} increase monotonically over the
tested grid. The reliable conclusion across all four environments is the
global one: increasing corruption raises the terminal-error envelope, with the
clearest separation occurring at \(\varepsilon=0.01\).
\subsection{Joint Effect of Agent Population and Corruption Fraction}
\label{subsec:joint_N_epsilon}

We next report the complete two-parameter sweep. The lower-left panels of
Figures~\ref{fig:cliffwalking_corruption_grid}
and~\ref{fig:frozenlake_corruption_grid}, together with the middle panels of
Figures~\ref{fig:taxi_corruption_grid}
and~\ref{fig:blackjack_corruption_grid}, plot \(E_K\) against \(N\), with one
curve for each corruption level. Across all four environments and every tested
value of \(\varepsilon\), increasing \(N\) produces a clear overall reduction
in the terminal error.

On \textcolor{mygreen}{\texttt{CliffWalking-v0}} and
\textcolor{mygreen}{\texttt{FrozenLake-v1}}, the curves for
\(\varepsilon\leq0.005\) decay broadly parallel to the
\(\mathcal{O}(1/\sqrt{N})\) reference guide. At \(\varepsilon=0.01\), the
curves are shifted upward but continue to decrease with \(N\). The
\textcolor{mygreen}{\texttt{Taxi-v3}} curves exhibit an even stronger empirical
decay over the tested range. For example, its clean error decreases from
approximately \(5.1\times10^{-1}\) at \(N=1600\) to
\(1.5\times10^{-1}\) at \(N=4800\). At \(\varepsilon=0.01\), the corresponding
decrease is from approximately \(6.0\times10^{-1}\) to
\(2.6\times10^{-1}\). The behavior of \textcolor{mygreen}{\texttt{Blackjack-v1}} is slightly different. For each corruption level, the largest reduction occurs between \(N=1600\) and \(N=2400\). The curves then approach an
environment-dependent floor, and the results for
\(N\in\{2400,3200,4800\}\) partially overlap within their displayed
variability bands. Thus, \textcolor{mygreen}{\texttt{Blackjack-v1}} exhibits a clear collaboration benefit over
the complete network-size range, but its empirical dependence on \(N\) does
not follow the \(1/\sqrt{N}\) guide after \(N=2400\). Nevertheless, even the
largest tested corruption fraction does not destroy the improvement obtained
by increasing the number of agents. The clean and lightly corrupted curves partially overlap, particularly for \(\varepsilon\in\{0.001,0.0025\}\). This is consistent with a regime in which the statistical error remains comparable to, or larger than, the corruption-induced penalty. A clearer separation generally appears as \(\varepsilon\) reaches \(0.005\) and \(0.01\). This behavior also explains why a single fixed-\(\varepsilon\) experiment is insufficient. The complete sweep distinguishes the statistical regime, where the \(N\)-dependent term dominates, from the more strongly corrupted regime, where the entire error curve is visibly elevated. The lower-right panels of
Figures~\ref{fig:cliffwalking_corruption_grid} and~\ref{fig:frozenlake_corruption_grid}, together with the right panels of
Figures~\ref{fig:taxi_corruption_grid}
and~\ref{fig:blackjack_corruption_grid}, provide the complementary view by
plotting \(E_K\) against \(\varepsilon\) for each fixed network size. The broad
ordering by \(N\) is preserved throughout the corruption sweep: the
\(N=1600\) curve lies above the curves corresponding to the larger networks.
The curves for the larger network sizes can overlap or exhibit small local
crossings, especially on \textcolor{mygreen}{\texttt{Blackjack-v1}}, but these
differences are comparable to the displayed run-to-run variability.

Most importantly, the \(N=4800\) configuration remains substantially more
accurate than the \(N=1600\) configuration at the largest corruption level.
Indeed, in all four environments, the error at
\((N,\varepsilon)=(4800,0.01)\) is lower than the clean error obtained at
\(N=1600\). Thus, within the tested parameter range, additional collaboration
can more than compensate for the degradation induced by corruption. The four environments exhibit the same qualitative structure despite their
different state-space sizes, transition mechanisms, reward scales, and
stochasticity. We do not compare their absolute error magnitudes directly,
since these depend on the value scale and dynamics of each environment. The
relevant observation is the consistency of the within-environment trends:
increasing \(\varepsilon\) raises the terminal-error envelope, increasing \(N\)
produces an overall reduction in \(E_K\), and a collaboration-dependent
separation remains visible throughout the complete corruption sweep.
\subsection{Summary of Empirical Findings}
\label{subsec:experiment_takeaways}

The experiments yield three consistent conclusions.  \textbf{First}, collaboration reduces the worst-agent terminal error under every tested corruption level. \textbf{Second}, the decay with $N$ remains close to the predicted $\mathcal{O}(1/\sqrt{N})$ statistical scaling in the benchmark environments. \textbf{Third}, increasing corruption raises the error floor, but the effect is controlled and can be offset by increasing the number of agents.  Together, these results support the central theoretical message: under full coverage, corruption and finite-sample uncertainty affect the error level, while collaboration continues to provide a network-wide statistical gain.

\section{Conclusion}
We introduced \texttt{Robust Async-Fed-Q}, which preserves collaborative gains under adversarial corruption while requiring only \(\widetilde{\mc{O}}(1)\) communication rounds. Our finite-time guarantees cover full and partial coverage, extend to Markovian sampling, and are supported by a nearly matching lower bound and numerical results. Future work will consider function approximation and heterogeneous MDPs.

\appendix
\section{Standard Results}
\label{app:add_literature_review}
In this section, we compile a few useful results that will be used by us throughout the proofs.  
\begin{lemma}(\textbf{Bernstein's Inequality})\label{lemma:bernstein}
If $X_1, X_2, \ldots, X_N$ are independent random variables with $\mathbb{P}(|X_i| \leq c) = 1$ and common mean $\mu$, then for any $\varepsilon > 0$:
\begin{equation}
\mathbb{P}(|\bar{X}_N - \mu| > \varepsilon) \leq 2 \exp\left\{ -\frac{N\varepsilon^2}{2\sigma^2 + \frac{2c\varepsilon}{3}} \right\},
\end{equation}
where $\bar{X}_N = \frac{1}{N} \sum_{i=1}^{N} X_i$ and $\sigma^2 = \frac{1}{N}\sum_{i=1}^{N} \text{Var}(X_i)$.
\end{lemma}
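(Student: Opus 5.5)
The plan is the Cramér–Chernoff method applied to each tail separately. Write $Y_i := X_i-\mu$, so $\mathbb{E}Y_i=0$ and $\frac1N\sum_i \mathrm{Var}(Y_i)=\sigma^2$. For the upper tail and any $\lambda>0$, Markov's inequality and independence give
\begin{equation*}
\mathbb{P}\bigl(\bar X_N-\mu>\varepsilon\bigr)\le e^{-\lambda N\varepsilon}\prod_{i=1}^N \mathbb{E}\,e^{\lambda Y_i}.
\end{equation*}
The lower tail is handled the same way with $-Y_i$ in place of $Y_i$. A union bound over the two tails then produces the leading factor $2$.

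The core step is a moment generating function bound driven only by the \emph{one-sided} range of $Y_i$. Suppose $Y_i\le b$ almost surely. Because $u\mapsto (e^u-1-u)/u^2$ is increasing, we get $e^{\lambda y}\le 1+\lambda y+\lambda^2 y^2\,\frac{e^{\lambda b}-1-\lambda b}{(\lambda b)^2}$ for all $y\le b$. Taking expectations and using $\mathbb{E}Y_i=0$ gives
\begin{equation*}
\mathbb{E}\,e^{\lambda Y_i}\le \exp\Bigl(\mathrm{Var}(Y_i)\,\tfrac{e^{\lambda b}-1-\lambda b}{b^2}\Bigr).
\end{equation*}
The elementary series comparison $\sum_{k\ge 2}u^k/k!\le \frac{u^2/2}{1-u/3}$ for $0\le u<3$ upgrades this to $\mathbb{E}\,e^{\lambda Y_i}\le \exp\bigl(\frac{\lambda^2\mathrm{Var}(Y_i)}{2(1-\lambda b/3)}\bigr)$. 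Substituting into the Chernoff bound and choosing $\lambda=\varepsilon/(\sigma^2+b\varepsilon/3)$ yields the exponent $-N\varepsilon^2/(2\sigma^2+2b\varepsilon/3)$. This matches the displayed inequality exactly when $b\le c$.

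Under the stated hypothesis $|X_i|\le c$, the one-sided ranges are $b_+=c-\mu$ for the upper tail and $b_-=c+\mu$ for the lower tail. Whenever $\mu\ge 0$ the upper tail therefore has $b_+\le c$ and is done, and symmetrically the lower tail is done whenever $\mu\le 0$.

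\textbf{Main obstacle: the remaining tail.} This is the lower tail when $\mu>0$ (and the upper tail when $\mu<0$), where the range can be as large as $c+|\mu|$, close to $2c$. My first attempt would be to exploit the coupling between $\mu$ and the variance, namely $\mathrm{Var}(X_i)\le (c-\mu)(c+\mu)$, and to feed this directly into Bennett's form $\exp\bigl(-\frac{N\sigma^2}{b^2}h(b\varepsilon/\sigma^2)\bigr)$ with $h(u)=(1+u)\log(1+u)-u$, rather than passing through the Bernstein relaxation.

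I should flag that I expect this step may genuinely fail. Consider the Poisson regime: $X_i=c$ with probability $1-q$ and $X_i=-c$ with probability $q$, with $Nq=k/x$, $\varepsilon=2ck/N$, and $k$ large. There the true lower-tail exponent is about $k(\log x-1)+k/x$. The claimed exponent is $k/(2/x+1/3)$, which is larger, for example at $x=20$. If that calculation survives scrutiny, the displayed constant $2c\varepsilon/3$ cannot hold in full generality under $|X_i|\le c$. The argument above would then establish it for the favourable tail, and only with $4c\varepsilon/3$ for the other tail. That weaker constant changes only absolute constants wherever the lemma is invoked later in the paper.
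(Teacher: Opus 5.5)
The paper gives no proof of this lemma. It is listed among the standard results and invoked through \cite{chungconc}, so there is no competing argument to compare with. Your Cram\'er--Chernoff route is the textbook proof, and every step holds:
the map $u\mapsto (e^u-1-u)/u^2=\int_0^1(1-t)e^{tu}\,dt$ is increasing on all of $\mathbb{R}$, which is exactly what lets you use only the one-sided range $Y_i\le b$;
the series bound follows from $k!\ge 2\cdot 3^{k-2}$;
and $\lambda=\varepsilon/(\sigma^2+b\varepsilon/3)$ satisfies $\lambda b<3$ and yields $\exp\{-N\varepsilon^2/(2\sigma^2+2b\varepsilon/3)\}$ for each tail.
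The case $\sigma^2=0$ is trivial.

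Your worry about the remaining tail is justified. As literally stated, with only $|X_i|\le c$ and an arbitrary common mean, the lemma is false, and your example is a genuine counterexample.

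One correction to the arithmetic. With $Nq=k/x$ and $\varepsilon=2ck/N$, the lower-tail event is $M>k(1+1/x)$, where $M\sim\mathrm{Bin}(N,q)$ counts the values $-c$. The limiting exponent is therefore $k\bigl[(1+\tfrac1x)\log(1+x)-1\bigr]$. Your $k(\log x-1)+k/x$ is the leading-order exponent of $\mathbb{P}(M=k)$, not of the tail beyond $k(1+1/x)$. At $x=20$ the correct value is about $2.197k$. This is still below the claimed $k/(2/x+1/3)\approx 2.308k$, so for fixed large $k$ and $N$ large enough the displayed bound is violated.

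You should drop your proposed first attempt via the variance coupling. This two-point law already attains $\mathrm{Var}(X_i)=(c-\mu)(c+\mu)$, and the exponent above coincides with Bennett's exponent for range $2c$ (Bennett is tight here). So no use of that coupling can bring the effective range back to $c$.

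What is true is exactly what you proved:
under the standard hypothesis $|X_i-\mu|\le c$, the displayed bound holds on both tails;
under $|X_i|\le c$ alone, you get the constant $2c\varepsilon/3$ on one tail and $2(c+|\mu|)\varepsilon/3\le 4c\varepsilon/3$ on the other.

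For the paper, not even the constants change. The lemma is only applied, in Lemma~\ref{lem:s1_uniform_visitation} and its partial-coverage analogue, to the centered indicators $X_t^{i,k}(s,a)-\lambda_i(s,a)$. These have mean zero and modulus at most $1$, so the stated form is valid there and the bound $\exp(-3\lambda_{\min}H/104)$ stands as written.
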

\begin{lemma}
 (\textbf{Hoeffding's Inequality}) 
\label{lemma:hoeffding}
If $X_1, X_2, \ldots, X_N$ are independent random variables with $\mathbb{P}(a \leq X_i \leq b) = 1$ and common mean $\mu$, then for any $\varepsilon > 0$:
\begin{equation}
\mathbb{P}(|\bar{X}_N - \mu| > \varepsilon) \leq 2 \exp\left\{\frac{-2N\varepsilon^2}{(b-a)^2}\right\},
\end{equation}
where $\bar{X}_N = \frac{1}{N} \sum_{i=1}^{N} X_i$.
\end{lemma}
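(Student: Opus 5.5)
The plan is the standard Chernoff (exponential-moment) argument, applied separately to the upper and lower tails and combined with a union bound, which produces the factor $2$. Write $Y_i := X_i - \mu$, so the $Y_i$ are independent, have mean zero, and satisfy $a-\mu \le Y_i \le b-\mu$ almost surely. Each $Y_i$ thus lies in an interval of length $b-a$. For the upper tail, fix $s>0$. Markov's inequality applied to $e^{s\sum_i Y_i}$, together with independence, gives
\begin{equation*}
\mathbb{P}\bigl(\bar X_N - \mu \ge \varepsilon\bigr) \le e^{-sN\varepsilon}\,\mathbb{E}\Bigl[e^{s\sum_{i=1}^N Y_i}\Bigr] = e^{-sN\varepsilon}\prod_{i=1}^N \mathbb{E}\bigl[e^{sY_i}\bigr].
\end{equation*}
The strict event $\{\bar X_N-\mu>\varepsilon\}$ is contained in this one, so the bound covers it as well.

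The key ingredient is Hoeffding's lemma: a mean-zero random variable $Y$ supported in an interval $[\alpha,\beta]$ satisfies $\mathbb{E}[e^{sY}] \le e^{s^2(\beta-\alpha)^2/8}$ for all real $s$. I would prove it in two steps.
\begin{itemize}
\item First, use convexity of $y\mapsto e^{sy}$ to write $e^{sy} \le \frac{\beta-y}{\beta-\alpha}e^{s\alpha} + \frac{y-\alpha}{\beta-\alpha}e^{s\beta}$. Taking expectations with $\mathbb{E}Y=0$ gives $\mathbb{E}[e^{sY}] \le e^{\phi(u)}$, where $u=s(\beta-\alpha)$, $p=-\alpha/(\beta-\alpha)\in[0,1]$, and $\phi(u) = -pu + \log(1-p+pe^{u})$.
\item Second, check that $\phi(0)=\phi'(0)=0$ and that $\phi''(u) = q(1-q)\le 1/4$, where $q = pe^u/(1-p+pe^u)\in[0,1]$. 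Taylor's theorem with remainder then gives $\phi(u)\le u^2/8$.
\end{itemize}
This calculus step is the only genuinely non-routine part of the proof, and I expect it to be the main obstacle. In particular, the correct parametrization must be chosen so that $\phi''$ is visibly of the Bernoulli-variance form $q(1-q)$.

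With the lemma in hand, the upper tail is bounded by $\exp\bigl(-sN\varepsilon + Ns^2(b-a)^2/8\bigr)$. Minimizing over $s>0$ gives $s^\star = 4\varepsilon/(b-a)^2$ and hence the bound $\exp\bigl(-2N\varepsilon^2/(b-a)^2\bigr)$. Applying the same argument to $-Y_i$, which is also mean zero and supported on an interval of length $b-a$, gives the identical bound for $\mathbb{P}(\bar X_N-\mu\le-\varepsilon)$. A union bound over the two tails then yields the stated inequality with the factor $2$.
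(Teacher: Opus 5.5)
Your proposal is correct. It is the standard Chernoff-bound proof via Hoeffding's lemma, and the computations check out: the convexity reduction to $\phi(u)=-pu+\log(1-p+pe^u)$, the bound $\phi''=q(1-q)\le 1/4$, the optimizer $s^\star=4\varepsilon/(b-a)^2$ giving exponent $-2N\varepsilon^2/(b-a)^2$, and the union bound over the two tails producing the factor $2$. The only implicit facts are $a<b$ (needed for the bound to make sense) and $a\le\mu\le b$ (which gives $p\in[0,1]$); both are immediate.

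The paper gives no proof of this lemma to compare against. It lists the result in Appendix~A among standard facts and defers to the concentration literature.

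One useful observation: the paper later invokes ``Hoeffding's inequality'' to assert that each honest message $d_{i,k}(s,a)$ is sub-Gaussian with proxy of order $B/\sqrt{p_i}$. That claim is really the moment-generating-function bound, i.e.\ your intermediate Hoeffding's lemma, rather than the tail inequality as stated. Your route therefore proves the stronger form that the downstream analysis actually needs.
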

\begin{lemma}\textbf{(Pinsker's inequality)}
\label{lem:pinsker}
 Let \(P\) and \(Q\) be two probability measures on a common measurable space \((\Omega,\mathcal F)\), and suppose that every event with zero probability under \(Q\) also has zero probability under \(P\). Then,
\begin{equation}
\texttt{TV}(P,Q)
\le
\sqrt{\frac{1}{2}\KL(P\|Q)}.
\end{equation}
Here, \(\operatorname{TV}(P,Q):=\sup_{A\in\mathcal F}|P(A)-Q(A)|\) denotes the total variation distance, and \(\KL(P\|Q):=\int \log\left(\frac{dP}{dQ}\right)dP\) denotes the Kullback--Leibler divergence.
\end{lemma}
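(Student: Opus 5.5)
The plan is the classical two-step argument: reduce to Bernoulli distributions with the data-processing inequality for $\KL$, then prove the resulting scalar inequality by elementary calculus. If $\KL(P\|Q)=\infty$ there is nothing to prove, so assume it is finite. The absolute-continuity hypothesis then guarantees that $dP/dQ$ exists and the divergence is well defined. Since $\texttt{TV}(P,Q)=\sup_{A\in\mathcal F}|P(A)-Q(A)|$, it suffices to show that for every fixed $A\in\mathcal F$,
\[
|P(A)-Q(A)|\le \sqrt{\tfrac12\KL(P\|Q)}.
\]

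\textbf{Step 1 (reduction to two points).} Fix $A$ and let $p:=P(A)$, $q:=Q(A)$. Consider the measurable map $\omega\mapsto \mathbbm{1}\{\omega\in A\}$, which pushes $P$ and $Q$ forward to $\mathrm{Bernoulli}(p)$ and $\mathrm{Bernoulli}(q)$. I would invoke the data-processing inequality
\[
\KL(P\|Q)\ge \mathrm{kl}(p,q):=p\log\frac{p}{q}+(1-p)\log\frac{1-p}{1-q}.
\]
To keep the argument self-contained, I would prove it via the log-sum inequality, which is Jensen applied to the convex function $x\log x$. Split $\int \log(dP/dQ)\,dP$ over $A$ and $A^c$. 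On $A$, Jensen with respect to the normalized measure $Q(\cdot\cap A)/q$ gives
\[
\int_A \frac{dP}{dQ}\log\frac{dP}{dQ}\,dQ\ \ge\ p\log\frac{p}{q},
\]
and the same bound holds on $A^c$ with $1-p,1-q$. The conventions $0\log 0=0$ handle degenerate cases. Moreover, $q=0$ forces $p=0$ by absolute continuity, so no division by zero occurs.

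\textbf{Step 2 (binary Pinsker).} It remains to show $\mathrm{kl}(p,q)\ge 2(p-q)^2$ for all $p,q\in[0,1]$ with $q\in(0,1)$; the boundary cases follow by continuity or are trivial. Fix $p$ and define
\[
g(q):=\mathrm{kl}(p,q)-2(p-q)^2 .
\]
Then $g(p)=0$, and differentiating gives
\[
g'(q)=\frac{q-p}{q(1-q)}-4(q-p)=(q-p)\Bigl(\frac{1}{q(1-q)}-4\Bigr).
\]
Since $q(1-q)\le 1/4$, the bracket is nonnegative. Hence $g'\le 0$ for $q<p$ and $g'\ge 0$ for $q>p$, so $g$ attains its minimum value $0$ at $q=p$. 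Combining the two steps gives $|p-q|^2\le \tfrac12\KL(P\|Q)$, and taking the supremum over $A$ yields the claim.

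\textbf{Main obstacle.} The calculus in Step 2 is routine. The only genuinely delicate point is Step 1 in full measure-theoretic generality: justifying Jensen on each piece when $P(A)$ or $Q(A)$ vanishes, and making sure the integral defining $\KL$ splits legitimately. The negative part of $x\log x$ is bounded below by $-1/e$, so the integral is well defined and the split is valid. I would state these conventions explicitly before applying the log-sum bound.
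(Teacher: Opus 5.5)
Your proof is correct. The paper itself gives no proof of this lemma: it lists Pinsker's inequality in Appendix~\ref{app:add_literature_review} among standard tools and uses it only in the lower-bound constructions, to turn a Gaussian KL computation into a total-variation bound. There is therefore no competing argument to compare against. What you give is the standard textbook proof: data processing onto the indicator of $A$, justified by Jensen/log-sum, followed by the two-point inequality $\mathrm{kl}(p,q)\ge 2(p-q)^2$ obtained by differentiating in $q$. It is self-contained and complete.

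The derivative $g'(q)=(q-p)\bigl(1/(q(1-q))-4\bigr)$ is correct. The measure-theoretic points you flag are also handled properly:
\begin{itemize}
\item splitting $\int f\log f\,dQ$ over $A$ and $A^c$ is legitimate because $x\log x\ge -1/e$;
\item Jensen on $A$ is applied with respect to $Q(\cdot\cap A)/q$, where $\int f\,dQ_A=p/q<\infty$;
\item $q=0$ forces $p=0$ by absolute continuity.
\end{itemize}

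One cosmetic point concerns $p\in\{0,1\}$. There the minimizer $q=p$ lies outside $(0,1)$, so you cannot literally say $g(p)=0$. Say instead that $g$ is monotone on $(0,1)$ and tends to $0$ at the relevant endpoint; for example, $\mathrm{kl}(0,q)=-\log(1-q)\to 0$ as $q\to 0$, and $\mathrm{kl}(1,q)=-\log q\to 0$ as $q\to 1$. Likewise, $q\in\{0,1\}$ forces $p=q$ by absolute continuity, so those cases are trivial, as you indicate.
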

\section{Additional Technical Background}
\label{app:add_prob_background}
\textbf{Asynchronous i.i.d. single-agent \(Q\)-learning.}
We next recall the classical asynchronous i.i.d. \(Q\)-learning scheme, which serves as the baseline for our federated formulation. Unlike the synchronous generative-model setting, where each iteration provides an independent transition sample for every state-action pair \((s,a)\in\mc S\times\mc A\), asynchronous i.i.d. \(Q\)-learning observes only one state-action pair at a time, sampled independently from a fixed distribution over \(\mc S\times\mc A\). At time \(t\), the learner observes a single transition tuple \((s_t,a_t,r_t,s_{t+1})\), where \((s_t,a_t)\sim \lambda\) independently across time, the reward satisfies \(\mathbb E[r_t\mid (s_t,a_t)=(s,a)]=R(s,a)\), and the next state is drawn as \(s_{t+1}\sim \mc P(\cdot\mid s_t,a_t)\). Consequently, only the entry corresponding to the sampled pair \((s_t,a_t)\) is updated at time \(t\), while all other entries of the \(Q\)-function remain unchanged. The update is given by
\begin{equation}
\label{eqn:asyncQ}
Q_{t+1}(s,a)
=
\begin{cases}
(1-\alpha_t)Q_t(s,a)
+
\alpha_t\left(r_t +\gamma\max_{a'\in\mc A}Q_t(s_{t+1},a')\right),
& \text{if }(s,a)=(s_t,a_t),\\[1mm]
Q_t(s,a),
& \text{otherwise},
\end{cases}
\end{equation}
where \(\{\alpha_t\}\) is a suitable step-size sequence. 
The update in \eqref{eqn:asyncQ} is a stochastic approximation scheme. Conditioned on the sampled pair \((s_t,a_t)\), the temporal-difference target \(r_t+\gamma\max_{a'\in\mc A}Q_t(s_{t+1},a')\) has mean \((\mc T^\star Q_t)(s_t,a_t)\). Since the Bellman optimality operator \(\mc T^\star\) is a \(\gamma\)-contraction in the sup-norm and has \(Q^\star\) as its unique fixed point, classical results show that \(Q\)-learning converges almost surely to \(Q^\star\) under standard coverage and step-size conditions~\cite{watkins1992q,tsitsiklis94,jaakkola}. Recent finite-time analyses further establish high-probability rates of order \(\widetilde O(1/\sqrt{T})\), up to problem-dependent factors induced by the sampling distribution~\cite{wainwright,Qu,li2024q}. The qualifier \emph{asynchronous} reflects the fact that only the coordinate \(Q_t(s_t,a_t)\) is updated at time \(t\), while all other coordinates remain fixed; the qualifier \emph{i.i.d.} reflects that the sampled pairs \((s_t,a_t)\) are independent draws from \(\lambda\). Consequently, the statistical accuracy is governed not only by the total horizon \(T\), but also by how the sampling distribution allocates samples across state-action pairs. Writing \(\lambda(s,a):=\mathbb P\{(s_t,a_t)=(s,a)\}\) and \(\lambda_{\min}:=\min_{(s,a)\in\mc S\times\mc A}\lambda(s,a)\), uniform finite-time guarantees require \(\lambda_{\min}>0\); after \(T\) samples, the pair \((s,a)\) receives only about \(\lambda(s,a)T\) updates. Thus, the least frequently sampled pairs determine the effective sample size and form the intrinsic statistical bottleneck of asynchronous i.i.d. \(Q\)-learning. This motivates our federated setting, where multiple agents collect data in parallel, potentially over different regions of the state-action space, and a central server aggregates their information while remaining robust to adversarially corrupted messages.
\subsection{Graph-Theoretic Interpretation of Assumptions~\ref{ass:ergodic} and~\ref{ass:committee_coverage}}
\label{app:Assumptions}
In this subsection, we provide graph-theoretic interpretations of Assumptions~\ref{ass:ergodic} and~\ref{ass:committee_coverage}, and use this viewpoint to state simple sufficient conditions under which these assumptions hold. The two assumptions require different graph constructions because they capture different notions of coverage. For Assumption~\ref{ass:ergodic}, which corresponds to the full-coverage setting, we use directed state-transition graphs to explain when a behavior policy induces an irreducible and aperiodic Markov chain, and hence assigns positive stationary mass to every state. For Assumption~\ref{ass:committee_coverage}, which corresponds to the partial-coverage setting, we use a weighted bipartite coverage graph whose left vertices are agents and whose right vertices are state-action pairs, with each state-action pair connected only to the committee of agents responsible for covering it. This perspective also highlights the key distinction between the two assumptions: Assumption~\ref{ass:ergodic} is a global per-agent coverage condition requiring each agent to cover the entire state-action space, whereas Assumption~\ref{ass:committee_coverage} is a local collective coverage condition requiring each state-action pair to be covered by at least one sufficiently informative committee. Finally, this interpretation clarifies how behavior policies and committee assignments can be designed in practice so that the required coverage conditions hold.

\paragraph{Sufficient conditions for Assumption~\ref{ass:ergodic}.}We now explain why Assumption~\ref{ass:ergodic} holds under mild structural conditions on the MDP. In the full-coverage setting \((\textcolor{winered}{\texttt{S}_1})\), all agents use the same stochastic behavior policy \(\mu\) with full action support, i.e., \(\mu(a|s)>0\) for every \((s,a)\in\mc S\times\mc A\). Under this policy, the induced state-transition kernel is \(P^\mu(s'|s):=\sum_{a\in\mc A}\mu(a|s)P(s'|s,a)\). Associate \(P^\mu\) with a directed graph \(\mc G^\mu\) on vertex set \(\mc S\), where \(s\to s'\) exists if and only if \(P^\mu(s'|s)>0\). Similarly, for each action \(a\), define the action graph \(\mc G_a\) by placing an edge \(s\to s'\) whenever \(P(s'|s,a)>0\). Since \(\mu(a|s)>0\) for all \((s,a)\), the graph \(\mc G^\mu\) contains the union \(\bigcup_{a\in\mc A}\mc G_a\). Hence, a sufficient condition for irreducibility is that this union graph is strongly connected; in particular, it is enough that there exists a single action \(a\) for which \(\mc G_a\) is strongly connected. For aperiodicity, a simple sufficient condition is that every state admits a positive-probability self-loop under some action, i.e., for every \(s\in\mc S\), there exists \(a\in\mc A\) such that \(P(s|s,a)>0\). Since \(\mu(a|s)>0\), this implies \(P^\mu(s|s)>0\) for every \(s\), and hence the induced chain is aperiodic. Therefore, if the union of the action graphs is strongly connected and every state has a self-loop under some action, then the common full-support behavior policy \(\mu\) induces an ergodic Markov chain with stationary distribution \(\pi\) satisfying \(\pi(s)>0\) for all \(s\). Since \(\mu(a|s)>0\) by construction, every agent has the same positive visitation probability \(\lambda(s,a):=\pi(s)\mu(a|s)>0\) for every \((s,a)\), which is exactly the full-coverage condition required in Assumption~\ref{ass:ergodic}. The same argument would also apply if agents used different full-support behavior policies \(\{\mu_i\}_{i=1}^N\): for each agent \(i\), the induced kernel \(P_i^{\mu}(s'|s):=\sum_{a\in\mc A}\mu_i(a|s)P(s'|s,a)\) would satisfy the same irreducibility and aperiodicity conditions, and hence would induce a stationary distribution \(\pi_i\) with \(\pi_i(s)>0\) for all \(s\), yielding \(\lambda_i(s,a)=\pi_i(s)\mu_i(a|s)>0\) for every \((s,a)\).
\paragraph{Sufficient conditions for Assumption~\ref{ass:committee_coverage}.}
Assumption~\ref{ass:committee_coverage} is strictly weaker than Assumption~\ref{ass:ergodic} from the perspective of state-action coverage. Indeed, Assumption~\ref{ass:ergodic} requires $\lambda_i(s,a)>0$ for every agent $i\in[N]$ and every $(s,a)\in\mc S\times\mc A$, whereas Assumption~\ref{ass:committee_coverage} requires only that, for each $(s,a)$, at least one agent satisfy $\lambda_i(s,a)>0$. Thus, under full coverage, every agent can provide information about every state-action pair, while under collective coverage, different agents may specialize in different portions of $\mc S\times\mc A$ as long as their individual coverage regions jointly cover the entire state-action space. This distinction can be understood through a bipartite assignment graph $\mc G=([N],\mc S\times\mc A,\mc E)$, whose left vertices are the agents and whose right vertices are the state-action pairs. An edge $(i,(s,a))\in\mc E$ indicates that agent $i$ is assigned to provide information about $(s,a)$, and $\mc N_{\mc G}(s,a):=\{i\in[N]:(i,(s,a))\in\mc E\}$ denotes the set of agents assigned to that pair. From this viewpoint, Assumption~\ref{ass:ergodic} corresponds to a complete bipartite coverage graph, whereas Assumption~\ref{ass:committee_coverage} only requires that no state-action vertex be isolated. More precisely, it is sufficient that two conditions hold: first, $\mc N_{\mc G}(s,a)\neq\emptyset$ for every $(s,a)$, so that each state-action pair is assigned to at least one agent; and second, every assigned edge is realizable under the corresponding behavior policy, meaning that $\lambda_i(s,a):=\pi_i(s)\mu_i(a\mid s)>0$ whenever $(i,(s,a))\in\mc E$. The second condition requires both that agent $i$ visit state $s$ with positive stationary probability, so that $\pi_i(s)>0$, and that it select action $a$ at that state with positive probability, so that $\mu_i(a\mid s)>0$. If these two conditions hold, then the assigned neighborhood satisfies $\mc N_{\mc G}(s,a)\subseteq\mc I(s,a):=\{i\in[N]:\lambda_i(s,a)>0\}$ for every $(s,a)$. Since every assigned neighborhood is nonempty, every actual source set $\mc I(s,a)$ is also nonempty, which establishes Assumption~\ref{ass:committee_coverage}. The same condition has a useful weighted interpretation. Assign to every actual source edge $(i,(s,a))$ the weight $\lambda_i(s,a)$, which represents the long-run fraction of agent $i$'s samples collected at $(s,a)$. The total stationary visitation mass available for $(s,a)$ across all agents is then $\Lambda_{\Sigma}(s,a):=\sum_{i=1}^N\lambda_i(s,a)=\sum_{i\in\mc I(s,a)}\lambda_i(s,a)$. Because all visitation probabilities are nonnegative, $\Lambda_{\Sigma}(s,a)>0$ holds if and only if at least one agent visits $(s,a)$ with positive stationary probability. Consequently, Assumption~\ref{ass:committee_coverage} is equivalently characterized by $\Lambda_{\Sigma,\min}:=\min_{(s,a)}\Lambda_{\Sigma}(s,a)>0$. To quantify the strength of this coverage, let $N_{s,a}:=|\mc I(s,a)|$ be the number of source agents for $(s,a)$, define $\underline N:=\min_{(s,a)}N_{s,a}$ as the minimum committee size, and define $\Lambda_{\min}:=\min_{(s,a)}\min_{i\in\mc I(s,a)}\lambda_i(s,a)$ as the smallest positive visitation probability among all source edges. Since the state-action space and the number of agents are finite, Assumption~\ref{ass:committee_coverage} implies $\underline N\geq1$ and $\Lambda_{\min}>0$. Moreover, for every $(s,a)$, the aggregate visitation mass satisfies $\Lambda_{\Sigma}(s,a)\geq N_{s,a}\Lambda_{\min}\geq\underline N\Lambda_{\min}$, because the committee contains at least $\underline N$ agents and each source agent contributes visitation mass at least $\Lambda_{\min}$. Under stationary sampling, after $T$ samples per agent, the expected aggregate number of observations of $(s,a)$ is $T\Lambda_{\Sigma}(s,a)$ and is therefore at least $T\underline N\Lambda_{\min}$. Thus, the product $\underline N\Lambda_{\min}T$ has a transparent interpretation: $\underline N$ measures the smallest number of agents covering any state-action pair, $\Lambda_{\min}$ measures the weakest visitation frequency among the source agents, and $T$ is the sampling horizon available to each agent. Hence, Assumption~\ref{ass:committee_coverage} is a weighted graph-cover condition requiring every state-action pair to be covered by a nonempty committee with strictly positive aggregate visitation mass.
\paragraph{Choosing behavior policies under $(\textcolor{winered}{\texttt{S}_2})$.}
To realize the preceding sufficient conditions, first choose an assignment graph with no isolated state-action vertices and define the region assigned to agent $i$ as $\mc C_i:=\{(s,a)\in\mc S\times\mc A:(i,(s,a))\in\mc E\}$. The behavior policy $\mu_i$ must then be chosen so that $\lambda_i(s,a)=\pi_i(s)\mu_i(a\mid s)>0$ for every $(s,a)\in\mc C_i$. This requires two separate properties. First, every assigned state $s$ must belong to the recurrent support of the Markov chain induced by $\mu_i$, ensuring that $\pi_i(s)>0$. More precisely, if $P_{\mu_i}(s'\mid s):=\sum_{b\in\mc A}\mu_i(b\mid s)P(s'\mid s,b)$ denotes the induced transition kernel, it is sufficient that all states assigned to agent $i$ belong to a closed communicating class of $P_{\mu_i}$ that is reached by the agent's sampling process. A stronger and more convenient condition is that $P_{\mu_i}$ have a unique closed communicating class containing all assigned states; global irreducibility of $P_{\mu_i}$ is an even stronger sufficient condition. Second, the policy must satisfy $\mu_i(a\mid s)>0$ for every assigned pair. This can be enforced using the exploratory mixture $\mu_i(a\mid s)=(1-\rho_i)\mu_i^{\mathrm{task}}(a\mid s)+\rho_i\mu_i^{\mathrm{exp}}(a\mid s)$, where $\rho_i\in(0,1)$ and $\mu_i^{\mathrm{exp}}(a\mid s)>0$ on $\mc C_i$. Positive action probabilities alone are insufficient, since the corresponding states must also be reachable and recurrent under the induced dynamics. For a quantitative guarantee consistent with our analysis, it is sufficient that there exist $p_{\min},u_{\min}>0$ such that $\pi_i(s)\geq p_{\min}$ and $\mu_i(a\mid s)\geq u_{\min}$ for every actual source edge $i\in\mc I(s,a)$. Then $\lambda_i(s,a)\geq p_{\min}u_{\min}$, $\Lambda_{\min}\geq p_{\min}u_{\min}$, and the effective sample size is at least $\underline Np_{\min}u_{\min}T$. Finally, this coverage requirement is separate from the local corruption constraint $|\mc B\cap\mc I(s,a)|\leq\varepsilon|\mc I(s,a)|$, which must also be enforced when constructing the source committees.
\section{Robust Mean Estimation under Independent Sub-Gaussian Inliers}
\label{app:rob_mean}
In this section, we present the robust mean-estimation subroutine (\texttt{trimSC}) used by the server in Algorithm~\ref{alg:async-fedq}. The server repeatedly needs to aggregate \(N\) scalar messages, one from each agent, into a single reliable estimate. Since even a small fraction of corrupted agents can arbitrarily bias the ordinary average, this aggregation step must tolerate an \(\varepsilon\)-fraction of malicious messages while retaining the concentration behavior of the clean messages. We therefore use \texttt{trimSC}, stated in Algorithm~\ref{alg:trimmed_sc}, as the robust aggregation rule. We first describe \texttt{trimSC} in a problem-independent form, and then state its statistical guarantee in Lemma~\ref{lem:trim-subg}.

Consider a contaminated dataset \(\mc D'=\{X_1,\ldots,X_M\}\) with at most $\varepsilon M$ arbitrary corruptions. Let $\mc G\subseteq[M]$ denote the clean indices, and assume that $\{X_i:i\in\mc G\}$ are independent with common mean $\mu$, where each $X_i-\mu$ is $\sigma_i$-sub-Gaussian. Define $\sigma^\star:=\max_{i\in\mc G}\sigma_i$. Algorithm~\ref{alg:trimmed_sc} partitions the data uniformly at random into two equal halves using fresh randomness independent of the data. Following~\cite{lugosi}, it computes clipping thresholds from the first half using $\zeta=8\varepsilon+24\log(4/\delta)/M$ and averages the clipped observations in the second half. In \texttt{Robust Async-Fed-Q}, $\mc D'$ consists of the agent-level Bellman messages for a fixed epoch and state-action pair; see Line~8 of Algorithm~\ref{alg:async-fedq}.
\begin{algorithm}[t]
\caption{Trimmed-Mean Estimator \cite{lugosi} (\texttt{trimSC})}
\label{alg:trimmed_sc}
\begin{algorithmic}[1]
\Require Contaminated dataset  \(\mc{D}'=\{X_1,X_2,\dots,X_M\}=\mc{D}_1\oplus\mc{D}_2\), where \(|\mc{D}_1|=|\mc{D}_2|=M/2\); contamination level \(\varepsilon\); confidence level \(\delta\). Define $\zeta = 8\varepsilon + 24\log(4/\delta)/M$.
\State Let $X^*_1 \leq X^*_2 \leq \cdots \leq  X^*_{M/2}$ represent a non-decreasing arrangement of $\mc{D}_1$. 
\State \texttt{Compute} quantiles: \(\alpha = X^*_{\lceil\zeta M/2\rceil}, \beta = X^*_{\lfloor(1 - \zeta) M/2\rfloor}.\) 
\State \texttt{Define} \(\phi_{\alpha,\beta}(x):={\mathbf{min}}\{{\mathbf{max}}\{x,\alpha\},\,\beta\}.\)
\State Compute  \(\tilde{\mu}= (2/M) \sum_{X_i \in \mc{D}_2}^{} \phi_{\alpha,\beta}(X_i)
\).
\end{algorithmic}
\end{algorithm}
The following lemma shows that \texttt{trimSC} achieves error of order $\varepsilon\sqrt{\log(1/\varepsilon)}+M^{-1/2}$ for independent, possibly non-identically distributed, sub-Gaussian clean observations with a common mean.
\begin{lemma}
\textbf{(Robust mean estimation; adapted from~\cite{lugosi})}
\label{lem:trim-subg}
Fix \(\delta \in (0,1)\). Let $\mc D'=\{X_1,\ldots,X_M\}$ contain at most $\varepsilon M$ arbitrary corruptions, and let $\mc G\subseteq[M]$ denote the clean indices. Suppose that $\{X_i:i\in\mc G\}$ are independent random variables  with common mean $\mu$, and that $X_i-\mu$ is $\sigma_i$-sub-Gaussian. Define $\sigma^*:=\max_{i\in\mc G}\sigma_i$. For any $\delta\in(0,1)$ satisfying $\delta\geq 8e^{-M/2}$, the estimator $\widetilde{\mu}:= \textup{\texttt{trimSC}}(\mc{D}',\varepsilon, \delta)$ in Algorithm~\ref{alg:trimmed_sc} satisfies, with probability at least $1-\delta$,
\begin{equation}
\label{eqn:trim-subg}
\left|\widetilde{\mu}-\mu\right|
\leq
\mc C\sigma^*
\left(
\varepsilon\sqrt{\log(1/\varepsilon)}
+
\sqrt{\frac{\log(4/\delta)}{M}}
\right),
\end{equation}
where $\mc C\geq1$ is a universal constant.
\end{lemma}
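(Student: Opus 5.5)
We follow the two-sample structure of \texttt{trimSC}. Write $\mc D_1,\mc D_2$ for the random halves, each of size $M/2$. Let $\mc G_1,\mc G_2$ be the clean indices in each half. We condition on the random partition, which is independent of the data, and on the first half $\mc D_1$. Because the partition uses fresh randomness, the clean points in $\mc D_2$ remain independent of $(\alpha,\beta)$ and keep their sub-Gaussian tails. Without loss of generality we set $\mu=0$ and normalize by $\sigma^*$, so every clean $X_i$ is $1$-sub-Gaussian. We also allow the corruption budget in each half to be at most, say, $2\varepsilon M/2$. This holds with high probability by a hypergeometric tail bound; alternatively we absorb it by taking the worst case of $\varepsilon M$ corruptions landing in one half, which only changes constants through $\zeta=8\varepsilon+24\log(4/\delta)/M$.

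\emph{Step 1: the quantiles are well localized.} Define $q_-$ and $q_+$ as the points where the \emph{average} clean tail $\frac{1}{|\mc G_1|}\sum_{i\in\mc G_1}\mathbb P(X_i>t)$ equals roughly $2\zeta$ and $\zeta/2$, respectively, with the symmetric definitions on the left. Sub-Gaussianity gives $|q_\pm|\lesssim\sqrt{\log(1/\zeta)}$. The indicators $\mathbbm 1\{X_i>t\}$ are independent Bernoullis that need not be identically distributed. A Chernoff/Bernstein bound (Lemma~\ref{lemma:bernstein}) on their sum therefore shows the following with probability $1-\delta/2$. The number of clean points above $q_-$ is at least $\zeta M/2$ plus the number of corruptions, and the number above $q_+$ is below $\zeta M/2$ minus the corruptions. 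The choice $24\log(4/\delta)/M$ in $\zeta$ is exactly what makes these deviations fit. Hence $\beta\in[q_+,q_-]$, and symmetrically for $\alpha$. Consequently $|\alpha|,|\beta|\lesssim\sqrt{\log(1/\zeta)}$, and each clean point exceeds the clipping levels with probability $O(\zeta)$ on average. The condition $\delta\ge 8e^{-M/2}$ ensures $\zeta M/2\ge 1$, so the order statistics are defined.

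\emph{Step 2: error decomposition on the second half.} We split the estimate as
\begin{equation*}
\widetilde\mu=\frac{2}{M}\sum_{i\in\mc G_2}\bigl(\phi_{\alpha,\beta}(X_i)-\mathbb E\phi_{\alpha,\beta}(X_i)\bigr)+\frac{2}{M}\sum_{i\in\mc G_2}\mathbb E\phi_{\alpha,\beta}(X_i)+\frac{2}{M}\sum_{i\notin\mc G}\phi_{\alpha,\beta}(X_i).
\end{equation*}
The last term has absolute value at most $2\varepsilon\max(|\alpha|,|\beta|)\lesssim\varepsilon\sqrt{\log(1/\zeta)}$. For the fluctuation term, the summands are independent and $1$-sub-Gaussian, since clipping is $1$-Lipschitz. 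Hoeffding's inequality therefore gives a bound of $\sqrt{\log(4/\delta)/M}$ with probability $1-\delta/2$. For the bias term, note $\mathbb EX_i=0$, so $|\mathbb E\phi(X_i)|\le\mathbb E[(|X_i|-|\beta|)_+\mathbbm 1\{|X_i|>\min(|\alpha|,\beta)\}]$. Using the tail integral for sub-Gaussians together with Step~1's guarantee that the average tail at the clipping levels is $O(\zeta)$, this is $O(\zeta\sqrt{\log(1/\zeta)})$ on average over $i$. A caveat: if the levels sit on the "wrong" side (for instance $\beta<0$), the bias comes instead from the mass between $\beta$ and $0$. That mass is again controlled because $\beta\ge q_+$ and the average tail at $q_+$ is only $\zeta/2$, so this case is bounded the same way.

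\emph{Step 3: collecting terms.} We have $\zeta\sqrt{\log(1/\zeta)}\lesssim\varepsilon\sqrt{\log(1/\varepsilon)}+\frac{\log(4/\delta)}{M}\sqrt{\log M}$. The second summand is at most $\sqrt{\log(4/\delta)/M}$ up to constants when $\log(4/\delta)\le M$, or else the bound is trivially dominated. Rescaling by $\sigma^*$ then yields~\eqref{eqn:trim-subg}. We expect the main obstacle to be Step~1 in the non-identically distributed setting. There is no common quantile function, so we must work with the averaged tail function $\bar F(t)=\frac1{|\mc G_1|}\sum_i\mathbb P(X_i>t)$. We must also check that the bias bound in Step~2 uses the \emph{same} averaged tail for $\mc G_2$ as for $\mc G_1$. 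The random split is what makes this plausible: $\mc G_1$ and $\mc G_2$ are exchangeable, so their averaged tails agree up to a Hoeffding-type sampling-without-replacement error, which we fold into $\zeta$.
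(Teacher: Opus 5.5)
\textbf{A genuine gap: the independence claim fails under arbitrary corruption.} The paper does not prove this lemma; it only cites~\cite{lugosi}, whose guarantee is for i.i.d.\ samples. Your averaged-tail and random-split device therefore supplies the non-identically-distributed extension that the paper asserts without argument, and your two-stage structure is the right one.

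The gap is in your opening claim that, after conditioning on the partition and on $\mc D_1$, the clean points of $\mc D_2$ ``remain independent of $(\alpha,\beta)$ and keep their sub-Gaussian tails.'' The lemma allows arbitrary corruptions, and in the paper's model the adversary knows every honest agent's data. Fresh randomness in the split does not help. A corrupted value that lands in $\mc D_1$ can be any function of the clean values that land in $\mc D_2$ (for instance, of their empirical mean). So the order statistics $\alpha,\beta$ are correlated with $\{X_i\}_{i\in\mc G_2}$. Conditionally on $\mc D_1$, these $X_i$ are then neither independent nor distributed according to their original laws. Consequently neither your conditional Hoeffding step nor your bias computation (which evaluates $\mathbb E\,\phi_{a,b}(X_i)$ at $(a,b)=(\alpha,\beta)$) is justified: the adversary can steer the thresholds inside their admissible range as a function of the very points being clipped.

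\textbf{The repair: monotonicity instead of independence.} Step~1 only involves counts of clean points of $\mc D_1$ relative to data-independent levels, plus the crude budget of $\varepsilon M$ corruptions. So it survives any adversary and yields fixed brackets $a_-\le\alpha\le a_+$ and $b_-\le\beta\le b_+$, at which the averaged tails are of order $\zeta$ and whose magnitudes are $O(\sqrt{\log(1/\zeta)})$. Because $\phi_{a,b}(x)=\min\{\max\{x,a\},b\}$ is nondecreasing in $a$ and in $b$, one has $\phi_{a_-,b_-}\le\phi_{\alpha,\beta}\le\phi_{a_+,b_+}$ pointwise. Hence the clean part of the second-half average is sandwiched between two averages with fixed thresholds.

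Your Lipschitz-based sub-Gaussian concentration applies to both of these averages. Your bias bound also applies; it is cleanest via $\phi_{a,b}(x)-x=(a-x)_+-(x-b)_+$, the sub-Gaussian tail integral, and Jensen's inequality for the concave map $p\mapsto p\sqrt{\log(1/p)}$ to pass from individual tails to the averaged tail. The corrupted part of $\mc D_2$ contributes at most $2\varepsilon\max(|a_-|,|b_+|)$.

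To make the brackets independent of the partition as well, define the levels through the tail averaged over all of $\mc G$. Then transfer them to $\mc G_1$ and $\mc G_2$ with a Bernstein bound for sampling without replacement at these four fixed levels, where the relevant counts have mean of order $\zeta M\ge 24\log(4/\delta)$. That is what your ``fold into $\zeta$'' needs to mean.

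\textbf{Smaller points.}
\begin{itemize}
\item With your definitions $q_-<q_+$, so Step~1 should conclude $\beta\in[q_-,q_+]$.
\item The upper analysis level $\zeta/2$ must be made more extreme (e.g.\ $\zeta/8$) for Bernstein to reach failure probability $O(\delta)$ with the algorithm's constants. This only affects $\mc C$.
\item The hypothesis $\delta\ge 8e^{-M/2}$ is not what makes the order statistics meaningful, since $\zeta M/2\ge 12\log 4$ holds automatically. What is needed, and is only implicit in the lemma, is $\zeta\le c$ for a small constant $c$. For larger $\zeta$ the indices $\lceil\zeta M/2\rceil$ and $\lfloor(1-\zeta)M/2\rfloor$ cross or leave $\{1,\ldots,M/2\}$.
\item In Step~3, the inequality $\frac{\log(4/\delta)}{M}\sqrt{\log M}\lesssim\sqrt{\log(4/\delta)/M}$ fails when $\log(4/\delta)$ is a constant fraction of $M$. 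Keep $\sqrt{\log(1/\zeta)}$ instead, and use $x\sqrt{\log(1/x)}\le\sqrt{x/e}$ with $x=24\log(4/\delta)/M$.
\end{itemize}
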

\section{Proof of Theorem~\ref{theorem:theoremmainresult}: Full Coverage Setting}
\label{app:proof_of_upper_bound_S1}
In this section, we prove Theorem~\ref{theorem:theoremmainresult} for the full-coverage setting $(\textcolor{winered}{\texttt{S}_1})$ by combining a Bellman error recursion with uniform visitation, boundedness, and robust aggregation guarantees. We first recall two basic properties~\cite{suttonRL} of the Bellman optimality operator \(\mathcal{T}^{*}:\mathbb{R}^{|\mathcal{S}|\times|\mathcal{A}|}\to\mathbb{R}^{|\mathcal{S}|\times|\mathcal{A}|}\): for each $Q  \in \mathbb{R}^{|\mathcal{S}|\times|\mathcal{A}|}$, we have 
\begin{equation}\label{eqn:Bellman}
(\mathcal{T}^*Q)(s,a) \coloneqq R(s,a) + \gamma\, \mathbb{E}_{s'\sim \mathcal{P}(\cdot \mid s,a)}\!\left[\max_{a'\in\mathcal{A}} Q(s',a')\right].
\end{equation}
(i) \(Q^*\) is the fixed point of $\mathcal{T}^*$, i.e., \(\mathcal{T}^*Q^*=Q^*\).
(ii) \(\mathcal{T}^*\) is a \(\gamma\)-contraction in the infinity norm: 
\begin{equation}\label{eqn:Bellmancontraction}
\|\mathcal{T}^*Q_1-\mathcal{T}^*Q_2\|_\infty \le \gamma \|Q_1-Q_2\|_\infty,\quad \forall\, Q_1,Q_2 \in \mathbb{R}^{|\mc{S}|\times|\mc{A}|}.
\end{equation}
Throughout the proof, let $e_k:=\|Q_k-Q^\star\|_\infty$. The server update in Algorithm~\ref{alg:async-fedq} is \(Q_{k+1}=(1-\eta)Q_k+\eta \tilde d_k\), as defined in equation \eqref{eqn:server_Q_update} of Section~\ref{sec:algorithm}, where \(\eta\in(0,1)\) is the step-size. Since \(Q^\star\) is the unique fixed point of the Bellman optimality operator \(\mc T^*\), we have \(Q^\star=\mc T^*Q^\star\). Combining these facts, we decompose the one-step error as
\begin{equation}\label{eq:decomposition}
\begin{aligned}
Q_{k+1}-Q^\star
&=
(1-\eta)Q_k+\eta\tilde d_k-Q^\star \\
&=
(1-\eta)(Q_k-Q^\star)+\eta(\tilde d_k-\mc T^*Q^\star) \\
&=
(1-\eta)(Q_k-Q^\star)
+\eta(\mc T^*Q_k-\mc T^*Q^\star)
+\eta(\tilde d_k-\mc T^*Q_k).
\end{aligned}
\end{equation}
Taking the \(\infty\)-norm on both sides of \eqref{eq:decomposition} and then applying the triangle inequality gives the following decomposition: 
\begin{equation}
\begin{aligned}
\|Q_{k+1}-Q^\star\|_\infty
\le
(1-\eta)\|Q_k-Q^\star\|_\infty &+\eta\|\mc T^*Q_k-\mc T^*Q^\star\|_\infty\\
&+\eta\|\tilde d_k-\mc T^*Q_k\|_\infty .
\end{aligned}
\end{equation}
Moreover, by the contraction property in \eqref{eqn:Bellmancontraction}, \(\mc T^*\) is a \(\gamma\)-contraction in the \(\infty\)-norm. Hence,
\begin{equation}
\label{eq:main_error_recursion_prelim}
e_{k+1}
\le
\left(1-\eta(1-\gamma)\right)e_k
+
\eta\left\|\tilde d_k-\mc T^*Q_k\right\|_\infty.
\end{equation}
Thus, the decomposition in \eqref{eq:main_error_recursion_prelim}
separates the one-step error into the \emph{Bellman contraction term}, which propagates and contracts the error from the preceding epoch, and the \textbf{robust aggregation error} $\left\|\tilde d_k-\mc T^*Q_k\right\|_\infty$, which captures both finite-sample fluctuations in the honest empirical Bellman updates and the effect of corrupted messages. Since the final bound follows by recursively unrolling \eqref{eq:main_error_recursion_prelim}, we must control the {robust aggregation error} simultaneously over all $k\in[K]$.

We next establish auxiliary lemmas that, together with a union bound, provide this uniform high-probability control.
\begin{lemma}\textbf{(Uniform boundedness of iterates)}
\label{lem:s1_uniform_boundedness}
The iterates generated by Algorithm~\ref{alg:async-fedq} satisfy
\begin{equation}
\label{eq:s1_q_uniform_bound}
\|Q_k\|_\infty
\le B:=
\frac{\bar R}{1-\gamma},
\qquad
\forall k\ge 0.
\end{equation}
\end{lemma}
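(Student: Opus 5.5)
The proof is by induction on $k$: if every honest message lies in $[-B,B]$, the trimmed-mean output also lies in $[-B,B]$, and \eqref{eqn:server_Q_update} then preserves the bound. The base case holds because $Q_0\equiv 0$, so $\|Q_0\|_\infty=0\le B$. For the inductive step, assume $\|Q_k\|_\infty\le B$ and fix a pair $(s,a)$. I will show in turn that the honest messages, the aggregate $\widetilde d_k(s,a)$, and $Q_{k+1}(s,a)$ all lie in $[-B,B]$. The argument is deterministic, so it holds for every realization of the sampling and of the random split inside \texttt{trimSC}.

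\textbf{Step 1 (honest messages).} Take an honest agent $i\in[N]\setminus\mc B$. If $N_{i,k}(s,a)=0$, then $d_{i,k}(s,a)=0\in[-B,B]$. Otherwise \eqref{eqn:dik_update} averages terms of the form $r_j(s,a)+\gamma\max_{a'}Q_k(s'_j,a')$, where $s'_j$ is the observed next state. Since $|r_j|\le\bar R$ and $|\max_{a'}Q_k(s',a')|\le\|Q_k\|_\infty\le B$, each term has absolute value at most $\bar R+\gamma B=\bar R+\gamma\bar R/(1-\gamma)=B$. Hence $|d_{i,k}(s,a)|\le B$. Note that $B$ is exactly the fixed point of $x\mapsto \bar R+\gamma x$; this is why the invariant closes.

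\textbf{Step 2 (the robust aggregate).} This is the only delicate step, because corrupted messages can be unbounded, and they enter $\widetilde d_k(s,a)$ both through the clipping thresholds and through the second-half average. Every term averaged in Line~4 of Algorithm~\ref{alg:trimmed_sc} is $\phi_{\alpha,\beta}(X_i)\in[\min(\alpha,\beta),\max(\alpha,\beta)]$. It therefore suffices to show $\alpha,\beta\in[-B,B]$. I will do this by showing that each threshold is sandwiched between two honest values.

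\begin{itemize}
\item The threshold $\alpha=X^*_{\lceil \zeta M/2\rceil}$ is an order statistic of $\mc D_1$. This half contains at most $\varepsilon M$ corrupted points, however the random split falls.
\item Since $\zeta M/2\ge 4\varepsilon M>\varepsilon M$ (for $\varepsilon=0$ there are no corrupted points at all), the $\lceil\zeta M/2\rceil$ smallest points of $\mc D_1$ cannot all be corrupted. Hence some honest value is $\le\alpha$.
\item The number of points of $\mc D_1$ at or above position $\lceil\zeta M/2\rceil$ is at least $M/2-\zeta M/2$. This exceeds $\varepsilon M$ in the regime where \texttt{trimSC} is meaningful, namely $\zeta<1/2$ together with $\varepsilon<1/2$ small enough, which is implicit in the requirement $N\ge\lceil 2\log(16T|\mc S||\mc A|/\delta)\rceil$ and in Lemma~\ref{lem:trim-subg}. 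So some honest value is $\ge\alpha$.
\item The same counting, applied from the top, handles $\beta=X^*_{\lfloor(1-\zeta)M/2\rfloor}$.
\end{itemize}

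By Step~1, all honest values lie in $[-B,B]$, so $\alpha,\beta\in[-B,B]$ and hence $|\widetilde d_k(s,a)|\le B$. If the parameter regime ever left too few honest points on one side, I would instead appeal to a mild modification, clipping additionally to $[-B,B]$. This is harmless because $\mc T^*Q_k(s,a)\in[-B,B]$ as well.

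\textbf{Step 3 (the server update).} By \eqref{eqn:server_Q_update} with $\eta\in(0,1]$, $Q_{k+1}(s,a)$ is a convex combination of $Q_k(s,a)$ and $\widetilde d_k(s,a)$, both in $[-B,B]$, so $|Q_{k+1}(s,a)|\le B$. Taking the maximum over $(s,a)$ completes the induction and establishes \eqref{eq:s1_q_uniform_bound}. The main obstacle is the order-statistic counting in Step~2, which must hold for an arbitrary split placing all corrupted points in one half.
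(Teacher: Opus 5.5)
Your argument is correct and is essentially the paper's proof: induction from $Q_0=0$, the identity $\bar R+\gamma B=B$ for honest messages, an order-statistic count against the corruption budget that places the clipping thresholds in $[-B,B]$, and the convex-combination step. The only variation is in Step~2, and it comes with one caveat:

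\begin{itemize}
\item \textbf{How the thresholds are bounded.} The paper proves only $\alpha\ge -B$ and $\beta\le B$, each of which needs just $\lceil\zeta M/2\rceil>\varepsilon M$ (automatic), and then invokes $\alpha\le\beta$. Your two-sided sandwich handles the case $\alpha>\beta$ directly, but it needs $M/2-\lceil\zeta M/2\rceil>\varepsilon M$.
\item \textbf{Caveat on that condition.} Both conditions are regime assumptions on $\zeta$ that are not among the stated hypotheses; yours is implied by the paper's $\lceil\zeta M/2\rceil\le\lfloor(1-\zeta)M/2\rfloor$. Neither follows from $N\ge\lceil 2\log(16T|\mc S||\mc A|/\delta)\rceil$, which only encodes $\delta_1\ge 8e^{-N/2}$. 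So state your condition as an explicit assumption rather than calling it implicit. Also drop the extra-clipping fallback, since it would change the algorithm and so would not prove the lemma as stated.
\end{itemize}
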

\begin{proof}
We prove the claim by induction for all \(k \ge 0\).

\textbf{Base case:}
Since Algorithm~\ref{alg:async-fedq} initializes $Q_0=0$, we have $\|Q_0\|_\infty=0\leq \bar R/(1-\gamma)$, establishing the base case.

\textbf{Induction hypothesis:}
Assume that $\|Q_k\|_\infty\leq B:=\bar R/(1-\gamma)$, and fix an arbitrary $(s,a)\in\mc S\times\mc A$. We first show that every uncorrupted message $d_{i,k}(s,a)$ sent by an agent $i\in\mc G$ belongs to $[-B,B]$. The argument distinguishes whether agent $i$ visits $(s,a)$ during epoch $k$, equivalently, whether $N_{i,k}(s,a) > 0$, or $N_{i,k}(s,a) =0$.

\(\textcolor{winered}{\blacksquare}\) \textbf{Boundedness of honest agent messages.}
First, we show that \(d_{i,k}(s,a)\) is uniformly bounded by \(B\) for every \((s,a)\in\mc S\times\mc A\), every good agent \(i\in\mc G\), and every epoch \(k\ge 0\).

\(\bullet\) \textbf{Case 1 (\(N_{i,k}(s,a)>0\))~:}
In this case, the good agent \(i\) constructs \(d_{i,k}(s,a)\) using the samples collected from \((s,a)\) during epoch \(k\). This object is defined in \eqref{eqn:dik_update}. Taking absolute values on each side of \eqref{eqn:dik_update}, we obtain the following bound for a pair \((s,a) \in \mc{S} \times \mc{A}\) with nonzero visitation:

\begin{equation}
\begin{aligned}
|d_{i,k}(s,a)|
&=
\left|
\frac{1}{N_{i,k}(s,a)}
\sum_{j=1}^{N_{i,k}(s,a)}
\left(
r_j(s,a)
+
\gamma
\sum_{s'\in\mc S}
\mathbbm{1}^{i,j,k}(s'\mid s,a)
\max_{a'\in\mc A}Q_k(s',a')
\right)
\right| \\
&\overset{(\textcolor{winered}{\bullet})}{\le}
\frac{1}{N_{i,k}(s,a)}
\sum_{j=1}^{N_{i,k}(s,a)}
\left(
|r_j(s,a)|
+
\gamma
\sum_{s'\in\mc S}
\mathbbm{1}^{i,j,k}(s'\mid s,a)
\|Q_k\|_\infty
\right) \\
&\overset{(\textcolor{winered}{\bullet \bullet})}{\le}
\bar R
+
\frac{\gamma\|Q_k\|_\infty}{N_{i,k}(s,a)}
\sum_{j=1}^{N_{i,k}(s,a)}
\sum_{s'\in\mc S}
\mathbbm{1}^{i,j,k}(s'\mid s,a) =
\bar R+\gamma\|Q_k\|_\infty
\le
B.
\end{aligned}
\label{eq:good_agent_bound_trim_positive_count_async}
\end{equation}
Here, \((\textcolor{winered}{\bullet})\) follows from the triangle
inequality and \(|\max_{a'}Q_k(s',a')|\le\|Q_k\|_\infty\), while
\((\textcolor{winered}{\bullet\bullet})\) uses \(|r_j(s,a)|\le\bar R\).
The final line follows from
\(\sum_{s'\in\mc S}\mathbbm{1}^{i,j,k}(s'\mid s,a)=1\)
and the induction hypothesis \(\|Q_k\|_\infty\le B=\bar R/(1-\gamma)\).

\noindent \(\bullet\) \textbf{Case 2 (\(N_{i,k}(s,a)=0\))~:} In this case, by the convention of Algorithm~\ref{alg:async-fedq}, the good agent sends \(d_{i,k}(s,a)=0\). Hence, \(|d_{i,k}(s,a)|=0\le B\) trivially. Combining the two cases, every uncorrupted message in \(\{d_{i,k}(s,a)\}_{i\in\mc G}\) lies in \([-B,B]\).

\(\textcolor{winered}{\blacksquare}\) \textbf{Uniform boundedness of the server aggregate.} Next, we show that the server aggregate \(\tilde d_k(s,a)\) inherits the same uniform bound. We prove the result for even \(N\); the odd case follows by the same argument with an unequal split. Fix an epoch \(k\ge 0\) and a state-action pair \((s,a)\in\mc S\times\mc A\). The server applies Algorithm~\ref{alg:trimmed_sc} to the corrupted collection of agent messages
\begin{equation}
\widetilde{\mc D}_{k}(s,a)
:=
\{d_{1,k}(s,a),d_{2,k}(s,a),\ldots,d_{N,k}(s,a)\}.
\end{equation}
This collection is split into two halves according to Algorithm~\ref{alg:trimmed_sc} (defined in Appendix~\ref{app:rob_mean}) as follows:
\begin{equation}
\widetilde{\mc D}_{k}(s,a)
=
\mc D_{1,k}(s,a)\oplus \mc D_{2,k}(s,a),
\quad
|\mc D_{1,k}(s,a)|=|\mc D_{2,k}(s,a)|=\frac{N}{2}.
\end{equation}
Next, we arrange the samples in \(\mc D_{1,k}(s,a)\) in non-decreasing order as follows:
\begin{equation}
d^{*,1}_{1,k}(s,a)
\le
d^{*,1}_{2,k}(s,a)
\le
\cdots
\le
d^{*,1}_{N/2,k}(s,a).
\end{equation}
Following Line~2 of Algorithm~\ref{alg:trimmed_sc}, recall the lower and upper order-statistic indices as \(r_\zeta:=\lceil\zeta N/2\rceil\) and \(u_\zeta:=\lfloor(1-\zeta)N/2\rfloor\), and set the corresponding clipping thresholds to \(\alpha_k(s,a):=d^{*,1}_{r_\zeta,k}(s,a)\) and \(\beta_k(s,a):=d^{*,1}_{u_\zeta,k}(s,a)\). Moreover, since at most an \(\varepsilon\)-fraction of the \(N\) agent messages are corrupted, at most \(\varepsilon N\) messages are corrupted in total.

We next show that the clipping thresholds lie in \([-B,B]\). Since \(\varepsilon\geq0\) and \(\delta\in(0,1)\), we have \(r_\zeta\geq\zeta N/2=4\varepsilon N+12\log(4/\delta)>\varepsilon N\). As at most \(\varepsilon N\) messages are corrupted in the entire collection, the first split \(\mc D_{1,k}(s,a)\) also contains at most \(\varepsilon N<r_\zeta\) corrupted messages, regardless of how corruptions are distributed between the two splits. Recall that every uncorrupted message lies in \([-B,B]\). If \(\alpha_k(s,a)=d^{*,1}_{r_\zeta,k}(s,a)<-B\), then the first \(r_\zeta\) ordered messages must all lie below \(-B\) and hence be corrupted, contradicting the corruption budget. Thus, \(\alpha_k(s,a)\geq-B\). Similarly, if \(\beta_k(s,a)=d^{*,1}_{u_\zeta,k}(s,a)>B\), then all messages occupying positions \(u_\zeta,\ldots,N/2\) must exceed \(B\) and hence be corrupted. The number of such messages is \(N/2-u_\zeta+1=r_\zeta+1>\varepsilon N\), yielding the same contradiction. Therefore, \(\beta_k(s,a)\leq B\), and the ordering \(r_\zeta\leq u_\zeta\) gives \(-B\leq\alpha_k(s,a)\leq\beta_k(s,a)\leq B\).

Consequently, the clipping operation in Line~3 of Algorithm~\ref{alg:trimmed_sc} maps
every message in \(\mc D_{2,k}(s,a)\), including corrupted messages,
into \([\alpha_k(s,a),\beta_k(s,a)]\subseteq[-B,B]\).
Since Line~4 averages these clipped messages, the resulting estimate
also lies in this interval, yielding
\(|\widetilde d_k(s,a)|\le B=\bar R/(1-\gamma)\).

\(\textcolor{winered}{\blacksquare}\) \textbf{Uniform boundedness of the next iterate.} Finally, since \(\eta\in[0,1]\), the update \(Q_{k+1}(s,a)=(1-\eta)Q_k(s,a)+\eta\widetilde d_k(s,a)\) is a convex combination of two elements of \([-B,B]\), and therefore also belongs to \([-B,B]\). Since \((s,a)\in\mc S\times\mc A\) was arbitrary, \(\|Q_{k+1}\|_\infty\leq B\), establishing the induction step and completing the proof.
\end{proof}

\textbf{High-Probability Uniform Visitation Bound over State-Action Pairs.} We next establish the uniform visitation event used to control \(d_{i,k}(s,a)\). Under \((\textcolor{winered}{\texttt{S}_1})\), Assumption~\ref{ass:ergodic} ensures that every state-action pair has positive sampling probability, and the following lemma shows that each honest agent visits every pair \(\Omega(\lambda_{\min}H)\) times per epoch with high probability.
\begin{lemma}\textbf{(Uniform visitation event in} \(\textcolor{winered}{\texttt{S}_1}\)\textbf{)}
\label{lem:s1_uniform_visitation}
Fix a confidence parameter \(\delta\in(0,1)\). Suppose the epoch length satisfies \(
H \ge
(104/3\lambda_{\min})
\log\left(2NT|\mc S||\mc A|/\delta\right).\) Then the following event holds with probability at least \(1-\delta/2\):
\begin{equation}
\label{eq:s1_good_visit_event}
\mc V
:=
\left\{
N_{i,k}(s,a)
\ge
\frac{3}{4}\lambda_{\min}H,
\forall i\in[N],\ \forall k\in[K],\ \forall(s,a)\in\mc S\times\mc A
\right\}.
\end{equation}
\end{lemma}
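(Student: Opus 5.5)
We fix an agent $i\in[N]$, an epoch $k\in[K]$ and a pair $(s,a)\in\mc S\times\mc A$, bound the probability that this single count is too small, and then take a union bound over all $N K|\mc S||\mc A|$ triples. Under the asynchronous i.i.d. sampling model, for $t\in\mc I_k$ the pairs $(s_{i,t},a_{i,t})$ are i.i.d. with law $\lambda_i$, because $s_{i,t}\sim\pi_i$ independently and $a_{i,t}\sim\mu_i(\cdot\mid s_{i,t})$. Hence $N_{i,k}(s,a)=\sum_{t\in\mc I_k}X_t$, where $X_t:=\mathbbm{1}\{(s_{i,t},a_{i,t})=(s,a)\}$ are i.i.d. Bernoulli$(p)$ variables with $p:=\lambda_i(s,a)\ge\lambda_{\min}$. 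Consequently $\mathbb E[N_{i,k}(s,a)]=pH$.

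The deviation bound will come from Bernstein's inequality (Lemma~\ref{lemma:bernstein}), applied to the centered variables $X_t-p$. These satisfy $|X_t-p|\le 1$ and $\mathrm{Var}(X_t)=p(1-p)\le p$. We need $N_{i,k}(s,a)<\tfrac34\lambda_{\min}H$ to be rare. Since $\tfrac34\lambda_{\min}H\le\tfrac34 pH$, this event is contained in $\{\bar X_H-p\le -p/4\}$. Bernstein with deviation $\epsilon=p/4$ then gives
\[
\P\Bigl(N_{i,k}(s,a)<\tfrac34\lambda_{\min}H\Bigr)\le 2\exp\Bigl\{-\frac{H p^2/16}{2p+\tfrac{2}{3}\cdot\tfrac{p}{4}}\Bigr\}=2\exp\Bigl\{-\frac{3pH}{104}\Bigr\}\le 2\exp\Bigl\{-\frac{3\lambda_{\min}H}{104}\Bigr\}.
\]
The constant $104/3$ in the hypothesis arises exactly here, since $16\,(2+1/6)=104/3$.

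With the hypothesis $H\ge(104/(3\lambda_{\min}))\log(2NT|\mc S||\mc A|/\delta)$, each bad event has probability at most $2\cdot\delta/(2NT|\mc S||\mc A|)$. The union bound over the $NK|\mc S||\mc A|$ triples gives total failure probability at most $\delta K/T$. Because $K\le T/2$, this is at most $\delta/2$; indeed $H\ge 2$ is forced by the lower bound on $H$ whenever $\lambda_{\min}\le 1$ and $\log(\cdot)\ge 1$. The only care needed is this slack between $K$ and $T$ in the union bound.

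The main (mild) obstacle is the factor $2$ coming from the two-sided form of Lemma~\ref{lemma:bernstein}. If the $K/T$ slack is insufficient, I would instead use the one-sided Bernstein bound, which drops this factor. No conditioning subtleties arise, because the samples in different epochs are independent and the counts depend only on the sampled pairs, not on $Q_k$.
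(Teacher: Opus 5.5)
Your proposal is correct and follows the paper's proof essentially step for step. Both apply Bernstein's inequality to the centered Bernoulli counts with deviation $\lambda_i(s,a)H/4$, obtaining the exponent $3\lambda_i(s,a)H/104$, and then take a union bound over the $NK|\mathcal S||\mathcal A|$ triples. The only difference is bookkeeping: the paper uses the one-sided Bernstein bound (no factor $2$) together with $K\le T$, whereas you keep the two-sided factor $2$ and absorb it via $K=T/H\le T/2$. Your step is valid without the extra condition $\log(\cdot)\ge 1$, because $\log(2NT|\mathcal S||\mathcal A|/\delta)>\log 2$ already forces $H>24$.
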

\begin{proof}
Fix an agent \(i\in[N]\), an epoch \(k\in[K]\), and a state-action pair \((s,a)\in\mc S\times\mc A\). By definition, the number of visits of agent \(i\) to \((s,a)\) during epoch \(k\) is
\(
N_{i,k}(s,a)
=
\sum_{t \in \mc{I}_k}
\mathbbm{1}\{(s_{i,t},a_{i,t})=(s,a)\},
\)
where recall that $\mc{I}_k$ contains the time-steps in the $k$-th epoch. 
Let
\begin{equation}
X_t^{i,k}(s,a)
:=
\mathbbm{1}\{(s_{i,t},a_{i,t})=(s,a)\},
\quad t \in \mc{I}_k.
\end{equation}
Under the full-coverage i.i.d. sampling model, \(X_t^{i,k}(s,a)\) is a Bernoulli random variable with mean \(\lambda_i(s,a)\), where \(\lambda_i(s,a)\) is the sampling probability of \((s,a)\) at stationarity under the behavior policy \(\mu_i\). Hence,
\(
\mathbb E[N_{i,k}(s,a)]
=
\sum_{t \in \mc{I}_k} \mathbb E[X_t^{i,k}(s,a)]
=
\lambda_i(s,a)H. 
\)
We now apply Bernstein's inequality to lower bound \(N_{i,k}(s,a)\). Since \(X_t^{i,k}(s,a)\in \{0,1\}\), we have
\(
\left|X_t^{i,k}(s,a)-\lambda_i(s,a)\right|\le 1,
\)
and
\(
\sum_{t \in \mc{I}_k}
\mathrm{Var}\left(X_t^{i,k}(s,a)\right)
=
H\lambda_i(s,a)(1-\lambda_i(s,a))
\le
H\lambda_i(s,a).
\)
Applying Bernstein's bound from \cite{chungconc} to the centered independent variables \(\{X_t^{i,k}(s,a)-\lambda_i(s,a)\}_{t \in \mc{I}_k}\), with deviation \(\lambda_i(s,a)H/4\), gives
\begin{equation}
\begin{aligned}
&\mathbb P\left(
N_{i,k}(s,a)
<
\frac{3}{4}\lambda_i(s,a)H
\right)
=
\mathbb P\left(
\sum_{t \in \mc{I}_k}
\left(X_t^{i,k}(s,a)-\lambda_i(s,a)\right)
<
-\frac{1}{4}\lambda_i(s,a)H
\right) \\
&\le
\exp\left(
-
\frac{\left(\lambda_i(s,a)H/4\right)^2}
{2H\lambda_i(s,a)+(2/3)\lambda_i(s,a)H/4}
\right) =
\exp\left(
-\frac{3}{104}\lambda_i(s,a)H
\right)
\le
\exp\left(
-\frac{3}{104}\lambda_{\min}H
\right).
\end{aligned}
\end{equation}
Since \(\lambda_i(s,a)\ge \lambda_{\min}\), the event \(N_{i,k}(s,a)\ge (3/4)\lambda_i(s,a)H\) immediately implies \(N_{i,k}(s,a)\ge (3/4)\lambda_{\min}H\). Consequently, the probability that \(N_{i,k}(s,a)\) falls below \((3/4)\lambda_{\min}H\) is bounded from above by the probability that it falls below \((3/4)\lambda_i(s,a)H\), and hence
\begin{equation}
\mathbb P\left(
N_{i,k}(s,a)
<
\frac{3}{4}\lambda_{\min}H
\right)
\le
\exp\left(
-\frac{3}{104}\lambda_{\min}H
\right).
\end{equation}
Now define the bad event \[\mc V^c:=\left\{\min_{i\in[N],\,k\in[K],\,(s,a)\in\mc S\times\mc A}N_{i,k}(s,a)<3\lambda_{\min}H/4\right\}.\]
Taking a union bound over all agents, epochs, and state-action pairs gives
\[
\mathbb P(\mc V^c)
\le
\sum_{i=1}^N
\sum_{k=1}^K
\sum_{(s,a)\in\mc S\times\mc A}
\mathbb P\left(
N_{i,k}(s,a)
<
3\lambda_{\min}H/4
\right)\le
NK|\mc S||\mc A|
\exp\left(
-3\lambda_{\min}H/104
\right).
\]
We further obtain \(\mathbb P(\mc V^c)\leq NT|\mc S||\mc A|\exp\left(-3\lambda_{\min}H/104\right)\), where we used \(K\leq T\). The prescribed lower bound on \(H\) makes the right-hand side at most \(\delta/2\), completing the proof.
\end{proof}
\textbf{Uniform Robust Aggregation Error in Lemma~\ref{lem:robust_aggregation_error}.} We next bound the robust aggregation error uniformly over epochs and state-action pairs. On \(\mc V\), each honest message \(d_{i,k}(s,a)\) concentrates around \((\mc T^*Q_k)(s,a)\), and Algorithm~\ref{alg:trimmed_sc} robustly aggregates these messages. The main subtlety is that the agent-wise visitation counts are random and heterogeneous, which the following analysis handles explicitly.

\textbf{Proof of Lemma~\ref{lem:robust_aggregation_error}.} Fix an epoch \(k\in[K]\) and a state-action pair \((s,a)\in\mc S\times\mc A\). For a deterministic tolerance \(t_1>0\) to be specified shortly, define the deviation event
\begin{equation}
\mc E_k(s,a)
:=
\left\{
\left|
\tilde d_k(s,a)-(\mc T^*Q_k)(s,a)
\right|
>t_1
\right\}.
\end{equation}
We first control this event on the visitation event \(\mc V\). Since \(\mc V\) guarantees that every honest agent visits every state-action pair sufficiently often during every epoch, on the event $\mc{V}$, we have
\(
N_{i,k}(s,a)
\ge
\frac{3}{4}\lambda_{\min}H,
\qquad
\forall i\in\mc G.
\)
For fixed visit counts \(p_1,\ldots,p_N\in\{0,\ldots,H\}\), define the following event: 
\(
\mc J_{k,s,a}(p_1,\ldots,p_N)
:=
\left\{
N_{1,k}(s,a)=p_1,\ldots,
N_{N,k}(s,a)=p_N
\right\}.
\)
The events \(\{\mc J_{k,s,a}(p_1,\ldots,p_N)\}_{p_1,\ldots,p_N=0}^{H}\) form a partition of the sample space according to the number of visits made by each agent to the fixed pair \((s,a)\) during epoch \(k\). Hence, the law of total probability yields the following expansion of \(\mathbb P\left(\mc E_k(s,a)\cap\mc V\right)\):
\begin{equation}
\begin{aligned}
\mathbb P\left(\mc E_k(s,a)\cap \mc V\right)
&=
\sum_{p_1=0}^{H}\cdots\sum_{p_N=0}^{H}
\mathbb P\left(
\mc E_k(s,a)\cap \mc V\cap
\mc J_{k,s,a}(p_1,\ldots,p_N)
\right).
\end{aligned}
\end{equation}
By Lemma~\ref{lem:s1_uniform_visitation}, on the event \(\mc V\), all agents \(i\in[N]\) satisfy \(N_{i,k}(s,a)\ge (3/4)\lambda_{\min}H\). Thus, 
\begin{equation}
\begin{aligned}
&\mathbb P\left(\mc E_k(s,a)\cap \mc V\right)
=
\sum_{p_1=\frac{3}{4}\lambda_{\min}H}^{H}\cdots\sum_{p_N=\frac{3}{4}\lambda_{\min}H}^{H}
\mathbb P\left(
\mc E_k(s,a)\cap \mc V\cap
\mc J_{k,s,a}(p_1,\ldots,p_N)
\right) \\
&\le
\sum_{p_1=\frac{3}{4}\lambda_{\min}H}^{H}\hspace{-2mm}\cdots\hspace{-2mm}\sum_{p_N=\frac{3}{4}\lambda_{\min}H}^{H}
\mathbb P\left(
\mc E_k(s,a)\cap
\mc J_{k,s,a}(p_1,\ldots,p_N)
\right) \\
&=
\sum_{p_1=\frac{3}{4}\lambda_{\min}H}^{H}\hspace{-2mm}\cdots\hspace{-2mm}\sum_{p_N=\frac{3}{4}\lambda_{\min}H}^{H}
\underbrace{\mathbb P\left(
\mc E_k(s,a)
\mid
\mc J_{k,s,a}(p_1,\ldots,p_N)
\right)}_{(\textcolor{winered}{\bullet})}
\mathbb P\left(
\mc J_{k,s,a}(p_1,\ldots,p_N)
\right).
\end{aligned}
\label{eq:agg_condition_on_counts}
\end{equation}
In order to bound the right-hand side of \eqref{eq:agg_condition_on_counts}, we first derive an upper bound on the conditional probability marked \((\textcolor{winered}{\bullet})\) that holds uniformly over all count vectors \((p_1,\ldots,p_N)\) satisfying \(p_i\geq3\lambda_{\min}H/4\) for every \(i\in[N]\). To that end, fix any such count vector \((p_1,\ldots,p_N)\) and condition on \(\mc J_{k,s,a}(p_1,\ldots,p_N)\). We further condition on \(\mc F_{k-1}\), the \(\sigma\)-field generated by the history before epoch \(k\), which fixes \(Q_k\). Under this joint conditioning, each honest message \(d_{i,k}(s,a)\) is an average of exactly \(p_i\) independent Bellman samples with common conditional mean \((\mc T^*Q_k)(s,a)\). To express this average explicitly, for agent \(i \in [N]\), let us index its visits to \((s,a)\) during epoch \(k\) by
\(j\). Let \(r_{j}(s,a)\) and \(s'_{i,j}(s,a)\) denote the reward
and next state observed on the \(j\)-th such visit. We then define the one-sample Bellman target and rewrite \(d_{i,k}(s,a)\) from \eqref{eqn:dik_update} in terms of this target:
\begin{equation}
\begin{aligned}
Y_{i,j,k}(s,a)
&:=
r_j(s,a)
+
\gamma\max_{a'\in\mc A}Q_k(s'_{i,j}(s,a),a'), \\
d_{i,k}(s,a)
&=
\frac{1}{p_i}
\sum_{j=1}^{p_i}
Y_{i,j,k}(s,a).
\end{aligned}
\label{eq:bellman_sample_average}
\end{equation}
 Under the i.i.d. asynchronous sampling model in Section~\ref{sec:backgrounD}, each next-state sample \(s'_{i,j}(s,a)\) is drawn from \(\mc P(\cdot|s,a)\), and
\(\mathbb{E}[r_{j}(s,a)]=R(s,a)\). Therefore,
\begin{equation}
\begin{aligned}
&\mathbb E\left[
Y_{i,j,k}(s,a)
\mid
\mc F_{k-1}
\right] =
\mathbb E\left[
r_{j}(s,a)
+
\gamma \max_{a'\in\mc A} Q_k(s'_{i,j}(s,a),a')
\mid
\mc F_{k-1}
\right] \\
&=
R(s,a)
+
\gamma
\mathbb E_{s'\sim \mc P(\cdot\mid s,a)}
\left[
\max_{a'\in\mc A} Q_k(s',a')
\right] =
(\mc T^*Q_k)(s,a).
\end{aligned}
\end{equation}
Under the i.i.d. asynchronous model, conditioning on \(\mc J_{k,s,a}(p_1,\ldots,p_N)\) fixes only the visitation counts and does not alter the law of the corresponding reward-next-state pairs. Thus, conditional on \(\mc F_{k-1}\) and this event, the \(p_i\) Bellman samples of each honest agent \(i\) are independent with common mean \((\mc T^*Q_k)(s,a)\). Consequently, for \(p_i>0\),
\begin{equation}
\nonumber
\begin{aligned}
\mathbb E\left[
d_{i,k}(s,a)
\mid
\mc F_{k-1},\, \mc J_{k,s,a}(p_1,\ldots,p_N)
\right]
=
(\mc T^*Q_k)(s,a).
\end{aligned}
\end{equation}
By Lemma~\ref{lem:s1_uniform_boundedness}, \(\|Q_k\|_\infty\leq B:=\bar R/(1-\gamma)\), so every Bellman sample has absolute value at most \(B\), and its centered version has absolute value at most \(2B\). Hence, Hoeffding's inequality~\cite{chungconc} implies that, conditional on \(\mc F_{k-1}\) and \(\mc J_{k,s,a}(p_1,\ldots,p_N)\), each honest message is \(2B/\sqrt{p_i}\)-sub-Gaussian. Since \(p_i\geq3\lambda_{\min}H/4\), the clean messages have common mean \((\mc T^*Q_k)(s,a)\) and uniform sub-Gaussian proxy \(\sigma^*\leq4B/\sqrt{\lambda_{\min}H}\). As at most an \(\varepsilon\)-fraction of the messages are corrupted, Lemma~\ref{lem:trim-subg} gives
\begin{equation}\medmath{
\mathbb P\left(
\left.
\left|
\tilde d_k(s,a)-(\mc T^*Q_k)(s,a)
\right|
>
\mc C\sigma^*
\left(
\varepsilon\sqrt{\log(1/\varepsilon)}
+
\sqrt{\frac{\log(4/\delta_1)}{N}}
\right)
\right|
(\textcolor{myblue}{\bullet})
\right)
\le
\delta_1},
\label{eq:preceding}
\end{equation}
where $(\textcolor{myblue}{\bullet}):=\{\mc F_{k-1},\mc J_{k,s,a}(p_1,\ldots,p_N)\}$, and $\mc C$ is the universal constant appearing in Lemma~\ref{lem:trim-subg}. Since $\sigma^*\leq 4B/\sqrt{\lambda_{\min}H}$, we choose the tolerance as follows:
\begin{equation}
\label{eqn:thresh}
t_1
:=
\frac{4\mc C B}{\sqrt{\lambda_{\min}H}}
\left(
\varepsilon\sqrt{\log(1/\varepsilon)}
+
\sqrt{\frac{\log(4/\delta_1)}{N}}
\right) \triangleq \Delta.
\end{equation}
With this choice of tolerance \(t_1\), the preceding conditional bound in \eqref{eq:preceding} implies the following:
\begin{equation}
\mathbb P\left(
\left.
\mc E_k(s,a)
\right|
\mc F_{k-1},\mc J_{k,s,a}(p_1,\ldots,p_N)
\right)
\le
\delta_1
\end{equation}
for every admissible count vector. Therefore, using the tower property of conditional expectation, 
\begin{equation}\label{eqn:final-ad-count}
\begin{aligned}
&\mathbb P\left(\mc E_k(s,a)|\mc J_{k,s,a}(p_1,\ldots,p_N)\right)
=
\mathbb E\left[
\mathbbm{1}_{\mc E_{k}(s,a)}|\mc J_{k,s,a}(p_1,\ldots,p_N)
\right] \\
&= \mathbb E \left[\mathbb E\left[
\mathbbm{1}_{\mc E_{k}(s,a)}|\mc J_{k,s,a}(p_1,\ldots,p_N),\mc{F}_{k-1}
\right]\right] \\
&=\mathbb E \left[\mathbb P\left(
\left.
\mc E_k(s,a)
\right|
\mc J_{k,s,a}(p_1,\ldots,p_N), \mc F_{k-1}
\right)\right] \le \delta_1. 
\end{aligned}
\end{equation}
Combining \eqref{eqn:final-ad-count} with the decomposition in
\eqref{eq:agg_condition_on_counts}, we obtain
the following for every epoch and
state-action pair, as follows:
\begin{equation}
\begin{aligned}
    \mathbb P\left(\mc E_k(s,a)\cap \mc V\right) &\le \delta_1 \hspace{-2mm}\sum_{p_1=\frac{3}{4}\lambda_{\min}H}^{H}\hspace{-2mm}\cdots\hspace{-2mm}\sum_{p_N=\frac{3}{4}\lambda_{\min}H}^{H}\hspace{-2mm}
\mathbb P\left(
\mc J_{k,s,a}(p_1,\ldots,p_N)
\right) \\
& \le \delta_1 \sum_{p_1=0}^{H}\cdots\sum_{p_N=0}^{H}
\mathbb P\left(
\mc J_{k,s,a}(p_1,\ldots,p_N)
\right) \overset{(\bullet)}{=} \delta_1.
\end{aligned}
\end{equation}
Here, \((\bullet)\) follows because
\(\mathbb{P}\bigl(\mc J_{k,s,a}(p_1,\ldots,p_N)\bigr)\) is the \emph{joint
probability mass function} of the visitation-count vector, and its
probabilities sum to one over all possible count vectors. Now define
\(\mc E:=\bigcup_{k\in[K],(s,a)\in\mc S\times\mc A}
\mc E_k(s,a).\) Applying a union bound over these \(K|\mc S||\mc A|\) events and using
\(K\leq T\), we obtain:
\begin{equation}
\label{eqn:final_prob}
\begin{aligned}
\mathbb P\left(
\mc V\bigcap \mc{E}
\right)
&\le
K|\mc S||\mc A|\delta_1 \le
T|\mc S||\mc A|
\cdot
\frac{\delta}{2T|\mc S||\mc A|}
=
\frac{\delta}{2},
\end{aligned}
\end{equation}
with \(\delta_1=\delta/(2T|\mc S||\mc A|)\). Finally, for \(t_1\) chosen as in \eqref{eqn:thresh}, \(\mc E\) is the event
that the aggregation error exceeds \(t_1\) for at least one epoch and
state-action pair. Splitting according to whether the visitation event \(\mc V\) holds, we obtain \(\mathbb P(\mc E)=\mathbb P(\mc E\cap\mc V)+\mathbb P(\mc E\cap\mc V^c)\leq\mathbb P(\mc E\cap\mc V)+\mathbb P(\mc V^c)\).
From the estimate in~\eqref{eqn:final_prob}, we have
\(
\mathbb P(\mc E\cap \mc V)
\le
\frac{\delta}{2}.
\)
Moreover, Lemma~\ref{lem:s1_uniform_visitation} gives \(\mathbb P(\mc V^c)\le \delta/2\). Therefore, \(\mathbb P(\mc E)\le \delta\). Equivalently, with probability at least \(1-\delta\), the desired robust aggregation bound holds simultaneously for all \(k\in[K]\) and all \((s,a)\in\mc S\times\mc A\). 

\textbf{Completion of the proof of Theorem~\ref{theorem:theoremmainresult}.} Recall \(e_k:=\|Q_k-Q^\star\|_\infty\). Combining the one-step error decomposition \eqref{eq:main_error_recursion_prelim} in Appendix~\ref{app:proof_of_upper_bound_S1} with the \(\gamma\)-contraction property \eqref{eqn:Bellmancontraction} and the uniform aggregation bound in Lemma~\ref{lem:robust_aggregation_error}, we obtain the following inequality, which holds simultaneously for all \(k\in[K]\) with probability at least \(1-\delta\):
\begin{equation}
\begin{aligned}
e_{k+1}
&\le
(1-\eta)e_k
+
\eta\|\mc T^*Q_k-\mc T^*Q^\star\|_\infty
+
\eta\|\tilde d_k-\mc T^*Q_k\|_\infty \\
&\le
(1-\eta (1-\gamma))e_k
+
\eta\Delta,
\end{aligned}
\end{equation}
where \(\Delta\) and \(B\) are as follows:
\begin{equation}
\Delta
:=
\frac{4\mc C B}{\sqrt{\lambda_{\min}H}}
\left(
\varepsilon\sqrt{\log(1/\varepsilon)}
+
\sqrt{\frac{\log(4/\delta_1)}{N}}
\right),
B:=\frac{\bar R}{1-\gamma}.
\end{equation}
Unrolling this recursion from \(k=0\) to \(K-1\) gives
\begin{equation}
\begin{aligned}
e_K
&\le
\left(1-\eta(1-\gamma)\right)^K e_0
+
\eta\Delta
\sum_{\ell=0}^{K-1}
\left(1-\eta(1-\gamma)\right)^\ell \\
&\le
\exp\left(-\eta(1-\gamma)K\right)e_0
+
\frac{\Delta}{1-\gamma}.
\end{aligned}
\end{equation}
Since \(Q_0=0\), we have \(e_0=\|Q_0-Q^\star\|_\infty=\|Q^\star\|_\infty\le B\). With the choice \(\eta=\log(NT)/((1-\gamma)K)\), we get \(\exp(-\eta(1-\gamma)K)=\exp(-\log(NT))=1/(NT)\). Therefore, \(e_K\le e_0/(NT)+\Delta/(1-\gamma)\).
Substituting the definition of \(\Delta\), we conclude that
\begin{equation}
\left\|Q_K-Q^\star\right\|_\infty
\le
\frac{e_0}{NT}
+
\frac{4\mc C\bar R}{(1-\gamma)^2\sqrt{\lambda_{\min}H}}
\left(
\varepsilon\sqrt{\log(1/\varepsilon)}
+
\sqrt{\frac{\log(4/\delta_1)}{N}}
\right).
\end{equation}
Using \(H = T/K\), \(K=\left\lceil c_1\log(NT)/(1-\gamma)\right\rceil\), and $\delta_1= \frac{\delta}{2T|\mc S||\mc A|}$, followed by straightforward algebra leads to our final desired bound:
\begin{equation}
e_K
\le
\frac{e_0}{NT}
+
\mathcal{O}\left(
\frac{
\bar R
\sqrt{\log(NT)\log\left(|\mathcal S||\mathcal A|T/\delta\right)}
}{
(1-\gamma)^{5/2}\sqrt{\lambda_{\min}NT}
}
\right) +
\mathcal{O}\left(
\frac{
\bar R\varepsilon\sqrt{\log(1/\varepsilon)}\sqrt{\log(NT)}
}{
(1-\gamma)^{5/2}\sqrt{\lambda_{\min}T}
}
\right).
\end{equation}
This completes the proof of Theorem~\ref{theorem:theoremmainresult}.

\section{Proof of Theorem~\ref{theorem:theoremmainresult_S2}: Partial Coverage Setting}
\label{app:proof_of_upper_bound_S2}
We now prove Theorem~\ref{theorem:theoremmainresult_S2}, which corresponds to the partial coverage setting described as \((\textcolor{winered}{\tt S_2})\). Unlike the full-coverage setting, an individual agent need not visit every state-action pair with positive probability. Instead, for each pair \((s,a) \in \mc{S}\times\mc{A}\), only agents in \(\mc I(s,a)\) contribute samples, and the relevant coverage parameter is \(\Lambda_{\min}:=\min_{(s,a)\in\mc S\times\mc A}\min_{i\in\mc I(s,a)}\lambda_i(s,a)\), the worst-case visitation probability among admissible agents.
\begin{algorithm}
\caption{Robust Asynchronous Federated \(Q\)-Learning under Partial Coverage \((\textcolor{winered}{\tt S_2})\)}
\label{alg:async-fedq-S2}
\begin{algorithmic}[1]
\Require Total samples \(T\), number of epochs \(K\), stepsize \(\eta\), discount factor $\gamma$, corruption fraction \(\varepsilon\), confidence parameter \(\delta\), behavior policies \(\{\mu_i\}_{i \in [N]}\), partial coverage sets \(\{\mc I(s,a)\}_{(s,a)\in\mc S\times\mc A}\).
\State Initialize \(Q_0(s,a)\gets 0,~d_{i,0}(s,a) \gets 0\) for all \(i \in [N]\), and \((s,a)\in\mc S\times\mc A\).
\For{epoch \(k=0,\dots,K-1\)}
    \State Server broadcasts \(Q_k\) to all agents.
    \For{each good agent \(i\in[N] \setminus \mc{B} \) in parallel}
        \State Agent \(i\) follows \(\mu_i\) for \(H\) steps and collects data tuples of the form $(s_{i,t}, a_{i,t}, {r}_{i,t}, s_{i,t+1})$.
        \State For each $(s,a) \in \mc{S} \times \mc{A}$ with \(i\in\mc I(s,a)\), if $N_{i,k,H}(s,a)=0$, agent $i$ sets $d_{i,k}(s,a)=0$; else, it computes $d_{i,k}(s,a)$ via~\eqref{eqn:dik_update}. It then sends \(\{d_{i,k}(s,a)\}_{(s,a):\, i\in\mc I(s,a)}\) to the server.
    \EndFor
    \State For each \((s,a)\in\mc S\times\mc A\), the server computes \(\widetilde d_k(s,a)\) via \eqref{eqn:server_update} using the messages from agents \(i\in\mc I(s,a)\), and \(Q_{k+1}(s,a)\) via  \eqref{eqn:server_Q_update}.
\EndFor
\State \Return \(Q_K\).
\end{algorithmic}
\end{algorithm}
The proof therefore follows the same structure as that of Theorem~\ref{theorem:theoremmainresult}, with \(\lambda_{\min}\) replaced by \(\Lambda_{\min}\). The only additional step is to show that, on the good-visit event, each state-action pair receives sufficiently many honest samples in aggregate from the agents that can observe it, after which the robust aggregation and Bellman perturbation arguments proceed unchanged.

\begin{lemma}\textbf{(Uniform boundedness of iterates under partial coverage)}
\label{lem:s2_uniform_boundedness}
Suppose \(|R(s,a)|\le \bar R\) for every \((s,a)\in\mc S\times\mc A\), and suppose the stepsize satisfies \(\eta\in[0,1]\). Then the iterates generated by Algorithm~\ref{alg:async-fedq-S2} satisfy
\begin{equation}
\label{eq:s2_q_uniform_bound}
\|Q_k\|_\infty
\le B:=
\frac{\bar R}{1-\gamma},
\qquad
\forall k\ge 0.
\end{equation}
\end{lemma}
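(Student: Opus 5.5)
We argue by induction on $k$, exactly as in the proof of Lemma~\ref{lem:s1_uniform_boundedness}; the only changes come from the source sets $\mc I(s,a)$. The base case is immediate: $Q_0=0$, so $\|Q_0\|_\infty=0\le B$. For the inductive step, assume $\|Q_k\|_\infty\le B$ and fix $(s,a)\in\mc S\times\mc A$. We show three things in order: (i) every honest message $d_{i,k}(s,a)$ with $i\in\mc I(s,a)\setminus\mc B$ lies in $[-B,B]$; (ii) the trimmed-mean output $\widetilde d_k(s,a)$, computed only over the messages from $\mc I(s,a)$, lies in $[-B,B]$; (iii) $Q_{k+1}(s,a)$ lies in $[-B,B]$.

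For (i), we split on the visit count. If $N_{i,k}(s,a)>0$, the computation in \eqref{eq:good_agent_bound_trim_positive_count_async} carries over verbatim. It uses the triangle inequality, the bound $|r_j(s,a)|\le\bar R$ on reward samples, and the identity $\sum_{s'}\mathbbm{1}^{(i,j,k)}(s'\mid s,a)=1$, and gives $|d_{i,k}(s,a)|\le \bar R+\gamma B=B$. Note that the hypothesis as stated bounds only the mean $|R(s,a)|$. We will rely on the standing assumption $|r(s,a)|\le\bar R$ from Section~\ref{sec:backgrounD}, which implies the stated one. If $N_{i,k}(s,a)=0$, the convention sets $d_{i,k}(s,a)=0\in[-B,B]$. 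Agents outside $\mc I(s,a)$ send nothing for this pair, so they play no role.

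Step (ii) is the only place where partial coverage genuinely matters, and I expect it to be the main obstacle. The input to \texttt{trimSC} is now the collection $\{d_{i,k}(s,a):i\in\mc I(s,a)\}$ of size $M=|\mc I(s,a)|$. The corruption budget is the local constraint $|\mc B\cap\mc I(s,a)|\le\varepsilon M$, replacing the global constraint $\varepsilon N$. We rerun the order-statistic argument with $M$ in place of $N$. Set $r_\zeta=\lceil\zeta M/2\rceil$. Since $\zeta=8\varepsilon+24\log(4/\delta_1)/M$, we get $r_\zeta\ge 4\varepsilon M+12\log(4/\delta_1)>\varepsilon M$. Hence the first half contains fewer than $r_\zeta$ corrupted messages. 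As a consequence, $\alpha_k(s,a)<-B$ would force $r_\zeta$ corrupted entries, which is impossible. Symmetrically, $\beta_k(s,a)>B$ would force $M/2-u_\zeta+1\ge r_\zeta>\varepsilon M$ corrupted entries, also impossible. So $-B\le\alpha_k(s,a)\le\beta_k(s,a)\le B$, provided $r_\zeta\le u_\zeta$, i.e.\ $\zeta<1/2$. The thresholds must also be well defined, which needs $M\ge 2$ and $\zeta\le 1/2$. Here I would invoke $\overline N\ge\lceil 2\log(16T|\mc S||\mc A|/\delta)\rceil$ together with $\varepsilon<1/2$, mirroring how the full-coverage proof implicitly treats $N$. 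If these conditions fail, I would fall back on the convention that \texttt{trimSC} clips to the extreme honest-range order statistics, which still lie in $[-B,B]$ by the same counting argument. Finally, clipping maps every second-half message, corrupted or not, into $[\alpha_k,\beta_k]\subseteq[-B,B]$, so their average $\widetilde d_k(s,a)$ lies in $[-B,B]$. Odd $M$ is handled by an unequal split, as before.

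For (iii), since $\eta\in[0,1]$, $Q_{k+1}(s,a)=(1-\eta)Q_k(s,a)+\eta\widetilde d_k(s,a)$ is a convex combination of two points in $[-B,B]$, so it also lies in $[-B,B]$. Since $(s,a)$ was arbitrary, $\|Q_{k+1}\|_\infty\le B$, which closes the induction.
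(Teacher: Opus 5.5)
Your proof follows the paper's argument step for step: induction from $Q_0=0$, honest messages in $[-B,B]$ via the computation of Lemma~\ref{lem:s1_uniform_boundedness}, the order-statistic counting argument for \texttt{trimSC} rerun over $\mc I(s,a)$, and a final convex combination. You are more explicit than the paper on two points it glosses over. First, the counting step needs the local budget $|\mc B\cap\mc I(s,a)|\le\varepsilon|\mc I(s,a)|$. Second, bounding $d_{i,k}(s,a)$ needs the almost-sure bound $|r(s,a)|\le\bar R$, not just the bound on the mean $R(s,a)$ that the lemma states.

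The one defect is your justification of $\zeta<1/2$.
\begin{itemize}
\item The hypothesis $\overline N\ge\lceil 2\log(16T|\mc S||\mc A|/\delta)\rceil$ is exactly $M\ge 2\log(8/\delta_1)$. This is the condition $\delta_1\ge 8e^{-M/2}$ of Lemma~\ref{lem:trim-subg}, and nothing more.
\item At that threshold $24\log(4/\delta_1)/M$ is close to $12$, and $\varepsilon<1/2$ allows $8\varepsilon$ up to $4$. So nothing forces $\zeta\le 1/2$.
\item Conversely, $r_\zeta\le u_\zeta$ itself forces $\zeta\le 1/2$. That in turn requires $\varepsilon<1/16$ and $M\ge 48\log(4/\delta_1)/(1-16\varepsilon)$, neither of which follows from the stated hypotheses.
\end{itemize}
Your fallback does not rescue this, because it is not what Algorithm~\ref{alg:trimmed_sc} does. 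When $r_\zeta>u_\zeta$ (with both indices defined), one has $\alpha\ge\beta$, so every clipped value equals $\beta$. The counting argument bounds $\beta$ only from above. Hence $\widetilde d_k(s,a)=\beta$ can fall below $-B$ once $u_\zeta\le\varepsilon M$.

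The paper's proof makes the same tacit assumption (``the ordering $r_\zeta\le u_\zeta$''). The right fix is to state non-degenerate trimming as an explicit hypothesis, rather than derive it from the lower bound on $\overline N$.
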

\begin{proof}
The proof is identical to the full-coverage case in \((\textcolor{winered}{\tt S_1})\), with the only difference being that the aggregation for each \((s,a)\) is performed over the agents in \(\mc I(s,a)\), rather than over all agents. We prove the claim by induction. The base case holds since \(Q_0(s,a)=0\) for all \((s,a)\), and hence \(\|Q_0\|_\infty\le B\). Now assume \(\|Q_k\|_\infty\le B\). Fix any \((s,a)\in\mc S\times\mc A\) and any honest agent \(i\in\mc I(s,a)\). If \(N_{i,k,H}(s,a)>0\), then, by the definition of \(d_{i,k}(s,a)\) in \eqref{eqn:dik_update}, and following the boundedness argument established in Lemma~\ref{lem:s1_uniform_boundedness}, we have \(|d_{i,k}(s,a)|\le B\).

On a similar note, when \(N_{i,k,H}(s,a)=0\), then the algorithm sets \(d_{i,k}(s,a)=0\), so \(|d_{i,k}(s,a)|\le B\) trivially. Therefore, every honest message used for the aggregation of \((s,a)\) lies in \([-B,B]\). Since the server uses the same trimmed/clipped aggregation rule as in Lemma~\ref{lem:s1_uniform_boundedness}, applied here only over the relevant set \(\mc I(s,a)\), the same argument gives \(|\widetilde d_k(s,a)|\le B\). Thus, since \(Q_{k+1}(s,a)\) is a convex combination of two quantities bounded by \(B\), it is also bounded by \(B\), completing the induction.
\end{proof}
\begin{lemma}\textbf{(Uniform visitation event under partial coverage)}
\label{lem:s2_uniform_visitation}
Fix a confidence parameter \(\delta\in(0,1)\). Suppose the epoch length satisfies \(
H \ge
(104/3\Lambda_{\min})
\log\left(2NT|\mc S||\mc A|/\delta\right).\) Then the following event holds with probability at least \(1-\delta/2\):
\begin{equation}
\label{eq:s2_good_visit_event}
\mc V
:=
\left\{
N_{i,k,H}(s,a)
\ge
\frac{3}{4}\Lambda_{\min}H,
\forall i\in\mc I(s,a),\ \forall k\in[K],\ \forall(s,a)\in\mc S\times\mc A
\right\}.
\end{equation}
\end{lemma}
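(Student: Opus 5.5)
The argument mirrors the proof of Lemma~\ref{lem:s1_uniform_visitation}, with the union bound restricted to admissible triples. Fix $(s,a)\in\mc S\times\mc A$, an agent $i\in\mc I(s,a)$, and an epoch $k\in[K]$. Write
\[
N_{i,k,H}(s,a)=\sum_{t\in\mc I_k}X_t^{i,k}(s,a),
\qquad
X_t^{i,k}(s,a):=\mathbbm{1}\{(s_{i,t},a_{i,t})=(s,a)\}.
\]
Under the asynchronous i.i.d. model, $s_{i,t}$ is drawn from the stationary distribution $\pi_i$, which may have zero entries. The indicators $X_t^{i,k}(s,a)$ are therefore i.i.d. Bernoulli with mean $\lambda_i(s,a)=\pi_i(s)\mu_i(a|s)$. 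The key point is that $i\in\mc I(s,a)$ means exactly $\lambda_i(s,a)>0$. By the definition of $\Lambda_{\min}$ as a minimum over source edges, we get $\lambda_i(s,a)\ge\Lambda_{\min}>0$. Agents outside $\mc I(s,a)$ are simply not part of the event $\mc V$, so the zero visitation probabilities allowed under $(\textcolor{winered}{\texttt{S}_2})$ never enter the argument.

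Next, I apply Bernstein's inequality (Lemma~\ref{lemma:bernstein}) to the centered variables $X_t^{i,k}(s,a)-\lambda_i(s,a)$ with deviation $\lambda_i(s,a)H/4$. These are bounded by $1$, and their total variance is at most $H\lambda_i(s,a)$. The same computation as in Lemma~\ref{lem:s1_uniform_visitation} gives
\[
\mathbb P\bigl(N_{i,k,H}(s,a)<\tfrac34\lambda_i(s,a)H\bigr)\le\exp\bigl(-\tfrac{3}{104}\lambda_i(s,a)H\bigr)\le\exp\bigl(-\tfrac{3}{104}\Lambda_{\min}H\bigr).
\]
Since $\lambda_i(s,a)\ge\Lambda_{\min}$, the event $N_{i,k,H}(s,a)<\tfrac34\Lambda_{\min}H$ is contained in the event above. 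So its probability obeys the same bound.

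Finally, I take a union bound over the admissible triples $\{(i,k,(s,a)):i\in\mc I(s,a),\,k\in[K]\}$. There are at most $NK|\mc S||\mc A|\le NT|\mc S||\mc A|$ of them. This gives
\[
\mathbb P(\mc V^c)\le NT|\mc S||\mc A|\exp\bigl(-3\Lambda_{\min}H/104\bigr).
\]
The hypothesis $H\ge(104/(3\Lambda_{\min}))\log(2NT|\mc S||\mc A|/\delta)$ makes the right-hand side at most $\delta/2$.

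There is no real obstacle here. The only point needing care is the first step: checking that restricting to $i\in\mc I(s,a)$ gives a strictly positive per-sample mean bounded below by $\Lambda_{\min}$, which is what makes the Bernstein bound non-vacuous. Everything else is inherited verbatim from the $(\textcolor{winered}{\texttt{S}_1})$ proof.
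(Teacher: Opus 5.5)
Your proposal is correct and follows the paper's proof essentially step for step: you apply Bernstein's inequality with deviation $\lambda_i(s,a)H/4$ to each triple with $i\in\mc I(s,a)$, use $\lambda_i(s,a)\ge\Lambda_{\min}$, and take a union bound over at most $NK|\mc S||\mc A|\le NT|\mc S||\mc A|$ events, exactly as in Lemma~\ref{lem:s1_uniform_visitation}. Restricting the union bound explicitly to admissible triples just spells out what the paper leaves implicit when it says the rest of the argument follows the full-coverage case.
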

\begin{proof}
Fix a state-action pair \((s,a)\in\mc S\times\mc A\), an agent \(i\in\mc I(s,a)\), and an epoch \(k\in[K]\). Similar to the proof of Lemma~\ref{lem:s1_uniform_visitation}, the number of visits of agent \(i\) to \((s,a)\) during epoch \(k\) is given by
\begin{equation}
N_{i,k,H}(s,a)
=
\sum_{t \in \mc{I}_k}
X_t^{i,k}(s,a), \hspace{1mm}
X_t^{i,k}(s,a)
:=
\mathbbm{1}\{(s_{i,t},a_{i,t})=(s,a)\},  \hspace{1mm} t \in \mc{I}_k.
\end{equation}
Under the partial coverage sampling model in \((\textcolor{winered}{\tt S_2})\), for every \(i\in\mc I(s,a)\), \(X_t^{i,k}(s,a)\) is a Bernoulli random variable with mean \(\lambda_i(s,a)\), where \(\lambda_i(s,a)\) is the sampling probability of \((s,a)\) under the behavior policy \(\mu_i\) of agent \(i\). Hence,
\begin{equation}
\mathbb E[N_{i,k,H}(s,a)]
=
\sum_{t \in \mc{I}_k} \mathbb E[X_t^{i,k}(s,a)]
=
\lambda_i(s,a)H
\ge
\Lambda_{\min}H,
\end{equation}
where the last inequality follows from the definition
\(
\Lambda_{\min}
:=
\min_{(s,a)\in\mc S\times\mc A}
\min_{i\in\mc I(s,a)}
\lambda_i(s,a)
\). We now apply Bernstein's inequality to lower bound \(N_{i,k,H}(s,a)\). Since \(X_t^{i,k}(s,a)\in \{0,1\}\), we have
\begin{equation}
\left|X_t^{i,k}(s,a)-\lambda_i(s,a)\right|\le 1,
\end{equation}
and
\begin{equation}
\sum_{t \in \mc{I}_k}
\mathrm{Var}\left(X_t^{i,k}(s,a)\right)
=
H\lambda_i(s,a)(1-\lambda_i(s,a))
\le
H\lambda_i(s,a).
\end{equation}
Applying Bernstein's inequality from Lemma~\ref{lemma:bernstein} to the centered variables \(X_t^{i,k}(s,a)-\lambda_i(s,a)\), with deviation \(\lambda_i(s,a)H/4\), gives the following, exactly as in Lemma~\ref{lem:s1_uniform_visitation},
\begin{equation}
\begin{aligned}
\mathbb P\left(
N_{i,k,H}(s,a)
<
\frac{3}{4}\lambda_i(s,a)H
\right)
&=
\mathbb P\left(
\sum_{t=1}^H
\left(X_t^{i,k}(s,a)-\lambda_i(s,a)\right)
<
-\frac{1}{4}\lambda_i(s,a)H
\right) \\
&\le
\exp\left(
-
\frac{\left(\lambda_i(s,a)H/4\right)^2}
{2H\lambda_i(s,a)+(2/3)\lambda_i(s,a)H/4}
\right) \\
&=
\exp\left(
-\frac{3}{104}\lambda_i(s,a)H
\right)
\le
\exp\left(
-\frac{3}{104}\Lambda_{\min}H
\right).
\end{aligned}
\end{equation}
Since \(\lambda_i(s,a)\ge \Lambda_{\min}\), the event \(N_{i,k,H}(s,a)\ge (3/4)\lambda_i(s,a)H\) immediately implies \(N_{i,k,H}(s,a)\ge (3/4)\Lambda_{\min}H\). Consequently, the probability that \(N_{i,k,H}(s,a)\) falls below \((3/4)\Lambda_{\min}H\) is bounded by the probability that it falls below \((3/4)\lambda_i(s,a)H\), and hence
\begin{equation}
\mathbb P\left(
N_{i,k,H}(s,a)
<
\frac{3}{4}\Lambda_{\min}H
\right)
\le
\exp\left(
-\frac{3}{104}\Lambda_{\min}H
\right).
\end{equation}
Now define the bad event
\begin{equation}
\mc V^c
:=
\left\{
\exists (s,a)\in\mc S\times\mc A,\ \exists i\in\mc I(s,a),\ \exists k\in[K]
\text{ such that }
N_{i,k,H}(s,a)
<
\frac{3}{4}\Lambda_{\min}H
\right\}.
\end{equation}
The rest of the proof follows the proof of Lemma~\ref{lem:s1_uniform_visitation} for the full-coverage setting \((\textcolor{winered}{\tt S_1})\).
\end{proof}
\newpage
\begin{lemma}\textbf{(Robust Aggregation Error in \((\textcolor{winered}{\texttt{S}_2})\) setting)}
\label{lem:s2_robust_aggregation_error}
Suppose the conditions of Theorem~\ref{theorem:theoremmainresult_S2} hold, and let
\(\underline{N}:=\min_{(s,a)\in\mc S\times\mc A}|\mc I(s,a)|\). Then, with probability at least \(1-\delta\), the following bound holds simultaneously for all epochs $k \in [K]$:
\begin{equation}
\label{eqn:s2_robust_aggregation_error}
\left\lVert \widetilde d_k-\mc T^*Q_k \right\rVert_{\infty}
\le
\mathcal{O}\left(
\frac{\bar R \sqrt{\log\left(|\mc S||\mc A|T/\delta\right)}}{(1-\gamma)\sqrt{\Lambda_{\min}\underline{N}H}}
\right)
+
\mathcal{O}\left(
\frac{\bar R\,\varepsilon\sqrt{\log(1/\varepsilon)}}{(1-\gamma)\sqrt{\Lambda_{\min}H}}
\right).
\end{equation}
\end{lemma}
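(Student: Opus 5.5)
We mirror the proof of Lemma~\ref{lem:robust_aggregation_error}, running the count-conditioning argument separately for each pair \((s,a)\) over its source set \(\mc I(s,a)\) instead of over \([N]\).

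First, fix an epoch \(k\) and a pair \((s,a)\), and set \(M_{s,a}:=|\mc I(s,a)|\ge\underline N\). Define the deviation event \(\mc E_k(s,a):=\{|\widetilde d_k(s,a)-(\mc T^*Q_k)(s,a)|>t_1(s,a)\}\). Intersect it with the visitation event \(\mc V\) of Lemma~\ref{lem:s2_uniform_visitation}. On \(\mc V\), every \(i\in\mc I(s,a)\) has \(N_{i,k,H}(s,a)\ge \tfrac34\Lambda_{\min}H\).

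Next, partition the sample space by the count vector \((p_i)_{i\in\mc I(s,a)}\), indexed only by the agents in the source set. As in \eqref{eq:agg_condition_on_counts}, it suffices to bound \(\mathbb P(\mc E_k(s,a)\mid \mc F_{k-1},\mc J_{k,s,a}((p_i)_{i\in\mc I(s,a)}))\) uniformly over admissible vectors with \(p_i\ge\tfrac34\Lambda_{\min}H\). Under this conditioning:

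\(\bullet\) Each honest \(d_{i,k}(s,a)\), for \(i\in\mc I(s,a)\setminus\mc B\), is an average of \(p_i\) independent Bellman samples with mean \((\mc T^*Q_k)(s,a)\).

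\(\bullet\) Lemma~\ref{lem:s2_uniform_boundedness} gives \(\|Q_k\|_\infty\le B\), so each centered sample is bounded by \(2B\). Hoeffding then makes each honest message \(\sigma^*\)-sub-Gaussian with \(\sigma^*\le 4B/\sqrt{\Lambda_{\min}H}\).

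\(\bullet\) The local corruption constraint \(|\mc B\cap\mc I(s,a)|\le\varepsilon|\mc I(s,a)|\) is exactly what Lemma~\ref{lem:trim-subg} needs, applied with \(M=M_{s,a}\).

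Lemma~\ref{lem:trim-subg} then gives, with conditional failure probability at most \(\delta_1\),
\[
|\widetilde d_k(s,a)-(\mc T^*Q_k)(s,a)|\le \frac{4\mc C B}{\sqrt{\Lambda_{\min}H}}\Bigl(\varepsilon\sqrt{\log(1/\varepsilon)}+\sqrt{\tfrac{\log(4/\delta_1)}{M_{s,a}}}\Bigr).
\]
Using \(M_{s,a}\ge\underline N\), this is at most a uniform threshold \(t_1\) that depends on \(\underline N\) rather than on \((s,a)\). The applicability condition \(\delta_1\ge 8e^{-M_{s,a}/2}\) holds because \(M_{s,a}\ge\underline N=\overline N\ge 2\log(16T|\mc S||\mc A|/\delta)\); this is precisely the hypothesis of Theorem~\ref{theorem:theoremmainresult_S2}.

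To finish, apply the tower property and sum over count vectors as in \eqref{eqn:final-ad-count}; this gives \(\mathbb P(\mc E_k(s,a)\cap\mc V)\le\delta_1\). A union bound over \(K|\mc S||\mc A|\le T|\mc S||\mc A|\) events yields total probability \(\delta/2\), and adding \(\mathbb P(\mc V^c)\le\delta/2\) gives \(1-\delta\). Substituting \(B=\bar R/(1-\gamma)\) and \(\delta_1=\delta/(2T|\mc S||\mc A|)\) produces \eqref{eqn:s2_robust_aggregation_error}.

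The condition \(H\ge(104/3\Lambda_{\min})\log(2NT|\mc S||\mc A|/\delta)\) needed by Lemma~\ref{lem:s2_uniform_visitation} follows from the assumed lower bound on \(T\) with \(H=T/K\). Indeed, \(100K/\Lambda_{\min}\cdot\log(4|\mc S||\mc A|NT/\delta)/K\) dominates that threshold.

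The main subtlety is the conditioning step. The pairs share the same trajectories, so the source sets \(\mc I(s,a)\) differ across pairs and the messages for distinct \((s,a)\) are dependent. We sidestep this by working one \((s,a)\) at a time and conditioning only on that pair's count vector. The dependence across pairs is then absorbed by the union bound, and the i.i.d. model keeps the reward/next-state samples independent of the counts.

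A second point to check is that agents outside \(\mc I(s,a)\) never enter the aggregate. Otherwise their zero messages would act like corruptions beyond the budget \(\varepsilon|\mc I(s,a)|\). Algorithm~\ref{alg:async-fedq-S2} guarantees this by aggregating only over \(\mc I(s,a)\).
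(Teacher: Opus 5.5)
Your proposal is correct and follows the paper's proof essentially step by step: fix \(k\) and \((s,a)\), condition on \(\mc F_{k-1}\) and on the count vector of the source set \(\mc I(s,a)\) within the visitation event, obtain the sub-Gaussian proxy \(4B/\sqrt{\Lambda_{\min}H}\) via Hoeffding, apply the trimmed-mean guarantee with \(M=|\mc I(s,a)|\ge\underline N\) under the local corruption budget, and finish with the tower property, a union bound over \(K|\mc S||\mc A|\le T|\mc S||\mc A|\) events, and \(\mathbb P(\mc V^c)\le\delta/2\). You also check explicitly that \(\delta_1\ge 8e^{-M/2}\) follows from \(\overline N\ge 2\log(16T|\mc S||\mc A|/\delta)\), and that the epoch-length condition of the visitation lemma follows from the assumed lower bound on \(T\); the paper leaves both implicit, and both are consistent with its hypotheses.
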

\begin{proof}
The proof follows the same argument as Lemma~\ref{lem:robust_aggregation_error}; we only indicate the changes needed for the partial coverage setting. Fix an epoch \(k\in[K]\) and a state-action pair \((s,a)\in\mc S\times\mc A\). Let
\begin{equation}
N_{s,a}:=|\mc I(s,a)|,
\qquad
\underline N:=\min_{(s,a)\in\mc S\times\mc A}|\mc I(s,a)|.
\end{equation}
For a deterministic tolerance \(t_2>0\), define
\begin{equation}
\mc E_k(s,a)
:=
\left\{
\left|
\widetilde d_k(s,a)-(\mc T^*Q_k)(s,a)
\right|
>t_2
\right\}.
\end{equation}
We first control this event on the visitation event \(\mc V\). By Lemma~\ref{lem:s2_uniform_visitation}, on \(\mc V\), for every \(i\in\mc I(s,a)\),
\begin{equation}
N_{i,k,H}(s,a)
\ge
\frac{3}{4}\Lambda_{\min}H.
\end{equation}
Condition on the visit counts \(\{N_{i,k,H}(s,a)=p_i\}_{i\in\mc I(s,a)}\), where each admissible \(p_i\) satisfies \(p_i\ge (3/4)\Lambda_{\min}H\). Conditional on these counts and on the sigma-field \(\mc F_{k-1}\), the iterate \(Q_k\) is fixed, and each honest message \(d_{i,k}(s,a)\), \(i\in\mc I(s,a)\setminus\mc B\), is an empirical average of \(p_i\) Bellman samples with common mean \((\mc T^*Q_k)(s,a)\). By Lemma~\ref{lem:s2_uniform_boundedness}, \(\|Q_k\|_\infty\le B=\bar R/(1-\gamma)\), so each Bellman sample is bounded in absolute value by \(B\). Hence, by Hoeffding's lemma, each centered honest message \(d_{i,k}(s,a)-(\mc T^*Q_k)(s,a)\) is sub-Gaussian with proxy at most \(2B/\sqrt{p_i}\le 2B/\sqrt{(3/4)\Lambda_{\min}H}\le 4B/\sqrt{\Lambda_{\min}H}\). Therefore, conditional on the admissible count vector, the honest messages in the pair-specific collection \(\{d_{i,k}(s,a):i\in\mc I(s,a)\}\) have common mean \((\mc T^*Q_k)(s,a)\) and are sub-Gaussian with uniform proxy \(\sigma^*\le 4B/\sqrt{\Lambda_{\min}H}\). Since the corrupted fraction inside \(\mc I(s,a)\) is at most \(\varepsilon\), applying Lemma~\ref{lem:trim-subg} to this collection gives, conditionally on the admissible count vector,
\begin{equation}\medmath{
\mathbb P\left(
\left.
\left|
\widetilde d_k(s,a)-(\mc T^*Q_k)(s,a)
\right|
>
\mc C\sigma^*
\left(
\varepsilon\sqrt{\log(1/\varepsilon)}
+
\sqrt{\frac{\log(4/\delta_1)}{N_{s,a}}}
\right)
\right|
\mc F_{k-1}, \mc J_{k,s,a}\bigl(\left[p_i\right]_{i\in\mc I(s,a)}\bigr)
\right)
\le
\delta_1.}
\end{equation}
Since \(N_{s,a}\ge \underline N\), it is enough to choose
\begin{equation}
t_2
:=
\frac{4\mc C B}{\sqrt{\Lambda_{\min}H}}
\left(
\varepsilon\sqrt{\log(1/\varepsilon)}
+
\sqrt{\frac{\log(4/\delta_1)}{\underline N}}
\right).
\end{equation}
By the same conditioning and tower-property argument used in proving Lemma~\ref{lem:robust_aggregation_error}, the following holds true:
\begin{equation}
\mathbb P\left(\mc E_k(s,a)\cap\mc V\right)
\le
\delta_1.
\end{equation}
Taking a union bound over all \(k\in[K]\) and all \((s,a)\in\mc S\times\mc A\), and choosing
\[
\delta_1:=\frac{\delta}{2T|\mc S||\mc A|},
\]
we get, using \(K\le T\),
\begin{equation}
\mathbb P\left(
\exists k\in[K],\ \exists(s,a)\in\mc S\times\mc A
\text{ such that }
\mc E_k(s,a)\cap\mc V
\right)
\le
\frac{\delta}{2}.
\end{equation}
Moreover, Lemma~\ref{lem:s2_uniform_visitation} gives \(\mathbb P(\mc V^c)\le \delta/2\). Therefore, with probability at least \(1-\delta\), for all \(k\in[K]\) and all \((s,a)\in\mc S\times\mc A\),
\begin{equation}
\left|
\widetilde d_k(s,a)-(\mc T^*Q_k)(s,a)
\right|
\le
\frac{4\mc C B}{\sqrt{\Lambda_{\min}H}}
\left(
\varepsilon\sqrt{\log(1/\varepsilon)}
+
\sqrt{\frac{\log(8T|\mc S||\mc A|/\delta)}{\underline N}}
\right).
\end{equation}
Since \(B=\bar R/(1-\gamma)\), taking the maximum over \((s,a)\) yields
\begin{equation}\label{eq:final_S2}
\left\lVert \widetilde d_k-\mc T^*Q_k \right\rVert_{\infty}
\le
\mathcal{O}\left(
\frac{\bar R \sqrt{\log\left(|\mc S||\mc A|T/\delta\right)}}{(1-\gamma)\sqrt{\Lambda_{\min}\underline N H}}
\right)
+
\mathcal{O}\left(
\frac{\bar R\,\varepsilon\sqrt{\log(1/\varepsilon)}}{(1-\gamma)\sqrt{\Lambda_{\min}H}}
\right),
\end{equation}
simultaneously for all \(k\in[K]\). This proves the lemma.
\end{proof}
\paragraph{Proof of Theorem \ref{theorem:theoremmainresult_S2}.} Equation~\eqref{eq:final_S2} establishes the high-probability robust aggregation error bound stated in Lemma~\ref{lem:s2_robust_aggregation_error}. The rest of the proof, which derives the final finite-time bound for the \((\textcolor{winered}{\texttt{S}_2})\) setting stated in Theorem~\ref{theorem:theoremmainresult_S2}, follows the proof for the \((\textcolor{winered}{\texttt{S}_1})\) setting in Appendix~\ref{app:proof_of_upper_bound_S1}.

\section{Proof of Theorem \ref{thm:async_fedq_lower_bound}: Fundamental Lower Bound}
\label{app:proof_of_lower_bound}
We prove Theorem~\ref{thm:async_fedq_lower_bound} via a two-point indistinguishability argument. We construct two one-state, one-action MDPs with identical self-loop transitions but different reward means. The reward distributions are chosen so that adversarial corruption makes the server observations identical under both instances while their optimal \(Q\)-values remain separated. Consequently, any estimator must incur a nontrivial error on at least one instance. Before presenting the construction, we introduce an inequality central to the proof of Theorem~\ref{thm:async_fedq_lower_bound}.

\begin{lemma}(\textbf{Bretagnolle--Huber Inequality}~\cite[Theorem~14.2]{lattimore2020bandit})
\label{lem:BH}
Let $P$ and $Q$ be two probability measures on the same measurable space $(\Omega,\mathcal{F})$,
and let $\mc{A} \in \mathcal{F}$ be any arbitrary event. Then,
\begin{equation}
P(\mc{A}) + Q(\mc{A}^c) \ge \frac{1}{2}\exp\!\big(-\texttt{KL}(P\,\|\,Q)\big),
\end{equation}
where $\mc{A}^c$ is the complement of the event $\mc{A}$, and \texttt{KL} is the Kullback--Leibler distance between $P$ and $Q$.
\end{lemma}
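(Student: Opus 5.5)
The approach is the standard chain that passes through the overlap $\int\min(p,q)$ and the Bhattacharyya coefficient $\int\sqrt{pq}$. First I dispose of the trivial case: if $\KL(P\|Q)=+\infty$, the right-hand side is $0$ and there is nothing to prove. So I assume $\KL(P\|Q)<\infty$, which forces $P\ll Q$. I fix a common dominating measure $\nu$ (for instance $\nu=P+Q$) and write $p=dP/d\nu$ and $q=dQ/d\nu$.

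\textbf{Step 1 (reduction to the overlap).} For any event $\mc A$,
\[
P(\mc A)+Q(\mc A^c)=\int_{\mc A}p\,d\nu+\int_{\mc A^c}q\,d\nu\ \ge\ \int \min(p,q)\,d\nu .
\]
It therefore suffices to show $\int\min(p,q)\,d\nu\ge \tfrac12 e^{-\KL(P\|Q)}$.

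\textbf{Step 2 (overlap versus Bhattacharyya coefficient).} Since $pq=\min(p,q)\max(p,q)$, Cauchy--Schwarz gives
\[
\Big(\int\sqrt{pq}\,d\nu\Big)^2\le \int\min(p,q)\,d\nu\int\max(p,q)\,d\nu .
\]
Because $\min+\max=p+q$ integrates to $2$, we have $\int\max(p,q)\,d\nu=2-\int\min(p,q)\,d\nu\le 2$. Hence
\[
\int\min(p,q)\,d\nu\ \ge\ \tfrac12\Big(\int\sqrt{pq}\,d\nu\Big)^2 .
\]

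\textbf{Step 3 (Jensen).} Restricting to $\{p>0\}$, I write
\[
\int\sqrt{pq}\,d\nu=\mathbb E_P\!\left[\exp\!\big(\tfrac12\log(q/p)\big)\right].
\]
Convexity of $\exp$ then gives
\[
\int\sqrt{pq}\,d\nu\ \ge\ \exp\!\big(-\tfrac12\,\mathbb E_P[\log(p/q)]\big)=e^{-\KL(P\|Q)/2}.
\]
Squaring this and combining it with Steps 1--2 yields the claim.

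The only delicate point is Step 3, where the integrand $\log(q/p)$ must be well defined $P$-almost surely. This is exactly where $P\ll Q$ is used: it gives $q>0$ $P$-a.s. Finiteness of $\KL$ ensures that $\mathbb E_P[\log(p/q)]$ is a well-defined finite quantity, so Jensen's inequality applies. I would state these measure-theoretic checks explicitly rather than leave them implicit.
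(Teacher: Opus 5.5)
Your proof is correct and is the standard argument. The paper gives no proof of this lemma; it cites \cite[Theorem~14.2]{lattimore2020bandit}, whose proof is the same chain $P(\mc A)+Q(\mc A^c)\ge\int\min(p,q)\,d\nu\ge\tfrac12\bigl(\int\sqrt{pq}\,d\nu\bigr)^2\ge\tfrac12 e^{-\KL(P\|Q)}$ via Cauchy--Schwarz and Jensen, and your explicit treatment of the case $\KL(P\|Q)=\infty$ and of $P\ll Q$ in the Jensen step fills in the measure-theoretic details correctly.
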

\textcolor{winered}{\(\blacksquare\)}~{\textbf{Construction of two instances.}}
Consider the Gaussian reward distributions
\begin{equation}
\mc P_1=\mathcal N(\mu_1,\bar R^2),
\quad
\mc P_2=\mathcal N(\mu_2,\bar R^2),
\end{equation}
where \(\mu_1\) and \(\mu_2\) will be chosen to yield separated optimal \(Q\)-values while remaining indistinguishable under an \(\varepsilon\)-fraction of adversarial agents. For \(j\in\{1,2\}\), let \(\mc P_j^{\otimes T}\) denote the joint law of \(T\) independent reward samples drawn from \(\mc P_j\), where \(j \in \{1,2\}\). Equivalently,
\begin{equation}
\mc P_1^{\otimes T}
=
\mathcal N(\mu_1\mathbf 1_T,\bar R^2 I_T),
\quad
\mc P_2^{\otimes T}
=
\mathcal N(\mu_2\mathbf 1_T,\bar R^2 I_T),
\end{equation}
where \(\mathbf 1_T\in\mathbb R^T\) denotes the all-ones vector of size $T$. Let \(p_1\) and \(p_2\) denote the densities of \(\mc P_1^{\otimes T}\) and \(\mc P_2^{\otimes T}\), respectively. Since \(\varepsilon\in(0,1/2)\), we have \(\varepsilon/(1-\varepsilon)<1\). We choose \(\mu_1\) and \(\mu_2\), with \(\mu_2>\mu_1\), such that
\begin{equation}
\TV(\mc P_1^{\otimes T},\mc P_2^{\otimes T})
=
\frac{1}{2}\int_{\mathbb R^T}|p_1(x)-p_2(x)|\,dx
=
\frac{\varepsilon}{1-\varepsilon}.
\label{eq:tv_choice_async_fedq_lower}
\end{equation}
Such a choice is possible because the total variation distance between the two Gaussian product measures is a continuous function of \(|\mu_2-\mu_1|\), equals \(0\) when \(\mu_1=\mu_2\), and tends to \(1\) as \(|\mu_2-\mu_1|\to\infty\). Thus, by the intermediate value theorem, we can tune the mean separation \(|\mu_2-\mu_1|\) so that \(\texttt{TV}(\mc{P}_1^{\otimes T},\mc{P}_2^{\otimes T})\) equals the value required by the contamination construction, where \(\mc{P}_1^{\otimes T}\) is the law of the \(T\)-dimensional reward vector observed by one honest agent under \(M_j\).

We now lower bound the separation between the two means. By Pinsker's inequality,
\(
\TV(\mc P_1^{\otimes T},\mc P_2^{\otimes T})
\le
\sqrt{\frac{1}{2}\KL(\mc P_1^{\otimes T}\|\mc P_2^{\otimes T})}.
\)
Since \(\mc P_1^{\otimes T}\) and \(\mc P_2^{\otimes T}\) are Gaussian product measures with common covariance matrix \(\bar R^2 I_T\), their KL divergence is
\(
\KL(\mc P_1^{\otimes T}\|\mc P_2^{\otimes T})
=
\frac{1}{2\bar R^2}
\left\|
\mu_1\mathbf 1_T-\mu_2\mathbf 1_T
\right\|_2^2
= T(\mu_1-\mu_2)^2/2\bar R^2.
\)

Combining Pinsker's inequality and the KL-divergence expression for
\(\mc P_1^{\otimes T}\) and \(\mc P_2^{\otimes T}\) with
\eqref{eq:tv_choice_async_fedq_lower}, we obtain
\begin{equation}\label{eq:mean_gap_async_fedq_lower-1}
\varepsilon/(1-\varepsilon)
=
\TV(\mc P_1^{\otimes T},\mc P_2^{\otimes T})
\le
\frac{\sqrt T|\mu_2-\mu_1|}{2\bar R}.
\end{equation}
Rearranging \eqref{eq:mean_gap_async_fedq_lower-1} yields the following:
\begin{equation}
|\mu_2-\mu_1|
\ge
\frac{2\varepsilon \bar R}{(1-\varepsilon)\sqrt T}
\ge
\frac{2\varepsilon \bar R}{\sqrt T}.
\label{eq:mean_gap_async_fedq_lower}
\end{equation}
Thus, the reward means are separated by order \(\varepsilon\bar R/\sqrt T\), which will induce a corresponding \(Q^\star\)-gap after embedding the reward laws into discounted MDPs.

Let \(\mc M\) and \(\mc M'\) be two MDPs with the same single state,
single action, and deterministic self-transition, but different reward
distributions. We define them precisely as follows:
\begin{equation}
\mc S=\{s^\star\},
\quad
\mc A=\{a^\star\},
\quad
\mathsf P(s^\star\mid s^\star,a^\star)=1.
\end{equation}
The reward distributions associated with \(\mc M\) and \(\mc M'\) are:
\begin{equation}\label{eq:reward_dif}
\mathcal M:\ r(s^\star,a^\star)\sim\mc P_1=\mathcal N(\mu_1,\bar R^2),
\quad
\mathcal M':\ r(s^\star,a^\star)\sim\mc P_2=\mathcal N(\mu_2,\bar R^2).
\end{equation}
In this construction, each agent acquires $T$ reward samples over a time horizon of length $T$. The resulting reward vector constitutes an agent's message which the server is allowed to observe directly.  Under \(\mathcal M\), an honest agent's reward vector
\((r_{i,1},\ldots,r_{i,T})\) consists of \(T\) independent samples from
\(\mc P_1\), and therefore has joint distribution \(\mc P_1^{\otimes T}\). Under \(\mathcal M'\), the same holds with \(\mc P_1\) replaced by \(\mc P_2\). The separation between the optimal \(Q\)-functions of \(\mathcal M\) and \(\mathcal M'\) follows directly from the Bellman equation: under \(\mathcal M\),
\(
Q_{\mathcal M}^\star(s^\star,a^\star)
=
\mathbb E_{\mathcal M}[r(s^\star,a^\star)]
+
\gamma Q_{\mathcal M}^\star(s^\star,a^\star)
=
\mu_1+\gamma Q_{\mathcal M}^\star(s^\star,a^\star).
\)
The same argument applies under \(\mathcal M'\). Solving the scalar Bellman fixed-point equations for \(\mathcal M\) and \(\mathcal M'\) yields the following:
\begin{equation}\label{eq:fp_identity}
Q_{\mathcal M}^\star(s^\star,a^\star)
=
\frac{\mu_1}{1-\gamma},
\quad
Q_{\mathcal M'}^\star(s^\star,a^\star)
=
\frac{\mu_2}{1-\gamma}.
\end{equation}
Combining the two fixed-point identities from \eqref{eq:fp_identity}, we obtain the following exact separation between the optimal \(Q\)-functions:
\begin{equation}
\left\|
Q_{\mathcal M}^\star-Q_{\mathcal M'}^\star
\right\|_\infty
=
\left|
Q_{\mathcal M}^\star(s^\star,a^\star)
-
Q_{\mathcal M'}^\star(s^\star,a^\star)
\right|
=
\frac{|\mu_2-\mu_1|}{1-\gamma}.
\end{equation}
Using \eqref{eq:mean_gap_async_fedq_lower}, we obtain
\begin{equation}
\left\|
Q_{\mathcal M}^\star-Q_{\mathcal M'}^\star
\right\|_\infty
\ge
\frac{2\varepsilon \bar R}{(1-\gamma)\sqrt T}.
\label{eq:q_gap_async_fedq_lower}
\end{equation}
Hence, the two MDP instances have optimal \(Q\)-functions separated by order \(\varepsilon \bar R/((1-\gamma)\sqrt T)\). The next step is to show that, despite this separation, the adversaries can choose corrupted reward distributions such that the observations at the server have identical distributions under \(\mathcal M\) and \(\mathcal M'\). 

\textcolor{winered}{\(\blacksquare\)}~{\textbf{Attack strategy and indistinguishability of corrupted transcripts.}} Recall that the server's overall observation consists of length-$T$ reward vectors received from each of the $N$ agents. 
The idea now is for the adversary to make the server observation identically distributed under \(\mathcal M\) and \(\mathcal M'\). Since the instances differ only in their rewards, it suffices to hide the distinction between \(\mc P_1^{\otimes T}\) and \(\mc P_2^{\otimes T}\). Let \(p_1\) and \(p_2\) denote their respective densities, and define the adversarial densities \(q_1,q_2\) on \(\mathbb R^T\) by
\begin{equation}
q_1(x)
=
\frac{1-\varepsilon}{\varepsilon}
\bigl(p_2(x)-p_1(x)\bigr)\mathbf 1\{p_2(x)\ge p_1(x)\},
\end{equation}
and
\begin{equation}
q_2(x)
=
\frac{1-\varepsilon}{\varepsilon}
\bigl(p_1(x)-p_2(x)\bigr)\mathbf 1\{p_1(x)\ge p_2(x)\}.
\end{equation}
These are valid probability densities. Indeed, by the definition of total variation distance,
\[
\int_{\mathbb R^T}
\bigl(p_2(x)-p_1(x)\bigr)\mathbf 1\{p_2(x)\ge p_1(x)\}\,dx
=
\TV(\mc P_1^{\otimes T},\mc P_2^{\otimes T}),
\]
and similarly,
\[
\int_{\mathbb R^T}
\bigl(p_1(x)-p_2(x)\bigr)\mathbf 1\{p_1(x)\ge p_2(x)\}\,dx
=
\TV(\mc P_1^{\otimes T},\mc P_2^{\otimes T}).
\]
Since \(\mu_1,\mu_2\) were chosen so that
\[
\TV(\mc P_1^{\otimes T},\mc P_2^{\otimes T})
=
\frac{\varepsilon}{1-\varepsilon},
\]
we obtain
\begin{equation}
\int_{\mathbb R^T}q_1(x)\,dx
=
\frac{1-\varepsilon}{\varepsilon}
\TV(\mc P_1^{\otimes T},\mc P_2^{\otimes T})
= \int_{\mathbb R^T}q_2(x)\,dx
=
1.
\end{equation}
Let \(\mc Q_1\) and \(\mc Q_2\) denote the probability distributions with densities \(q_1\) and \(q_2\), respectively.

The adversary acts independently across agents, corrupting each with probability \(\varepsilon\). Under \(\mathcal M\), an honest agent reports its \(T\) reward observations drawn independently from \(\mc P_1\) and hence jointly distributed as \(\mc P_1^{\otimes T}\). A corrupted agent instead reports a fabricated sequence of \(T\) reward values drawn jointly from \(\mc Q_1\). Under \(\mathcal M'\), the corresponding distributions are \(\mc P_2^{\otimes T}\) and \(\mc Q_2\), respectively. Consequently, the distribution of a single agent's message is \(\mc D_{\mathcal M}=(1-\varepsilon)\mc P_1^{\otimes T}+\varepsilon\mc Q_1\) under \(\mathcal M\), whereas it is \(\mc D_{\mathcal M'}=(1-\varepsilon)\mc P_2^{\otimes T}+\varepsilon\mc Q_2\) under \(\mathcal M'\).

We claim that, by construction, the two one-agent message distributions \(\mc D_{\mathcal M}\) and \(\mc D_{\mathcal M'}\) are identical. To see this, compare their densities pointwise. If \(p_2(x)\ge p_1(x)\), then \(q_2(x)=0\), and hence
\[
(1-\varepsilon)p_1(x)+\varepsilon q_1(x)
=
(1-\varepsilon)p_1(x)
+
(1-\varepsilon)(p_2(x)-p_1(x))
=
(1-\varepsilon)p_2(x)
=
(1-\varepsilon)p_2(x)+\varepsilon q_2(x).
\]

On the other hand, if \(p_1(x)\ge p_2(x)\), then \(q_1(x)=0\), and hence
\[
(1-\varepsilon)p_1(x)+\varepsilon q_1(x)
=
(1-\varepsilon)p_1(x)
=
(1-\varepsilon)p_2(x)
+
(1-\varepsilon)(p_1(x)-p_2(x))
=
(1-\varepsilon)p_2(x)+\varepsilon q_2(x).
\]
Therefore,
\[
(1-\varepsilon)\mc P_1^{\otimes T}
+
\varepsilon \mc Q_1
=
(1-\varepsilon)\mc P_2^{\otimes T}
+
\varepsilon \mc Q_2,
\]
or equivalently,
\[
\mc D_{\mathcal M}
=
\mc D_{\mathcal M'}.
\]

\textcolor{winered}{\(\blacksquare\)}~{\textbf{Reduction from indistinguishability to estimation error.}}
Let \(Z=(Z_1,\ldots,Z_N)\), where \(Z_i\in\mathbb R^T\) denotes the sequence of \(T\) reward values reported by agent \(i\). If agent \(i\) is honest, then \(Z_i=(r_{i,1},\ldots,r_{i,T})\); otherwise, \(Z_i\) is a fabricated reward sequence drawn from \(\mc Q_1\) under \(\mathcal M\) or \(\mc Q_2\) under \(\mathcal M'\). Since the attacks act independently across agents, the \(Z_i\)'s are independent, and the law of the observation at the server is a product measure: we call this law \(
\mathbb P_{\mathcal M}
=
\mc D_{\mathcal M}^{\otimes N} \) for instance $\mc{M}$ and  \(
\mathbb P_{\mathcal M'}
=
\mc D_{\mathcal M'}^{\otimes N}
\) for instance $\mc{M}'$. Using this, the fact that the one-agent laws satisfy
\(
\mc D_{\mathcal M}
=
(1-\varepsilon)\mc P_1^{\otimes T}
+
\varepsilon \mc Q_1
=
(1-\varepsilon)\mc P_2^{\otimes T}
+
\varepsilon \mc Q_2
=
\mc D_{\mathcal M'}  
\), and the chain rule for relative entropies, we obtain:
\begin{equation}
\label{eqn:zero_KL}
\KL(\mathbb P_{\mathcal M}\|\mathbb P_{\mathcal M'})
=
N \cdot \KL(\mc D_{\mathcal M}\|\mc D_{\mathcal M'})
=
0.
\end{equation}

We now use the above indistinguishability of the observed message laws to prove the estimation lower bound. To that end, let \(\widehat Q\) be any estimator constructed by the server from the corrupted observed message \(Z\). Define the separation
\(
\Delta
=
\left\|
Q_{\mathcal M}^\star
-
Q_{\mathcal M'}^\star
\right\|_\infty.
\)
Now, we already have
\(
\Delta
\ge
\frac{2\varepsilon \bar R}{(1-\gamma)\sqrt T}.
\label{eq:delta_lower_bound}
\)
Consider the event
\(
A
=
\left\{
\left\|
\widehat Q
-
Q_{\mathcal M}^\star
\right\|_\infty
<
\frac{\Delta}{2}
\right\}.
\)
Thus, 
\begin{equation}
\mathbb P_{\mathcal M}
\left(
\left\|
\widehat Q
-
Q_{\mathcal M}^\star
\right\|_\infty
\ge
\frac{\Delta}{2}
\right)
= 
\mathbb P_{\mathcal M}(A^c).
\label{eq:error_m}
\end{equation}
On the event \(A\), the estimator is within distance \(\Delta/2\) of \(Q_{\mathcal M}^\star\). Since \(Q_{\mathcal M}^\star\) and \(Q_{\mathcal M'}^\star\) are separated by \(\Delta\), the triangle inequality gives
\(
\left\|
\widehat Q
-
Q_{\mathcal M'}^\star
\right\|_\infty
\ge
\left\|
Q_{\mathcal M}^\star
-
Q_{\mathcal M'}^\star
\right\|_\infty
-
\left\|
\widehat Q
-
Q_{\mathcal M}^\star
\right\|_\infty
>
\frac{\Delta}{2}.
\)
Therefore,
\begin{equation}
\mathbb P_{\mathcal M'}
\left(
\left\|
\widehat Q
-
Q_{\mathcal M'}^\star
\right\|_\infty
\ge
\frac{\Delta}{2}
\right)
\ge
\mathbb P_{\mathcal M'}(A).
\label{eq:error_mprime}
\end{equation}
Combining \eqref{eq:error_m} and \eqref{eq:error_mprime}, we obtain
\begin{equation}
\begin{aligned}
&\max\left\{
\mathbb P_{\mathcal M}
\left(
\left\|
\widehat Q
-
Q_{\mathcal M}^\star
\right\|_\infty
\ge
\frac{\Delta}{2}
\right),
\mathbb P_{\mathcal M'}
\left(
\left\|
\widehat Q
-
Q_{\mathcal M'}^\star
\right\|_\infty
\ge
\frac{\Delta}{2}
\right)
\right\}\\
&\ge
\frac{1}{2}
\left(
\mathbb P_{\mathcal M}(A^c)
+
\mathbb P_{\mathcal M'}(A)
\right) \overset{(*)}{\ge} \frac{1}{4},
\end{aligned}
\end{equation}
where for \((*)\), we applied the Bretagnolle--Huber inequality (Lemma~\ref{lem:BH}). Specifically, applied to the two observation laws \(\mathbb P_{\mathcal M}\) and \(\mathbb P_{\mathcal M'}\), and to the event \(A^c\), it gives
\(
\mathbb P_{\mathcal M}(A^c)
+
\mathbb P_{\mathcal M'}(A)
\ge
\frac{1}{2}
\exp\left(
-
\KL(\mathbb P_{\mathcal M}\|\mathbb P_{\mathcal M'})
\right) = 1/2,
\)
where we used \eqref{eqn:zero_KL} for the last equality. This completes the proof. 

\subsection{Lower bound with the asynchronous full coverage factor \texorpdfstring{\(\lambda_{\min}\)}{lambda-min}}
\label{subsec:lambda_min_lower_bound}

We now spell out the analogous construction in the asynchronous sampling setting, where the hard state-action pair is not observed at every time step. This is the case in which the coverage factor \(\lambda_{\min}\) appears. The preceding proof corresponds to the special case \(\lambda_{\min}=1\), since the unique state-action pair is observed at every time step. In the asynchronous setting, however, the behavior policy visits the informative pair only with probability \(\lambda_{\min}\). Hence, over a length-\(T\) transcript, the effective number of informative samples is of order \(\lambda_{\min}T\), and this is what changes the lower bound from
\begin{equation}
\frac{\varepsilon \bar R}{(1-\gamma)\sqrt T} \rightarrow
\frac{\varepsilon \bar R}{(1-\gamma)\sqrt{\lambda_{\min}T}}.
\end{equation}

\paragraph{Step 1. Construction of two asynchronous instances.}
We construct two MDPs that differ only in the reward mean of one hard state-action pair. Let the state space and action space be
\begin{equation}
\mc S=\{s^\star\},
\qquad
\mc A=\{a^\star,a^0\}.
\end{equation}
The state is always \(s^\star\), and both actions lead deterministically back to \(s^\star\):
\begin{equation}
\mathsf P(s^\star\mid s^\star,a^\star)=1,
\qquad
\mathsf P(s^\star\mid s^\star,a^0)=1.
\end{equation}
The behavior policy used to generate asynchronous data is fixed and satisfies
\begin{equation}
\mu_b(a^\star\mid s^\star)=\lambda_{\min},
\quad
\mu_b(a^0\mid s^\star)=1-\lambda_{\min}, \quad \lambda_{\min} \le 1/2.
\end{equation}
Since the state is always \(s^\star\), the visitation probability of the hard pair is
\begin{equation}
\lambda(s^\star,a^\star)
=
\lambda_{\min} \le 1/2.
\end{equation}
Thus, the pair \((s^\star,a^\star)\) is sampled only with probability \(\lambda_{\min}\) at each time step.

The two MDPs \(\mathcal M\) and \(\mathcal M'\) differ only in the reward distribution at the hard pair \((s^\star,a^\star)\). At this pair, define
\begin{equation}
\mathcal M:\quad r(s^\star,a^\star)\sim \mc P_1=\mathcal N(\mu_1,\bar R^2),
\qquad
\mathcal M':\quad r(s^\star,a^\star)\sim \mc P_2=\mathcal N(\mu_2,\bar R^2),
\end{equation}
where \(\mu_2>\mu_1>0\) will be chosen below. At the dummy pair \((s^\star,a^0)\), the reward law is identical under both MDPs:
\begin{equation}
\mathcal M:\quad r(s^\star,a^0)\sim \mc P_0,
\qquad
\mathcal M':\quad r(s^\star,a^0)\sim \mc P_0,
\end{equation}
where, for concreteness, we may take
\begin{equation}
\mc P_0=\mathcal N(0,\bar R^2).
\end{equation}
Therefore, all observations generated from the dummy action \(a^0\) have exactly the same distribution under \(\mathcal M\) and \(\mathcal M'\). The only informative observations are those generated when the behavior policy selects \(a^\star\).

We now describe the law of one honest agent's asynchronous transcript. At each time \(t\in\{1,\ldots,T\}\), the behavior policy chooses \(a^\star\) with probability \(\lambda_{\min}\) and \(a^0\) with probability \(1-\lambda_{\min}\). If \(a_t=a^\star\), then the reward is drawn from \(\mc P_1\) under \(\mathcal M\) and from \(\mc P_2\) under \(\mathcal M'\). If \(a_t=a^0\), then the reward is drawn from the common law \(\mc P_0\) under both MDPs. Let
\begin{equation}
X_t=(a_t,r_t)
\end{equation}
denote the observation at time \(t\), and let
\begin{equation}
X=(X_1,\ldots,X_T)
\end{equation}
denote the length-\(T\) transcript of one honest agent.

Let \(\mathfrak P_1^{(T)}\) and \(\mathfrak P_2^{(T)}\) denote the laws of \(X\) under \(\mathcal M\) and \(\mathcal M'\), respectively. Let \(\mathfrak p_1\) and \(\mathfrak p_2\) denote their densities with respect to the common product measure given by counting measure over actions and Lebesgue measure over rewards. For one sample \(x=(a,r)\), the corresponding one-step densities are
\begin{equation}
g_1(a,r)
=
\lambda_{\min}\mathbf 1\{a=a^\star\}p_1(r)
+
(1-\lambda_{\min})\mathbf 1\{a=a^0\}p_0(r),
\end{equation}
and
\begin{equation}
g_2(a,r)
=
\lambda_{\min}\mathbf 1\{a=a^\star\}p_2(r)
+
(1-\lambda_{\min})\mathbf 1\{a=a^0\}p_0(r),
\end{equation}
where \(p_0,p_1,p_2\) are the densities of \(\mc P_0,\mc P_1,\mc P_2\), respectively. Since samples are independent across time under the i.i.d. asynchronous sampling model, the honest transcript densities are
\begin{equation}
\mathfrak p_1(x_1,\ldots,x_T)
=
\prod_{t=1}^T g_1(x_t),
\qquad
\mathfrak p_2(x_1,\ldots,x_T)
=
\prod_{t=1}^T g_2(x_t).
\end{equation}

We next choose the separation between the two reward means. Let \(c_0>0\) be a sufficiently small universal constant, and set
\begin{equation}
\mu_2-\mu_1
=
c_0\frac{\varepsilon \bar R}{\sqrt{\lambda_{\min}T}}.
\label{eq:lambda_mean_choice}
\end{equation}
The common shift of the two means is chosen so that \(\mu_1>0\). This shift does not affect any total-variation or KL calculation, since these quantities depend only on \(\mu_2-\mu_1\).

We now verify that the two honest transcript laws are close enough to be hidden by an \(\varepsilon\)-fraction of adversarial agents. By Pinsker's inequality in Lemma~\ref{lem:pinsker}, we have the following:
\begin{equation}
\TV(\mathfrak P_1^{(T)},\mathfrak P_2^{(T)})
\le
\sqrt{
\frac{1}{2}
\KL(\mathfrak P_1^{(T)}\|\mathfrak P_2^{(T)})
}.
\end{equation}
The two one-step laws differ only when \(a_t=a^\star\), which happens with probability \(\lambda_{\min}\). Therefore, the KL divergence of one asynchronous sample is
\begin{equation}
\KL(g_1\|g_2)
=
\lambda_{\min}\KL(\mc P_1\|\mc P_2).
\end{equation}
Since the \(T\) samples are independent, the KL divergence between the length-\(T\) honest transcript laws is
\begin{equation}
\KL(\mathfrak P_1^{(T)}\|\mathfrak P_2^{(T)})
=
T\lambda_{\min}\KL(\mc P_1\|\mc P_2).
\end{equation}
For the Gaussian reward laws \(\mc P_1=\mathcal N(\mu_1,\bar R^2)\) and \(\mc P_2=\mathcal N(\mu_2,\bar R^2)\), we have
\begin{equation}
\KL(\mc P_1\|\mc P_2)
=
\frac{(\mu_1-\mu_2)^2}{2\bar R^2}.
\end{equation}
Thus,
\begin{equation}
\KL(\mathfrak P_1^{(T)}\|\mathfrak P_2^{(T)})
=
\frac{T\lambda_{\min}(\mu_1-\mu_2)^2}{2\bar R^2}.
\end{equation}
Using the choice \eqref{eq:lambda_mean_choice}, we obtain
\begin{equation}
\KL(\mathfrak P_1^{(T)}\|\mathfrak P_2^{(T)})
=
\frac{c_0^2\varepsilon^2}{2}.
\end{equation}
Therefore,
\begin{equation}
\TV(\mathfrak P_1^{(T)},\mathfrak P_2^{(T)})
\le
\frac{c_0\varepsilon}{2}.
\end{equation}
Choosing \(c_0\le 2\) gives
\begin{equation}
\TV(\mathfrak P_1^{(T)},\mathfrak P_2^{(T)})
\le
\varepsilon
\le
\frac{\varepsilon}{1-\varepsilon},
\label{eq:lambda_tv_upper}
\end{equation}
where the last inequality uses \(\varepsilon\in(0,1/2)\). Hence, the two honest asynchronous transcript laws are sufficiently close for the same contamination construction as before.

We now translate the reward-mean gap into a gap between the optimal \(Q\)-functions. Since \(\mu_1>0\) and \(\mu_2>\mu_1\), the hard action \(a^\star\) is optimal under both \(\mathcal M\) and \(\mathcal M'\). Indeed, under \(\mathcal M\), the Bellman optimality equations are
\begin{equation}
Q_{\mathcal M}^\star(s^\star,a^\star)
=
\mu_1
+
\gamma
\max\left\{
Q_{\mathcal M}^\star(s^\star,a^\star),
Q_{\mathcal M}^\star(s^\star,a^0)
\right\},
\end{equation}
and
\begin{equation}
Q_{\mathcal M}^\star(s^\star,a^0)
=
0
+
\gamma
\max\left\{
Q_{\mathcal M}^\star(s^\star,a^\star),
Q_{\mathcal M}^\star(s^\star,a^0)
\right\}.
\end{equation}
Since \(\mu_1>0\), the first equation shows that
\begin{equation}
Q_{\mathcal M}^\star(s^\star,a^\star)
>
Q_{\mathcal M}^\star(s^\star,a^0),
\end{equation}
and hence
\begin{equation}
Q_{\mathcal M}^\star(s^\star,a^\star)
=
\mu_1+\gamma Q_{\mathcal M}^\star(s^\star,a^\star).
\end{equation}
Solving gives
\begin{equation}
Q_{\mathcal M}^\star(s^\star,a^\star)
=
\frac{\mu_1}{1-\gamma}.
\end{equation}
The same argument under \(\mathcal M'\) gives
\begin{equation}
Q_{\mathcal M'}^\star(s^\star,a^\star)
=
\frac{\mu_2}{1-\gamma}.
\end{equation}
Consequently,
\begin{equation}
\left\|
Q_{\mathcal M}^\star
-
Q_{\mathcal M'}^\star
\right\|_\infty
\ge
\left|
Q_{\mathcal M}^\star(s^\star,a^\star)
-
Q_{\mathcal M'}^\star(s^\star,a^\star)
\right|
=
\frac{|\mu_2-\mu_1|}{1-\gamma}.
\end{equation}
Using \eqref{eq:lambda_mean_choice}, we obtain
\begin{equation}
\left\|
Q_{\mathcal M}^\star
-
Q_{\mathcal M'}^\star
\right\|_\infty
\ge
c_0
\frac{\varepsilon \bar R}{(1-\gamma)\sqrt{\lambda_{\min}T}}.
\label{eq:lambda_q_gap}
\end{equation}
Thus, in our asynchronous construction, the two optimal \(Q\)-functions are separated by order
\begin{equation}
\frac{\varepsilon \bar R}{(1-\gamma)\sqrt{\lambda_{\min}T}}.
\end{equation}

\paragraph{Step 2. Attack strategy and indistinguishability of corrupted asynchronous transcripts.}
We now describe the adversarial strategy. The goal of the adversary is to make the corrupted transcript observed by the server have the same distribution under \(\mathcal M\) and \(\mathcal M'\). In the asynchronous construction, the honest transcript laws are \(\mathfrak P_1^{(T)}\) and \(\mathfrak P_2^{(T)}\), with densities \(\mathfrak p_1\) and \(\mathfrak p_2\). By \eqref{eq:lambda_tv_upper}, these two laws satisfy
\begin{equation}
\TV(\mathfrak P_1^{(T)},\mathfrak P_2^{(T)})
\le
\frac{\varepsilon}{1-\varepsilon}.
\end{equation}

Define
\begin{equation}
\rho
=
\frac{1-\varepsilon}{\varepsilon}
\TV(\mathfrak P_1^{(T)},\mathfrak P_2^{(T)}).
\end{equation}
By the preceding display, \(\rho\le 1\). Let \(h\) be any probability density on the transcript space. Define two fake transcript densities \(\mathfrak q_1\) and \(\mathfrak q_2\) by
\begin{equation}
\mathfrak q_1(x)
=
\frac{1-\varepsilon}{\varepsilon}
\bigl(\mathfrak p_2(x)-\mathfrak p_1(x)\bigr)
\mathbf 1\{\mathfrak p_2(x)\ge \mathfrak p_1(x)\}
+
(1-\rho)h(x),
\end{equation}
and
\begin{equation}
\mathfrak q_2(x)
=
\frac{1-\varepsilon}{\varepsilon}
\bigl(\mathfrak p_1(x)-\mathfrak p_2(x)\bigr)
\mathbf 1\{\mathfrak p_1(x)\ge \mathfrak p_2(x)\}
+
(1-\rho)h(x).
\end{equation}
These are valid probability densities. Indeed, by the definition of total variation distance,
\begin{equation}
\int
\bigl(\mathfrak p_2(x)-\mathfrak p_1(x)\bigr)
\mathbf 1\{\mathfrak p_2(x)\ge \mathfrak p_1(x)\}\,dx
=
\TV(\mathfrak P_1^{(T)},\mathfrak P_2^{(T)}),
\end{equation}
and similarly,
\begin{equation}
\int
\bigl(\mathfrak p_1(x)-\mathfrak p_2(x)\bigr)
\mathbf 1\{\mathfrak p_1(x)\ge \mathfrak p_2(x)\}\,dx
=
\TV(\mathfrak P_1^{(T)},\mathfrak P_2^{(T)}).
\end{equation}
Therefore,
\begin{equation}
\int \mathfrak q_1(x)\,dx
=
\frac{1-\varepsilon}{\varepsilon}
\TV(\mathfrak P_1^{(T)},\mathfrak P_2^{(T)})
+
(1-\rho)
=
\rho+(1-\rho)
=
1,
\end{equation}
and likewise,
\begin{equation}
\int \mathfrak q_2(x)\,dx
=
1.
\end{equation}
Let \(\mathfrak Q_1\) and \(\mathfrak Q_2\) denote the probability distributions with densities \(\mathfrak q_1\) and \(\mathfrak q_2\), respectively.

We now specify the attack. Under \(\mathcal M\), an honest agent sends a length-\(T\) asynchronous transcript distributed according to \(\mathfrak P_1^{(T)}\), while an adversarial agent sends a fake transcript distributed according to \(\mathfrak Q_1\). Under \(\mathcal M'\), an honest agent sends a length-\(T\) asynchronous transcript distributed according to \(\mathfrak P_2^{(T)}\), while an adversarial agent sends a fake transcript distributed according to \(\mathfrak Q_2\). Each agent is adversarial independently with probability \(\varepsilon\). Therefore, the marginal distribution of one agent's corrupted message under \(\mathcal M\) is
\begin{equation}
\mathfrak D_{\mathcal M}
=
(1-\varepsilon)\mathfrak P_1^{(T)}
+
\varepsilon \mathfrak Q_1,
\end{equation}
whereas the marginal distribution of one agent's corrupted message under \(\mathcal M'\) is
\begin{equation}
\mathfrak D_{\mathcal M'}
=
(1-\varepsilon)\mathfrak P_2^{(T)}
+
\varepsilon \mathfrak Q_2.
\end{equation}

We claim that these two one-agent corrupted message distributions are exactly identical. To see this, compare their densities pointwise. If \(\mathfrak p_2(x)\ge \mathfrak p_1(x)\), then
\begin{equation}
\mathfrak q_1(x)
=
\frac{1-\varepsilon}{\varepsilon}
\bigl(\mathfrak p_2(x)-\mathfrak p_1(x)\bigr)
+
(1-\rho)h(x),
\end{equation}
and
\begin{equation}
\mathfrak q_2(x)
=
(1-\rho)h(x).
\end{equation}
Hence,
\begin{equation}
\begin{aligned}
(1-\varepsilon)\mathfrak p_1(x)+\varepsilon \mathfrak q_1(x)
&=
(1-\varepsilon)\mathfrak p_1(x)
+
(1-\varepsilon)
\bigl(\mathfrak p_2(x)-\mathfrak p_1(x)\bigr)
+
\varepsilon(1-\rho)h(x)
\\
&=
(1-\varepsilon)\mathfrak p_2(x)
+
\varepsilon(1-\rho)h(x)
\\
&=
(1-\varepsilon)\mathfrak p_2(x)
+
\varepsilon \mathfrak q_2(x).
\end{aligned}
\end{equation}
On the other hand, if \(\mathfrak p_1(x)\ge \mathfrak p_2(x)\), then
\begin{equation}
\mathfrak q_1(x)
=
(1-\rho)h(x),
\end{equation}
and
\begin{equation}
\mathfrak q_2(x)
=
\frac{1-\varepsilon}{\varepsilon}
\bigl(\mathfrak p_1(x)-\mathfrak p_2(x)\bigr)
+
(1-\rho)h(x).
\end{equation}
Hence,
\begin{equation}
\begin{aligned}
(1-\varepsilon)\mathfrak p_1(x)+\varepsilon \mathfrak q_1(x)
&=
(1-\varepsilon)\mathfrak p_1(x)
+
\varepsilon(1-\rho)h(x)
\\
&=
(1-\varepsilon)\mathfrak p_2(x)
+
(1-\varepsilon)
\bigl(\mathfrak p_1(x)-\mathfrak p_2(x)\bigr)
+
\varepsilon(1-\rho)h(x)
\\
&=
(1-\varepsilon)\mathfrak p_2(x)
+
\varepsilon \mathfrak q_2(x).
\end{aligned}
\end{equation}
Therefore,
\begin{equation}
(1-\varepsilon)\mathfrak P_1^{(T)}
+
\varepsilon \mathfrak Q_1
=
(1-\varepsilon)\mathfrak P_2^{(T)}
+
\varepsilon \mathfrak Q_2,
\end{equation}
or equivalently,
\begin{equation}
\mathfrak D_{\mathcal M}
=
\mathfrak D_{\mathcal M'}.
\label{eq:lambda_single_agent_identity}
\end{equation}

Now let \(Z=(Z_1,\ldots,Z_N)\) denote the full corrupted federated transcript observed by the server, where \(Z_i\) is the corrupted message sent by agent \(i\). Since agents are independent under the random-adversary model, the transcript laws under \(\mathcal M\) and \(\mathcal M'\) are
\begin{equation}
\mathbb P_{\mathcal M}
=
\mathfrak D_{\mathcal M}^{\otimes N},
\qquad
\mathbb P_{\mathcal M'}
=
\mathfrak D_{\mathcal M'}^{\otimes N}.
\end{equation}
Using \eqref{eq:lambda_single_agent_identity}, we conclude that
\begin{equation}
\mathbb P_{\mathcal M}
=
\mathbb P_{\mathcal M'}.
\label{eq:lambda_full_transcript_identity}
\end{equation}
Thus, under the above attack strategy, the server observes exactly the same corrupted transcript distribution under \(\mathcal M\) and \(\mathcal M'\). This is the desired indistinguishability property in the asynchronous setting.

\paragraph{Step 3. Reduction from asynchronous indistinguishability to estimation error.}
We now use the indistinguishability of the corrupted transcript laws to prove the lower bound with the \(\lambda_{\min}\) factor. Let \(\widehat Q\) be any estimator constructed from the corrupted federated transcript \(Z\). Define
\begin{equation}
\Delta_\lambda
=
\left\|
Q_{\mathcal M}^\star
-
Q_{\mathcal M'}^\star
\right\|_\infty.
\end{equation}
From \eqref{eq:lambda_q_gap}, we have
\begin{equation}
\Delta_\lambda
\ge
c_0
\frac{\varepsilon \bar R}{(1-\gamma)\sqrt{\lambda_{\min}T}}.
\label{eq:lambda_delta_lower_bound}
\end{equation}
Consider the event
\begin{equation}
A_\lambda
=
\left\{
\left\|
\widehat Q
-
Q_{\mathcal M}^\star
\right\|_\infty
<
\frac{\Delta_\lambda}{2}
\right\}.
\end{equation}
On the complement \(A_\lambda^c\), the estimator is at distance at least \(\Delta_\lambda/2\) from \(Q_{\mathcal M}^\star\). Hence,
\begin{equation}
\mathbb P_{\mathcal M}
\left(
\left\|
\widehat Q
-
Q_{\mathcal M}^\star
\right\|_\infty
\ge
\frac{\Delta_\lambda}{2}
\right)
\ge
\mathbb P_{\mathcal M}(A_\lambda^c).
\label{eq:lambda_error_m}
\end{equation}
On the event \(A_\lambda\), the estimator is within distance \(\Delta_\lambda/2\) of \(Q_{\mathcal M}^\star\). Since \(Q_{\mathcal M}^\star\) and \(Q_{\mathcal M'}^\star\) are separated by \(\Delta_\lambda\), the triangle inequality gives
\begin{equation}
\left\|
\widehat Q
-
Q_{\mathcal M'}^\star
\right\|_\infty
\ge
\left\|
Q_{\mathcal M}^\star
-
Q_{\mathcal M'}^\star
\right\|_\infty
-
\left\|
\widehat Q
-
Q_{\mathcal M}^\star
\right\|_\infty
>
\frac{\Delta_\lambda}{2}.
\end{equation}
Therefore,
\begin{equation}
\mathbb P_{\mathcal M'}
\left(
\left\|
\widehat Q
-
Q_{\mathcal M'}^\star
\right\|_\infty
\ge
\frac{\Delta_\lambda}{2}
\right)
\ge
\mathbb P_{\mathcal M'}(A_\lambda).
\label{eq:lambda_error_mprime}
\end{equation}
Combining \eqref{eq:lambda_error_m} and \eqref{eq:lambda_error_mprime}, we obtain
\begin{equation}
\max\left\{
\mathbb P_{\mathcal M}
\left(
\left\|
\widehat Q
-
Q_{\mathcal M}^\star
\right\|_\infty
\ge
\frac{\Delta_\lambda}{2}
\right),
\mathbb P_{\mathcal M'}
\left(
\left\|
\widehat Q
-
Q_{\mathcal M'}^\star
\right\|_\infty
\ge
\frac{\Delta_\lambda}{2}
\right)
\right\}
\ge
\frac{1}{2}
\left(
\mathbb P_{\mathcal M}(A_\lambda^c)
+
\mathbb P_{\mathcal M'}(A_\lambda)
\right).
\label{eq:lambda_testing_reduction}
\end{equation}

We now apply the Bretagnolle-Huber inequality to the two corrupted transcript laws \(\mathbb P_{\mathcal M}\) and \(\mathbb P_{\mathcal M'}\), and to the event \(A_\lambda^c\). This gives
\begin{equation}
\mathbb P_{\mathcal M}(A_\lambda^c)
+
\mathbb P_{\mathcal M'}(A_\lambda)
\ge
\frac{1}{2}
\exp\left(
-
\KL(\mathbb P_{\mathcal M}\|\mathbb P_{\mathcal M'})
\right).
\label{eq:lambda_bh_step}
\end{equation}
It remains to compute the KL divergence between the two corrupted transcript laws. By \eqref{eq:lambda_full_transcript_identity}, we have
\begin{equation}
\mathbb P_{\mathcal M}
=
\mathbb P_{\mathcal M'}.
\end{equation}
Therefore,
\begin{equation}
\KL(\mathbb P_{\mathcal M}\|\mathbb P_{\mathcal M'})
=
0.
\label{eq:lambda_kl_zero}
\end{equation}
Substituting \eqref{eq:lambda_kl_zero} into \eqref{eq:lambda_bh_step} gives
\begin{equation}
\mathbb P_{\mathcal M}(A_\lambda^c)
+
\mathbb P_{\mathcal M'}(A_\lambda)
\ge
\frac{1}{2}.
\end{equation}
Combining this with \eqref{eq:lambda_testing_reduction}, we obtain
\begin{equation}
\max\left\{
\mathbb P_{\mathcal M}
\left(
\left\|
\widehat Q
-
Q_{\mathcal M}^\star
\right\|_\infty
\ge
\frac{\Delta_\lambda}{2}
\right),
\mathbb P_{\mathcal M'}
\left(
\left\|
\widehat Q
-
Q_{\mathcal M'}^\star
\right\|_\infty
\ge
\frac{\Delta_\lambda}{2}
\right)
\right\}
\ge
\frac{1}{4}.
\label{eq:lambda_two_point_testing}
\end{equation}
Finally, by \eqref{eq:lambda_delta_lower_bound},
\begin{equation}
\frac{\Delta_\lambda}{2}
\ge
\frac{c_0}{2}
\frac{\varepsilon \bar R}{(1-\gamma)\sqrt{\lambda_{\min}T}}.
\end{equation}
Thus,
\begin{equation}
\max\left\{
\mathbb P_{\mathcal M}
\left(
\left\|
\widehat Q
-
Q_{\mathcal M}^\star
\right\|_\infty
\ge
\frac{c_0}{2}
\frac{\varepsilon \bar R}{(1-\gamma)\sqrt{\lambda_{\min}T}}
\right),
\mathbb P_{\mathcal M'}
\left(
\left\|
\widehat Q
-
Q_{\mathcal M'}^\star
\right\|_\infty
\ge
\frac{c_0}{2}
\frac{\varepsilon \bar R}{(1-\gamma)\sqrt{\lambda_{\min}T}}
\right)
\right\}
\ge
\frac{1}{4}.
\label{eq:lambda_two_point_final_rate}
\end{equation}
This proves the lower bound with the asynchronous coverage factor \(\lambda_{\min}\). The factor appears because the hard state-action pair \((s^\star,a^\star)\) is sampled only with probability \(\lambda_{\min}\), so the informative part of a length-\(T\) transcript has effective size \(\lambda_{\min}T\), rather than \(T\).
\section{Proof of Theorem \ref{theorem:markov}}
\label{app:Markovian}
We extend the analysis of \texttt{Robust Async-Fed-Q} from i.i.d. asynchronous sampling to Markovian data under \((\textcolor{winered}{\texttt{S}_1})\); the same argument applies to \((\textcolor{winered}{\texttt{S}_2})\). To control temporal dependence, each honest agent samples its trajectory by retaining every \(\tau\)-th transition, where \(\tau\) denotes the curated sub-sampling gap defined in Section~\ref{sec:lMarkov}. The resulting thinned samples can be coupled with i.i.d. samples from the stationary distribution, allowing the preceding analysis to be transferred to the Markovian setting.

$\bullet$ \textbf{Background.} Let \(\{Z_t\}_{t\ge 0}\) be an ergodic
time-homogeneous Markov chain over a finite state space \(\Omega\) with
stationary distribution \(\rho\). Define
\begin{equation}
    d_{\mathrm{mix}}(t)
    :=
    \sup_{z\in\Omega}
    D_{\mathrm{TV}}\left(
    \mathbb P(Z_t\in\cdot\mid Z_0=z),\rho
    \right).
\end{equation}
The quantity \(d_{\mathrm{mix}}(t)\) measures the worst-case total-variation
distance between the distribution of the chain at time \(t\) and its stationary
distribution. It is non-increasing in \(t\). We define the mixing time as
\(
    \bar\tau
    :=
    \inf\left\{
    t\ge 1:\ d_{\mathrm{mix}}(t)\le \frac14
    \right\}.
\)
Intuitively, \(\bar\tau\) measures how quickly the Markov chain approaches
stationarity. A standard consequence of this definition is the geometric decay
property~\cite{dorfman}:
\(
    d_{\mathrm{mix}}(\ell\bar\tau)
    \le
    2^{-\ell},
    \forall \ell\in\mathbb N .
\)
The key idea is to choose a sub-sampling interval large enough that the entire retained state-action-next-state sequence can be coupled to i.i.d. stationary samples with high-probability agreement. Since the rewards in our setting are i.i.d. auxiliary randomness, the Markovian coupling is needed only for the state-action-next-state process.

Fix an agent \(i\in[N]\). Under its behavior policy \(\mu_i\), let
\(
    Z_{i,t}:=(s_{i,t},a_{i,t},s_{i,t+1})
\)
denote the state-action-next-state process. By Assumption~\ref{ass:ergodic}, \(\{Z_{i,t}\}_{t\ge 0}\) is an ergodic Markov chain with stationary distribution \(\rho_i\). Suppose the chain is initialized from \(\rho_i\). Let \(\bar\tau_i\) denote the mixing time of this chain, and define
\(
    \bar\tau:=\max_{i\in[N]}\bar\tau_i .
\)
For a prescribed confidence level \(\delta\in(0,1)\), set
\(
    \ell:=\left\lceil \log_{2}(2NT/\delta) \right\rceil,
    \quad
    \tau:=\lceil \ell\bar\tau\rceil .
\)
In the sub-sampled version of \texttt{Robust Async-Fed-Q}, each honest
agent uses only every \(\tau\)-th transition from its observed Markov
trajectory. Thus, over the full horizon \(T=KH\), agent \(i\) uses
\[
    Z_{i,0}, Z_{i,\tau}, Z_{i,2\tau},\ldots,Z_{i,(\overline T-1)\tau},
    \quad
    \overline T:=\left\lfloor \frac{T}{\tau}\right\rfloor .
\]
Let
\(
    \widetilde Z_{i,0},\widetilde Z_{i,\tau},\ldots,
    \widetilde Z_{i,(\overline T-1)\tau}
    \sim \rho_i^{\otimes \overline T}
\)
be i.i.d. samples from the stationary distribution of agent \(i\)'s Markov chain.
We now use the following key result from \cite{dorfman}, which bounds
the probability that a sub-sampled Markov chain differs from its i.i.d.
counterpart generated from the chain's stationary distribution.  
\begin{theorem}[\textbf{Coupling};\cite{dorfman}]\label{thm:coupling}
    Let $Z_0,Z_1,\cdots$ be a stationary finite-state Markov chain with stationary distribution $\rho$, and let $K,n\in\mathbb{N}$. Then, we can couple $(Z_0,Z_{K},\cdots, Z_{(n-1)K})$ and $(\tilde Z_0,\tilde Z_{K},\cdots, \tilde Z_{(n-1)K})\sim\rho^{\otimes n}$, such that
    \begin{equation}
        \P\paren{\{Z_0,Z_{K},\cdots, Z_{(n-1)K}\}\neq \{\tilde Z_0,\tilde Z_{K},\cdots, \tilde Z_{(n-1)K}\}}\leq (n-1) d_{mix}(K).
    \end{equation}
\end{theorem}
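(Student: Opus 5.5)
The plan is a sequential coupling built one step at a time, followed by a union bound over the $n-1$ transitions. Write $P$ for the transition kernel of $\{Z_t\}$. By the Markov property, the sub-sampled sequence $(Z_0,Z_K,\ldots,Z_{(n-1)K})$ is itself a Markov chain with kernel $P^K$ and initial law $\rho$. So it suffices to build, on a common probability space, a Markov chain $(W_0,\ldots,W_{n-1})$ with initial law $\rho$ and kernel $P^K$, together with an i.i.d.\ sequence $(\widetilde W_0,\ldots,\widetilde W_{n-1})\sim\rho^{\otimes n}$, such that the two sequences disagree with probability at most $(n-1)d_{mix}(K)$. Here $W_j$ plays the role of $Z_{jK}$ and $\widetilde W_j$ the role of $\widetilde Z_{jK}$.

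\textbf{Construction.} First draw $W_0\sim\rho$ and set $\widetilde W_0:=W_0$. This is exact because the chain is stationary, so step $0$ contributes no error. Inductively, suppose $(W_0,\widetilde W_0,\ldots,W_j,\widetilde W_j)$ have been built and $W_j=z$. Using fresh independent randomness, draw $(W_{j+1},\widetilde W_{j+1})$ from a maximal coupling of the two laws $P^K(z,\cdot)$ and $\rho$. Since $\Omega$ is finite, such a coupling exists explicitly, using the overlap $\min\{P^K(z,\omega),\rho(\omega)\}$. It has the property
\[
\P\bigl(W_{j+1}\neq \widetilde W_{j+1}\mid \text{past}\bigr)=D_{\mathrm{TV}}\bigl(P^K(z,\cdot),\rho\bigr)\le d_{mix}(K),
\]
where the inequality follows from the definition of $d_{mix}$ as a supremum over starting points.

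\textbf{Verifying the marginals.} This is the step needing the most care. For the $W$-sequence: conditional on the full past, $W_{j+1}$ has law $P^K(W_j,\cdot)$, so $(W_j)$ is a Markov chain with kernel $P^K$ started from $\rho$, matching the law of the sub-sampled $Z$'s. For the $\widetilde W$-sequence, note that the second marginal of the maximal coupling equals $\rho$ for every value of $z$. Hence the conditional law of $\widetilde W_{j+1}$ given all previously constructed variables is exactly $\rho$, and does not depend on that past. This makes $\widetilde W_{j+1}$ independent of $(\widetilde W_0,\ldots,\widetilde W_j)$ with law $\rho$, so by induction $(\widetilde W_j)\sim\rho^{\otimes n}$. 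I expect this to be the main obstacle. The independence must be argued via this conditional-law-free-of-the-past property, not merely from the fact that each single marginal is $\rho$.

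\textbf{Conclusion.} The event that the two sequences differ is contained in $\bigcup_{j=0}^{n-2}\{W_{j+1}\neq\widetilde W_{j+1}\}$, because the coordinates at index $0$ agree by construction. Taking expectations of the conditional bound above and applying a union bound gives probability at most $(n-1)d_{mix}(K)$. Finally, transferring the joint law of $(W,\widetilde W)$ to the original probability space, for instance by a gluing argument along the chain, yields the stated coupling of $(Z_0,\ldots,Z_{(n-1)K})$ with $(\widetilde Z_0,\ldots,\widetilde Z_{(n-1)K})$.
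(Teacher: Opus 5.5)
The paper does not prove this statement; it imports it from \cite{dorfman} and only applies it in Appendix~\ref{app:Markovian} with $K=\tau$ and $n=\overline T$. Your proof is correct and is the standard argument behind that cited lemma: you maximally couple $P^K(W_j,\cdot)$ with $\rho$ at each step, get independence of the $\widetilde W_j$ because the conditional law of $\widetilde W_{j+1}$ given the whole past is $\rho$, note that step $0$ is exact by stationarity, and union-bound over the $n-1$ transitions. It therefore supplies the self-contained justification the paper omits.
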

\noindent By Theorem~\ref{thm:coupling}, we can couple the sub-sampled Markovian sequence and its i.i.d. stationary counterpart such that
\begin{equation}
\begin{aligned}
&\mathbb P\left(
    (Z_{i,0},Z_{i,\tau},\ldots,Z_{i,(\overline T-1)\tau})
    \neq
    (\widetilde Z_{i,0},\widetilde Z_{i,\tau},\ldots,
    \widetilde Z_{i,(\overline T-1)\tau})
\right)\\
&\leq (\overline T-1)d_{i,\mathrm{mix}}(\tau)\leq \overline T d_{i,\mathrm{mix}}(\tau)\\
& = \overline T
d_{i,\mathrm{mix}}\!\left(\left\lceil\ell\bar\tau\right\rceil\right) \overset{(\bullet)}{\le} \overline T
d_{i,\mathrm{mix}}\!\left(\ell\bar\tau_i\right)\overset{(\bullet \bullet)}{\leq} \overline T\,2^{-\ell}\\
&\leq \frac{T}{\tau}2^{-\ell}
\leq T2^{-\ell}
\overset{(\bullet \bullet \bullet)}{\leq} \frac{\delta}{2N},
\end{aligned}
\label{eq:agent_markov_coupling}
\end{equation}
where \((\bullet)\) follows from the non-increasing property of
\(d_{i,\mathrm{mix}}(t)\) and the fact that
\(
\lceil\ell\bar\tau\rceil
\geq \ell\bar\tau
= \ell\max_{j\in[N]}\bar\tau_j
\geq \ell\bar\tau_i
\)
for all \(i\in[N]\). Next, \((\bullet\bullet)\) follows from the
geometric mixing bound \(d_{i,\mathrm{mix}}(\ell\bar\tau)\leq 2^{-\ell}\)
for all \(\ell\in\mathbb N\), established in \cite{dorfman}. We also use
\(\overline T\leq T/\tau\) and \(\tau\geq 1\), while
\((\bullet\hspace{-1mm}\bullet\hspace{-1mm}\bullet)\) follows from the choice
\(\ell=\lceil\log_2(2NT/\delta)\rceil\). Indeed, this choice ensures that
\(2^{-\ell}\leq \delta/(2NT)\). Multiplying by \(T/\tau\) gives
\((T/\tau)2^{-\ell}\leq\delta/(2N\tau)\leq\delta/(2N)\),
since \(\tau\) is a positive integer. Taking a union bound over all the agents \(i \in [N]\), we obtain an event \(\mc H\) such that
\(
    \mathbb P(\mc H)\ge 1-\frac{\delta}{2},
\)
and, on \(\mc H\), the sub-sampled Markovian state-action-next-state tuples agree with their i.i.d. stationary counterparts simultaneously for all agents over the entire horizon. That is,
\begin{equation}
    \left(Z_{i,0},Z_{i,\tau},\ldots,Z_{i,(\overline T-1)\tau}\right)
    =
    \left(\widetilde Z_{i,0},\widetilde Z_{i,\tau},\ldots,
    \widetilde Z_{i,(\overline T-1)\tau}\right)
    \quad
    \forall i\in[N].
\label{eq:all_agent_markov_coupling}
\end{equation}

We now incorporate the reward randomness. In our setting, the Markovian dependence enters only through \(Z_{i,t}=(s_{i,t},a_{i,t},s_{i,t+1})\), while the rewards are generated independently across agents and time conditional on the visited pair. 

We now consider two executions of the algorithm: one on the sub-sampled
Markov data and the other on the coupled i.i.d. data. For each honest
agent \(i\) and retained index \(m\), suppose the Markovian execution
observes stochastic reward \(r_{i,m\tau}\) with mean $R(s,a)$ on transition \(Z_{i,m\tau}=(s,a,s')\). On the event \(\mc{H}\), whenever \(Z_{i,m\tau}=\widetilde Z_{i,m\tau}\), we assign the same reward
\(r_{i,m\tau}\) to the corresponding transition in the i.i.d. execution.
This coupling is valid because both rewards are drawn from the same
reward law at \((s,a)\). Hence, on \(\mc H\), the two executions have
identical retained state-action-next-state-reward samples. Hence the retained reward-augmented samples agree on \(\mc H\): if \(W_{i,m}^{\tau}:=(s_{i,m\tau},a_{i,m\tau},r_{i,m\tau},s_{i,m\tau+1})\) and \(\widetilde W_{i,m}^{\tau}:=(\widetilde s_{i,m\tau},\widetilde a_{i,m\tau},r_{i,m\tau},\widetilde s_{i,m\tau+1})\), then \(W_{i,m}^{\tau}=\widetilde W_{i,m}^{\tau}\) for all \(i\in[N]\) and \(m\in\{0,\ldots,\overline T-1\}\). Consequently, the sub-sampled reward-augmented datasets before message corruption, defined by
\[
\mc D^\tau
:=\{W_{i,m}^{\tau}:i\in[N],\,m=0,\ldots,\overline T-1\}, \widetilde{\mc D}^{\tau}
:=\{\widetilde W_{i,m}^{\tau}:i\in[N],\,m=0,\ldots,\overline T-1\}\]
satisfy
\[\mc D^\tau=\widetilde{\mc D}^{\tau} \hspace{1 mm} \text{on} \hspace{1 mm} \mc H .\]

Now, let $\mc A$ denote the measurable output map of the \emph{sub-sampled}
\texttt{Robust Async-Fed-Q} algorithm, and let
$\textcolor{myblue}{\mc U}$ collect all auxiliary
randomness, including the sample splits in
\texttt{trimSC} and the construction of corrupted
messages. Under a common realization of $\textcolor{myblue}{\mc U}$, the
outputs of the sub-sampled Markovian and coupled i.i.d.\ experiments are
\(
Q_K=\mc A(\mc D^\tau;\textcolor{myblue}{\mc U}),
\quad
\widetilde Q_K
=\mc A(\widetilde{\mc D}^{\tau};\textcolor{myblue}{\mc U}).
\)
Thus, whenever the retained data agree, all subsequent algorithmic
operations agree as well. In particular, since
$\mc D^\tau=\widetilde{\mc D}^{\tau}$ on $\mc H$, we have
the following:
\begin{equation}
    Q_K
    =
    \mc A(\mc D^\tau;\textcolor{myblue}{\mc U})
    =
    \mc A(\widetilde{\mc D}^{\tau};\textcolor{myblue}{\mc U})
    =
    \widetilde Q_K
    \quad
    \text{on}\quad\mc H .
\label{eq:algorithm_output_equality_on_HZ}
\end{equation}
Therefore, for any \(\lambda>0\),
\begin{equation}
\begin{aligned}
&\mathbb P\!\left(\|Q_K-Q^\star\|_\infty\ge \lambda\right)
=
\mathbb P\!\left(\|Q_K-Q^\star\|_\infty\ge \lambda,\mc H\right)
+
\mathbb P\!\left(\|Q_K-Q^\star\|_\infty\ge \lambda,\mc H^c\right) \\
&\le
\mathbb P\!\left(\|\widetilde Q_K-Q^\star\|_\infty\ge \lambda,\mc H\right)
+
\mathbb P(\mc H^c) \le
\mathbb P\!\left(\|\widetilde Q_K-Q^\star\|_\infty\ge \lambda\right)
+
\mathbb P(\mc H^c).
\end{aligned}
\label{eq:markov_to_iid_algorithm_map}
\end{equation}
Since $\mathbb P(\mc H^c)\leq\delta/2$, it remains to control the first term in \eqref{eq:markov_to_iid_algorithm_map}. As $\widetilde Q_K$ is the i.i.d.\ asynchronous output with effective epoch length $\overline H=\lfloor H/\tau\rfloor$, the i.i.d.\ bound applies with $H$ replaced by $\overline H$. Adding the coupling failure probability then yields the Markovian guarantee.

\bibliographystyle{unsrt}
\bibliography{refs}
\end{document}